\tikzset{block/.style={draw,thick,text width=2cm,minimum height=1cm,align=center},
         line/.style={-latex}
}
\newtheorem{example}{Example} 
\newtheorem{theorem}{Theorem}
\newtheorem{lemma}[theorem]{Lemma} 
\newtheorem{proposition}[theorem]{Proposition} 
\newtheorem{remark}[theorem]{Remark}
\newtheorem{corollary}[theorem]{Corollary}
\newtheorem{definition}[theorem]{Definition}
\newtheorem{claim}[theorem]{Claim} 
\newcommand{\diam}{\mathrm{diam}}
\newcommand{\mean}{\mathrm{mean}}
\newcommand{\tr}{\mathrm{tr}}
\newcommand{\vol}{\mathrm{vol}}
\newcommand{\supp}{\mathrm{supp}}
\newcommand{\dW}[1]{d_{\mathrm{W},{#1}}}
\newcommand{\R}{\mathbb{R}}
\newcommand{\eps}{\varepsilon}
\newcommand{\norm}[1]{\left\lVert#1\right\rVert}
\newcommand{\pf}{\mathcal{P}_f}
\newcommand{\bxe}{B_\eps^{d_X}(x)}
\newcommand{\we}{\mathbf{W}_{1}^{\mathsmaller{(\eps)}}}
\newcommand{\dcov}{d_{\mathrm{cov}}}
\newcommand{\dgt}{d^{\mathsmaller{(\eps,\lambda)}}_{{\alpha,d_X}}}
\newcommand{\dw}{d_{\mathrm{W},2}}
\newcommand{\gt}{\mathbf{G}^{\mathsmaller{(\eps,\lambda)}}_2}
\newcommand{\wt}[1]{\mathbf{W}_{#1}}
\newcommand{\deps}{d_{\alpha,d_X}^{\mathsmaller{(\eps)}}}
\newcommand{\depsbeta}{d_{\beta,d_X}^{\mathsmaller{(\eps)}}}
\newcommand{\lms}{L^{\mathrm{ms}}}
\newcommand{\dP}{d_\mathrm{P}}
\newcommand{\me}{m^{\mathsmaller{(\eps)}}}
\newcommand{\lc}{\left(}
\newcommand{\rc}{\right)}
\newcommand{\ls}{\left|}
\newcommand{\rs}{\right|}
\newcommand{\lb}{\left\{}
\newcommand{\rb}{\right\}}
\newcommand{\dgw}[1]{d_{\mathrm{GW},#1}}
\newcommand{\Rp}{\R_{\geq 0}}
\newcommand{\zane}[1]{}
\newcommand{\facundo}[1]{}
\newcommand{\zhengchao}[1]{}
\begin{document}

\renewcommand{\Authfont}{\normalsize}
\renewcommand{\Affilfont}{\small}
\setlength{\affilsep}{0.25em}

\title{The Wasserstein Transform}

\author[1]{Kun Jin}
\author[2]{Facundo M\'emoli}
\author[3]{Zane Smith}
\author[4]{Zhengchao Wan}

\affil[1]{Samsung Research America}
\affil[2]{Department of Mathematics, Rutgers University}
\affil[3]{Department of Computer Science and Engineering, University of Minnesota}
\affil[4]{Department of Mathematics, University of Missouri}
\affil[]{\texttt{kun.jin.810@gmail.com}, \texttt{facundo.memoli@rutgers.edu}, \texttt{zane3g@gmail.com}, \texttt{zwan@missouri.edu}}

\date{}

\maketitle

\begin{abstract}
We introduce the Wasserstein Transform (WT), a general unsupervised framework for updating distance structures on given data sets with the purpose of enhancing features and denoising. Our framework represents each data point by a probability measure reflecting the neighborhood structure of the point, and then updates the distance by computing the Wasserstein distance between these probability measures. The Wasserstein Transform is a general method which extends the mean shift family of algorithms. We study several instances of WT, and in particular, in one of the instances which we call the Gaussian Transform (GT), we utilize Gaussian measures to model neighborhood structures of individual data points. GT is computationally cheaper than other instances of WT since there exists closed form solution for the $\ell^2$-Wasserstein distance between Gaussian measures. We study the relationship between different instances of WT and prove that each of the instances is stable under perturbations. We devise iterative algorithms for performing the above-mentioned WT and propose several strategies to accelerate GT, such as an observation from linear algebra for reducing the number of matrix square root computations. We examine the performance of the Wasserstein Transform method in many tasks, such as denoising, clustering, image segmentation and word embeddings.
\end{abstract}

\noindent\textbf{Keywords:} Optimal transport, Bures Wasserstein distance, Mean shift

\newpage
\tableofcontents

\newpage
\section{Introduction}

Outliers and noise are usually inevitable when acquiring new data and these will eventually deteriorate the performance of downstream machine learning tasks on a given data set. For example, in hierarchical clustering, the classical single linkage clustering is known to be susceptible to the so-called chaining effect which is caused by a particular type of outliers \citep{jardine1967structure}; see Figure \ref{fig:chaining} for an illustration.

\begin{figure}[htb]
    \centering
     
		    \includegraphics[width=.65\textwidth]{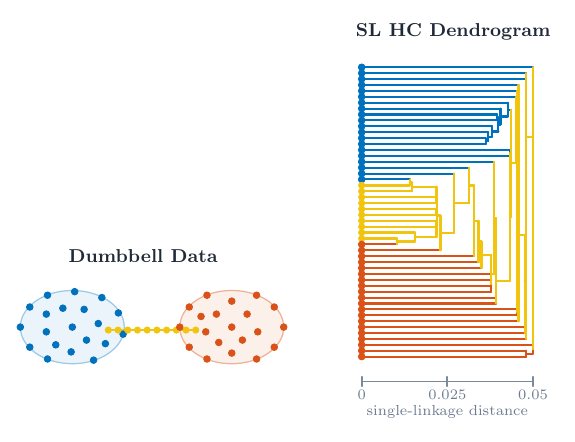} 
        
    \caption{\textbf{Illustration of the chaining effect.} The left figure shows a dumbbell shape data set, and the right figure shows its single-linkage hierarchical clustering dendrogram.} 
    \label{fig:chaining} 
\end{figure}

Usually outliers have different neighborhood structure than that of other data points. For example, in the case of shown in Figure \ref{fig:chaining}, points along the chain have one dimensional neighborhoods whereas points in the two blobs have two dimensional neighborhoods. This observation motivates us to pursue the following approach to denoising data sets: 
we update the distance function of the data set by absorbing a penalty of structural difference between neighborhoods of data points. More precisely, we first represent each data point by a probability measure based on neighborhood/context information, and then compute the Wasserstein distance, a concept from optimal transport, between these probability measures to generate a new distance function for the data set.

Optimal transport (OT) is concerned with finding cost efficient ways of deforming a given source probability distribution into a target distribution \citep{villani2003topics,villani2008optimal,santambrogio2015optimal}.
In recent years, ideas from OT have found applications in machine learning and in data analysis in general. Applications range from image equalization \citep{delon2004midway}, shape interpolation \citep{solomon2015convolutional}, image/shape \citep{solomon2016entropic,rubner1998metric} and document classification \citep{kusner2015word,rolet2016fast}, semi-supervised learning \citep{solomon2014wasserstein}, to population analysis of Gaussian processes \citep{mallasto2017learning}, domain adaptation \citep{courty2017optimal}, deep neural network~\citep{arjovsky2017wasserstein,gulrajani2017improved} and natural language processing 
(NLP)~\citep{alvarez-melis-jaakkola-2018-gromov}, etc.

\begin{figure}[htb]
\includegraphics[width=\linewidth]{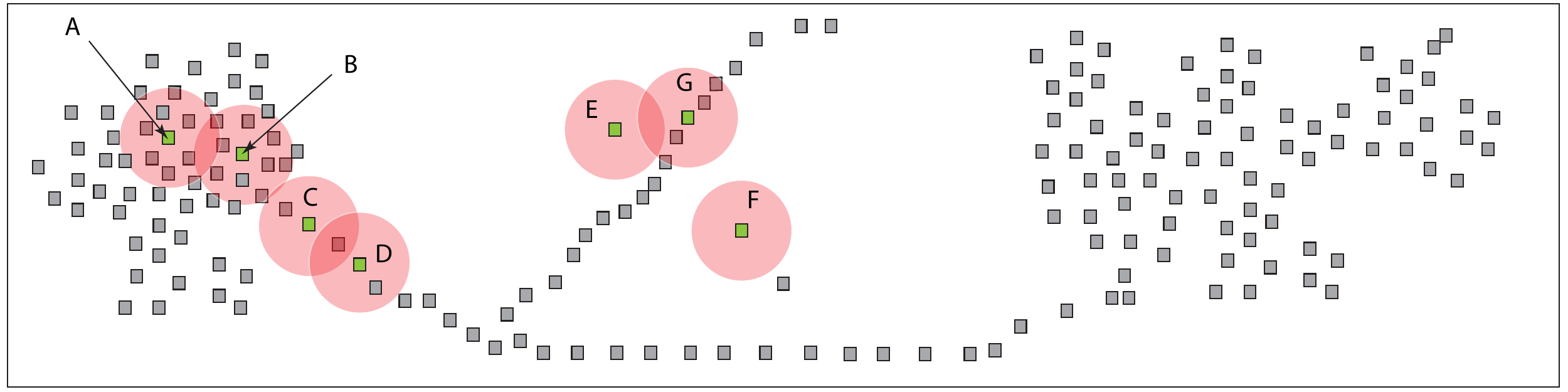}
\caption{\textbf{Illustration of the Wasserstein Transform.} Different points in the given point cloud in $\mathbb{R}^2$ have different neighborhood structures. These neighborhood structures are captured by representing them via a probability measure, e.g., the empirical probability measure supported on the given neighborhood. In a nutshell, the Wasserstein Transform produces a new distance function on the data set by computing the Wasserstein distance between these measures. } \label{fig:interp-wt}
\end{figure}

From the approach mentioned above we extract an unsupervised framework called the \emph{Wasserstein Transform} (WT) for enhancing features and/or to denoise data sets. The intuitive idea of WT is illustrated in \Cref{fig:interp-wt}. We examine several important instances of WT which are applicable in many experimental scenarios. In particular, we propose the \emph{Gaussian Transform} (GT), a computationally efficient instance of WT, which can be fine tuned to adapt anisotropy of data sets, an often desired feature in methods for image denoising and image segmentation \citep{perona1990scale,wang2004image}. In GT we model each point in a given point cloud as a Gaussian measure (whose covariance is estimated locally) and then generate a new distance between pairs of points $x$ and $x'$ on the data set via the computation of the $\ell^2$-Wasserstein distance between the Gaussian at $x$ and the one at $x'$. This computation of the $\ell^2$-Wasserstein distance between Gaussians is efficient due to the existence of closed-form formula \citep{givens1984class} in terms of products and square roots of the covariance matrices involved.

We also interpret our proposed feature enhancing method, the Wasserstein Transform, as both a generalization and a strengthening of mean shift (MS) \citep{cheng1995mean,fukunaga1975estimation} which can operate on general metric spaces. Although mean shift has been generalized to data living on Riemannian manifolds \citep{subbarao2009nonlinear,shamir2006mesh}, our interpretation departs from the ones in those papers in that we do not attempt to estimate a local mean or median of the data but, instead, we use the local density of points to iteratively directly adjust the distance function on the metric space.

We study the relationship between different instances of WT and prove that each of the instances is stable under perturbations. We also draw connections between WT and the renowned Ricci flow \cite{hamilton1982three} to explain the geometric intuition behind WT. We devise iterative algorithms for performing the aforementioned WT and propose a set of strategies to accelerate GT, such as an observation from linear algebra for reducing the number of matrix square root computations involved. We perform extensive experiments and our results demonstrate that the WT family of methods boost the performance significantly in tasks related to denoising, clustering, image segmentation and the generation of word embeddings, etc. The stability results proved later in the paper are technically involved and underpin the robustness claims for these transforms.

This paper is a significant expansion of \cite{memoli2019wasserstein} where a preliminary version of WT was discussed.

\paragraph{Related work.} Word embedding is an important task in NLP which aims at generating representations of words in order to encode semantic similarity between words. Traditionally, words are represented by vectors and such representations are learned via word contexts \citep{pennington2014glove,mikolov2013distributed}. A recent trend in the NLP community consists of representing words as probability measures. For example,  \cite{Vilnis2014WordRV} represented words by Gaussian distributions to allow modeling uncertainty and asymmetry in word embedding. This approach was later extended in \citep{athiwaratkun2017multimodal} by representing words via Gaussian mixture models.

As a generalization of the above application in NLP,  \cite{muzellec2018generalizing} have studied possible advantages of embedding general data points into the space of Gaussian measures equipped with the $\ell^2$-Wasserstein distance. Later, \cite{frogner2019learning} investigated a similar embedding into the space consisting of general probability measures with finite support. 

One main goal of all the above works \citep{Vilnis2014WordRV,muzellec2018generalizing,frogner2019learning} is to learn an optimal (via certain loss functions) embedding of data points (e.g., words) into a certain space of probability measures equipped with either the KL-divergence or the Wasserstein distance. This is fundamentally different from our approach which aims at updating the distance of the given data set based on local information without referring to a learning process. 

A recent work by
\cite{singh2020context} is in spirit closely related to our idea: they proposed to represent words in a given corpus via probability measures reflecting context information and then to utilize the Wasserstein distance between these probability measures to perform on NLP tasks such as sentence representation or word similarity\footnote{We developed our approach independently of theirs and in fact, the original version of \citep{singh2020context} was published at around the same time as our published preliminary version \citep{memoli2019wasserstein}. We remark that their method, though specially tailored for NLP tasks, fits into our general framework of WT.}.

\paragraph{Organization of the paper.} In Section \ref{sec:background}, we recall some basic concepts used in this paper. In Section \ref{sec:wt-framework}, we introduce the Wasserstein Transform framework. Several different instances of WT methods are discussed and in particular, we formulate the mean shift method as an instance of WT. Section \ref{sec:theory} is devoted to study various theoretical properties of different instances of WT including stability results. In Section \ref{sec:algorithm}, we describe algorithms for WT methods and introduce several computational techniques for GT. In Section \ref{sec:experiments}, we apply WT methods to several data analysis tasks. Most of the proofs of results in this paper are provided in Section \ref{sec:wt-proof}.
\section{Preliminaries}\label{sec:background}
\subsection{Basic concepts about metric spaces and probability measures}\label{sec: probability}

\paragraph{Metric spaces.}
A metric space consists of a pair $(X,d_X)$ where $X$ is a set and $d_X:X\times X\rightarrow\Rp$ is a non-negative function satisfying the following three conditions, for any $ x,x',x''\in X$: 

\begin{enumerate}
\item $d_X(x,x')=0$ if and only if $x=x'$;

\item $d_X(x,x')=d_X(x',x)$;

\item $\,d_X(x,x'')\leq d_X(x,x')+d_X(x',x'').$
\end{enumerate}
We call $d_X$ a distance function on $X$.

\begin{remark}[Ultrametric spaces]
A special type of metric spaces we will use later in this paper is \emph{ultrametric spaces}. Given a metric space $(X,d_X)$, we say $d_X$ is an \emph{ultrametric} if $d_X$ satisfies the so-called strong triangle inequality:
for any $x,x',x''\in X$
\[d_X(x,x')\leq\max\lc d_X(x,x''),d_X(x'',x')\rc.\]
We then call $(X,d_X)$ an ultrametric space. We will henceforth use $u_X$ instead of $d_X$ to denote an ultrametric on $X$.
\end{remark}

In the sequel of this paper, we need the following concepts related to metric spaces:

\begin{enumerate}
    \item Inside any metric space $X$, a closed ball with radius $\eps>0$ and with center $x\in X$ is defined as the set $B_\eps(x)\coloneqq\{x'\in X:\,d_X(x,x')\leq \eps\}$. Sometimes we write $B_\eps(x)$ as $B^{d_X}_\eps(x)$ or $B^{X}_\eps(x)$ to emphasize the underlying metric or metric space. A closed ball with radius $\eps$ is also called an \emph{$\eps$-ball} in the sequel.

    \item Based on the notion of closed balls, given any subset $A\subseteq X$, we define its $\eps$-neighborhood $A^\eps$ by 
$$A^\eps\coloneqq\cup_{x\in A}B_\eps(x).$$

\end{enumerate}

\paragraph{Probability measures.}
In this paper, we will use various collections of Borel probability measures on a given metric space $X$:
\begin{enumerate}
    \item $\mathcal{P}(X)$: the collection of all Borel probability measures.
    \item $\mathcal{P}_f(X)$: the collection of all Borel probability measures with \emph{full support}.
    \item $\mathcal{P}_p(X)$: the collection of all Borel probability measures with \emph{finite} $p$-moment where $p\in[1,\infty]$, i.e., $\alpha\in\mathcal{P}_p(X)$ iff $\forall x_0\in X,\quad\norm{d_X(\cdot,x_0)}_{L^p(\alpha)}<\infty.$ 
\end{enumerate}
Note that $\mathcal{P}(X)=\mathcal{P}_p(X)$ whenever $X$ is bounded.

Given two metric spaces $X$ and $Y$, any measurable map $\varphi:X\rightarrow Y$ induces the \emph{pushforward} map $\varphi_\#:\mathcal{P}(X)\rightarrow\mathcal{P}(Y)$: for any $\alpha\in\mathcal{P}(X)$, its pushforward $\varphi_\#\alpha$ is the unique measure such that for any measurable set $A\subseteq Y$
\[\varphi_\#\alpha(A)=\alpha(\varphi^{-1}(A)).\]

Any `nice' probability measure on $\R^m$ comes with the notions of mean and covariance matrix. 

\begin{definition}\label{def:mean and covariance}
Given any $\alpha\in\mathcal{P}_1(\R^m)$, the \emph{mean} $\mu_\alpha$ of $\alpha$ is defined as follows:
$$\mu_\alpha\coloneqq \int_{\R^m}x\,\alpha(dx).$$
If moreover $\alpha\in\mathcal{P}_2(\R^m)$, we define the \emph{covariance matrix} $\Sigma_\alpha$ of $\alpha$ as follows:
$$\Sigma_\alpha\coloneqq\int_{\mathbb{R}^m}\lc{x}-{\mu_\alpha}\rc\otimes \lc {x}-{\mu_\alpha}\rc\,\alpha(dx),$$
where $x\otimes y$ is the bi-linear form on $\R^m$ such that $x\otimes y(u,v)=\langle x,u\rangle\cdot\langle y,v\rangle$ for any $u,v\in\R^m$. 
\end{definition}

For later use, we let $\mathcal{P}_{1,2}(\R^m)\coloneqq\mathcal{P}_{1}(\R^m)\cap \mathcal{P}_{2}(\R^m)$.

Finally, we recall the definition for \emph{Gaussian measures} on Euclidean spaces. Given $\mu\in \R^m$ and a positive semi-definite matrix $\Sigma\in\R^{m\times m}$, the corresponding Gaussian measure $\gamma\coloneqq\mathcal{N}(\mu,\Sigma)$ is determined by the following Gaussian distribution function:
\[f(x)=\frac{1}{\sqrt{(2\pi)^m\det(\Sigma)}}\exp\lc{-\frac{1}{2}(x-\mu)^\mathrm{T}\Sigma^{-1}(x-\mu)}\rc,\,\forall x\in \R^m.\]

Notice that the Gaussian measure $\gamma\in\mathcal{P}_{1,2}(\R^m)$ and moreover, its mean $\mu_\gamma$ coincides with $x$ and its covariance $\Sigma_\gamma$ coincides with $\Sigma$.

\paragraph{Transition kernels.} One of the central notions in defining our Wasserstein Transform is the notion of transition kernels from the theory of Markov chains.
Given a metric space $(X,d_X)$, a \emph{transition kernel} on $X$ is any measurable map $m:X\rightarrow\mathcal{P}(X)$ (w.r.t. the weak topology on $\mathcal{P}(X)$). We let $\mathfrak{T}(X)$ denote the collection of all transition kernels on $X$. More generally, given any two metric spaces $X$ and $Y$, we let $\mathfrak{T}(X,Y)$ denote the collection of all measurable maps $m:X\rightarrow\mathcal{P}(Y)$, which we call transition kernels on $Y$ induced by $X$.

\subsection{Optimal transport concepts}\label{sec:ot concepts}
Let $(X,d_X)$ be a metric space and let $p\in[1,\infty]$. There exists a natural distance $\dW{p}^X$, the \emph{$\ell^p$-Wasserstein distance}  \citep{villani2008optimal}, for comparing probability measures in $\mathcal{P}_p(X)$:
\begin{definition}[$\ell^p$-Wasserstein distance]\label{def:p-w-dist}
For any metric space $X$ and 
any $\alpha,\beta\in\mathcal{P}_p\left(X\right)$, the $\ell^p$-Wasserstein distance between $\alpha$ and $\beta$ is defined as follows:
$$d_{\mathrm{W},p}^{(X,d_X)}(\alpha,\beta) \coloneqq  \left(\inf_{\pi\in \mathcal{C}(\alpha,\beta)} \int_{X\times X} (d_X(x,x'))^p\,\pi(dx\times dx')\right)^\frac{1}{p},$$
and for $p=\infty$:
$$d_{\mathrm{W},\infty}^{(X,d_X)}(\alpha,\beta) \coloneqq  \inf_{\pi\in\mathcal{C}(\alpha,\beta)}\sup_{(x,x')\in\supp(\pi)}d_X(x,x'),$$
where $\mathcal{C}(\alpha,\beta)$ is the set of all \emph{couplings} $\pi$ (also named \emph{transport plans}) between $\alpha$ and $\beta$, i.e., $\pi$ is a probability measure on $X\times X$ with marginals $\alpha$ and $\beta$, respectively. When the underlying metric space $(X,d_X)$ is clear from the context, we will simply write $d_{\mathrm{W},p}$ instead of $d_{\mathrm{W},p}^{(X,d_X)}$.
\end{definition}

\begin{remark}
If $X$ is complete and separable, then there exists $\pi\in\mathcal{C}(\alpha,\beta)$ such that $d_{\mathrm{W},p}^{(X,d_X)}(\alpha,\beta) = \norm{d_X}_{L^p(\pi)}$ (cf. \cite[Theorem 4.1]{villani2008optimal}). We call any such $\pi$ \emph{an optimal coupling} and let $\mathcal{C}_p^\mathrm{pot}(\alpha,\beta)$ denote the set of all optimal couplings.
\end{remark}

We then define the Wasserstein space of $X$ as $W_p(X)\coloneqq(\mathcal{P}_p(X),\dW{p})$.

Solving the optimization problem for computing the Wasserstein distance is usually time-consuming \citep{cuturi2013sinkhorn}. However, there are some special cases where the Wasserstein distance admits explicit formulas.

\begin{example}[Wasserstein distance between Dirac measures]\label{remark:deltas}
A simple but important example \citep{villani2008optimal} is that for points $x,x'\in X$, if one considers the Dirac measures $\delta_x$ and $\delta_{x'}$ supported at those points (which will be probability measures), then for any $p\in[1,\infty]$ we have that
$d_{\mathrm{W},p}(\delta_x,\delta_{x'})=d_X(x,x')$, i.e., the ``ground'' distance between $x$ and $x'$.
\end{example}

\begin{example}[Wasserstein distance on the real line]\label{ex:dw on R}
It is shown in page 73 of \cite{villani2003topics} that given two probability measures $\alpha,\beta\in\mathcal{P}_1(\R)$, we have
\begin{equation}\label{eq:closed-form-real line}
   \dW{1}(\alpha,\beta)=\int_{-\infty}^\infty|F_\alpha(t)-F_\beta(t)|dt, 
\end{equation}
where $F_\alpha$ and $F_\beta$ are the cumulative distribution functions of $\alpha$ and $\beta$, respectively. A similar expression is true for $p>1$.
\end{example}

\begin{example}[Wasserstein distance between Gaussians]\label{rmk:Gaussian distance}
In the case of Gaussian measures on Euclidean spaces, the $\ell^2$-Wasserstein distance enjoys a \emph{closed form formula} which allows for efficient computation. Given two Gaussian measures $\gamma_1=\mathcal{N}(x_1,\Sigma_1)$ and $\gamma_2=\mathcal{N}(x_2,\Sigma_2)$ on $\R^m$, we have that
\begin{equation}\label{eq:w2 btw gaussian}
    d_{\mathrm{W},2}(\gamma_1,\gamma_2)=\lc{\norm{x_1-x_2}^2+ (\dcov(\Sigma_1,\Sigma_2))^2}\rc^{\frac{1}{2}},
\end{equation}
where 
\begin{equation}\label{eq:dcov}
    \dcov(\Sigma_1,\Sigma_2)\coloneqq\lc\tr\left(\Sigma_1+\Sigma_2-2\left(\Sigma_1^\frac{1}{2}\Sigma_2\Sigma_1^\frac{1}{2}\right)^\frac{1}{2}\right)\rc^\frac{1}{2}.
\end{equation}
See \cite{givens1984class} for a proof. Note that ${\dcov}$ is also known as the Bures distance \citep{bures1969extension} between positive semi-definite matrices. 
\end{example}

Note that given any probability measure $\alpha\in\mathcal{P}_{1,2}(\R^m)$ on $\R^m$, we can induce a Gaussian measure $\gamma_\alpha=\mathcal{N}(\mu_\alpha,\Sigma_\alpha)$ on $\R^m$, where $\mu_\alpha$ and $\Sigma_\alpha$ denote the mean and the covariance of $\alpha$, respectively.

The following result provides lower bounds for the Wasserstein distance between arbitrary probability measures on $\R^m$.

\begin{lemma}[Lower and upper bounds on Euclidean spaces]\label{lm:bound}
Let $\alpha_1,\alpha_2\in\mathcal{P}_{1,2}(\R^m)$. 

Then, we have the following inequalities:
\begin{enumerate}
    \item $\norm{\mu_{\alpha_1}-\mu_{\alpha_2}}\leq d_{\mathrm{W},2}(\gamma_{\alpha_1},\gamma_{\alpha_2})\leq d_{\mathrm{W},2}(\alpha_1,\alpha_2).$
    \item $\norm{\mu_{\alpha_1}-\mu_{\alpha_2}}\leq d_{\mathrm{W},1}(\alpha_1,\alpha_2).$
\end{enumerate}
\end{lemma}

\begin{proof}
1. The leftmost inequality is an immediate consequence of the formula for the $\ell^2$-Wasserstein distance between Gaussian measures:
$$d_{\mathrm{W},2}(\gamma_{\alpha_1},\gamma_{\alpha_2})=\lc{\norm{\mu_{\alpha_1}-\mu_{\alpha_2}}^2+(\dcov(\Sigma_{\alpha_1},\Sigma_{\alpha_2}))^2}\rc^\frac{1}{2}\geq \norm{\mu_{\alpha_1}-\mu_{\alpha_2}}.$$
The rightmost inequality was proved in \citep{gelbrich1990formula}.

2. This inequality was proved in \cite{rubner1998metric}.
\end{proof}

\paragraph{Kantorovich duality.} One of the most useful tools for calculating $\dW{1}$ is Kantorovich duality (see for example \cite[Remark 6.5]{villani2008optimal}), which states that for any compact metric space $X$ and $\alpha,\beta\in\mathcal{P}(X)$,
\[\dW{1}(\alpha,\beta)=\sup\left\{\left|\int f(x) \alpha(dx)-\int f(x) \beta(dx)\right|:f:X\rightarrow\R\text{ is 1-Lipschitz}\right\}.\]

Notice that for any 1-Lipschitz function $f:X\rightarrow\R$ and  $c\in\R$, the translation $f+c$ of $f$ will not affect the value $\left|\int f(x) \alpha(dx)-\int f(x) \beta(dx)\right|$. Moreover, $|f(x)-f(y)|\leq d_X(x,y)\leq\diam(X)$. Hence, when $X$ is bounded we can impose on $f$ the condition that $\sup_{x\in X}|f(x)|\leq\diam(X)$, which implies that
\begin{equation}\label{eq:kant duality}
    \dW{1}(\alpha,\beta)=\sup\left\{\left|\int f(x) \alpha(dx)-\int f(x) \beta(dx)\right|:f\text{ is 1-Lipschitz and }\sup_{x\in X}|f(x)|\leq\diam(X).\right\}
\end{equation}

\paragraph{Relationship with the Prokhorov distance.}\label{sec:prokhorov}
Given a metric space $(X,d_X)$, the \emph{Prokhorov distance} $\dP^{X}(\alpha,\beta)$ (or simply $\dP (\alpha,\beta)$, when the underlying metric space is clear) between $\alpha,\beta\in\mathcal{P}(X)$ is defined as
$$\dP^{X}(\alpha,\beta)\coloneqq\inf\{r>0\,:\, \alpha(A)\leq \beta(A^r)+r,\,\forall A\subseteq X\}.$$
Though seemingly asymmetric, $\dP $ is actually symmetric; see \cite{dudley2018real} for more details.

The following result exhibits a relationship between $\dP$ and $\dW{1}$:
\begin{lemma}[{\cite[Theorem 3]{gibbs2002choosing}}]\label{lemma:dp}
Given a compact metric space $(X,d_X)$, for any $\alpha,\beta\in \mathcal{P}(X)$, we have the following relationship between the Wasserstein  and the Prokhorov distances:
\[\lc \dP (\alpha,\beta)\rc ^2\leq \dW{1}(\alpha,\beta)\leq\lc 1+\diam(X)\rc \dP (\alpha,\beta).\]
\end{lemma}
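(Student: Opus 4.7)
The goal is to prove the two-sided comparison
\[\bigl(d_P(\alpha,\beta)\bigr)^2 \le d_{W,1}(\alpha,\beta) \le \bigl(1+\diam(X)\bigr)\,d_P(\alpha,\beta).\]
I would prove the two inequalities independently, starting with the (easier) lower bound on $d_{W,1}$, which follows from a Markov-type argument on an almost optimal coupling.

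For the lower bound $(d_P)^2 \le d_{W,1}$: fix any coupling $\mu \in \Pi(\alpha,\beta)$ and any Borel set $A\subset X$ and any $\delta>0$. Decompose
\[\alpha(A) = \mu(A\times X) = \mu(A \times A^\delta) + \mu\bigl(A \times (A^\delta)^c\bigr) \le \beta(A^\delta) + \mu\bigl(A \times (A^\delta)^c\bigr),\]
using the marginal property $\mu(A\times A^\delta)\le \mu(X\times A^\delta)=\beta(A^\delta)$. The key observation is that if $(x,x')$ lies in $A\times(A^\delta)^c$ then $d_X(x,x')\ge \delta$ by definition of the fattening, so by Markov's inequality
\[\mu\bigl(A \times (A^\delta)^c\bigr) \le \mu\bigl(\{(x,x')\,:\,d_X(x,x')\ge\delta\}\bigr) \le \frac{1}{\delta}\iint d_X\, d\mu.\]
Taking the infimum over couplings gives $\alpha(A)\le \beta(A^\delta) + \frac{d_{W,1}(\alpha,\beta)}{\delta}$ for all $A$ and all $\delta>0$. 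Choosing $\delta := \sqrt{d_{W,1}(\alpha,\beta)}$ balances the two terms and produces $\alpha(A)\le \beta(A^\delta)+\delta$ for every $A$, whence $d_P(\alpha,\beta)\le \delta = \sqrt{d_{W,1}(\alpha,\beta)}$.

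For the upper bound $d_{W,1}\le (1+\diam(X))\,d_P$: the natural route is via Strassen's theorem, which guarantees that for $\delta > d_P(\alpha,\beta)$ there exists a coupling $\mu\in\Pi(\alpha,\beta)$ with $\mu\bigl(\{(x,x'):d_X(x,x')>\delta\}\bigr)\le \delta$. Splitting the cost integral at the threshold $\delta$ gives
\[\iint d_X\, d\mu \;=\; \iint_{d_X\le\delta} d_X\, d\mu \;+\; \iint_{d_X>\delta} d_X\, d\mu \;\le\; \delta\cdot 1 + \diam(X)\cdot\delta \;=\; (1+\diam(X))\,\delta,\]
where the first term uses that $d_X\le\delta$ on the first region and $\mu$ is a probability measure, and the second term uses $d_X\le\diam(X)$ together with the Strassen bound on the measure of the second region. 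Hence $d_{W,1}(\alpha,\beta)\le (1+\diam(X))\,\delta$, and letting $\delta\downarrow d_P(\alpha,\beta)$ finishes the proof.

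The main obstacle is the invocation of Strassen's theorem: constructing, from the merely set-theoretic inequality defining $d_P$, an actual coupling whose mass on $\{d_X>\delta\}$ is controlled by $\delta$ is nontrivial and is the real content of the upper bound. Since the statement is attributed to \cite{gibbs2002choosing}, I would simply quote Strassen here rather than reprove it. The lower bound, by contrast, uses only Markov's inequality and marginals, and the optimization $\delta=\sqrt{d_{W,1}}$ explains the quadratic relationship.
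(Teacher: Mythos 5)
Your argument is correct, but note that the paper does not prove this lemma at all: it is quoted verbatim as Theorem~2 of the cited survey of Gibbs and Su, and your proof is essentially a reconstruction of the argument given there. Both halves are sound: the lower bound via the marginal decomposition $\alpha(A)\le\beta(A^\delta)+\mu\bigl(A\times(A^\delta)^c\bigr)$, Markov's inequality, and the balancing choice $\delta=\sqrt{d_{W,1}(\alpha,\beta)}$ is exactly the standard route (with the trivial caveat that when $d_{W,1}(\alpha,\beta)=0$ one should take $\delta\downarrow 0$ rather than divide by zero); and the upper bound correctly isolates Strassen's theorem as the one genuinely nontrivial ingredient, after which splitting the transport cost at the threshold $\delta$ into a piece bounded by $\delta$ and a piece bounded by $\diam(X)\cdot\delta$ is immediate. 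Since the paper's stance is to cite rather than reprove, your write-up is a strictly more self-contained version of what the paper relies on, and quoting Strassen rather than reproving it is the appropriate level of detail here.
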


\begin{remark}\label{rmk:pr}
If $\alpha$ and $\beta$ are not fully supported, then by restricting the metric $d_X$ to $S\coloneqq\supp(\alpha)\cup\supp(\beta)\subseteq X$, the rightmost inequality above can be improved to
$\dW{1}(\alpha,\beta)\leq\lc 1+\diam(S)\rc \dP (\alpha,\beta).$
\end{remark}

\section{Framework and instances of the Wasserstein Transform}\label{sec:wt-framework}

In this section, we first introduce the general framework of Wasserstein Transform (WT) and we then introduce several important instances of WT.

\subsection{The Wasserstein Transform}
The essential idea behind the Wasserstein Transform is to first capture local information of the data and then induce a new distance function between pairs of points based on the dissimilarity between their respective neighborhoods. Localization operators are gadgets that capture these neighborhoods.

\begin{definition}[Localization operator] Let $(X,d_X)$ be a metric space. Recall from \Cref{sec: probability} that $\mathcal{P}_f(X)$ denotes the collection of all $\alpha\in\mathcal{P}(X)$ with full support and that $\mathfrak{T}(X)$ denotes the collection of all transition kernels on $X$. Then, a \emph{localization operator} $L$ 
{on $X$} is any map 
$$L:\pf(X)\rightarrow\mathfrak{T}(X), $$
i.e., $L$ sends every $\alpha\in\mathcal{P}_f(X)$ to a measurable map $m_{\alpha,d_X}^L:X\rightarrow \mathcal{P}(X)$.
\end{definition}

Two simple examples of localization operators are given as follows. 
\begin{enumerate}
    \item[(a)] Given $\alpha$ in $\mathcal{P}_f(X)$, let $m_{\alpha,d_X}^L(x)\equiv \alpha,\forall x\in X$, which assigns to all points in $X$ the (same) reference probability measure $\alpha$. This is a trivial example in that it does not localize the  measure $\alpha$ at all.  
    \item[(b)] For any $\alpha$ in $\mathcal{P}_f(X)$, let $m_{\alpha,d_X}^L(x)=\delta_x,\forall x\in X$. This is a legitimate localization operator but it does not retain any information from $\alpha$.
\end{enumerate}
More interesting localization operators will be considered in \Cref{sec:instance of LO}.

After specifying a localization operator $L$ and given $\alpha\in\mathcal{P}_f(X)$, one associates each point $x$ in $X$ with a probability measure $m_{\alpha,d_X}^L(x)$, and then obtains a new metric space by considering the Wasserstein distance between each pair of these measures. See \Cref{fig:interp-wt-once} for an illustration.

\begin{figure}[htb]
\includegraphics[width=\linewidth]{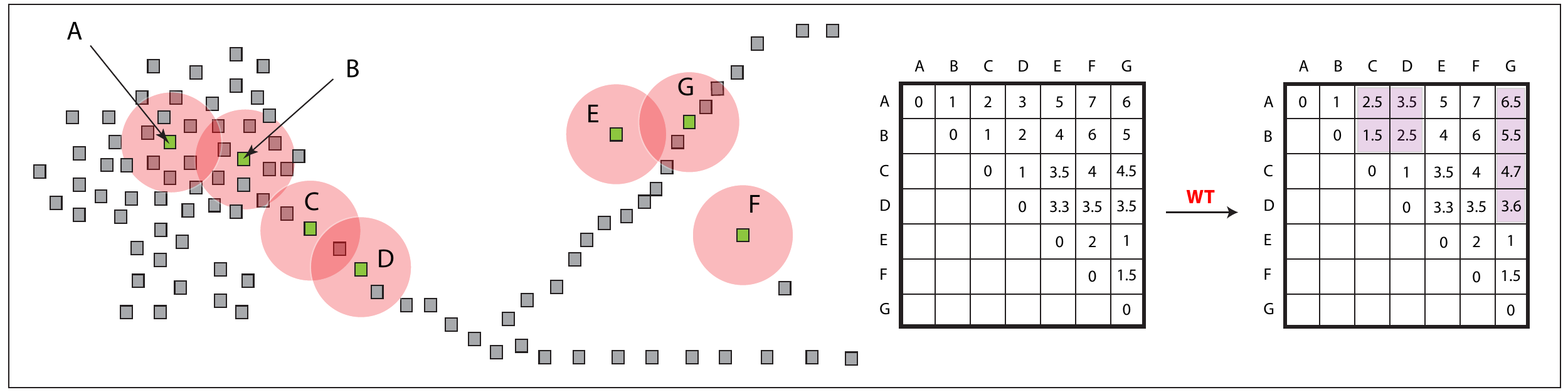}
\caption{\textbf{Illustration of the Wasserstein Transform (continued from \Cref{fig:interp-wt}).} Under the Wasserstein transform, the distance between points with similar neighborhood structures, e.g., A and B, or C and D, will remain more or less the same. In contrast, the distance between points, e.g., B and C, or E and G, with different neighborhood structures will be enlarged. The cartoon on the right sketches the corresponding change in the distance matrix.} \label{fig:interp-wt-once}
\end{figure}

\begin{definition}[The Wasserstein Transform]\label{def:WT}
Let $(X,d_X)$ be a given metric space and let $\alpha \in \mathcal{P}_f(X)$. Given a localization operator $L$ and $p\in[1,\infty]$, the \emph{Wasserstein Transform} $\mathbf{W}_{L,p}$ applied to $\alpha$ defines the distance function $d_{\alpha,d_X}^{L,p}$ on $X$, which we will refer to as the \emph{WT distance}, by
$$d_{\alpha,d_X}^{L,p}(x,x')\coloneqq \dW{p}\lc m_{\alpha,d_X}^{L}(x),m_{\alpha,d_X}^{L}(x')\rc ,\forall x,x'\in X.$$
By $\mathbf{W}_{L,p}(\alpha)$ we will henceforth denote the resulting metric space\footnote{Technically speaking, $d_{\alpha,d_X}^{L,p}$ is a \emph{pseudodistance}, i.e.,  $d_{\alpha,d_X}^{L,p}(x,x')=0$ may not imply $x=x'$. A pseudometric space $(X,d_X)$ can be canonically transformed into a metric space by identifying points at distance 0 (cf. \cite{burago2001course}). Hence, this pseudodistance will not pose any obstacle for applying WT iteratively as suggested in Remark \ref{rmk:iterating WT}.} $\lc X,d_{\alpha,d_X}^{L,p}\rc.$
\end{definition}

In this paper, we restrict ourselves to the cases $p=1$ and $p=2$.

\begin{remark}[Measurability of the identity map]\label{rmk:measurable identity}
The assumption that $m_{\alpha,d_X}^L:X\rightarrow\mathcal{P}(X)$ is measurable implies the measurability of the identity map $(X,d_X)\rightarrow\lc X,d_{\alpha,d_X}^{L,p}\rc $.
\end{remark}

\begin{remark}[Iterating the Wasserstein Transform]\label{rmk:iterating WT}
The Wasserstein Transform can be iterated any desired number of times with the purpose of successively enhancing features and/or reducing noise. After applying the Wasserstein Transform once to $\alpha \in \mathcal{P}_f(X)$, the metric space $(X,d_X)$ is transformed into $\lc X,d_{\alpha,d_X}^{L,p}\rc $. We still denote by $\alpha$ the pushforward of $\alpha$ under the identity map $(X,d_X)\rightarrow\lc X,d_{\alpha,d_X}^{L,p}\rc $, which is measurable due to Remark \ref{rmk:measurable identity}. Then, we can apply the Wasserstein Transform again to $\alpha$ on $\lc X,d_{\alpha,d_X}^{L,p}\rc $ and so on (see \Cref{fig:iterate wt} for an illustration).
\end{remark}

\begin{figure}
\begin{center}
\begin{tikzpicture}
  \node (a) {$(X,d_X)$};
  \node[block,right=3.5em of a,label=below:{\scriptsize Localization operator}] (b) {$L$};
  \node[block,right=of b] (c) {$\dW{p}$};
  \node[right=3em of c] (d) {$\wt{p}^L(\alpha) = \lc X,d_{\alpha,d_X}^{L,p}\rc $};

  \node[draw,dashed, red, line width=1mm, inner xsep=5mm,inner ysep=6mm,fit=(b)(c),label={Wasserstein Transform of $\alpha$}]{};
  \draw[line] (a)-- (b);
  \draw[line] (b) -- (c);
  \draw[line] (c)-- (d);
  \draw[line] (d.south) -- ++(0,-1.5cm) -| node [near start,above]{iterate} ($(a.south) + (0.4cm, 0em)$);
\end{tikzpicture}

\end{center}
\caption{\textbf{Illustration of iterative WT.} The box contains the whole process of applying once WT. Note that the whole process can be iterated by letting $\lc X,d_{\alpha,d_X}^{L,p}\rc$ be the input space.}
\label{fig:iterate wt}
\end{figure}

\subsubsection{Extrinsic Wasserstein Transform}\label{sec:extrinsic WT}
The Wasserstein Transform defined in Definition \ref{def:WT} operates on a general coordinate-free metric space. However, in many applications, data points possess coordinates, i.e., either they are point clouds in an ambient Euclidean space $\R^m$ or they possess Euclidean features. It is then reasonable to absorb local information of the given data set by incorporating this coordinate information. Moreover, by taking into account the underlying Euclidean structure, one may take advantage of the various properties of the Wasserstein distance that we have seen in Section \ref{sec:ot concepts}. Given these considerations, we describe the framework of \emph{extrinsic Wasserstein Transform}.

The extrinsic WT works with any metric space $(X,d_X)$ equipped with a \emph{coordinatization map}, i.e., any continuous map $\varphi:X\rightarrow \R^m$. We first define the notion of \emph{extrinsic localization operators} associated with such data.
The notation in the following definition is a bit involved, so we will explain it via examples immediately after giving the definition:

\begin{definition}[Extrinsic localization operator]\label{def:extrinsic localization operator}
Let $(X,d_X)$ be a metric space with a coordinatization map $\varphi:X\rightarrow \R^m$. Let $\psi:\R^m\times\mathcal{P}_{1,2}(\R^m)\rightarrow\mathcal{P}(\R^m)$ be any measurable map. For any localization operator $L:\pf(X)\rightarrow\mathfrak{T}(X)$, we then define
$$L_{\psi,\varphi}:\pf(X)\rightarrow\mathfrak{T}(X,\R^m)$$
by sending every $\alpha\in\mathcal{P}_f(X)$ to the measurable map 
$$m_{\alpha,d_X}^{L_{\psi,\varphi}}:X\rightarrow \mathcal{P}(\R^m) \text{ defined by }x\mapsto \psi\left(\varphi(x), \varphi_\#m_{\alpha,d_X}^L(x)\right),\,\forall x\in X.$$
We call $L_{\psi,\varphi}$ the \emph{extrinsic localization operator} on $X$ induced by $L$, $\psi$ and $\varphi$. 
\end{definition}

\begin{figure}
\centering
\includegraphics[width=0.82\linewidth]{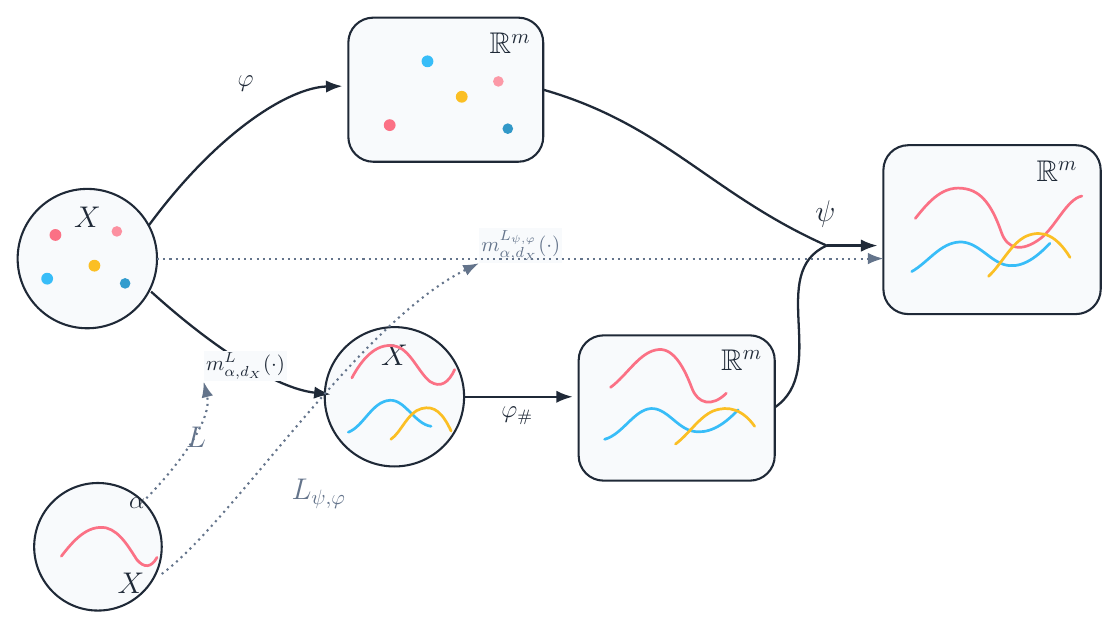}
\caption{\textbf{Extrinsic localization operator attached to a point $x$.} The figure mirrors Definition \ref{def:extrinsic localization operator}: the measure $\alpha$ on $X$ is localized by $L$ to produce $m^L_{\alpha,d_X}(x)$, the coordinate map $\varphi$ and the pushforward map $\varphi_\#$ send the point and the localized measure into $\R^m$, and the map $\psi$ combines these two inputs to produce $m_{\alpha,d_X}^{L_{\psi,\varphi}}(x)$.}
\label{fig:extrinsic WT}
\end{figure}
See Figure \ref{fig:extrinsic WT} for a visualization.

The coordinatization map $\varphi:X\rightarrow \R^m$ can be the inclusion map when $X$ is a subset of $\R^m$.
For the map $\psi:\R^m\times\mathcal{P}_{1,2}(\R^m)\rightarrow\mathcal{P}(\R^m)$, we only consider the following two cases; the usefulness of them will be shown very soon in \Cref{sec:ms} and \Cref{sec:gt}:
\begin{enumerate}
    \item $\psi$ sends every $(y,\alpha)$ to the Dirac measure supported on the mean $\mu_\alpha\coloneqq\mean(\alpha)$.
    \item $\psi$ sends every $(y,\alpha)$ to the Gaussian measure $\mathcal{N}\lc y,\Sigma_\alpha\rc$, where $\Sigma_\alpha$ denotes the covariance matrix of $\alpha$ (cf. Definition \ref{def:mean and covariance}).
\end{enumerate}

\begin{definition}[The extrinsic Wasserstein Transform]\label{def:extrinsic WT}
Let $(X,d_X)$ be a metric space. Let $\varphi:X\rightarrow \R^m$ be a continuous map and let $\psi:\R^m\times\mathcal{P}_{1,2}(\R^m)\rightarrow\mathcal{P}(\R^m)$ be a measurable map. Consider $\alpha \in \mathcal{P}_f(X)$. Given any localization operator $L$ on $X$ and $p\in[1,\infty]$, the \emph{extrinsic Wasserstein Transform} $\mathbf{W}_{L_{\psi,\varphi},p}$ applied to $\alpha$ defines the distance function $d_{\alpha,d_X}^{L_{\psi,\varphi},p}$ on $X$ by
$$d_{\alpha,d_X}^{L_{\psi,\varphi},p}(x,x')\coloneqq \dW{p}^{\R^m}\lc m_{\alpha,d_X}^{L_{\psi,\varphi}}(x),m_{\alpha,d_X}^{L_{\psi,\varphi}}(x')\rc ,\forall x,x'\in X.$$
By $\mathbf{W}_{L_{\psi,\varphi},p}(\alpha)$ we denote the resulting metric space $\lc X,d_{\alpha,d_X}^{L_{\psi,\varphi},p}\rc.$
\end{definition}

\subsection{Instances of localization operators and corresponding Wasserstein Transforms}\label{sec:instance of LO}
In this section, we introduce three types of localization operators and thus three different types of Wasserstein Transforms. 

To help readers navigate our notation, we summarize our acronyms in Table \ref{tab:WT-acronym} and our symbols in Table \ref{tab:WT-symbol}.

\begin{table}[H]
    
    \caption{\textbf{Useful acronyms.}}
    \centering
\adjustbox{max width=\textwidth}{\begin{tabular}{ |c|c|c|  }
 \hline
 \multicolumn{3}{|c|}{\textbf{Acronyms for different instances of WT}} \\
 \hline
 localization operator  &  general & $p$ is specified \\
 \hline
 \hline
 general  & WT &   WT$p$\\
 \hline
 kernel localization&  KL-WT& KL-WT$p$\\
 \hline
 local truncation&   LT-WT & LT-WT$p$\\
 \hline
 mean shift (w.r.t. local truncation)&  MS& N/A\\
 \hline
 Gaussian localization/Gaussian Transform&   GT & N/A\\
 \hline
\end{tabular}}
\label{tab:WT-acronym}
\end{table}

\begin{table}[H]

    \caption{\textbf{Useful symbols.} When $p=1$, we usually suppress $p$ from symbols above, e.g., $\deps\coloneqq d^{\mathsmaller{(\eps)},1}_{{\alpha,d_X}}$. When $(X,d_X)\subseteq\R^m$ is an Euclidean space, we usually suppress $d_X$ from symbols below, e.g., $\me_{\alpha}(x)\coloneqq\me_{\alpha,d_X}(x)$.}
    \centering
\adjustbox{max width=\textwidth}{\begin{tabular}{ |c||c|c|c|c|c|c|  }
 \hline
 \multicolumn{7}{|c|}{\textbf{Symbols for WT}} \\
 \hline
  &general & extrinsic & KL-WT & LT-WT & MS & GT\\
 \hline
 localization operators   & $L$    & $L_{\psi,\varphi}$ & $L_K$ & N/A  & $\lms$& N/A \\
 \hline
 localized measures at $x$ &$m_{\alpha,d_X}^{L}(x)$  & $m_{\alpha,d_X}^{L_{\psi,\varphi}}(x)$ &   $m_{\alpha,d_X}^{K}(x)$ &$\me_{\alpha,d_X}(x)$& $m_{\alpha,d_X}^{\lms}(x)$& $\gamma_{\alpha,d_X}^{\mathsmaller{(\eps,\lambda)}}( {x})$\\
 \hline
 generated distances & $d_{\alpha,d_X,p}^{L}$    &$d_{\alpha,d_X,p}^{L_{\psi,\varphi}}$&  $d_{\alpha,d_X,p}^{K}$ &$d^{\mathsmaller{(\eps)}}_{{\alpha,d_X,p}}$ &$d^{\lms}_{\alpha,d_X}$& $d^{\mathsmaller{(\eps,\lambda)}}_{{\alpha,d_X}}$\\
 \hline
 Wasserstein Transforms    & $\mathbf{W}_{p}^L$   &$\mathbf{W}_{p}^{L_{\psi,\varphi}}$&   $\mathbf{W}_{p}^K$ & $\mathbf{W}_{p}^\mathsmaller{(\eps)}$ &$\wt{1}^{\lms}$ & $\gt$\\
 \hline
\end{tabular}}
\label{tab:WT-symbol}
\end{table}

\subsubsection{Kernel Localization}
There is a large class of localization operators defined via \emph{kernel functions}. Let $(X,d_X)$ be a \emph{compact} metric space. A \emph{kernel function} $K:X\times X\rightarrow\Rp$ is any \emph{bounded measurable} function. Given $\alpha\in\mathcal{P}_f(X)$, for each $x\in X$ we assume that $\int_XK(x,y)\alpha(dy)>0$ and we define the probability measure 
$$m_{\alpha,d_X}^K(x)(dy)\coloneqq \frac{K(x,y)\alpha(dy)}{\int_XK(x,y)\alpha(dy)}.$$  Due to the proposition below, the assignment $x\mapsto m_{\alpha,d_X}^K(x)$ is actually measurable. Therefore, this assignment defines a localization operator on $X$, and we call this localization operator the \textbf{kernel localization} and denote it by $L_K$.

\begin{proposition}[Measurability]
For any $p\in[1,\infty)$, if the kernel function $K$ is such that $\int_XK(x,y)\alpha(dy)>0$ for all $x\in X$, then the map $X\rightarrow W_p(X)$ defined by $x\mapsto m_{\alpha,d_X}^K(x)$ is measurable.
\end{proposition}
\begin{proof}
By Proposition 7.26 in \cite{bertsekas1996stochastic} and the fact that $\dW{p}$ metrizes the weak topology \citep[Theorem 6.9]{villani2008optimal}, we only need to show that for any fixed measurable set $A\subseteq X$, the map
\[x\mapsto m_{\alpha,d_X}^K(x)(A)= \frac{\int_AK(x,y)\alpha(dy)}{\int_XK(x,y)\alpha(dy)}\in\R\]
is measurable. This then follows directly from the Fubini-Tonelli Theorem (see for example \cite[Theorem 2.37]{folland1999real}).
\end{proof}

We denote by $d_{\alpha,d_X}^{K,p}$ the new metric generated by the corresponding Wasserstein Transform with respect to the $\ell^p$-Wasserstein distance and when $p=1$, we simply write $d_{\alpha,d_X}^{K}\coloneqq d_{\alpha,d_X}^{K,1}$. We denote by $\mathbf{W}_{K,p}$ the version of the Wasserstein Transform thus induced.  We use the abbreviation KL-WT for the \emph{Wasserstein Transform with kernel localization}.

One special type of kernel functions are \emph{rotationally symmetric kernels}, i.e., those for which there exists a function $\mathcal{K}:\Rp\rightarrow\Rp$ such that 
$$K(x,x')\coloneqq\mathcal{K}\lc{d_X(x,x')}\rc,\quad\forall x,x'\in X.$$

Sometimes, rotationally symmetric kernels are specified together with a bandwidth $\eps>0$, and in this case kernels are usually defined via squared distance functions as follows
\begin{equation}\label{eq:kernel with bandwidth}
    K_\eps(x,x')\coloneqq\mathcal{K}\lc\frac{d_X^2(x,x')}{\eps^2}\rc,\quad\forall x,x'\in X.
\end{equation}

\paragraph{Local truncation.} We now concentrate on a particular type of kernel localization which we call \textbf{local truncation}. Let $\mathcal{K}(t)=\chi_{[0,1]}(t)$ be the indicator function of the interval $[0,1]$. Then, for any $\eps>0$, this function induces a rotationally symmetric kernel $K_\eps$ with bandwidth $\eps$ on $X$ (cf. \Cref{eq:kernel with bandwidth}). We call the kernel localization operator corresponding to $K_\eps$ the \emph{local truncation}, and denote it by $L^{\mathsmaller{(\eps)}}$. More explicitly, we have for every $x\in X$ a probability measure $\me_{\alpha,d_X}(x)$ as follows: for any measurable set $A\subseteq X$,
\[\me_{\alpha,d_X}(x)(A)\coloneqq\frac{\int_AK_\eps(x,y)\alpha(dy)}{\int_XK_\eps(x,y)\alpha(dy)}= \frac{\alpha\lc B_\eps^{d_X}(x)\cap A\rc}{\alpha\lc B_\eps^{d_X}(x)\rc}.\]
The assumption that $\alpha$ is fully supported and the fact that $\eps>0$ together guarantee that the denominator $\alpha\lc B_\eps^{d_X}(x)\rc$ is positive.

When $X$ is finite, $X=\{x_1,\ldots,x_n\}$, and $\alpha$ is its empirical measure, the formula above becomes  
\[\me_{\alpha,d_X}(x)(A) = \frac{\#\{i:\,x_i\in A\,\,\mbox{{\small and}}\,\,d_X(x_i,x)\leq \eps\}}{\#\{i:\,d_X(x_i,x)\leq \eps\}}.\]

We denote by $\mathbf{W}_{p}^\mathsmaller{(\eps)}$ the corresponding version of the Wasserstein Transform with respect to the $\ell^p$-Wasserstein distance. When $p=1$, we denote by $\deps$ the new metric generated by $\mathbf{W}_{1}^\mathsmaller{(\eps)}$. Then, $\mathbf{W}_{1}^\mathsmaller{(\eps)}(\alpha)=\lc X,\deps\rc$.

Using the fact that the $\ell^1$-Wasserstein distance on $\R$ admits a closed form expression (cf. \Cref{eq:closed-form-real line}) we are able to prove the following Taylor expansion which permit interpreting the behavior of WT.
\begin{remark}\label{rem:taylor-1d}
Let $X =\R$ and assume that $\alpha\in\mathcal{P}_f(X)$ has a smooth density $f$ with bounded $L^1$-norm. Then, we have the following asymptotic formula for $\deps(x,x')$ when $\eps\rightarrow 0$ (whose proof can be found in \Cref{sec:wt-proof}):
\[\mbox{for $x'>x$ and $f(x),f(x')\neq 0$,}\,\,\deps(x,x')=x'-x+\frac{1}{3}\lc\frac{f'(x')}{f(x')}-\frac{f'(x)}{f(x)}\rc\eps^2+O(\eps^3).\]

The interpretation is that after one iteration of the Wasserstein Transform $\mathbf{W}_{\eps,1}$ of $\alpha$, pairs of points $x$ and $x'$ on very dense areas (reflected by large values of $f(x)$ and $f(x')$) will be at roughly the same distance they were before applying the Wasserstein Transform. However, if one of the points, say $x'$ is in a sparse area (i.e. $f(x')$ is small), then the Wasserstein Transform will push it away from $x$. It is also interesting what happens when $x$ and $x'$ are both critical points of $f$: in that case the distance does not change (up to order $\eps^2$). See Figure \ref{fig:MS}.
\end{remark}

We use the abbreviation LT-WT$p$ for the Wasserstein Transform induced by local truncation and by the $\ell^p$-Wasserstein distance. We also sometimes suppress $p$ and simply use LT-WT when referring to the generic concept.

\begin{figure}
\includegraphics[width=\linewidth]{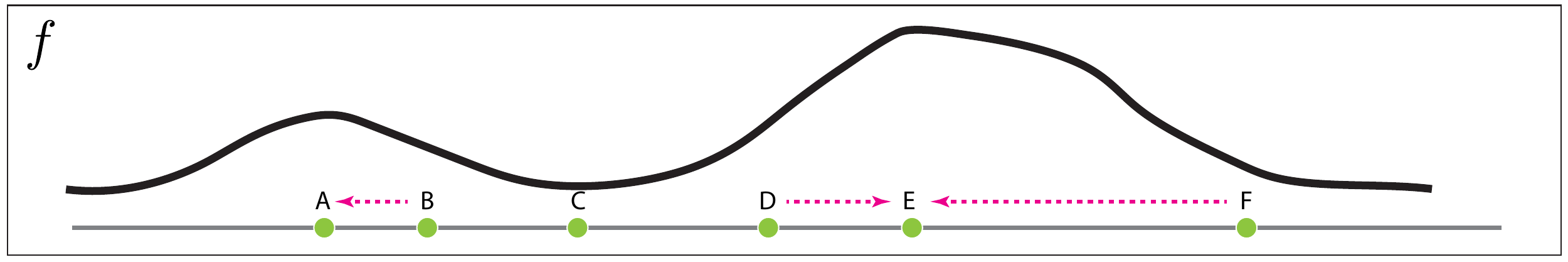}
\caption{\textbf{Illustration of Remark \ref{rem:taylor-1d}.} After applying one iteration of $\we$, both the distance between $A,C$ and the distance between $C,E$ should remain almost the same since these are all critical points of $f$. Since $f'$ has negative sign at $B$ and $B$ lies to the right of $A$, $B$ will be pushed towards $A$ according to the formula in Remark \ref{rem:taylor-1d}, while $D$ will be pushed away from $A$ since $f'(D)>0$ and it lies to the right of $A$. Similarly both $D$ and $F$ are pushed towards $E$.}
\label{fig:MS}
\end{figure}

\subsubsection{Mean shift}\label{sec:ms}
Mean shift (MS) is a mode seeking method for Euclidean data which operates by iteratively updating each data point until convergence according to a rule that moves points towards the weighted barycenter of their neighbors \citep{cheng1995mean,fukunaga1975estimation}. More specifically, given a point cloud $X = \{x_1,\ldots,x_n\}$   in $\R^m$, a function $\mathcal{K}:\Rp\rightarrow \Rp$,  and a bandwidth $\eps>0$, then in the $k$th iteration the $i$th point is shifted as follows:  
\begin{equation}\label{eq:meanshift formula}
    x_i^{k+1}=\frac{\sum_{j=1}^n \mathcal{K}\lc \frac{\norm{x_j^k-x_i^k}^2}{\eps^2}\rc  \, x_j^k}{\sum_{j=1}^n \mathcal{K}\lc \frac{\norm{x_j^k-x_i^k}^2}{\eps^2}\rc },\,\,\mbox{where $x_i^0=x_i$}.
\end{equation}

One of our observations is that the mean shift method can be placed within the framework of the extrinsic Wasserstein Transform. To see this point, consider the localization operator $L_{K_\eps}$ induced by the kernel function $K_\eps(x_i,x_j)\coloneqq\mathcal{K}\lc\frac{\|x_i-x_j\|^2}{\eps^2}\rc$ for $x_i,x_j\in X$. Let $\varphi:X\rightarrow\R^m$ denote the inclusion map (which serves as the coordinatization map as mentioned in Section \ref{sec:extrinsic WT}). Define $\psi:\R^m\times \mathcal{P}_{1,2}(\R^m)\rightarrow\mathcal{P}(\R^m)$ by sending each $(y,\beta)$ to $\delta_{\mean(\beta)}$. Then, we define the extrinsic localization operator $\lms_{K_\eps}\coloneqq (L_{K_\eps})_{\psi,\phi}$ following the notation from Section \ref{sec:extrinsic WT}. More explicitly, for every $\alpha\in\mathcal{P}_f(X)$, and for $x\in X$,
$$m^{\lms_{K_\eps}}_{\alpha,d_X}(x)\coloneqq \delta_{\mean\lc m_{\alpha,d_X}^{L_{K_\eps}}(x)\rc}\in \mathcal{P}(\R^m).$$

In words, at a fixed point $x$, $\lms_{K_\eps}$ applied to a measure $\alpha$ at $x$ first localizes $\alpha$ via $L_{K_\eps}$ to obtain $m^{L_{K_\eps}}_{\alpha,d_X}(x)$, and then further localizes this measure by only retaining information about its mean inside the ambient Euclidean space. Let $\wt{1}^{L_{K_\eps}^\mathrm{ms}}$ denote the corresponding extrinsic Wasserstein Transform.

Then, we have the following proposition.
\begin{proposition}\label{prop:MS is WT}
When considering the uniform probability measure $\alpha$ on $X=\{x_1,\ldots,x_n\}\subseteq\R^m$, the distance function generated by MS agrees with the one generated by WT w.r.t. the extrinsic localization operator $L_{K_\eps}^\mathrm{ms}$ on $X$. More specifically, for any $i,j=1,\ldots,n$ we have that
\[\norm{x_i^1-x_j^1} = \dW 1\lc m_{\alpha,d_X}^{L_{K_\eps}^\mathrm{ms}}(x_i),m_{\alpha,d_X}^{L_{K_\eps}^\mathrm{ms}}(x_j)\rc.\]
\end{proposition}

\begin{proof}
For each $x_i\in X$, we obtain the following formula which agrees with the result of applying one iteration of mean shift to the points in $X$ (cf. \Cref{eq:meanshift formula}):
$$\mean\lc m_{\alpha,d_X}^{K_\eps}(x_i)\rc=\frac{\sum_{j=1}^n K_\eps(x_i,x_j) \, x_j}{\sum_{j=1}^n K_\eps(x_i,x_j)}=\frac{\sum_{j=1}^n \mathcal{K}\lc \frac{\|x_j-x_i\|^2}{\eps^2}\rc  \, x_j}{\sum_{j=1}^n \mathcal{K}\lc \frac{\|x_j-x_i\|^2}{\eps^2}\rc }=x_i^1.$$

Since by \Cref{remark:deltas}, the Wasserstein distance between Dirac measures equals the ground distance between their support points, we then have for all $x_i,x_j\in X$ that
\begin{align*}
    \norm{x_i^1-x_j^1} =&\norm{\mean\lc m_\alpha^{K_\eps}(x_i)\rc-\mean\lc m_\alpha^{K_\eps}(x_j)\rc}=\dW 1\lc m_{\alpha,d_X}^{L_{K_\eps}^\mathrm{ms}}(x_i),m_{\alpha,d_X}^{L_{K_\eps}^\mathrm{ms}}(x_j)\rc.
\end{align*}

\end{proof}

Based on the proposition above, that the metric space $\wt{1}^{L_{K_\eps}^\mathrm{ms}}(\alpha)$ contains the same information as the collection of mean shift points $\{x_i^1\}_{i=1}^n$ above follows from the fact that any finite set in $\R^m$ can be reconstructed up to rigid transformations from its interpoint distance matrix \citep{blumenthal1953theory}.

\subsubsection{The Gaussian Transform}
\label{sec:gt}
We now introduce another instance of the extrinsic WT which we call the \emph{Gaussian Transform} (GT). This variant arose from our attempt at finding computationally less expensive alternatives to LT-WT when restricted to Euclidean data sets: the closed form formula for $\dw$ between Gaussian measures (cf. Example \ref{rmk:Gaussian distance}) is the key to devising the Gaussian Transform. 

Consider a metric space $(X,d_X)$ and a continuous coordinatization map $\varphi:X\rightarrow\R^m$. Let $\alpha\in\mathcal{P}_f(X)$. Recall the local truncation $L^{\mathsmaller{(\eps)}}$ which assigns to every $x\in X$ the probability measure $\me_{{\alpha,d_X}}(x)\coloneqq\frac{\alpha|_{B_{\eps}^{d_X}(x)}}{\alpha\left(B_{\eps}^{d_X}(x)\right)}.$ Consider its pushforward $\varphi_\#\me_{{\alpha,d_X}}(x)\in\mathcal{P}(\R^m)$ under the coordinatization map $\varphi$. If we further assume that $X$ is \emph{proper}\footnote{We say a metric space is \emph{proper} if all of its closed balls with finite radii are compact. Common examples include finite spaces, compact spaces and the Euclidean spaces.}, then we have that $\varphi_\#\me_{{\alpha,d_X}}(x)\in\mathcal{P}_{1,2}(\R^m)$. We denote by $\mu^{\mathsmaller{(\eps)}}_{{\alpha,d_X}}(x)$ the mean of $\varphi_\#\me_{{\alpha,d_X}}(x)$ and by $\Sigma_{{\alpha,d_X}}^{\mathsmaller{(\eps)}}(x)$ the covariance matrix of $\varphi_\#\me_{{\alpha,d_X}}(x)$ (cf. Definition \ref{def:mean and covariance}). 
\begin{figure}[htb]
    \centering
    \includegraphics[width=0.8\linewidth]{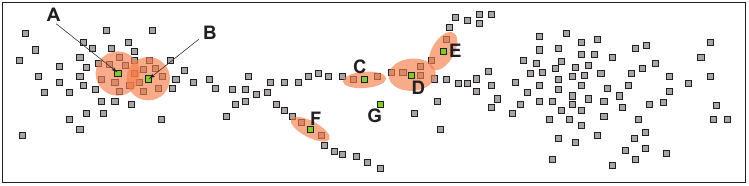}
    \caption{\textbf{Illustration of GT.} Different points in the given point cloud in $\mathbb
 R^2$ have different neighborhood structures, which are captured by their local covariance matrices. GT generates a Gaussian measure \emph{on each data point} based on local covariance matrices. In the figure, we represent these Gaussians by ellipses (iso-contours of Gaussian distribution functions at a given height). Note that these localized Gaussian measures reflect the neighborhood structures: the Gaussian is round when the neighborhood is relatively isotropic (\textbf{A} and \textbf{B}); the Gaussian is {flat} when the neighborhood is oblong (\textbf{C, D, E} and \textbf{F}); and the Gaussian is degenerate for an isolated point (cf. \textbf{G}). }
    \label{fig:ill-cov}
\end{figure}

\begin{remark}[Simplified notation]
When $(X,d_X)$ is a subspace of $\R^m$ and $\varphi$ is the inclusion map, we usually suppress $d_X$ and simply write $\me_{\alpha}(x)$, $\mu^{\mathsmaller{(\eps)}}_{{\alpha}}(x)$, etc.
\end{remark}

We introduce a hyperparameter $\lambda\geq 0$ and define $\psi_\lambda:\R^m\times\mathcal{P}_{1,2}(\R^m)\rightarrow\mathcal{P}(\R^m)$ by sending every $(y,\alpha)$ to the Gaussian measure $\mathcal{N}\lc y,\lambda\Sigma_\alpha\rc$. Then, $L^{\mathsmaller{(\eps)}},\varphi$ and $\psi_\lambda$ together give rise to an extrinsic localization operator which we call the \emph{Gaussian localization}. More explicitly, the Gaussian localization operator assigns to each point $x\in X$ a Gaussian measure 
$$\gamma_{\alpha,d_X}^{\mathsmaller{(\eps,\lambda)}}( {x})\coloneqq\mathcal{N}\left(\varphi(x),\lambda\cdot\Sigma_{\alpha,d_X}^{\mathsmaller{(\eps)}}(x)\right).$$
We denote by $\gt$ the corresponding Wasserstein Transform with the $\ell^2$ Wasserstein distance $\dW{2}$ and also call $\gt$ the \emph{Gaussian Transform} (GT). See also \Cref{fig:ill-cov} for an illustration of the GT in a scenario similar to the one depicted in \Cref{fig:chaining}.

By invoking the explicit formula for computing the $\ell^2$-Wasserstein distance $\dW{2}$ between Gaussian measures (cf. \Cref{rmk:Gaussian distance}), we now write down an explicit formula for the distance associated with $\gt$, which we call the \emph{GT distance}.

\begin{definition}[GT distance]\label{def:gtd}
Given parameters $\lambda\geq 0$ and $\varepsilon>0$, we define the GT distance $d^{\mathsmaller{(\eps,\lambda)}}_{{\alpha,d_X}}(x,x')$ between $x,x'\in X$ as follows:

\begin{align}
    &d^{\mathsmaller{(\eps,\lambda)}}_{{\alpha,d_X}}(x,x')\coloneqq \dW{2}\left(\gamma_{\alpha,d_X}^{\mathsmaller{(\eps,\lambda)}}( {x}),\gamma_{\alpha,d_X}^{\mathsmaller{(\eps,\lambda)}}( {x'}) \right)
\label{eq:gtd}\\
    =&\left(\norm{\varphi(x)-\varphi(x')}^2+\lambda\cdot \lc\dcov\,\left(\Sigma_{{\alpha,d_X}}^{\mathsmaller{(\eps)}}(x),\Sigma_{{\alpha,d_X}}^{\mathsmaller{(\eps)}}(x')\right)\rc^2\right)^\frac{1}{2}. \label{eq:gtd2}
\end{align}
\end{definition}

\begin{remark}
When $X$ is a subspace of $\R^m$ and $\varphi:X\rightarrow\R^m$ is the inclusion map, we will sometimes omit mentioning the map $\varphi$ and write the GT distance as follows
\[d^{\mathsmaller{(\eps,\lambda)}}_{{\alpha,d_X}}(x,x')=\left(\norm{x-x'}^2+\lambda\cdot \lc\dcov\,\left(\Sigma_{{\alpha,d_X}}^{\mathsmaller{(\eps)}}(x),\Sigma_{{\alpha,d_X}}^{\mathsmaller{(\eps)}}(x')\right)\rc^2\right)^\frac{1}{2}.\]
\end{remark}

\begin{remark}[Effect of $\lambda$]\label{rmk:GT-lambda}
Intuitively speaking,
$\lambda$ in \Cref{def:gtd}, as a hyperparameter, determines the influence of local structures when comparing data points. In \Cref{sec:experiments}, we empirically examine the effect of $\lambda$ and show how tuning $\lambda$ could lead to good performance of GT on various tasks.
From another perspective, the hyperparameter $\lambda$ gives rise to a one parameter family of methods where the cases $\lambda=0,1$ are special:
Assume that $X$ is a subspace of $\R^m$ and $\varphi:X\rightarrow\R^m$ is the inclusion map. When $\lambda=0$, the GT distance $d^{\mathsmaller{(\eps,0)}}_{{\alpha,d_X}}$ is the same as the underlying Euclidean distance. When $\lambda=1$, 
in \Cref{prop:two line same as gt} we identify a special case where the GT distance is equal to the distance generated by LT-WT2. 
 
\end{remark}

\begin{remark}[Heat kernel localization]
In the case of Riemannian manifolds $X$, although Gaussian measures may not be defined, the heat kernel $K(t,\cdot,\cdot):X\times X\rightarrow\Rp$ serves as a natural generalization of the Gaussian distribution. The property that $\int_X K(t,x,y)\mathrm{vol}_M(dy)=1$ induces, for each point $x\in X$, a probability measure with density function $K(t,x,\cdot)$. In the case of finite metric spaces, one can instead compute the heat kernel for the graph Laplacian given some graph structure on the spaces (e.g., kNN graph or $\eps$-neighbor graph). Heat kernels induced by graph Laplacians have been widely used for dimension reduction and data representation \citep{belkin2003laplacian,coifman2006diffusion}. Similarly to the continuous case, the heat kernel induced by the graph Laplacian equips each point with a probability measure. In this way, one would obtain an analogue of GT in the case of general finite metric spaces. See \cite{jin2021optimal} for an exploration along these lines.
\end{remark}

\section{Interpretation and theoretical properties}\label{sec:theory}
In this section, we discuss some theoretical properties of the Wasserstein Transform induced by \emph{kernel} and \emph{Gaussian localization}. The outline of this section is as follows. In \Cref{sec:ricci flow} we draw connections between LT-WT and  the Ricci flow to help readers gain a geometric intuition behind LT-WT. In \Cref{sec:wt ultrametric} we provide an exact calculation of LT-WT on ultrametric spaces. In \Cref{sec: gt vs lt-wt} we identify certain similarity between GT and LT-WT. In \Cref{sec:gt aniso} we study the neighborhood structure generated by the GT distance. Finally in \Cref{sec:wt stability theorems}, we establish stability results for KL-WT, LT-WT, MS and GT. 

In order to facilitate the flow of this section, most technical proofs are relegated to \Cref{sec:wt-proof}. 

\subsection{LT-WT as Ricci flow}\label{sec:ricci flow}
Applying WT iteratively to a metric space allows consecutively evolution of distance functions. This is conceptually analogous to the study of manifold flows. In particular, in this section, we unveil a connection between WT and the renowned Ricci flow on manifolds \citep{hamilton1982three}.

First we point out a relationship between the Ricci curvature and the LT-WT distance, as observed by \cite{ollivier2009ricci}. Let $(X,g_X)$ be a complete $n$-dimensional Riemannian manifold. We denote by $d_X$ its associated geodesic distance function. Then, for small $\eps>0$ and when the probability measure $\alpha$ is chosen to be the uniform measure $\alpha\coloneqq\frac{1}{\vol(X)}\vol$, it was proved in \cite{ollivier2009ricci} that for any two close enough points $x,x'\in X$ the following Taylor expansion holds
\begin{equation}\label{eq:ollivier}
    \deps(x,x')=d_X(x,x')\lc1-\frac{\eps^2}{2(n+2)}\mathrm{Ric}(v,v)\rc
\end{equation}
up to higher order terms. Here $\mathrm{Ric}$ denotes the Ricci tensor and $v$ is the unit vector at $x$ pointing towards $y$ (i.e., $v$ is the tangent vector at $x$ along the unique unit speed geodesic issuing from $x$ to $y$\footnote{Here we used the assumption that $x$ and $x'$ are close enough.}). 

Through \Cref{eq:ollivier} we then reveal a connection between LT-WT and the renowned Ricci flow \citep{hamilton1982three}: Let $X$ be a manifold with a smooth family $\{g_X(t)\}_{t\in(a,b)}$ of Riemannian metrics. We say $\{g_X(t)\}_{t\in(a,b)}$ is a \emph{Ricci flow} if
\begin{equation}\label{eq:ricci}
    \frac{\partial g_X(t)}{\partial t}=-2\mathrm{Ric}_{g_X(t)},
\end{equation}
where $\mathrm{Ric}_{g_X(t)}$ denotes the Ricci tensor with respect to the Riemannian metric $g_X(t)$.

Since in the context of WT we only care about the distance function, we use the following lemma (a direct consequence of Remark 6 in \citep{mccann2010ricci}) to describe the instantaneous rate of change of the geodesic distance under the Ricci flow.
\begin{lemma}
Let $X$ be a compact manifold with a smooth family $\{g_X(t)\}_{t\in(-\eps,\eps)}$ of Riemannian metrics where $\eps>0$ is small. Given two points $x,x'\in X$ close enough (with respect to $g_X(0)$), we have that 
\begin{equation}\label{eq:derivative ricci flow}
    \frac{d}{dt}\Big|_{t=0}d_X(x,x',t)=\frac{1}{2d_X(x,x',0)}\int_0^1\frac{\partial g_X}{\partial t}\lc\frac{\partial\sigma}{\partial s},\frac{\partial\sigma}{\partial s}\rc\Big|_{t=0} ds,
\end{equation}
where $d_X(x,x',t)$ denotes the geodesic distance between $x$ and $x'$ with respect to $g_X(t)$, and $\sigma:[0,1]\times(-\eps,\eps)\rightarrow X$ is a smooth function such that $\sigma(s,t)=\sigma_t(s)$ is the unique geodesic connecting $x,x'$ with respect to metric $g_X(t)$.
\end{lemma}

Suppose that $g_X(t)$ satisfies the Ricci flow \Cref{eq:ricci}. Let $\gamma:[0,d_X(x,x',0)]\rightarrow X$ denote the unique unit speed geodesic (w.r.t. $g_X(0)$) connecting $x$ and $x'$. For any $\tau\in[0,d_X(x,x',0)]$, let $v_\tau\coloneqq\gamma'(\tau)$ be the velocity of $\gamma$ at $\tau$ and let $v\coloneqq v_0$ be the initial velocity.
Note that $v$ is now the unit vector at $x$ pointing towards $x'$. Then, we have from \Cref{eq:derivative ricci flow} and by a change-of-variables formula that
\begin{align*}
    \frac{d}{dt}\Big|_{t=0}d_X(x,x',t)&=\frac{1}{2d_X(x,x',0)}\int_0^1\frac{\partial g_X}{\partial t}\lc\frac{\partial\sigma}{\partial s},\frac{\partial\sigma}{\partial s}\rc\Big|_{t=0} ds\\
    &=\frac{1}{d_X(x,x',0)}\int_0^1-\mathrm{Ric}_{g_X(t)}\lc\frac{\partial\sigma}{\partial s},\frac{\partial\sigma}{\partial s}\rc\Big|_{t=0} ds\\
    &=\int_0^{d_X(x,x',0)}-\mathrm{Ric}_{g_X(0)}\lc v_\tau,v_\tau\rc d\tau\\
    & = d_X(x,x',0)\lc-\mathrm{Ric}_{g_X(0)}(v,v)+O(d_X(x,x',0)) \rc.
\end{align*}
Therefore, for $\delta\coloneqq d_X(x,x',0)$ we have that
\begin{align}\label{eq:Ricdistance}
    d_X(x,x',t)&=d_X(x,x',0)+t\cdot d_X(x,x',0)\lc-\mathrm{Ric}_{g_X(0)}(v,v)+O(d_X(x,x',0)) \rc+O(t^2)\\
    & = d_X(x,x',0)\lc 1-t\cdot\mathrm{Ric}_{g_X(0)}(v,v)\rc+O(t\delta^2+t^2).
\end{align}
Comparing Equation (\ref{eq:Ricdistance}) with Equation (\ref{eq:ollivier}), we conclude that the distance function $\deps$ generated by LT-WT coincides with the distance function $d_X(\cdot,\cdot,\frac{\eps^2}{2(n+2)})$ generated by Ricci flow up to high order terms. In this way, one can regard LT-WT as a discrete version of the Ricci flow for metric measure spaces. \cite{ollivier2009ricci} has already suggested some ideas for \emph{discrete} Ricci flows and \cite{ni2019community} implemented these ideas in the case of graphs with the goal of tackling certain network analysis problems.

\subsection{LT-WT on ultrametric spaces}\label{sec:wt ultrametric}

In one special case, we can theoretically characterize the behavior of LT-WT.
In this section, we show that applying LT-WT to probability measures on an \emph{ultrametric} space is equivalent to applying the so-called \emph{closed quotient operation} to the ultrametric space. This closed quotient operation is one of the fundamental operations on ultrametric spaces considered in \citep{memoli2022p} and it is closely related to the dendrogram representation of ultrametric spaces.

We now introduce the closed quotient operation as follows. Given an ultrametric space $(X,u_X)$ and any $\eps\geq 0$, we induce on $(X,u_X)$ an equivalence relation $\sim_\eps$ such that $x\sim_\eps x'$ iff $u_X(x,x')\leq \eps$. That $\sim_\eps$ is an equivalence relation follows from the fact that $u_X$ satisfies the strong triangle inequality. We denote by $[x]_\eps$ the equivalence class of $x\in X$ given the equivalence relation $\sim_\eps$. Let $X_\eps$ denote the set of all equivalence classes and define an ultrametric $u_{X_\eps}$ on $X_\eps$ as follows:

\begin{equation}\label{eq:closed quotient}
    u_{X_\eps}\left([x]_\eps,[x']_\eps\right) \coloneqq  \left\{
\begin{array}{cl}
u_X(x,x') & \mbox{if $[x]_\eps\neq[x']_\eps$}\\
0 & \mbox{if $[x]_\eps=[x']_\eps$.}
\end{array}
\right.
\end{equation}

Then, the $\eps$-closed quotient of $(X,u_X)$ is defined to be the ultrametric space $\lc X_{\eps},u_{X_{\eps}}\rc$. 
Now, we unveil the following relationship between LT-WT and the closed quotient operation (see Figure \ref{fig:WT-quotient} for an illustration):
\begin{proposition}\label{prop:WT as quotient}
Let $(X,u_X)$ be an ultrametric space. Let $\alpha\in\mathcal{P}_f(X)$ and $\eps\geq 0$, then the metric space $\we(\alpha)$ generated by LT-WT is isometric to the $\eps$-closed quotient $\lc X_{\eps},u_{X_{\eps}}\rc$.
\end{proposition}

\begin{figure}[htb]
    \centering
    \includegraphics[width=0.5\linewidth]{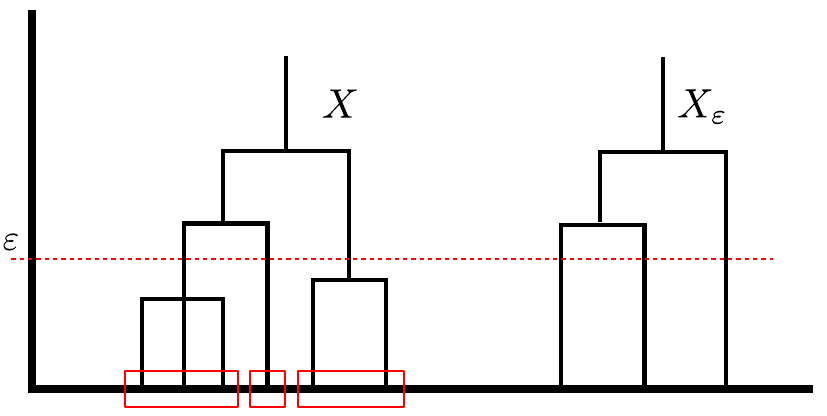}
    \caption{\textbf{Illustration of Proposition \ref{prop:WT as quotient}.} Consider the ultrametric space $X$ represented by the dendrogram on the left. There are only three distinct $\eps$-balls shown by the red boxes. Points within the same $\eps$-ball have the same $\eps$-neighborhood and thus they will be identified after applying LT-WT once. This partially explains why $\we(\alpha)$ is isometric to $\lc X_{\eps},u_{X_{\eps}}\rc$ holds.}
    \label{fig:WT-quotient} 
\end{figure}

\subsection{GT as an approximation of LT-WT2}\label{sec: gt vs lt-wt}

\begin{figure}[htb]
    \centering
    \includegraphics[width=0.7\linewidth]{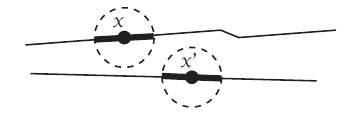}
    \caption{\textbf{Illustration of Proposition \ref{prop:two line same as gt}.}  }
    \label{fig:non-intersect} 
\end{figure}

In applications such as
crack detection in Civil Engineering~\citep{yamaguchi2010fast,koch2015review}, one may often encounter data points concentrated on line segments or curves. For a data set $X\subseteq\R^2$ such as the one as shown in Figure~(\ref{fig:non-intersect}), each ball in $X$ is the intersection of an Euclidean ball and $X$. Generically, such a ball consists of points lying on a tree-like shape. However, when the center of a given ball is away from intersections and the radius is chosen to be small, the ball is usually of the form of a line segment.

For such special type of data sets $X$, after applying GT and LT-WT2 to $X$, it turns out that the resulting distances between points agree with each other, if the $\eps$-neighborhoods of these points are line segments.

\begin{proposition}\label{prop:two line same as gt}
Let $X\subseteq\mathbb{R}^2$ be the union of several line segments endowed with the ambient Euclidean distance. Consider points $x,x'\in X$ away from both intersection points and end points of the line segments. Choose a small $\eps>0$ such that $B_\eps(x)$ and $B_\eps(x')$ are line segments centered at $x$ and $x'$, respectively (see $x$ and $x'$ in \Cref{fig:non-intersect}). Then, we have that (recall symbols from \Cref{tab:WT-symbol}) 
$$ d^{\mathsmaller{(\eps,\lambda)}}_{{\alpha,d_X}}(x,x')=d^{\mathsmaller{(\eps)}}_{{\alpha,d_X,2}}(x,x'),\quad\text{ where }\lambda=1. $$
\end{proposition}

This proposition follows directly from the observation below.

\begin{lemma}\label{prop:line}
Let $l_1$ and $l_2$ be two non-intersecting line segments in $\R^2$ with length $s_1$ and $s_2$, respectively. Let $\alpha_i$ be the {normalized length measure} on $l_i$ for $i=1,2$. Let $\mu_{\alpha_i}$ be the mean of $\alpha_i$ and let $\Sigma_{\alpha_i}$ be the covariance matrix of $\alpha_i$ for $i=1,2$. For $i=1,2$ let $\gamma_{\alpha_i}$ be the Gaussian measure $\mathcal{N}(\mu_{\alpha_i},\Sigma_{\alpha_i})$. Then, we have that
$$\dW{2}(\gamma_{\alpha_1},\gamma_{\alpha_2})=\dW{2}(\alpha_1,\alpha_2).$$
\end{lemma}

\subsection{Anisotropy properties of GT}\label{sec:gt aniso}
\begin{figure}[htb]
\vspace{-0.1in}
    \centering
    \subfloat[Original Data]{
		\includegraphics[width=0.4\linewidth]{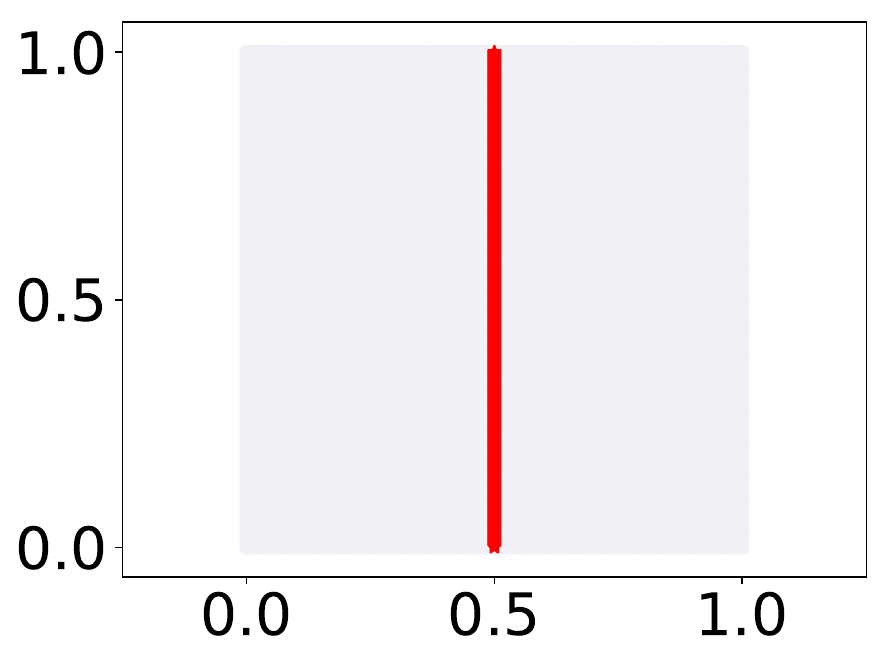}
	}
	\subfloat[Neighborhood $B_\eps^{\lambda}(x_0)$]{
		\includegraphics[width=0.4\linewidth]{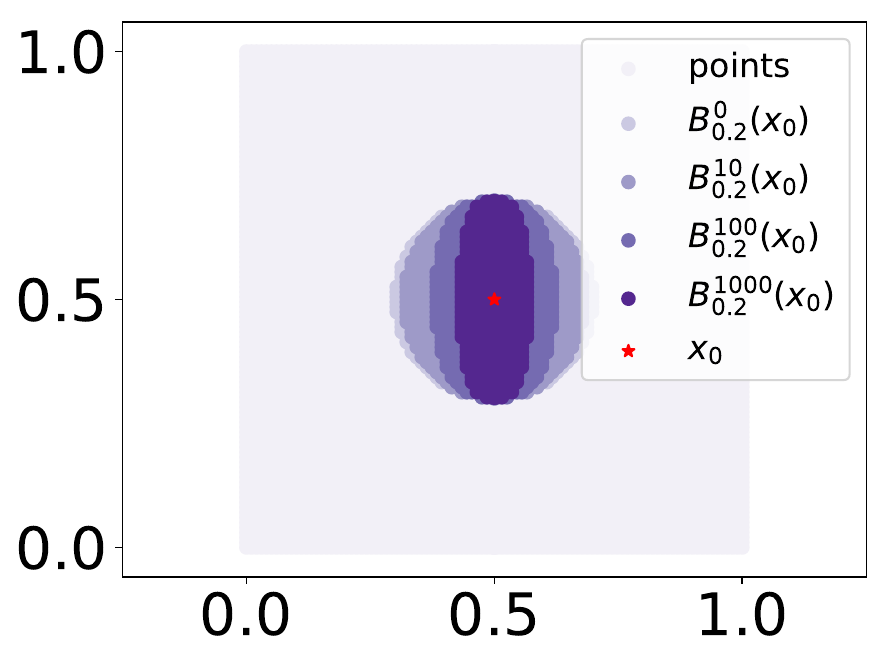}
	}
	
    \caption{\textbf{Illustration of anisotropic neighborhood.} The figure on the left shows the original data with $100 \times 100$ grid points inside $[0,1]\times[0,1]$ and 1001 evenly spaced points on the line segment from $(0.5,0)$ to $(0.5,1)$. The figure on the right shows the $\eps$-neighborhood of the point $x_0$=(0.5, 0.5) with respect to the GT distance $\dgt$ under various choices of $\lambda$, where: $\eps=0.2$ and $\alpha$ is the normalized empirical measure. We abbreviate $B_\eps^{\lambda,\alpha}(x_0)$ to $B_\eps^{\lambda}(x_0)$.}
    \label{fig:nbd-ellip}
    \vspace{-0.1in}
\end{figure}

Now, we assume that $X=\R^m$ and that $\alpha\in\mathcal{P}_f(X)$ has a smooth non-vanishing density function $f$ with respect to the $m$-dimensional Lebesgue measure $\mathcal{L}_m$.
We denote by $B_\eps^{\lambda,\alpha}(x_0)\coloneqq B_\eps^{d^{\mathsmaller{(\eps,\lambda)}}_{{\alpha,d_X}}}(x_0)$ the $\eps$-ball centered at $x_0\in X$ with respect to the GT distance $d^{\mathsmaller{(\eps,\lambda)}}_{{\alpha,d_X}}$, which we will refer to as the \emph{GT-$\eps$-neighborhood} of $x_0$. 
We study the asymptotic shape of $B^{\lambda,\alpha}_\eps(x_0)$ when $\eps$ tends to 0 with $\lambda$ at a precise rate and prove that \emph{$B^{\lambda,\alpha}_\eps(x_0)$ is asymptotically an ellipsoid}. 
See Figure~(\ref{fig:nbd-ellip}) for an illustration of the theorem.  This asymptotic study demonstrates that GT-neighborhoods are anisotropic and indicates that GT is sensitive to boundaries/edges in image data and thus suggests potential applications to edge detection and enhancement tasks in image processing.
See our image segmentation experiment in Section \ref{sec:img-seg}. Anisotropy sensitive ideas, such as anisotropic diffusion \citep{perona1990scale} or anisotropic mean shift \citep{wang2004image}, are prevalent in the literature and have been applied to image denoising and image segmentation. See also \citep{martinez2013multiscale,martinez2020shape}  for applications to shape analysis.

To precisely state our result, we first introduce the following notation 
$$\mathfrak{S}_\eps\lc B_\eps^{\lambda,\alpha}(x_0)\rc\coloneqq\lb x_0+\frac{1}{\eps}(x-x_0):\,x\in B_\eps^{\lambda,\alpha}(x_0)\rb$$ as a rescaling of the ball $B_\eps^{\lambda,\alpha}(x_0)$ around its center $x_0$.

To determine a relationship between $\lambda$ and $\eps$, notice that $(\dcov)^2$ between covariance matrices (in \Cref{eq:gtd2} for determining $d^{\mathsmaller{(\eps,\lambda)}}_{{\alpha,d_X}}$) of points $\eps$-close to each other is of order $O(\eps^8)$. This is negligible compared to the Euclidean distance term (whose order is $O(\eps^2)$). Hence we choose $\lambda=\eps^{-6}$ in order for the term $\lambda(\dcov)^2$ to be comparable with the Euclidean distance term. 

Then, we state our theorem regarding the asymptotic shape of $B_\eps^{\lambda,\alpha}(x_0)$ as follows.

\begin{theorem}\label{thm:ellipsoid}
Let $\lambda=\eps^{-6}$,  then the closure of $\liminf_{\eps\rightarrow 0}\mathfrak{S}_\eps\lc B_\eps^{\lambda,\alpha}(x_0)\rc$ is an \emph{ellipsoid} in $\R^m$.
\end{theorem}


\subsection{Stability theorems}\label{sec:wt stability theorems}

The goal of this section is to establish the stability of the Wasserstein Transform induced by both the kernel localization and the Gaussian localization. Moreover, a sampling convergence theorem for WT induced by Lipschitz kernels will be established as an application of our stability results. Throughout this section we will assume that $(X,d_X)$ is a compact metric space.


\subsubsection{WT under kernel localization}
In this section, we consider stability results for kernel localizations and $p=1$. We will consider the case of Lipschitz kernels and truncation kernels.

\paragraph{Positive Lipschitz kernels.} A rotationally invariant kernel $K:X\times X\rightarrow \Rp$ on the compact metric space $X$ is called a Lipschitz kernel, if and only if there exists a Lipschitz function $\mathcal{K}:\R\rightarrow [0,\infty)$ such that $K(x,x')=\mathcal{K}(d_X(x,x'))$. Recall that the associated localization operator is defined for any $\alpha\in \mathcal{P}(X),x\in X$ and $ A\subseteq X, $ as $ m^K_{\alpha,d_X} (x)(A)=\frac{\int_AK(x,y)\alpha(dy)}{\int_XK(x,y)\alpha(dy)}.$ In order to guarantee that the denominator is nonzero, we always assume $\mathcal{K}$ to be positive.

Then, we have the following  stability result for the kernel localization of two different probability measures on the \emph{same} compact metric space $X$. The stability is expressed in terms of the $\ell^1$-Wasserstein distance itself.

\begin{theorem}[Stability of kernel localization]\label{thm:lip kernel stab}
There exist positive constant $M$ and $N$ depending on $\mathcal{K}$ and $X$, such that for any $x\in X$,
$$\dW{1}\lc  m^K_{\alpha,d_X} (x), m^K_{\beta,d_X} (x)\rc \leq\frac{2L\,\diam(X)+M}{N}\dW{1}(\alpha,\beta).$$
\end{theorem}

Theorem \ref{thm:lip kernel stab} indicates that if $\alpha$ and $\beta$ are similar in terms of the Wasserstein distance, then \emph{for every point $x\in X$} the localized measures $m^K_{\alpha,d_X}(x)$ and $ m^K_{\beta,d_X}(x)$ will also be similar. The following stability result for the KL-WT metric is a direct consequence of Theorem \ref{thm:lip kernel stab}.

\begin{corollary}[Stability of $d_{\alpha,d_X}^K$]\label{coro:stb-metric-lipschitz}
With the same notation as in \Cref{thm:lip kernel stab}, we have that
$$\sup_{x,x'\in X}\ls d_{\alpha,d_X}^K(x,x')-d_{\beta,d_X}^K(x,x')\rs\leq  \frac{4L\,\diam(X)+2M}{N}\dW{1}(\alpha,\beta).$$
\end{corollary}

\subparagraph{Sampling convergence of KL-WT}
Based on the stability result (Theorem \ref{thm:lip kernel stab}) we are able to provide a statistical analysis for WT induced by positive Lipschitz kernels.

We are going to invoke the Gromov-Wasserstein distance (\cite{memoli2011gromov}) in order to establish the consistency/sampling convergence of KL-WT.

\begin{definition}[Gromov-Wasserstein distance]
Let $\mathcal{X}\coloneqq(X,d_X,\alpha_X)$ and $\mathcal{Y}\coloneqq(Y,d_Y,\alpha_Y)$ be two compact metric measure spaces. Then, for any $p\in[1,\infty)$, the $\ell^p$-Gromov-Wasserstein distance $\dgw p$ between $\mathcal{X}$ and $\mathcal{Y}$ is defined as follows:
\[\dgw p(\mathcal{X},\mathcal{Y})\coloneqq\inf_{\pi\in\mathcal{C}(\alpha_X,\alpha_Y)}\lc\iint_{X\times Y\times X\times Y}|d_X(x,x')-d_Y(y,y')|^p\pi(dx\times dy)\pi(dx'\times dy')\rc^\frac{1}{p}.\]
\end{definition}

Then, we have the following sampling convergence result for KL-WT:

\begin{theorem}[Sampling convergence]\label{thm:sampling convergence}
Given a compact metric space $(X,d_X)$, let $\alpha\in\mathcal{P}(X)$. Let $X_1,X_2,\cdots$ be i.i.d. random variables with distribution $\alpha$. Let $\alpha_n$ denote the empirical measure $\frac{1}{n}\sum_{i=1}^{n}\delta_{X_i}$, then for any positive Lipschitz kernel $K$ we have
$$ \lim_{n\rightarrow\infty}\dgw{1}\big( \lc X,d_{\alpha_n,d_X}^K,\alpha_n\rc,\lc X,d_{\alpha,d_X}^K,\alpha\rc\big) =0,\quad a.s.$$
\end{theorem}

Notice that the probability measure $\alpha$ on $\lc X,d_{\alpha,d_X}^K\rc$ is understood as the pushforward of $\alpha$ under the identity map $(X,d_X)\rightarrow \lc X,d_{\alpha,d_X}^K\rc$. The same interpretation holds for $\alpha_n$. The proof of the theorem relies on the following continuity property of $d_{\alpha,d_X}^K$ which is itself interesting:

\begin{theorem}[Continuity of $d_{\alpha,d_X}^K$ w.r.t. $d_X$]\label{thm:lip kernel is lip}
Given a compact metric space $(X,d_X)$ and a positive $L$-Lipschitz kernel $K$, consider any $\alpha\in\mathcal{P}(X)$ (not necessarily fully supported). Then, there exist constants $M,N>0$ depending on $K$ and $(X,d_X)$, such that for every $x,x'\in X$
$$d_{\alpha,d_X}^K(x,x') \leq\frac{L(M+N)\diam(X)}{N^2}d_X(x,x').$$
\end{theorem}

Moreover, under certain conditions on the underlying metric space $(X,d_X)$, we can bound the expected rate of convergence $\mathbb{E}\lc \dgw{1}\lc \lc X,d_{\alpha_n,d_X}^K,\alpha_n\rc,\lc X,d_{\alpha,d_X}^K,\alpha\rc\rc \rc$ in terms of $n$. To realize this, we need the following notion from \cite{dudley1969speed}. Given $\alpha\in\mathcal{P}(X)$ and $\eps>0$, let $N_X(\alpha,\eps)$ denote the minimal number of sets with diameter bounded above by $2\eps$ which cover $X\backslash A$ for a set $A\subseteq X$ such that $\alpha(A)\leq\eps$. Let
\[k(\alpha)\coloneqq\limsup_{\eps\rightarrow 0}\log_{\frac{1}{\eps}}(N_X(\alpha,\eps)).\]

\begin{corollary}[Convergence rate]\label{coro:convergence rate}
For any $\eps>0$, there exists a constant $C>0$ depending on $K$ and $(X,d_X)$ such that
\[\mathbb{E}\lc \dgw{1}\lc \lc X,d_{\alpha_n,d_X}^K,\alpha_n\rc,\lc X,d_{\alpha,d_X}^K,\alpha\rc\rc \rc\leq C\, n^{-\frac{1}{k(\alpha)+2+\eps}},\]
for $n$ large enough.
\end{corollary}

\begin{remark}
The convergence rate $n^{-\frac{1}{k(\alpha)+2+\eps}}$ in the above corollary follows directly from the convergence rate result of empirical probability measures under the \emph{Prokhorov distance} in Theorem 4.1 of \cite{dudley1969speed}. \cite{weed2019sharp} established certain sharp convergence rate result of empirical probability measures under the \emph{Wasserstein distance}. One can of course apply their results to obtain another type of convergence rate in the above corollary, but that refinement is beyond the scope of this paper.
\end{remark}

\paragraph{Local truncation.} Recall that the local truncation kernel is defined through the function $\mathcal{K}(t)=\xi_{[0,1]}(t)$ which is neither positive nor Lipschitz. Thus, in order to obtain stability results for local truncation, we will impose restrictions to the measures that we will consider. Below, we introduce the \emph{Bishop-Gromov condition} on measures (not restricted to probability measures). 

\begin{definition}\label{def:bg}
Given $\Lambda>0$, we say that a Borel measure $\alpha$ on $X$ satisfies the  $\Lambda$-Bishop-Gromov condition if for all $x\in \supp(\alpha),r_1\geq r_2>0$ one has
$\frac{\alpha(B_{r_1}(x))}{\alpha(B_{r_2}(x))}\leq \left(\frac{r_1}{r_2}\right)^\Lambda.$ We denote by $\mathcal{P}^{\mathcal{BG}(\Lambda)}(X)$ the set of all such {probability} measures on $X$.
\end{definition}

This definition originates from \cite{burago2019spectral}, where the intuition rooted from curvature bounds in Riemannian geometry. See Example \ref{ex:bgr} for more details. The class of probability measures satisfying the Bishop-Gromov condition (for some parameter) is rich. Here are some examples. 
\begin{example}\label{ex:bgr}
The normalized volume measure on any compact Riemannian manifold $X$ belongs to $\mathcal{P}^{\mathcal{BG}(\Lambda)}(X)$, where $\Lambda$ depends on $\dim(X)$ and a lower bound of the Ricci curvature of $X$.
\end{example}
\begin{example}
For a bounded convex open subspace $X$ of $\R^m$, the normalized Lebesgue measure restricted on $X$ belongs to $\mathcal{P}^{\mathcal{BG}(m)}(X)$.
\end{example}

Assume that $\diam(X)<D$ for some $D>0$. For $\Lambda>0$,  for $t\in\Rp$ let 
\begin{equation}\label{eq:psi l d}
    \psi_{\Lambda,D}(t)\coloneqq \min\left(1,\left(\frac{t}{D}\right)^\Lambda\right).
\end{equation}
We further define for every $t\geq 0$  
\begin{equation}\label{eq:phi l d e}
    \Phi_{\Lambda,D,\eps}(t)\coloneqq \frac{t}{\psi_{\Lambda,D}(\eps)} + \left(\lc 1+\frac{t}{\eps}\rc ^\Lambda-1\right).
\end{equation}

\begin{remark}\label{rem:Phi}
$\Phi_{\Lambda,D,\eps}(t)$ is an increasing function of $t$ and $\lim_{t\rightarrow 0}\Phi_{\Lambda,D,\eps}(t)=0$.
\end{remark}
Similarly to the case of KL-WT, we first establish the following stability result for local truncations of two different probability measures on the \emph{same} compact metric space $X$ via the $\ell^1$-Wasserstein distance.
\begin{theorem}[Stability of local truncations]\label{thm:stb}
Let $\Lambda>0$. If $\alpha,\beta\in\mathcal{P}^{\mathcal{BG}(\Lambda)}(X)\cap\mathcal{P}_f(X)$, then we have that for any $\eps>0$,
\[\sup_{x\in  X}\dW{1}\lc\me_{\alpha,d_X}(x),\me_{\beta,d_X}(x)\rc\leq (1+2\eps)\Phi_{\Lambda,D,\eps}\lc \sqrt{\dW{1}(\alpha,\beta)}\rc.\]
\end{theorem}


We then obtain the following stability result for the WT distance $\deps$ (w.r.t. the local truncation).

\begin{corollary}[Stability of $\deps$]\label{coro:stb-metric}
With the same notation as in \Cref{thm:stb}, we have that
$$\sup_{x,x'\in X}\ls\deps(x,x')-\depsbeta(x,x')\rs\leq 2(1+2\eps)\Phi_{\Lambda,D,\eps}\lc \sqrt{\dW{1}(\alpha,\beta)}\rc.$$
\end{corollary}

\subsubsection{Mean shift}
Let $X\subseteq\R^m$ be a compact subspace of the $m$-dimensional Euclidean space. Assume that $\diam(X)\leq D$ for some $D>0$.
Let $\eps>0$. In this section, we obtain stability results for MS with respect to both positive Lipschitz kernels and local truncations via stability results for kernel localizations. To the best of our knowledge, this is the first time such results have been established in the literature.

Consider the kernel $K_\eps(x,y)=\mathcal{K}\lc\frac{\|x-y\|^2}{\eps^2}\rc$ with $\mathcal{K}$ being a positive $L$-Lipschitz function. We have the following stability result for mean shift with respect to the kernel $K_\eps$.

\begin{theorem}[Stability of mean shift for Lipschitz kernels]\label{thm:ms-smooth}
Let $C=C(\eps,L,D)\coloneqq\frac{2L\,D}{\eps^2}$. There exist positive constants $M$ and $N$ depending only on $\mathcal{K}$, $\eps$ and $X$ such that for all $x\in X$ and all $\alpha,\beta\in\mathcal{P}(X)$ we have:
\[\norm{\mean\lc m_{\alpha,d_X}^{K_\eps}(x)\rc-\mean\lc m_{\beta,d_X}^{K_\eps}(x)\rc}\leq\frac{2\,CD+M}{N}\dW{1}(\alpha,\beta). \]
\end{theorem}

For local truncation induced MS, we have the following stability result:

\begin{theorem}[Stability of mean shift for local truncations]\label{thm:stb-ms}
Let $\Lambda>0$ and let $\alpha,\beta\in \mathcal{P}^{\mathcal{BG}(\Lambda)}(X)\cap\mathcal{P}_f(X)$. Then, under the same notation as in Theorem \ref{thm:stb}, we have that
\[\sup_{x\in X} \left\|\mathrm{mean}\lc \me_{\alpha,d_X}(x)\rc- \mathrm{mean}\lc \me_{\beta,d_X}(x)\rc \right\|\leq (1+2\eps)\,\Phi_{\Lambda,D,\eps}\lc \sqrt{\dW{1}(\alpha,\beta)}\rc.\]
\end{theorem}



\subsubsection{Gaussian Transform}

In this section, we assume that \emph{$X$ is a subspace of $\R^m$ and $\varphi:X\rightarrow\R^m$ is the inclusion map.} Moreover, we assume that $\diam(X)<D$ for some $D>0$.

We then have the following stability result for GT whose proof is relegated to \Cref{sec:wt-proof}.

\begin{theorem}[Stability of GT]\label{thm:GT-stable}
Let $c>0$ and let $\alpha,\beta\in\mathcal{P}_f^{c}(X)\cap \mathcal{P}^{\mathcal{BG}(\Lambda)}(X)$. Then, there exists a positive constant $A=A(\eps,m,D)$ such that 
$$\norm{d^{\mathsmaller{(\eps,\lambda)}}_{{\alpha,d_X}}-d^{\mathsmaller{(\eps,\lambda)}}_{{\beta,d_X}}}_\infty\leq 2\sqrt{m\,\lambda\,\Psi_{\Lambda,D,\eps}^{c,A,m}\lc \dW{\infty}(\alpha,\beta)\rc}.$$
\end{theorem}

Here $\Psi_{\Lambda,D,\eps}^{c,A,m}:[0,\infty)\rightarrow[0,\infty)$, which we define in Page \pageref{proof of gt stable}, is an increasing function such that $\Psi_{\Lambda,D,\eps}^{c,A,m}(0)=0$. $\mathcal{P}_f^{c}(X)$ denotes the set of all $\alpha\in\pf(X)\subseteq\mathcal{P}(\R^m)$ such that $\alpha(S)\leq c\cdot \mathcal{L}_m(S)$ for all measurable $S$ where $\mathcal{L}_m$ denotes the Lebesgue measure on $\R^m$. Such a restriction of probability measures was used for proving a stability theorem for one type of local covariance matrices in \cite{martinez2020shape}.

Notice that in the above stability theorem, we use $\dW\infty$ to measure the dissimilarity between probability measures instead of $\dW{1}$ as used in previous stability results for KL-WT and MS. $\dW\infty$ in fact appears naturally when comparing the covariance matrices inherent to GT (cf. Lemma \ref{lm:cmbound}). Since $\dW\infty\geq\dW{1}$ \citep{givens1984class}, this stability result is slightly weaker than those of KL-WT and MS. However, to remedy this we subsequently develop a stronger stability result for GT w.r.t. smooth kernels.

\paragraph{Smooth kernels.} Similarly as in the case of MS, GT can also be defined with respect to kernels different from the truncation kernel. In particular, we define GT for a certain type of smooth kernels and establish its stability.

The following definition of smooth kernels is adapted from Definition 2 and the assumptions of Theorem 1 in \cite{martinez2020shape} .

\begin{definition}\label{def:smooth k}
Let $\mathcal{K}:[0,\infty)\rightarrow(0,\infty)$ be a bounded and differentiable function such that:
\begin{enumerate}
    \item $M_m\coloneqq\int_0^\infty r^{\frac{m}{2}-1}\mathcal{K}(r)\,dr<\infty$.
    \item There exists $C>0$ such that $r\mathcal{K}(r)\leq C,\,\forall r\in[0,\infty)$.
    \item There exists $L>0$ such that $|\mathcal{K}'(r)|\leq L$ and $r^\frac{3}{2}|\mathcal{K}'(r)|\leq L$ for $r\in[0,\infty)$.
\end{enumerate}
Then, given any $\eps\in(0,\infty)$, we define the smooth kernel $K_\eps:\R^m\times \R^m\times(0,\infty)\rightarrow \R$ associated with $\mathcal{K}$ by 
\[K_\eps(x,y)\coloneqq\mathcal{K}\lc\frac{\norm{y-x}^2}{\eps^2}\rc. \]
\end{definition}

Now, we define the GT distance with respect to smooth kernels and state our main result as follows. For $\alpha\in\mathcal{P}_f(X)$ and any $x\in X$, consider the probability measure $\frac{K_\eps(x,y)\,\alpha(dy)}{\int_{\R^m}K_\eps(x,y)\,\alpha(dy)}$. It belongs to $\mathcal{P}_{1,2}(X)$ since $\mathcal{K}$ is bounded and satisfies condition 2 in Definition \ref{def:smooth k}. We denote by $\mu_{\alpha,d_X}^{K_\eps}(x)$ the mean and by ${\Sigma}_{\alpha,d_X}^{K_\eps}(x)$ the covariance matrix of the probability measure $\frac{K_\eps(x,y)\,\alpha(dy)}{\int_{\R^m}K_\eps(x,y)\,\alpha(dy)}$.

Then, with respect to a smooth kernel $K_\eps$ we define the GT distance $d_{\alpha,d_X}^{\mathsmaller{(K_\eps,\lambda)}}(x,x')$ between $x,x'\in X$ as follows:
\begin{equation}\label{eq:gt-smooth}
    d_{\alpha,d_X}^{\mathsmaller{(K_\eps,\lambda)}}(x,x')\coloneqq\left(\norm{x-x'}^2+\lambda\cdot \lc\dcov\,\lc\Sigma_{\alpha,d_X}^{K_\eps}(x),\Sigma_{\alpha,d_X}^{K_\eps}(x')\rc\rc^2\right)^\frac{1}{2}.
\end{equation}

\begin{theorem}[Stability of GT for smooth kernels]\label{thm:gt-stable-smooth}
Let $X\subseteq \R^m$. Then, there exists a positive constant $C>0$ depending on $\mathcal{K},\eps,D$ and $X$ such that for $\alpha,\beta\in \pf(X)$, we have
\[\norm{d_{\alpha,d_X}^{\mathsmaller{(K_\eps,\lambda)}}-d_{\beta,d_X}^{\mathsmaller{(K_\eps,\lambda)}}}_\infty\leq 2\sqrt{ m\,\lambda\,C\,\dW{1}(\alpha,\beta)}. \]
\end{theorem}

\section{Algorithms and optimization techniques}\label{sec:algorithm}
As already mentioned in Remark \ref{rmk:iterating WT}, WT can be iteratively applied to a given data set. In this paper, we will mainly focus on the iterative implementation of three instances of WT, namely, LT-WT, MS (w.r.t. the truncation localization) and GT. In this section, we introduce iterative algorithms for those three methods and provide various computational techniques for accelerate GT. Finally, we will provide complexity analysis for all the three methods.

\subsection{Algorithms for iterative MS, LT-WT and GT} 

Algorithms for iterative MS, LT-WT and GT are given in Algorithms \ref{alg:ms}, \ref{alg:wt} and \ref{alg:gt}, respectively. The algorithms for iterative MS and LT-WT (Algorithms \ref{alg:ms}, \ref{alg:wt}) are self-explanatory whereas the iterative GT algorithm (Algorithm \ref{alg:gt}) is more involved than the other two. Below we provide a detailed explanation of the iterative GT algorithm.

\paragraph{Explanation of the iterative GT algorithm.}
Given a finite metric space $(X,d_X)$ and a coordinatization map $\varphi^0=\varphi:X\rightarrow\R^m$, we first compute the GT distance $d^{\mathsmaller{(\eps,\lambda)}}_{{\alpha,d_X}}$. We then update the coordinatization map $\varphi^0$ to $\varphi^1:X\rightarrow\R^m$ in a way similar to what MS algorithm does: we send each $x\in X$ to $\mean\lc m_{\alpha,d}^{\mathsmaller{(\eps)}}(x)\rc$ where $d\coloneqq d^{\mathsmaller{(\eps,\lambda)}}_{{\alpha,d_X}}$. Up to now, we have a new metric space $\left( X,d^{\mathsmaller{(\eps,\lambda)}}_{{\alpha,d_X}}\right)$ and a new coordinatization map $\varphi^1:X\rightarrow\R^m$, based on which we can repeat the above process.

In \Cref{alg:gt}, the roles of the successive $\varphi^k$s are implicitly encoded by the successive $x_i^k$s: the embedding $\varphi^k:X\rightarrow\R^m$ described above sends each $x_i\in X$ to $x_i^{k}\in\R^m$. In this way, the set $X^k$ is simply the image of $X$ under $\varphi^k$ and notation such as $\Sigma^{\mathsmaller{(\eps)}}_{\alpha,D^k}\lc x_i^{k+1}\rc $ and $d^{\mathsmaller{(\eps,\lambda)}}_{{\alpha,D^k}}\lc x_i^{k+1},x_j^{k+1}\rc $ is understood as the usual notation described in \Cref{sec:gt} when applying GT to $(X,D^k)$ with the map $\varphi^{k+1}$. We also refer to the coordinatization map updating process as the \emph{point updating process}. We refer to the first round of GT distance computation as the initialization step. With this initialization step, when $\lambda=0$, \Cref{alg:gt} reduces to \Cref{alg:ms}.

\begin{algorithm}[htb]
\caption{Iterative MS}
\begin{algorithmic}[1]
\STATE \textbf{Input:} Points $X = \left\{ x_1, x_2, ..., x_n \right\} \in \mathbb{R}^{n\times m}$, probability measure $\alpha = \{ \alpha_1, \alpha_2, ..., \alpha_n\}$
\STATE \textbf{Initialization:} $k=0$
\WHILE{ $k< \; \mathrm{max\_iter}$}
\FOR{$i \in [n]$}
\STATE $x_i^{k+1}=\mean\left(m_{\alpha}^{\mathsmaller{(\eps)}}\lc x_i^k\rc \right)$
\ENDFOR
\STATE $k = k + 1$
\ENDWHILE
\STATE \textbf{Output:} $X^k = \left\{ x_1^k, x_2^k, ..., x_n^k \right\} \in \mathbb{R}^{n\times m}$
\end{algorithmic}
\label{alg:ms}
\end{algorithm}

\begin{algorithm}[htb]
\caption{Iterative LT-WT}
\begin{algorithmic}[1]
\STATE \textbf{Input:} Probability measure $\alpha = \{ \alpha_1, \alpha_2, ..., \alpha_n\}$, distance matrix $D$ (on a set of points labeled by $X = \left\{ x_1, x_2, ..., x_n \right\}$)
\STATE \textbf{Initialization:} $k=0$; $D^k=D$
\WHILE{ $k< \; \mathrm{max\_iter}$}
\FOR{$i\in[n]$}
\STATE Compute the localized probability measure $m_{\alpha,D^{k}}^{\mathsmaller{(\eps)}}(x_i)$
\ENDFOR

\FOR{$i,j\in[n]$}
\STATE Let $D^{k+1}(x_i,x_j)=d^{\mathsmaller{(\eps)}}_{{\alpha,D^k}}(x_i,x_j)$.
\ENDFOR
\STATE $k = k + 1$
\ENDWHILE
\STATE \textbf{Output:} $D^k$
\end{algorithmic}
\label{alg:wt}
\end{algorithm}

\begin{algorithm}[htb]
\caption{Iterative GT}
\begin{algorithmic}[1]
\STATE \textbf{Input:} Points $X = \left\{ x_1, x_2, ..., x_n \right\} \in \mathbb{R}^{n\times m}$, probability measure $\alpha = \{ \alpha_1, \alpha_2, ..., \alpha_n\}$, distance matrix $D$
\STATE \textbf{Initialization:} $k=0$; for $i,j\in[n]$, $x_i^k=x_i$ and $D^k\lc x_i^k,x_j^k\rc =d^{\mathsmaller{(\eps,\lambda)}}_{{\alpha,D}}\lc x_i^k,x_j^k\rc $

\WHILE{ $k< \; \mathrm{max\_iter}$}

\FOR{$i\in[n]$}
\STATE $x_i^{k+1}=\mean\left(m_{\alpha,D^{k}}^{\mathsmaller{(\eps)}}\lc x_i^k\rc \right)$

\STATE Compute the covariance matrix $\Sigma^{\mathsmaller{(\eps)}}_{\alpha,D^k}\lc x_i^{k+1}\rc $
\ENDFOR

\FOR{$i,j\in[n]$}
\STATE Let $D^{k+1}\lc x_i^{k+1},x_j^{k+1}\rc =d^{\mathsmaller{(\eps,\lambda)}}_{{\alpha,D^k}}\lc x_i^{k+1},x_j^{k+1}\rc $.
\ENDFOR

\STATE $k = k + 1$
\ENDWHILE
\STATE \textbf{Output:}  $D^k$, $X^k = \left\{ x_1^k, x_2^k, ..., x_n^k \right\} \in \mathbb{R}^{n\times m}$
\end{algorithmic}
\label{alg:gt}
\end{algorithm}

\begin{figure}[htb]
	\includegraphics[width=\linewidth]{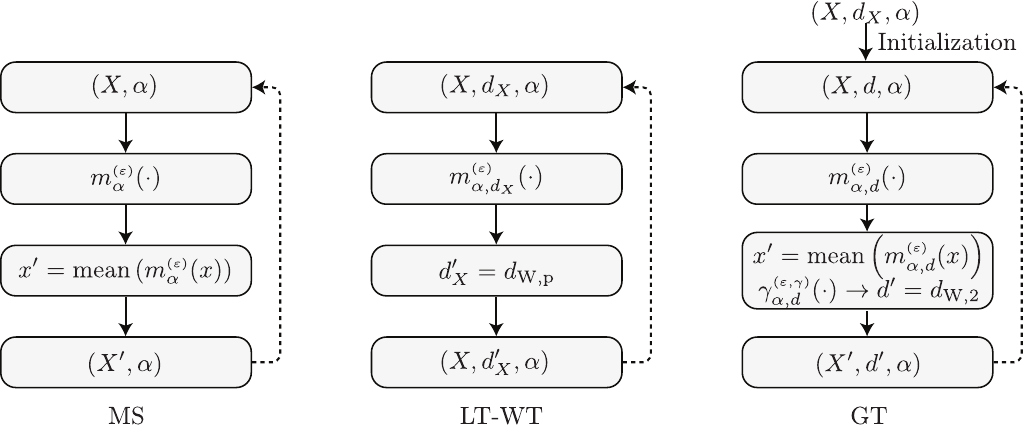}
    \caption{\textbf{Algorithmic structures of MS, LT-WT and GT.}  In MS and GT, the set $X$ is assumed to be a subset of $\R^m$, i.e., there exists an implicit coordinatization map $\varphi$. $X'$ denotes the set of updated points $x'$. $d'$ denotes the updated distance.}
    \label{fig:algcom}
\end{figure}

\paragraph{Structural comparison of the three algorithms.} 
Whereas the MS algorithm is a \emph{point updating} process and the LT-WT algorithm is a \emph{distance updating process}, the GT algorithm is actually a \emph{hybrid} between the MS and the LT-WT algorithms. \emph{It is composed of both a point updating and a distance updating process}. See Figure~(\ref{fig:algcom}) for an illustration.  Thus, the GT algorithm inherently provides us with two features, a point cloud and a distance matrix, which can be leveraged in different applications and thus provides an advantage over LT-WT. For example, the point updating process permits adapting GT to the task of image segmentation (cf. Section \ref{sec:img-seg}), whereas LT-WT is not applicable to this task. 
\begin{remark}
As pointed out in Remark \ref{rmk:GT-lambda}, when $\lambda=0$, $d^{\mathsmaller{(\eps,0)}}_{{\alpha,d_X}}$ boils down to the Euclidean distance. Consequently, when $\lambda=0$, the iterative GT algorithm coincides with the MS algorithm.
\end{remark}

\subsection{The computation of \texorpdfstring{$\dcov$}{dcov}: a new formula}
The computation of the Bures distance $\dcov$ (cf. \Cref{eq:dcov}) requires computing the term $\lc\Sigma_1^\frac{1}{2}\Sigma_2\Sigma_1^\frac{1}{2}\rc^\frac{1}{2}$, for which one has to carry out square root computations twice: once for $\Sigma_1^\frac{1}{2}$ and another one for $\left(\Sigma_1^\frac{1}{2}\Sigma_2\Sigma_1^\frac{1}{2}\right)^\frac{1}{2}$.  Since all we care about is the trace of $\lc\Sigma_1^\frac{1}{2}\Sigma_2\Sigma_1^\frac{1}{2}\rc^\frac{1}{2}$, by the proposition below it turns out that, for each pair of covariance matrices $\Sigma_1$ and $\Sigma_2$ we only need to compute the trace of the matrix $\left(\Sigma_1\Sigma_2\right)^\frac{1}{2}$.

\begin{proposition} 
\label{thm:redux}
    Given two $m\times m$ positive semi-definite matrices $A$ and $B$, we have that
    $$\mathrm{tr}\left(\left(A^\frac{1}{2}BA^\frac{1}{2}\right)^\frac{1}{2}\right)=\sum_{\lambda\in \mathrm{spec}(AB)}\lambda^\frac{1}{2},$$
    where $\mathrm{spec}(AB)$ denotes the multiset of eigenvalues of $AB$ counted with their multiplicities. 
\end{proposition}

\begin{proof}
The main idea is to prove that $AB$ and $A^\frac{1}{2}BA^\frac{1}{2}$ share the same spectrum\footnote{In the course of proving the theorem, we found a discussion website \citep{eigen} where the user Ahmad Bazzi proved this proposition. In the sequel, we present our original proof which is different from the one by Ahmad Bazzi.}. 

The case when one of the matrices is invertible is trivial (see for example Equation (4) in \cite{bhatia2019matrix}). Without loss of generality assume, in this case, that $A$ is invertible. Then, we have $AB= A^{\frac{1}{2}}\cdot A^\frac{1}{2}BA^\frac{1}{2}\cdot A^{-\frac{1}{2}},$ which implies that $AB$ and $A^\frac{1}{2}BA^\frac{1}{2}$ are similar to each other and thus share the same spectrum. Thus, the sum of square roots of eigenvalues of $AB$ counted with multiplicity is the same as the sum of square roots of eigenvalues of $A^\frac{1}{2}BA^\frac{1}{2}$ counted with multiplicity, which is exactly $\mathrm{tr}\lc\left(A^\frac{1}{2}BA^\frac{1}{2}\right)^\frac{1}{2}\rc$.

Now, assume that both $A$ and $B$ are singular. Let $B_t\coloneqq tI_m+B$ where $t\geq 0$ and $I_m$ is the $m\times m$ identity matrix. $B_t$ is then positive definite and thus invertible when $t>0$. Then, by the previous analysis, $A^\frac{1}{2}B_tA^\frac{1}{2}$ and $AB_t$ share the same spectrum for all $t>0$. Since $A^\frac{1}{2}B_tA^\frac{1}{2}=tA+A^\frac{1}{2}BA^\frac{1}{2}$ and $AB_t=tA+AB$, by the continuity of eigenvalues, by letting $t$ approach 0 we conclude that $A^\frac{1}{2}BA^\frac{1}{2}$ and $AB$ share the same spectrum.
Therefore, $\mathrm{tr}\lc\left(A^\frac{1}{2}BA^\frac{1}{2}\right)^\frac{1}{2}\rc=\mathrm{tr}\lc(AB)^\frac{1}{2}\rc. $ 
\end{proof}

\subsection{The neighborhood mechanism and other acceleration methods for GT}\label{sec: gt other accelerate}

In this section we devise acceleration strategies for GT which include a certain neighborhood mechanism and other acceleration methods via a refined analysis of \Cref{alg:gt}. 

\subsubsection{The neighborhood mechanism: acceleration of the point updating process} 
\label{sec:nbh-mech}
In the $k$th iteration of \Cref{alg:gt}, we need to compute the GT distance between each pair of points in $X^{k+1}$ (line 8 to line 10). This is necessary since in the end we want to generate a distance matrix as output of GT. However, in many application scenarios such as image segmentation, only the coordinates of data points are required. When applying GT to such tasks, it is then unnecessary to keep track of the complete GT distance matrices through subsequent iterations. Based on this observation, we develop the \emph{neighborhood mechanism} and devise a modification (\Cref{alg:nnm}) of the iterative GT algorithm (\Cref{alg:gt}). This modified GT algorithm only generates as output a point cloud/coordinatization of data points which coincides with the one generated by the iterative GT algorithm and does not keep track of the complete GT distance matrices through iterations. This compromise enables a refinement of the modified GT algorithm which is faster than the original iterative GT algorithm.

The simple yet crucial observation which leads to our neighborhood mechanism is that the $\eps$-ball for the GT distance is ``smaller'' than the Euclidean $\eps$-ball of the same radius. Based on this observation, in each iteration of the GT algorithm, we restrict the computation of GT distances to pairs of points whose Euclidean distances are no larger than $\eps$. We call this restriction method the \emph{neighborhood mechanism}. We then state our observation formally as follows:

\begin{proposition} 
\label{pro:nbh-trick}
In the $k$th iteration of the iterative GT algorithm (cf. {Algorithm \ref{alg:gt}}) we have for each point $x^{k+1}_i\in X^{k+1}$ that
$$B_\eps^{D^{k+1}}\lc x^{k+1}_i\rc\subseteq B_\eps\lc x^{k+1}_i\rc,$$
where $B_\eps^{D^{k+1}}\lc x^{k+1}_i\rc$ is the $\eps$-ball centered at $x_i^{k+1}$ with respect to the GT distance computed at the $(k+1)$th iteration:
$$B_\eps^{D^{k+1}}\lc x^{k+1}_i\rc\coloneqq\left\{x_j^{k+1}\in X^{k+1}:\,D^{k+1}\lc x_j^{k+1},x_i^{k+1}\rc\leq \eps\right\}$$
and $B_\eps\lc x^{k+1}_i\rc$ is the Euclidean $\eps$-ball centered at $x_i^{k+1}$:
$$B_\eps\lc x^{k+1}_i\rc\coloneqq\left\{x_j^{k+1}\in X^{k+1}:\,\norm{x_j^{k+1}-x_i^{k+1}}\leq \eps\right\}.$$
\end{proposition}

\begin{proof}
This follows directly from 
\begin{align*}
    \lc D^{k+1}\lc x_i^{k+1},x_j^{k+1}\rc\rc^2 &= \norm{x_i^{k+1}-x_j^{k+1}}^2 + \lc\dcov\lc\Sigma_{\alpha,D^{k}}^{\mathsmaller{(\eps)}}\left(x_i^{k+1}\right), \Sigma_{\alpha,D^{k}}^{\mathsmaller{(\eps)}}\left(x_j^{k+1}\right)\rc\rc^2\\
    &\geq \norm{x_i^{k+1}-x_j^{k+1}}^2.
\end{align*}
\end{proof}

Based on Proposition \ref{pro:nbh-trick}, we obtain an iterative GT algorithm which uses the neighborhood mechanism (called \emph{GT-neighborhood}) and its pseudocode is shown in Algorithm~\ref{alg:nnm}. The fact that $m_{\alpha,D^{k}}^{\mathsmaller{(\eps)}}\lc x_i^k\rc $ can be generated in Line 5 of Algorithm \ref{alg:nnm} is based on Proposition \ref{pro:nbh-trick}: in order to generate $m_{\alpha,D^{k}}^{\mathsmaller{(\eps)}}\lc x_i^k\rc $ we only need to consider points $x_j^k$ within $B_\eps\lc x_i^k\rc $.

\begin{algorithm}[htb]
\caption{Iterative GT with neighborhood mechanism (GT-neighborhood)}
\begin{algorithmic}[1]
\STATE \textbf{Input:} Points $X = \left\{ x_1, x_2, ..., x_n \right\} \in \mathbb{R}^{n\times m}$, probability measure $\alpha = \{ \alpha_1, \alpha_2, ..., \alpha_n\}$, distance matrix $D$
\STATE \textbf{Initialization:} $k=0$;
 $x^k_i=x_i$; for $i\in[n]$, for $x_j^{k}\in B_\eps(x_i^{k})$, let $D^k\lc x_i^k,x_j^k\rc =d^{\mathsmaller{(\eps,\lambda)}}_{{\alpha,D}}\lc x_i^k,x_j^k\rc $;
 \WHILE{ $k< \; \mathrm{max\_iter}$}
\FOR{$i\in[n]$}

\STATE $x_i^{k+1}=\mean\left(m_{\alpha,D^{k}}^{\mathsmaller{(\eps)}}\lc x_i^k\rc \right)$

\STATE Compute the covariance matrix $\Sigma^{\mathsmaller{(\eps)}}_{\alpha,D^k}\lc x_i^{k+1}\rc $

\FOR{$x_j^{k+1}\in B_\eps\lc x_i^{k+1}\rc $}
\STATE Let $D^{k+1}\lc x_i^{k+1},x_j^{k+1}\rc =d^{\mathsmaller{(\eps,\lambda)}}_{{\alpha,D^k}}\lc x_i^{k+1},x_j^{k+1}\rc $
\ENDFOR

\ENDFOR

\STATE $k = k + 1$
\ENDWHILE
\STATE \textbf{Output:} $X^k = \left\{ x_1^k, x_2^k, ..., x_n^k \right\} \in \mathbb{R}^{n\times m}$
\end{algorithmic}
\label{alg:nnm}
\end{algorithm}

\subsubsection{Neighborhood propagation} 
In line 5 of Algorithm \ref{alg:nnm}, in order to generate the probability measure $m_{\alpha,D^{k}}^{\mathsmaller{(\eps)}}\lc x_i^k\rc $, we first need to generate the $\eps$-ball $B_\eps^{D^k}\left(x_i^k\right)$ i.e., identify all points $x_j^k$ such that $D^k\left(x_i^k,x_j^k\right)\leq \eps$. Notice that once we determine that $x_j^k\in B_\eps^{D^k}\lc x_i^k\rc $, by symmetry of $D^k$, $x_i^k\in B_\eps^{D^k}\lc x_j^k\rc $. Hence, to determine $B_\eps^{D^k}\lc x_i^k\rc $, for all $i=1,\ldots,n$, we proceed as follows in order to avoid repetitive calculations. 
\begin{enumerate}
    \item We first initialize $B_i=\emptyset$ for all $i=1,\ldots,n$. 
    \item Then, for $i=1,\ldots,n$ and for any $j>i$, if $D^k\lc x_i^k,x_j^k\rc$ has been previously computed (i.e., $\norm{x_i^k-x_j^k}\leq \eps$) and $D^k\lc x_i^k,x_j^k\rc \leq \eps$, insert $x_j^k$ into both $B_i$ and $B_j$. 
\end{enumerate}
At the end of the procedure, it is obvious that $B_i=B_\eps^{D^k}\lc x_i^k\rc$ for all $i=1,\ldots,n$.
In this way, we do not need to determine each $B_\eps^{D^k}\lc x_i^k\rc$ independently and thus accelerate the process of generating all $\eps$-balls $B_\eps^{D^k}\lc x_i^k\rc$ for $i=1,\ldots,n$.

\subsubsection{Merging collocated points}
From an empirical stand point, after several successive applications of GT (or GT-neighborhood), the GT distances between some pairs of points converge to 0. When this happens, we propose to merge the collocated points into one single point and assign the sum of individual probabilities of all collocated points as the probability of this single point. 
This process reduces the total number of data points through subsequent iterations and thus accelerates the iterative GT algorithm.

\subsubsection{Validation}
In Table \ref{tab:compt} we present timing results for experiments that we carried out in order to validate our acceleration strategies.

\begin{table}[hbt]
\caption{\textbf{Validation of acceleration methods.} Consider the data set $X$ consisting of $200\times 200$ evenly spaced grid points inside the square $[0,1]\times [0,1]$: $X=\left\{\lc\frac{i}{199},\frac{j}{199}\rc:\,i,j=0,\cdots,199\right\}\subseteq\mathbb{R}^2$. Endow $X$ with the normalized empirical measure $\alpha$. Set $\lambda=1$ and $\eps=0.1$. 
$\tau$ denotes the current iteration number. 
Entries below show the running time of the GT algorithm with different combinations of acceleration strategies in each iteration. 
The experiments are performed on a Unix Server with 48 cores. We use C++ with openMP (Open Multi-Processing) to implement GT with parallel computing. 
(1) GT: full matrix computation of the GT distance; (2) GT-v1: GT with the neighborhood mechanism; (3) GT-v2: GT-v1 with neighborhood propagation; (4) GT-v3: GT-v1 with collocated points merged; (5) GT-v4: GT-v2  with collocated points merged. }

\vspace{0.1in}
    \centering
    \begin{tabular}{|c|c|c|c|c|c|}
        \hline
         & $\tau=1$ & $\tau=2$ & $\tau=3$ & 
        $\tau=4$ & $\tau=5$\\ \hline
        GT & 48.7s & - & - & - & - \\ \hline
        GT-v1 & 20.2s & 11.3s & 9s & 7.1s & 6.7s \\ \hline
        GT-v2 & 14.8s & 10s & 7.4s & 6.2s & 6.3s \\ \hline
        GT-v3 & 20.9s & 9.4s & 5.3s & 2.8s & 1.5s \\ \hline
        GT-v4 & 14.7s & 8.2s & 4.1s & 2.6s & 1.4s \\ \hline
    \end{tabular}
    \label{tab:compt}
\end{table}

\subsection{Complexity analysis}
In this section, we study the time complexity of the MS, LT-WT and GT algorithms under certain constraints which we describe next. We assume that all these methods are applied to a given point cloud $X\in\mathbb{R}^m$ of cardinality $n$. We fix a radius parameter $\eps>0$ for all these methods and assume that the maximum of cardinality of $\eps$-balls around points in $X$ is bounded by $N=N(\eps)$, i.e., $\max_{x\in X}\left|B_\eps(x)\cap X\right|\leq N(\eps).$ We summarize the complexity results in Table~\ref{tab:compformula}. 
\begin{table}[htb]
\caption{\textbf{Complexity comparison.} }
    \centering
    \begin{tabular}{|c|c|c|c|}
        \hline
          & MS & LT-WT & GT \\ \hline
        Cost  & $O(n^2m + nNm)$ & $O(n^2(N^3\log(N)))$  & $O(nNm^2+n^2 m^3)$ \\ \hline
    \end{tabular}
    \label{tab:compformula}
\end{table}

\paragraph{Time complexity of MS.} In the MS algorithm (\Cref{alg:ms}), we do not require the distance matrix w.r.t. $X$ as part of the input. Obviously, it takes time $O(m)$ to compute the Euclidean distance between any pair of points in $\mathbb{R}^m$. Now, for any $i\in[n]$ from line 4 in \Cref{alg:ms}, it takes time $O(nm)$ to construct the probability measure $m_{\alpha}^{\mathsmaller{(\eps)}}\lc x_i^k\rc $ where the factor $m$ comes from the computation of the Euclidean distances between $x_i^k$ and other data points. Then, it takes time at most $O(Nm)$ to compute the mean of $m_{\alpha}^{\mathsmaller{(\eps)}}\lc x_i^k\rc $ since the cardinality of the support of $m_{\alpha}^{\mathsmaller{(\eps)}}\lc x_i^k\rc $ is bounded by $N$ by our assumption. Therefore, the for-loop starting in line 4 of \Cref{alg:ms} can be accomplished in time at most $O(n^2m+nNm)$.

\paragraph{Time complexity of LT-WT.}
For one iteration of LT-WT (\Cref{alg:wt}), we have two parts to analyze:
\begin{itemize}
    \item Line 4 to line 6 in \Cref{alg:wt}: for each given $i\in[n]$, since we are given the distance matrix $D$ as part of the input, it takes time at most $O(n)$ to construct $m_{\alpha,D^k}^{\mathsmaller{(\eps)}}\lc x_i^k\rc $. Then, the whole for-loop can be accomplished in time at most $O(n^2)$.
    \item Line 7 to line 9 in \Cref{alg:wt}: for any $i,j\in[n]$, one needs to compute the Wasserstein distance between $m_{\alpha,D^k}^{\mathsmaller{(\eps)}}\lc x_i^k\rc $ and $m_{\alpha,D^k}^{\mathsmaller{(\eps)}}\lc x_j^k\rc $ described in line 8. Since the respective supports of these measures have cardinality bounded by $N$, this computation takes time at most $O(N^3\log(N))$~\cite{pele2009fast}. Then, the whole for-loop can be carried out in time at most $O(n^2N^3\log(N))$.
\end{itemize}
In summary, the total time complexity associated to one iteration of LT-WT is at most $O(n^2N^3\log(N))$. 

\paragraph{Time complexity of GT.} For one iteration of GT, the cost consists of two components: (1) the initialization step described in line 2 of \Cref{alg:gt} and (2) the steps described from line 4 to line 11. It is clear that the two components have the same time complexity and for simplicity, we only focus on analyzing the second component. In this component, we have two major parts which we analyze as follows.
\begin{itemize}
    \item Line 4 to line 7 in \Cref{alg:gt}: by \Cref{pro:nbh-trick}, for each given $i\in[n]$, the support of $m_{\alpha,D^{0}}^{\mathsmaller{(\eps)}}(x_i^0)$ is contained in the Euclidean $\eps$-ball of $x_i^0=x_i$ whose cardinality is bounded by $N$ due to our assumption at the beginning of this section. So it takes time at most $O(Nm^2)$ to compute the mean in line 5 and the covariance matrix in line 6. Then, the whole for-loop can be accomplished in time at most $O(nNm^2)$.
    \item Line 8 to line 10 in \Cref{alg:gt}: for any $i,j\in[n]$, one needs to compute the GT distance between $x_i^{1}$ and $x_j^{1}$ described in line 9. This computation involves matrix multiplication and matrix square root computation and hence its time complexity is at most $O(m^3)$~\citep{pan1998complexity,demmel2007fast}.  Then, the whole for-loop can be accomplished in time at most $O(n^2m^3)$.
\end{itemize}
In summary, the total time complexity associated to applying one iteration of GT is at most $O(nNm^2+n^2 m^3)$.

Note that from Table~\ref{tab:compformula}, when $N > m$, i.e., the estimate cardinality of neighborhoods is larger than the data dimension, the complexity order of the three methods is 
MS $<$ GT $<$ LT-WT.



\section{Experiments}\label{sec:experiments}
We now apply the Wasserstein Transform to various data sets and tasks. In all of our experiments, the probability measure $\alpha$ is the normalized empirical measure, and the radius adopted in each data set, $\eps$, varies across different experiment but, in any given experiment, it remains fixed (by default) throughout successive iterations. In all figures, we use $\tau$ to denote number of iterations.
We consider the performance of MS, LT-WT and GT, whenever applicable. In the sequel, we also use the notation GT-$\lambda$ whenever we need to report the value of $\lambda$ used in GT, e.g., GT-1 will mean $\lambda=1$.

\subsection{Clustering of a T-junction data set} 
\begin{wrapfigure}{r}{0.1\textwidth}
  \begin{center}
    \includegraphics[width=0.1\textwidth]{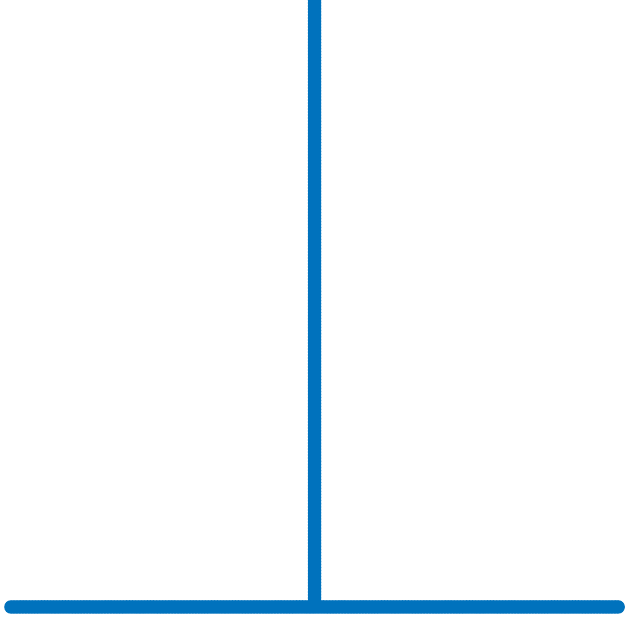}
  \end{center}
  \label{fig:tline-original}
\end{wrapfigure}
In this experiment we consider a clustering task on the \emph{T-junction} data set depicted on the right, which is composed of 200 evenly spaced points on the vertical line segment between $(0,1)$ and $(0,200)$, and 201 evenly spaced points on the horizontal line segment between $(-100,0)$ and $(100,0)$. Ideally, we would like to separate points from the two line segments and thus form two clusters, one for each line segment. We apply two iterations of MS, LT-WT1, LT-WT2, GT-1 as well as GT-5 to this data set with $\eps=10$. Then, we apply single-linkage clustering to the resulting metric spaces. See Figure~(\ref{fig:tlines-supp}) for our results.

We make the following observations: (1) Note that the respective 3D MDS plots induced by GT and LT-WT2 have comparable structures, which indicates similarity between their output distance matrices. This agrees with our analysis of the two line data set from Section \ref{sec: gt vs lt-wt}. (2) In the first iteration, all methods except for GT-5 shatter the data set into four parts. Only GT-5 is able to keep the horizontal line segment as one single cluster. (3) In the second iteration, only GT-5 and LT-WT2 can generate two correct clusters, one for each line segment (by choosing proper thresholds for their induced single-linkage dendrograms).

As noted in \Cref{rmk:GT-lambda}, $\lambda$ in GT controls the influence of local structures when comparing data points. We see through this experiment that fine-tuning $\lambda$ in GT indeed helps capture subtle geometric information in data sets and hence helps boosting performance in the task of clustering.

\begin{figure}[htb!]
    \centering
    \subfloat[2D and 3D MDS at $\tau=1$]{
    \label{fig:tline-mds-1-supp}
    \begin{minipage}[t][2.8cm][t]{0.08\textwidth}
                \centering
                MS \\ 
                \includegraphics[width=0.85\textwidth]{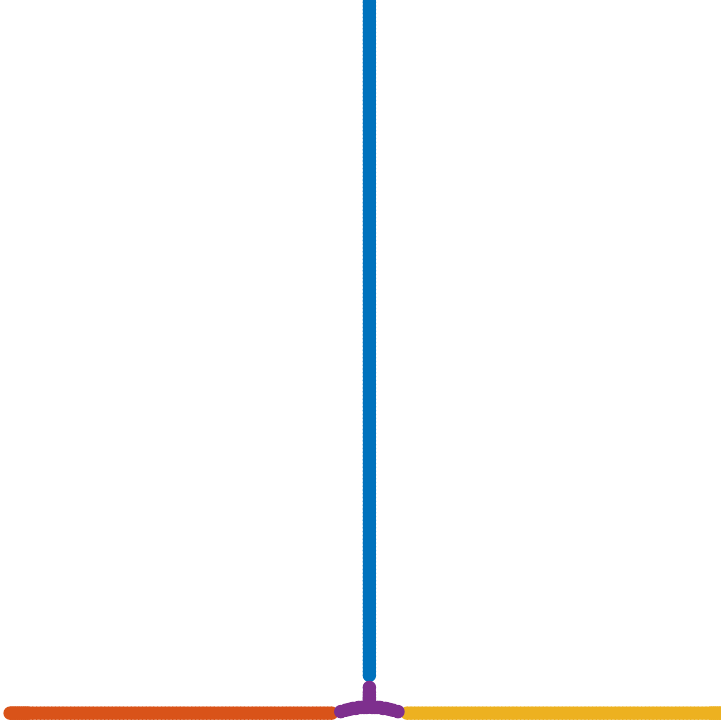} 
                \\ 
            \end{minipage}
            \begin{minipage}[t][2.8cm][t]{0.08\textwidth}
                \centering
                GT-1 
                \\  \includegraphics[width=1\textwidth]{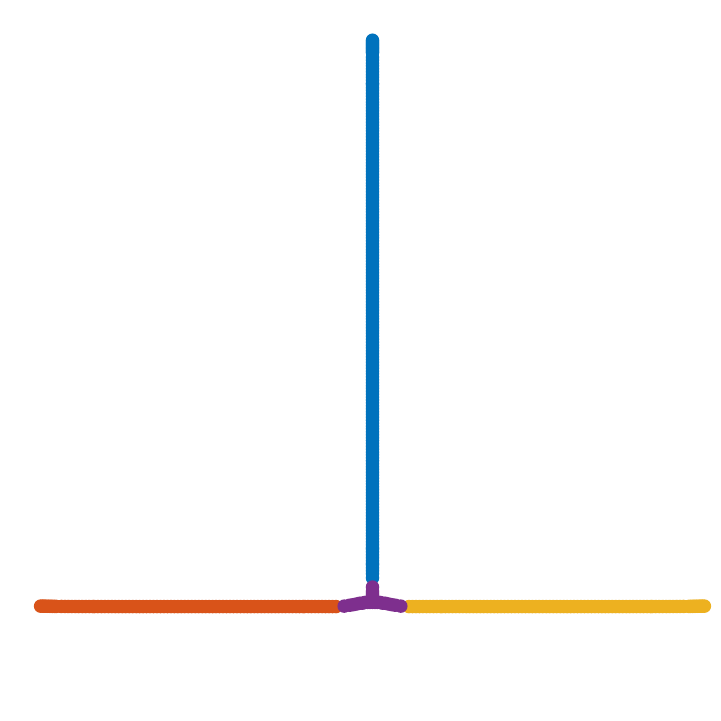}
                \\
              \includegraphics[width=1\textwidth]{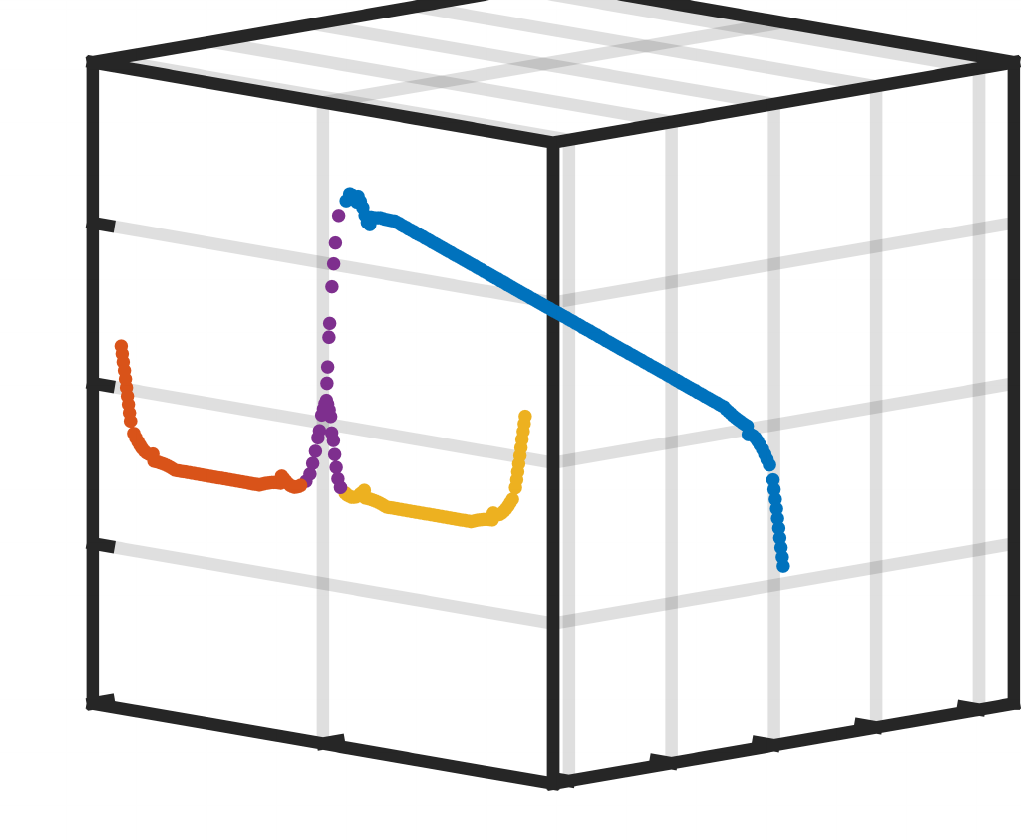} 
            \end{minipage}
            \begin{minipage}[t][2.8cm][t]{0.08\textwidth}
                \centering
                GT-5 \\ 
                \includegraphics[width=1\textwidth]{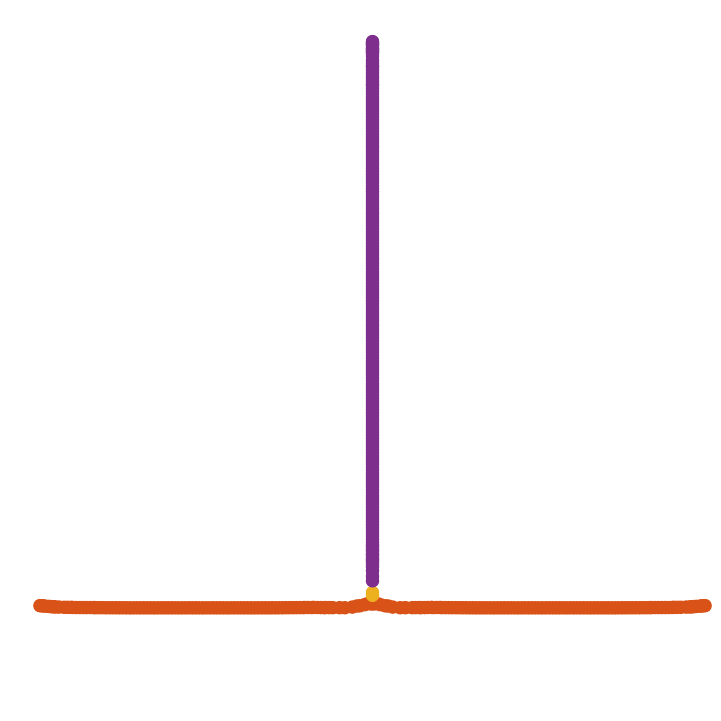}
                \\
              \includegraphics[width=1\textwidth]{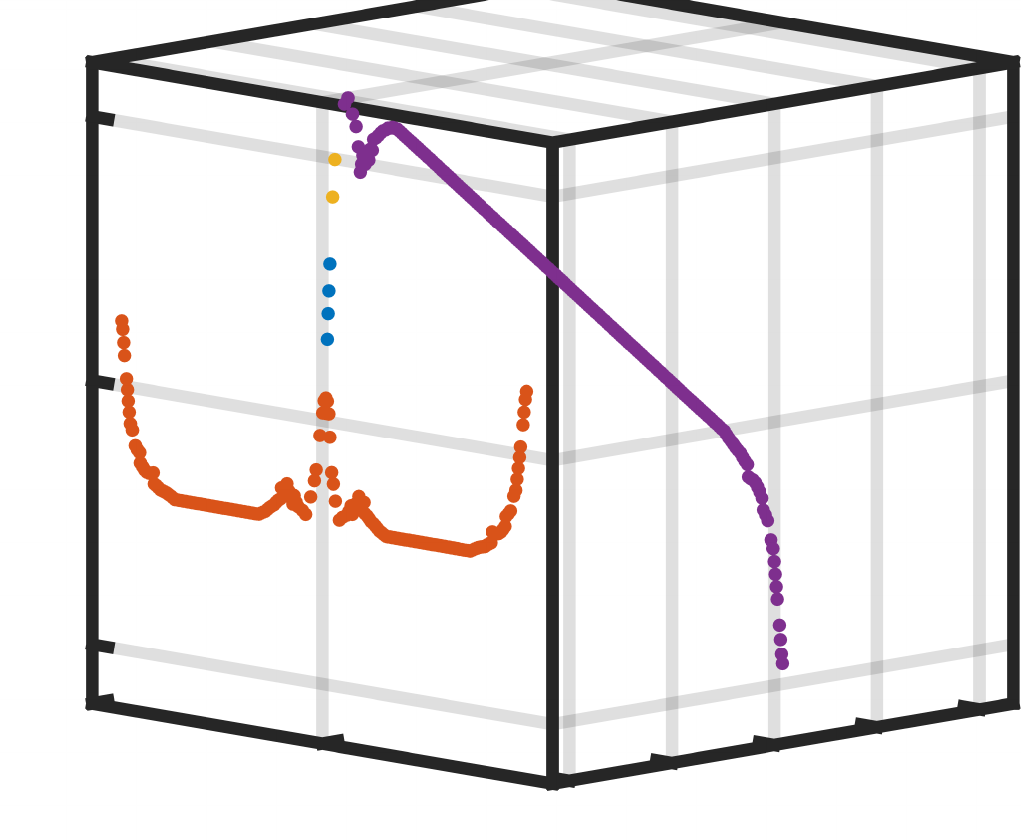} 
            \end{minipage}
            \begin{minipage}[t][2.8cm][t]{0.1\textwidth}
                \centering
                LT-WT2 \\ 
                \includegraphics[width=0.8\textwidth]{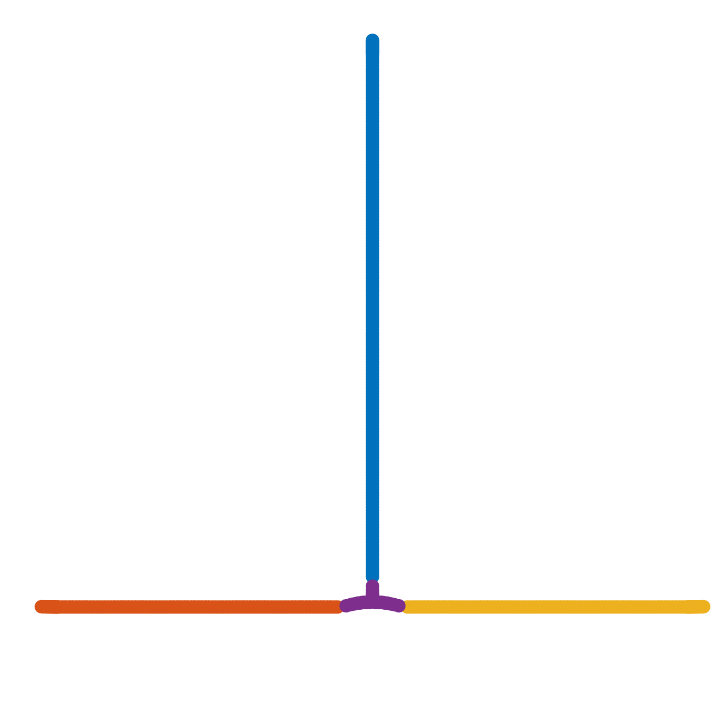}
                \\
              \includegraphics[width=0.8\textwidth]{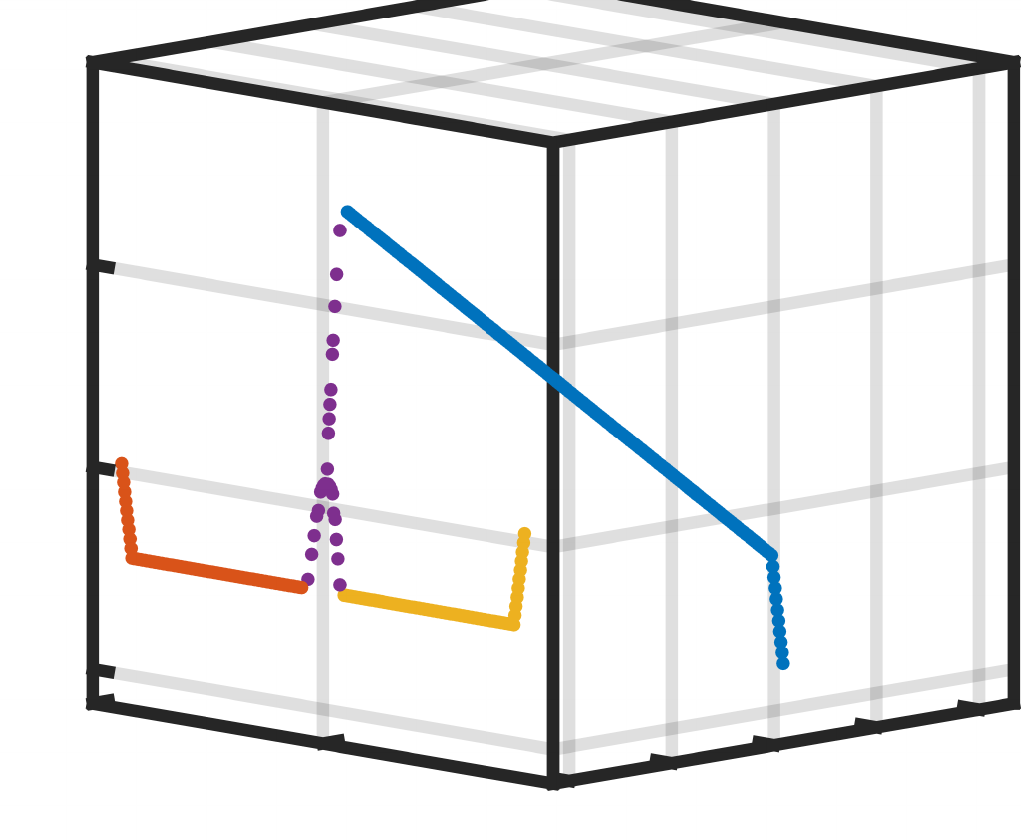} 
            \end{minipage}
            \begin{minipage}[t][2.8cm][t]{0.1\textwidth}
                \centering
                LT-WT1 \\ 
                \includegraphics[width=0.8\textwidth]{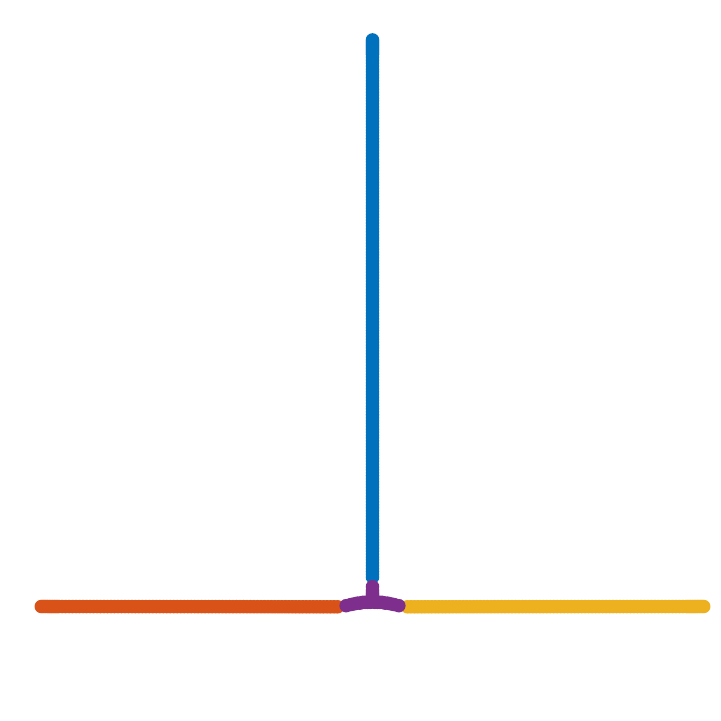}
                \\
              \includegraphics[width=0.8\textwidth]{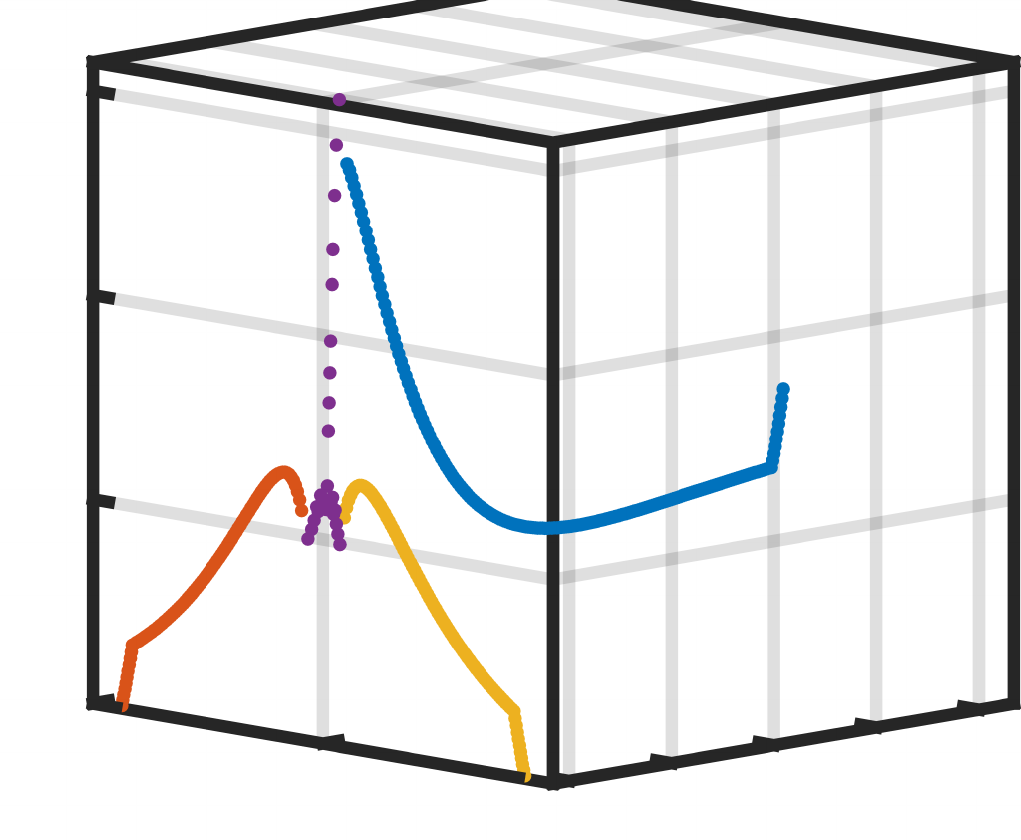} 
            \end{minipage} 
    }
    
    \subfloat[Dendrograms at $\tau=1$]{
    \label{fig:tline-dend-1-supp}  \begin{minipage}[t][3cm][t]{0.08\textwidth}
                \centering
                MS \\ 
                \includegraphics[width=1\textwidth]{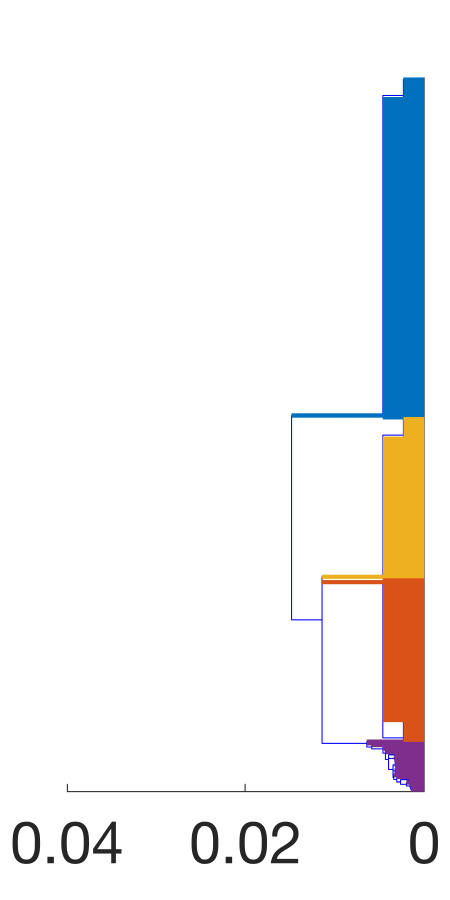}
            \end{minipage}
            \begin{minipage}[t][2.8cm][t]{0.08\textwidth}
                \centering
                GT-1 \\ 
                \includegraphics[width=1\textwidth]{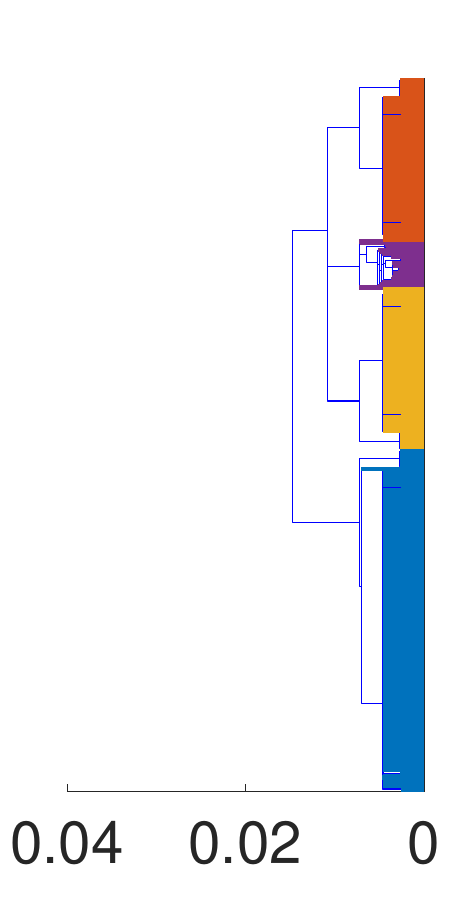}
            \end{minipage}
            \begin{minipage}[t][2.8cm][t]{0.08\textwidth}
                \centering
                GT-5 \\ 
                \includegraphics[width=1\textwidth]{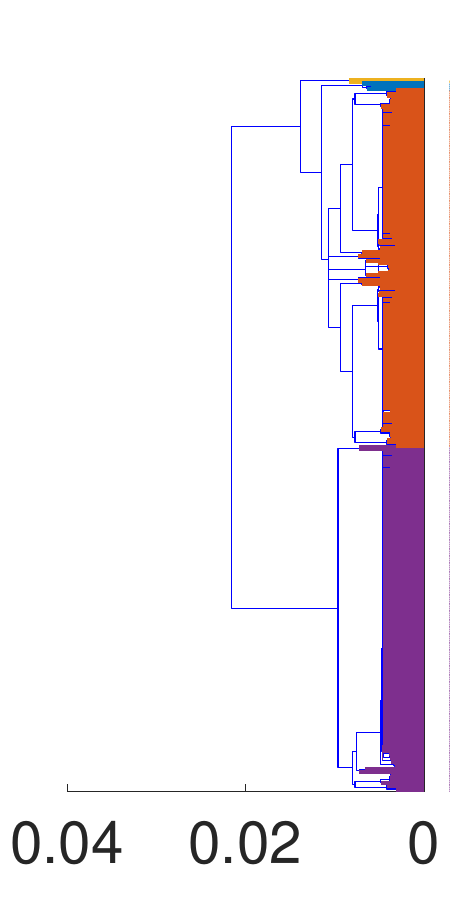}
            \end{minipage}  
            \begin{minipage}[t][2.8cm][t]{0.1\textwidth}
                \centering
                LT-WT2 \\ 
                \includegraphics[width=0.8\textwidth]{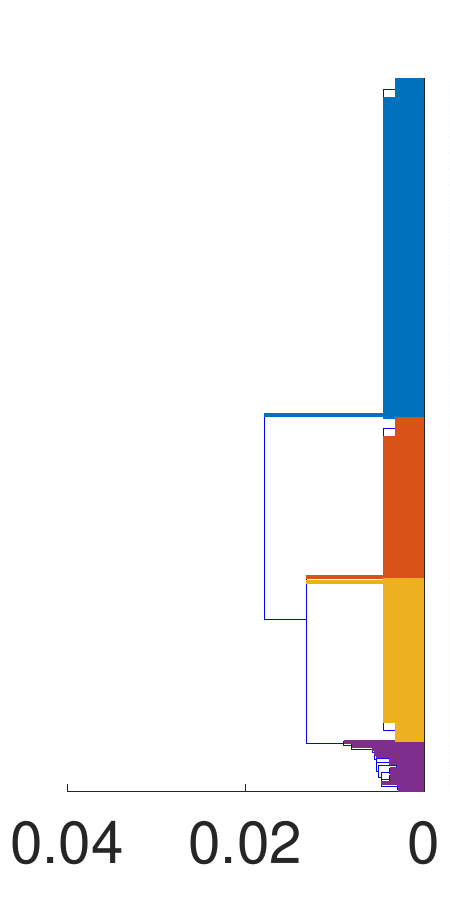}
            \end{minipage}
            \begin{minipage}[t][2.8cm][t]{0.1\textwidth}
                \centering
                LT-WT1 \\ 
                \includegraphics[width=0.8\textwidth]{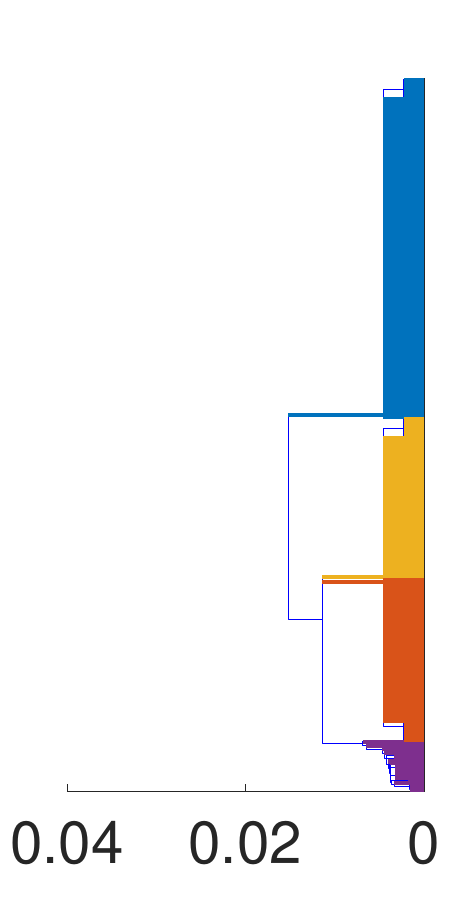}
            \end{minipage}    
    }

    \subfloat[2D and 3D MDS at $\tau=2$]{
    \label{fig:tline-mds-2-supp}
     \begin{minipage}[t][2.8cm][t]{0.08\textwidth}
                \centering
                MS \\ 
                \includegraphics[width=0.85\textwidth]{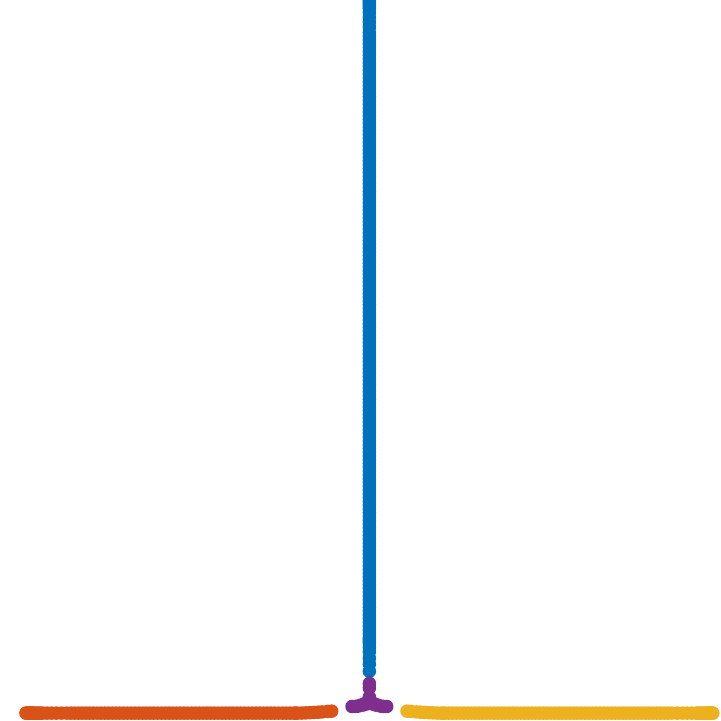} 
                \\ 
            \end{minipage}
            \begin{minipage}[t][2.8cm][t]{0.08\textwidth}
                \centering
                GT-1 
                \\  \includegraphics[width=1\textwidth]{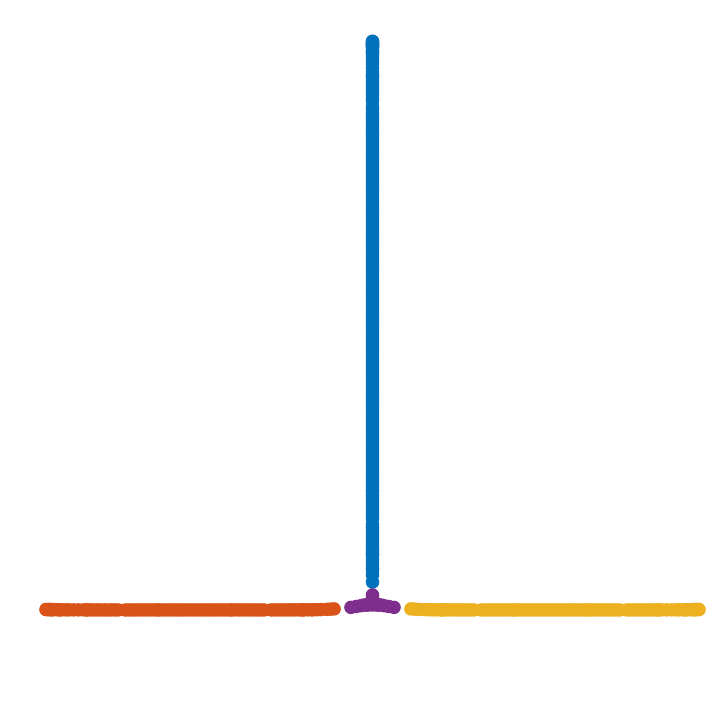}
                \\
              \includegraphics[width=1\textwidth]{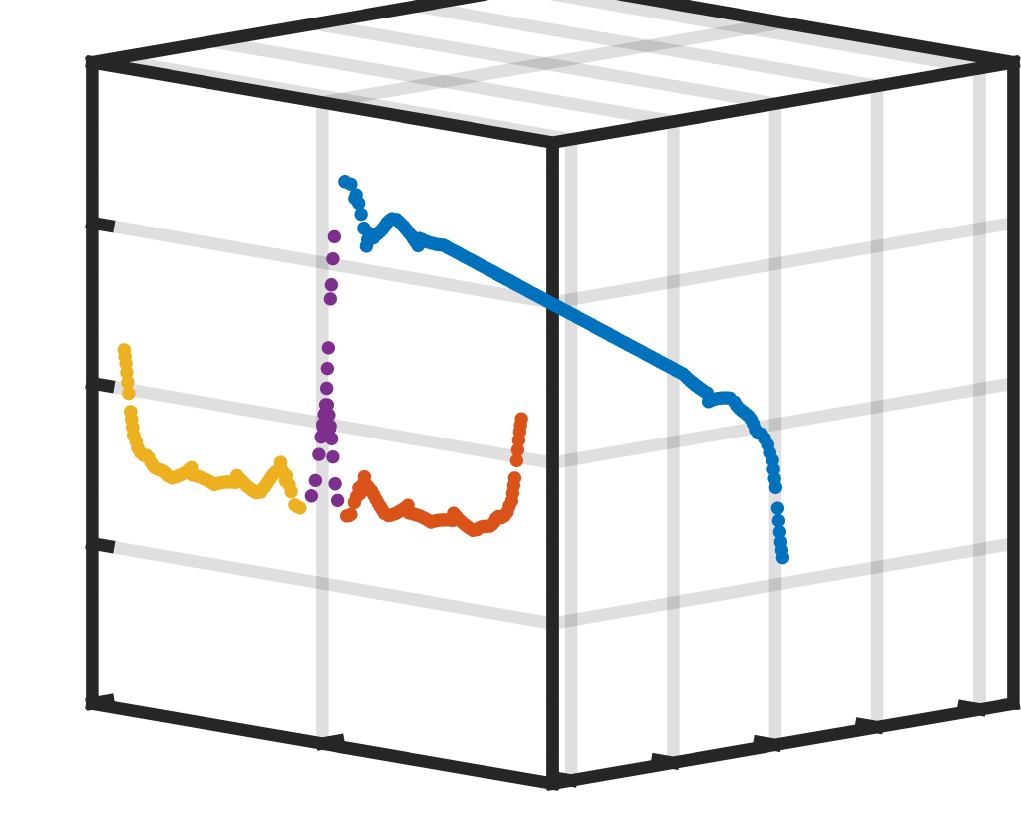} 
            \end{minipage}
            \begin{minipage}[t][2.8cm][t]{0.08\textwidth}
                \centering
                GT-5 \\ 
                \includegraphics[width=1\textwidth]{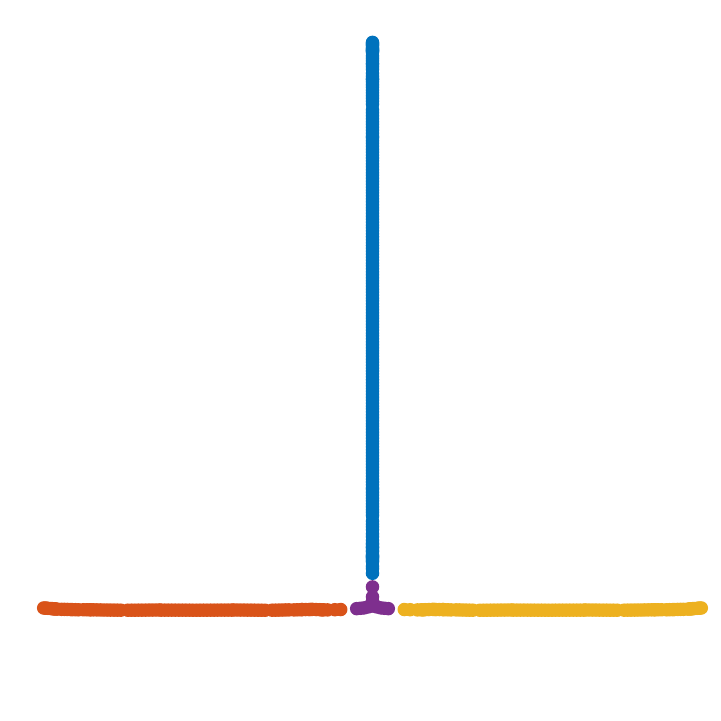}
                \\
              \includegraphics[width=1\textwidth]{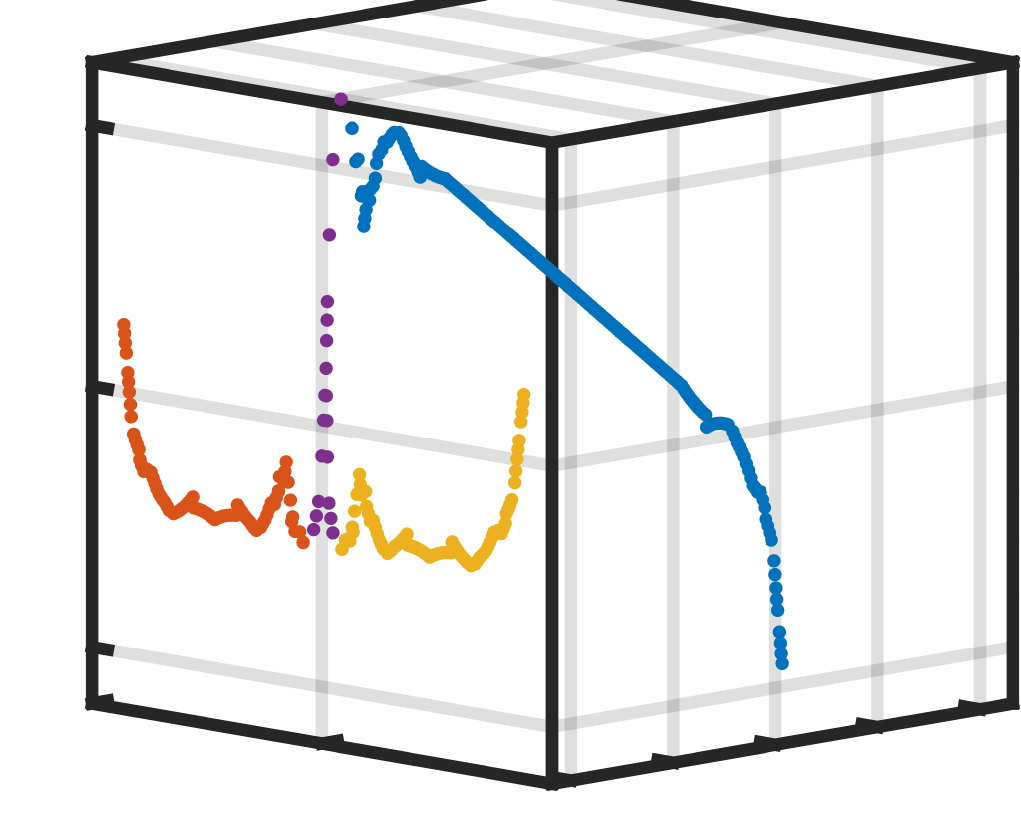} 
            \end{minipage}
            \begin{minipage}[t][2.8cm][t]{0.1\textwidth}
                \centering
                LT-WT2 \\ 
                \includegraphics[width=0.8\textwidth]{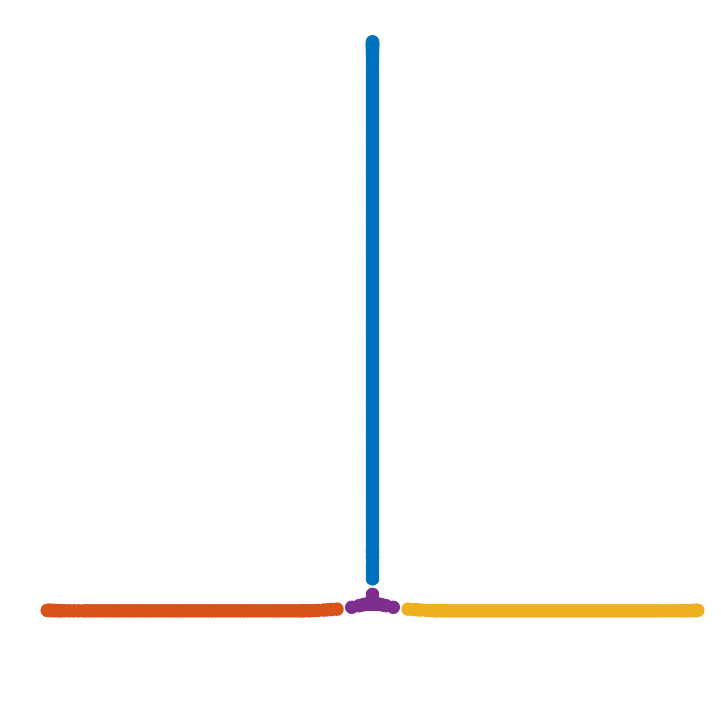}
                \\
              \includegraphics[width=0.8\textwidth]{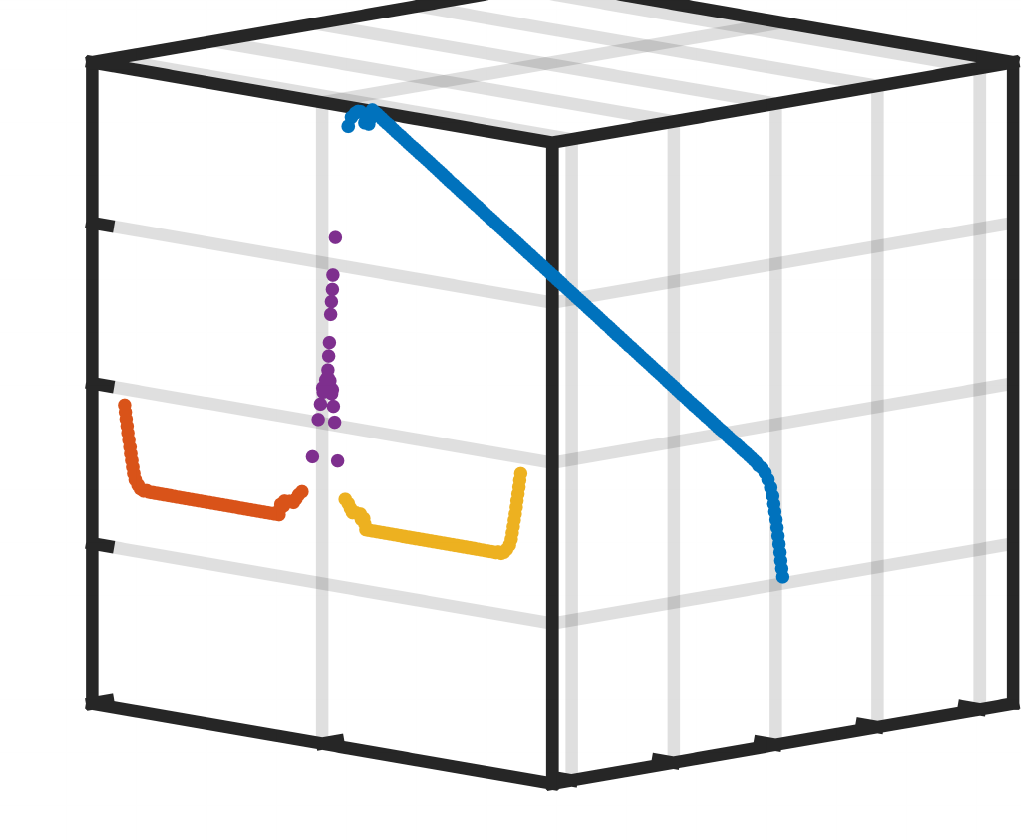} 
            \end{minipage}
            \begin{minipage}[t][2.55cm][t]{0.1\textwidth}
                \centering
                LT-WT1 \\ 
                \includegraphics[width=0.8\textwidth]{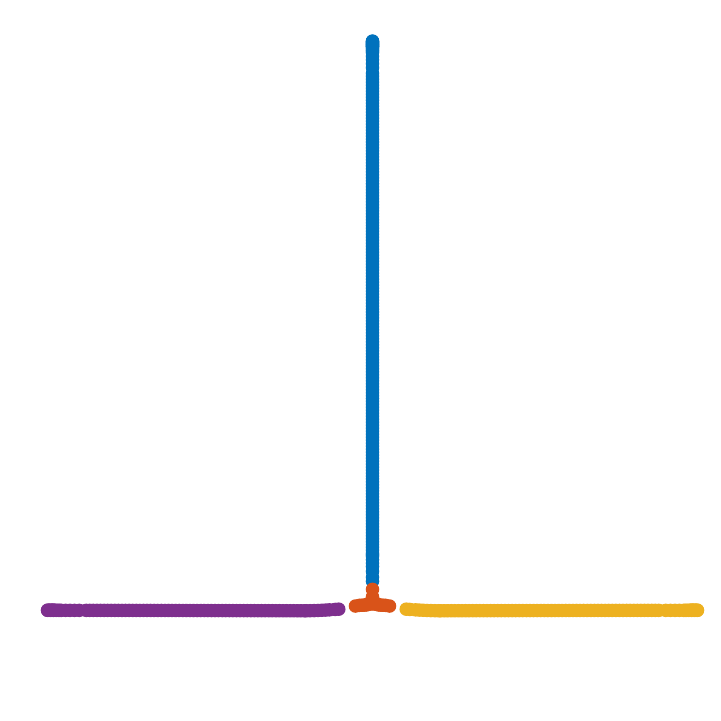}
                \\
              \includegraphics[width=0.8\textwidth]{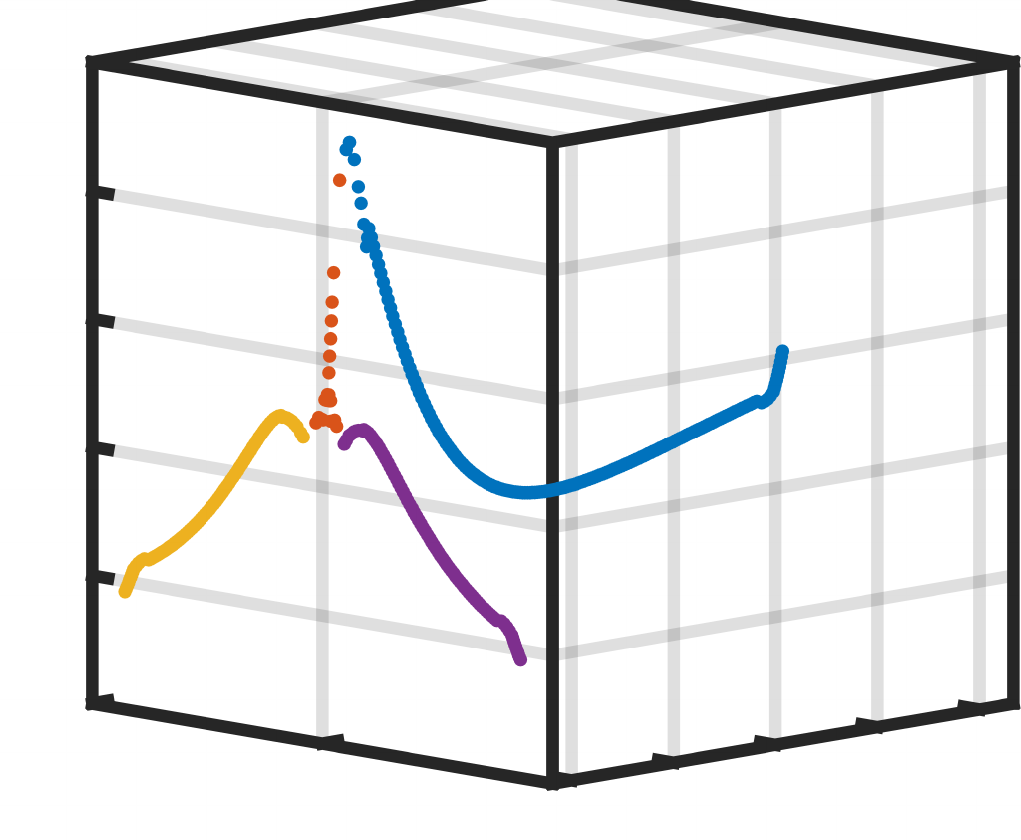} 
            \end{minipage} 
    }
    
    \subfloat[Dendrograms at $\tau=2$]{
    \label{fig:tline-dend-2-supp}
        \begin{minipage}[t][2.8cm][t]{0.08\textwidth}
                \centering
                MS \\ 
                \includegraphics[width=1\textwidth]{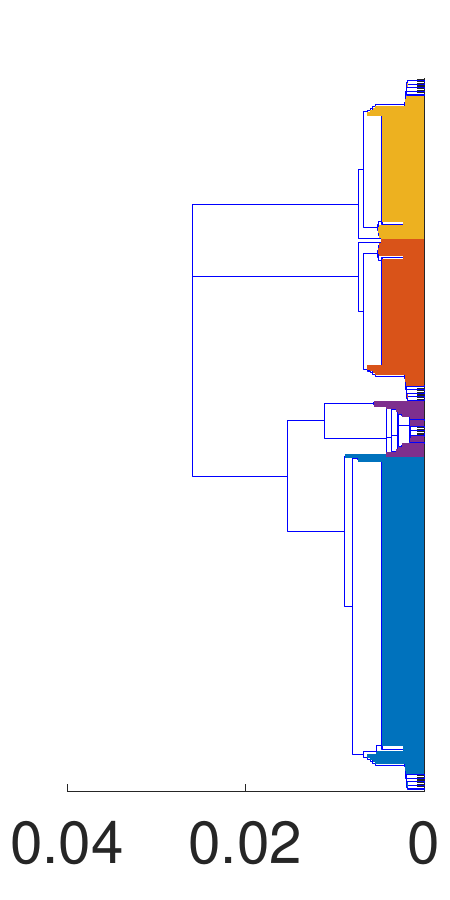}
            \end{minipage}
            \begin{minipage}[t][2.8cm][t]{0.08\textwidth}
                \centering
                GT-1 \\ 
                \includegraphics[width=1\textwidth]{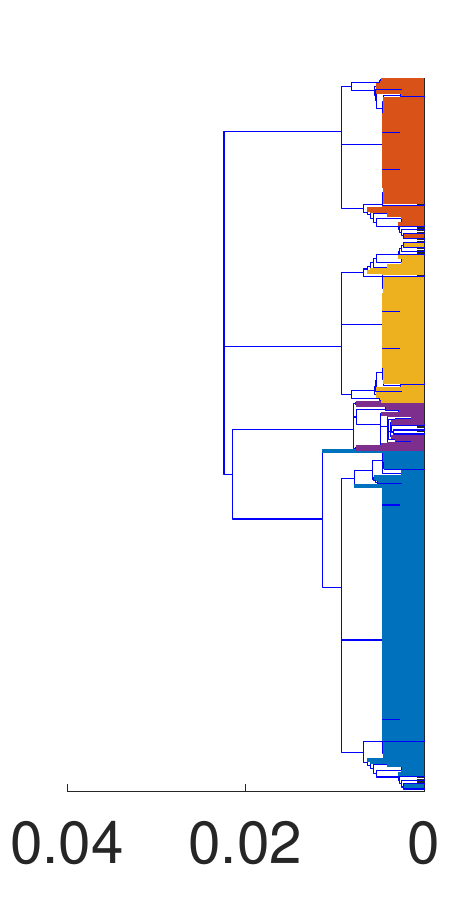}
            \end{minipage}
            \begin{minipage}[t][2.8cm][t]{0.08\textwidth}
                \centering
                GT-5 \\ 
                \includegraphics[width=1\textwidth]{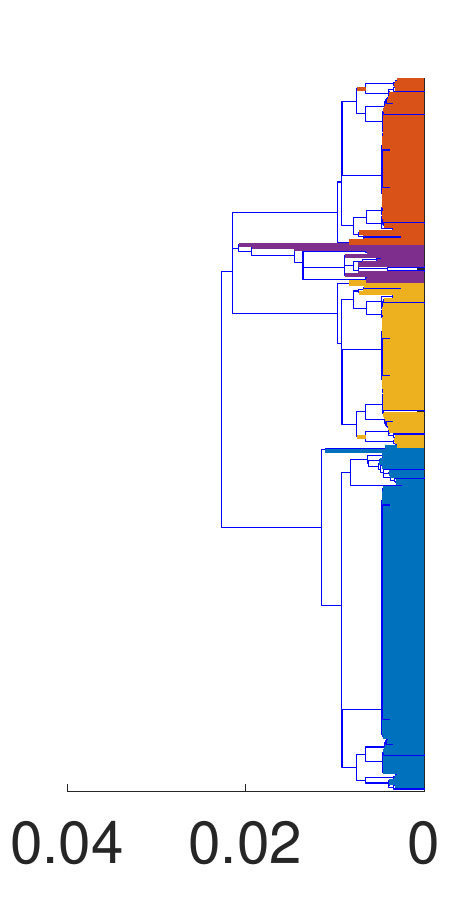}
            \end{minipage}  
            \begin{minipage}[t][2.8cm][t]{0.1\textwidth}
                \centering
                LT-WT2 \\ 
                \includegraphics[width=0.8\textwidth]{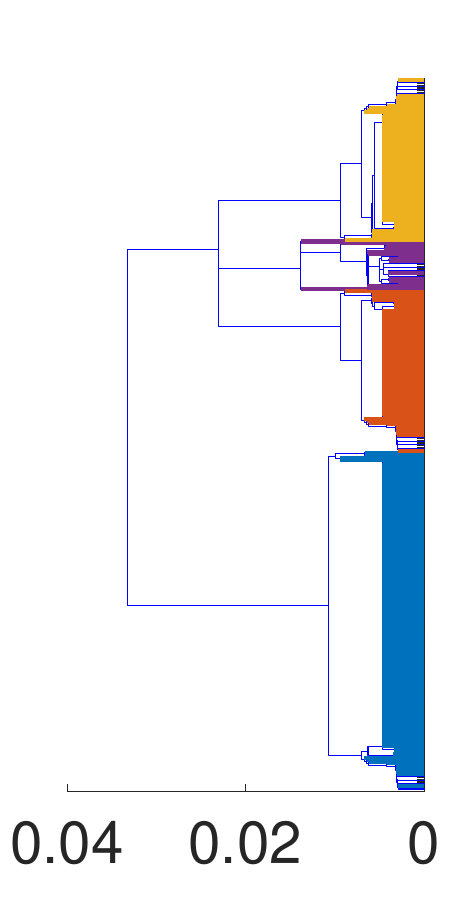}
            \end{minipage}
            \begin{minipage}[t][3cm][t]{0.1\textwidth}
                \centering
                LT-WT1 \\ 
                \includegraphics[width=0.8\textwidth]{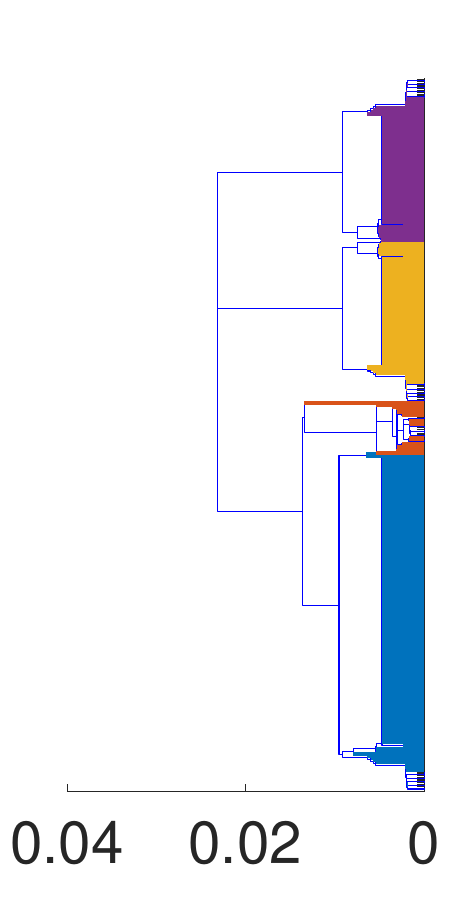}
            \end{minipage}     
    }
    \caption{\textbf{T-junction clustering.} 
     {
    (a): The first column shows the updated point cloud based on 1 iteration of MS; the next four columns show the 2D and 3D MDS plots of distance matrices obtained via GT with $\lambda$=1, GT with $\lambda$=5, LT-WT2 and LT-WT1 after 1 iteration, respectively. Different colors in (a) represent each of the different clusters which are obtained by slicing the single-linkage dendrograms illustrated in (b). 
    (b): The five columns show the single-linkage dendrograms based on distances generated via the methods in (a).
    (c): The first column shows the updated point cloud based on MS after 2 iterations; the next four columns show the 2D and 3D MDS plots of distance matrices based on GT with $\lambda$=1, GT with $\lambda$=5, LT-WT2 and LT-WT1 after 2 iterations, respectively. Different colors in (c) represent each of the different clusters which are obtained by slicing the dendrograms illustrated in (d). 
    (d): The five columns demonstrate the single-linkage dendrograms based on distances generated via methods in (c). 
    } }
    \label{fig:tlines-supp}
\end{figure}

\subsection{Ameliorating the chaining effect} 
We consider the clustering task on a data set with two blobs connected by a chain (i.e., a dumbbell). Each blob is composed of 300 uniform grid points in a circle of radius 1 and the chain is composed of 200 uniform grid points along a line segment with length 2. The single-linkage dendrogram cannot separate the two blobs as shown in \Cref{fig:ellip-iters}. This phenomenon is known as the chaining effect as mentioned in the introduction. We now show that the Wasserstein Transform  helps separate the two blobs by improving the underlying distance matrices of the data set, and as a consequence it improves the quality of single-linkage dendrograms throughout iterations. Here we only show the result of iterative GT (see \Cref{fig:ellip-iters}). We remark that LT-WT performs as well in the task of ameliorating chaining effect; see results of iterative LT-WT in \cite{memoli2019wasserstein} under slightly different settings of data sets.
\begin{figure}[htb]
    \centering
      \begin{minipage}[t][2.8cm][t]{0.08\textwidth}
          \centering
    		    $\tau=0$    
    		    \includegraphics[width=1\textwidth]{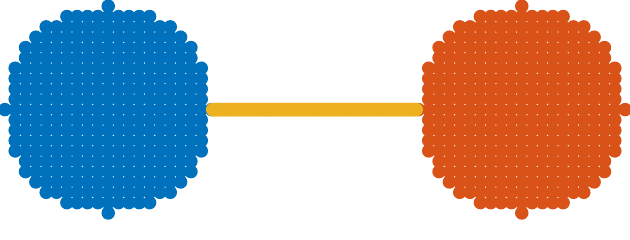} 
              \begin{minipage}[t][0.73cm][t]{0.08\textwidth}
              \centering
              
              \end{minipage}
              
             \includegraphics[width=1\textwidth]{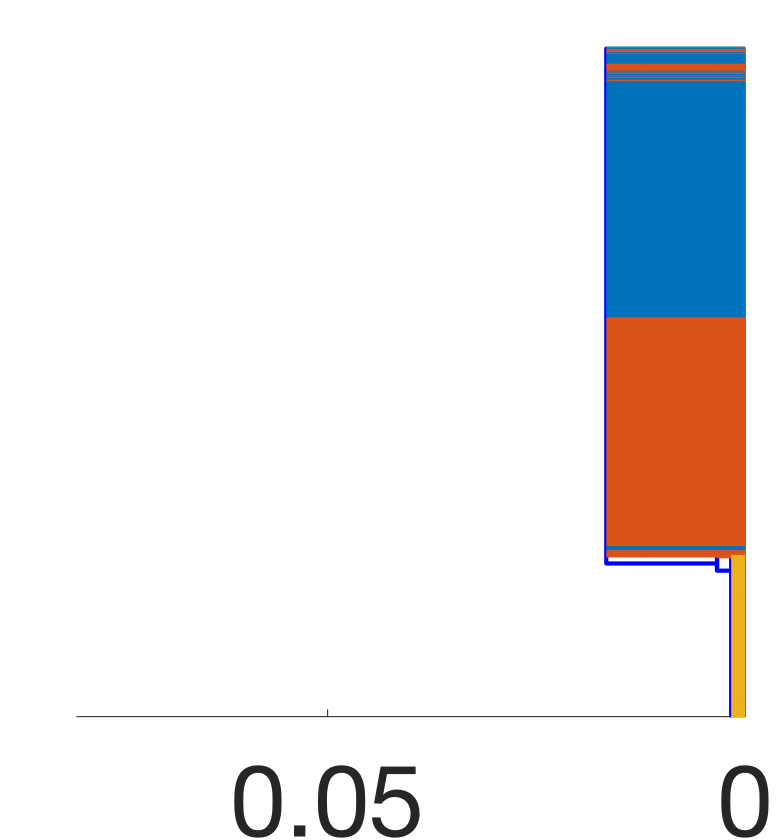}
          \end{minipage}
          \begin{minipage}[t]{0.08\textwidth}
          \centering
             $\tau=1$ 
            \includegraphics[width=1\textwidth]{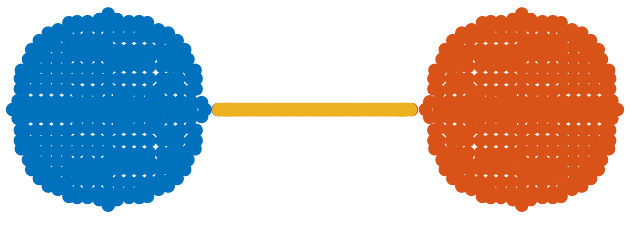}
               
             \includegraphics[width=1\textwidth]{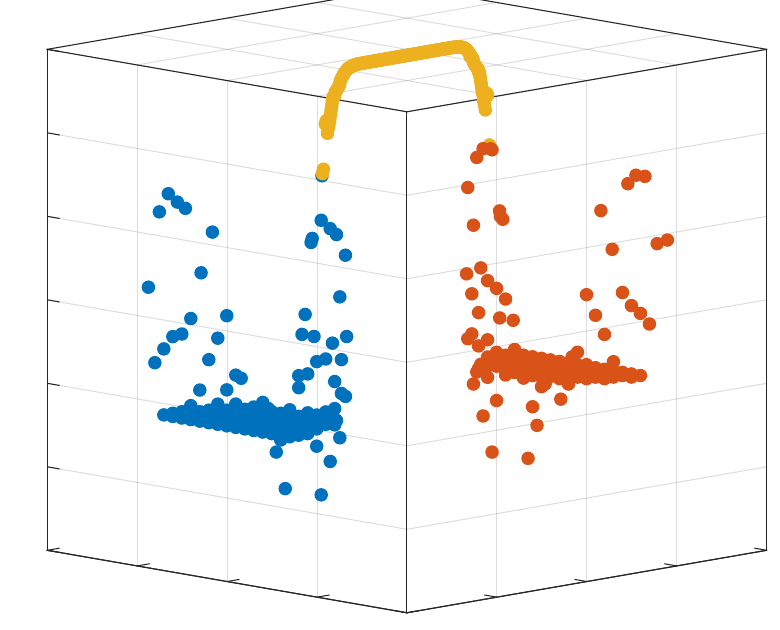}
              
             \includegraphics[width=1\textwidth]{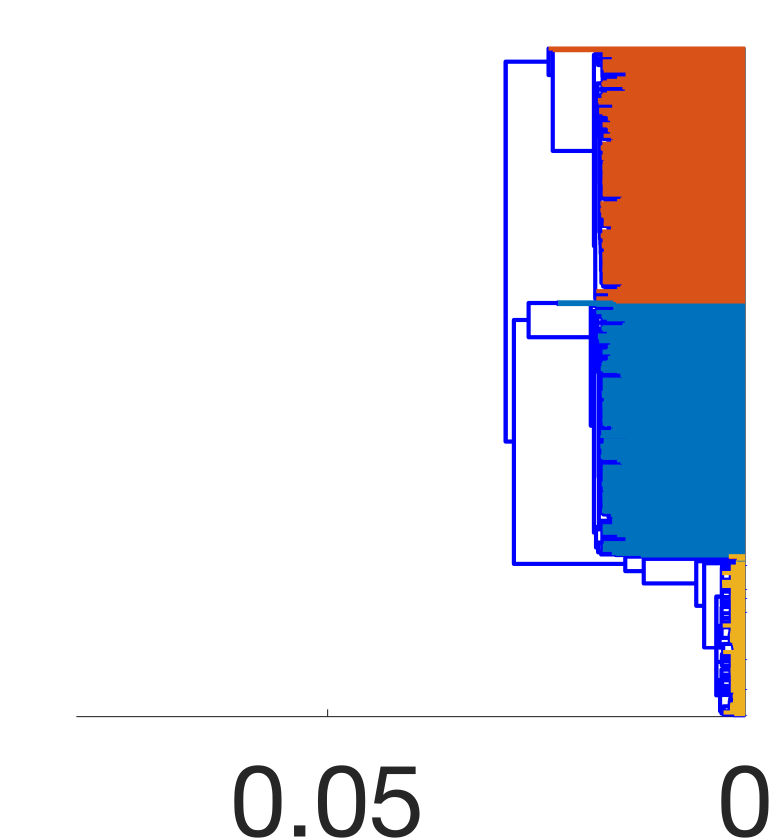}
          \end{minipage}
          \begin{minipage}[t]{0.08\textwidth}
          \centering
             $\tau=2$   
             \includegraphics[width=1\textwidth]{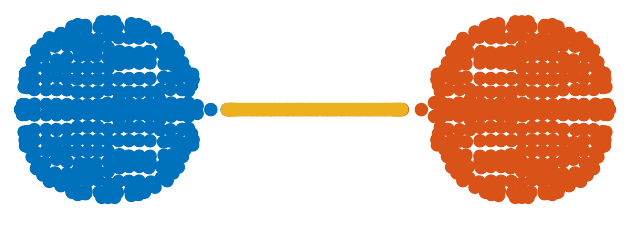}  
             \includegraphics[width=1\textwidth]{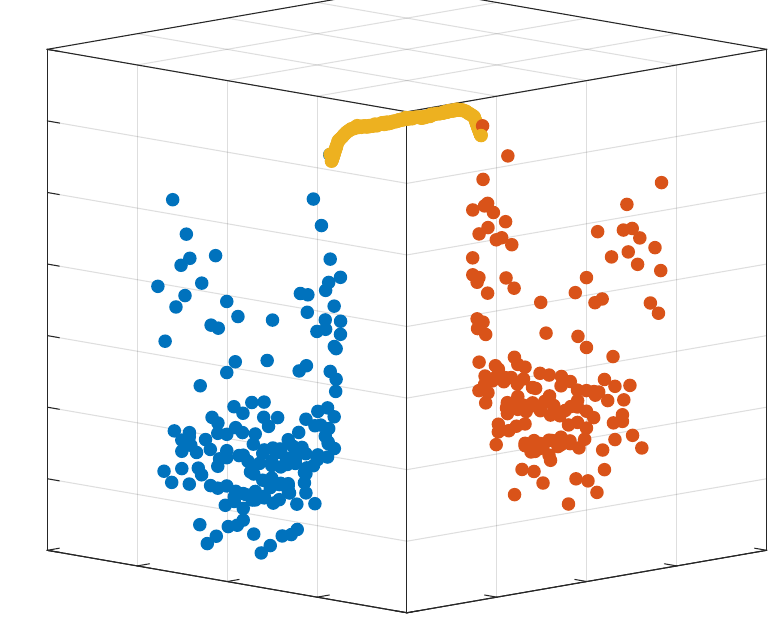} 
              
             \includegraphics[width=1\textwidth]{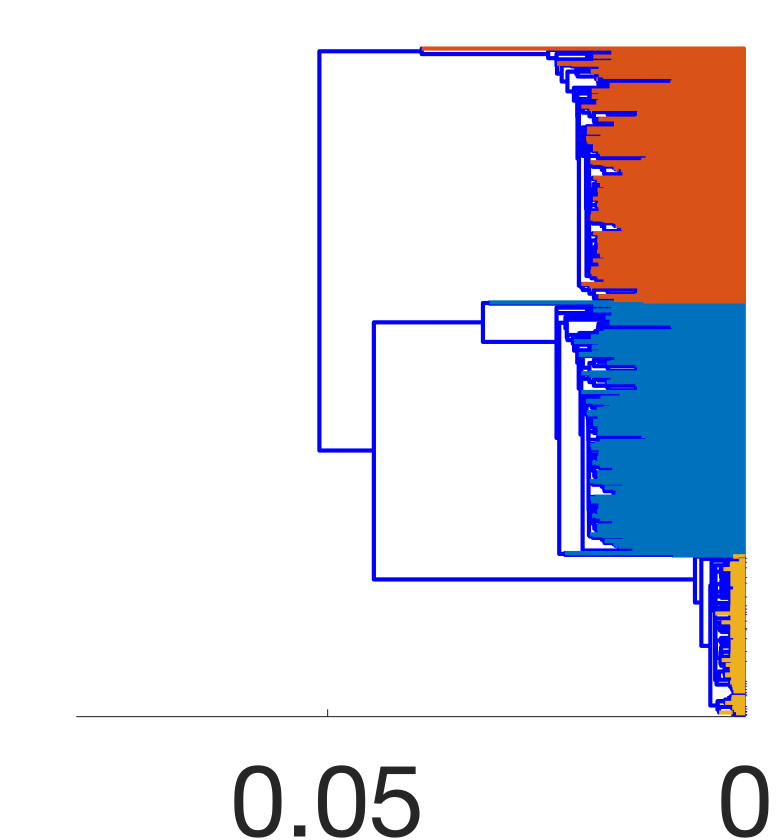}
          \end{minipage}
        \begin{minipage}[t]{0.08\textwidth}
          \centering
             $\tau=3$   
             \includegraphics[width=1\textwidth]{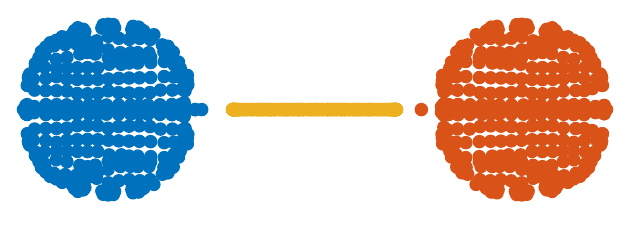}  
             
             \includegraphics[width=1\textwidth]{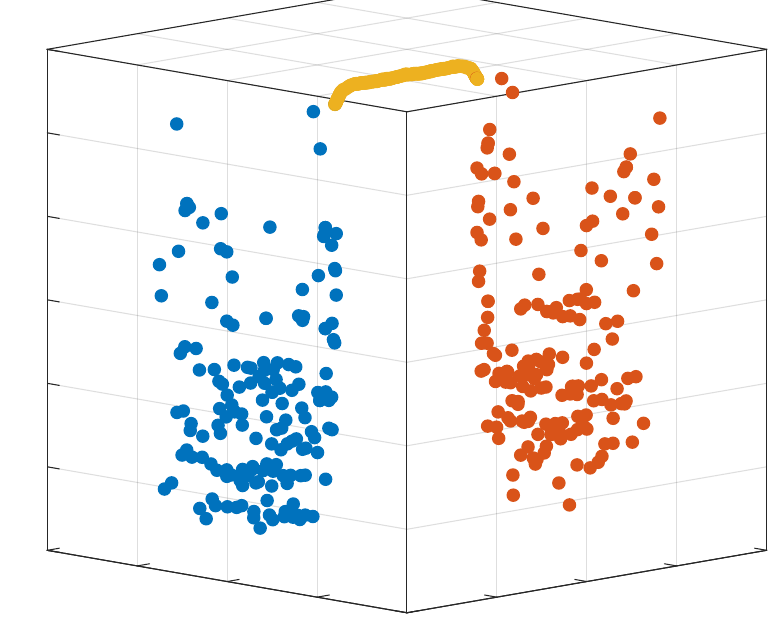}
             
             \includegraphics[width=1\textwidth]{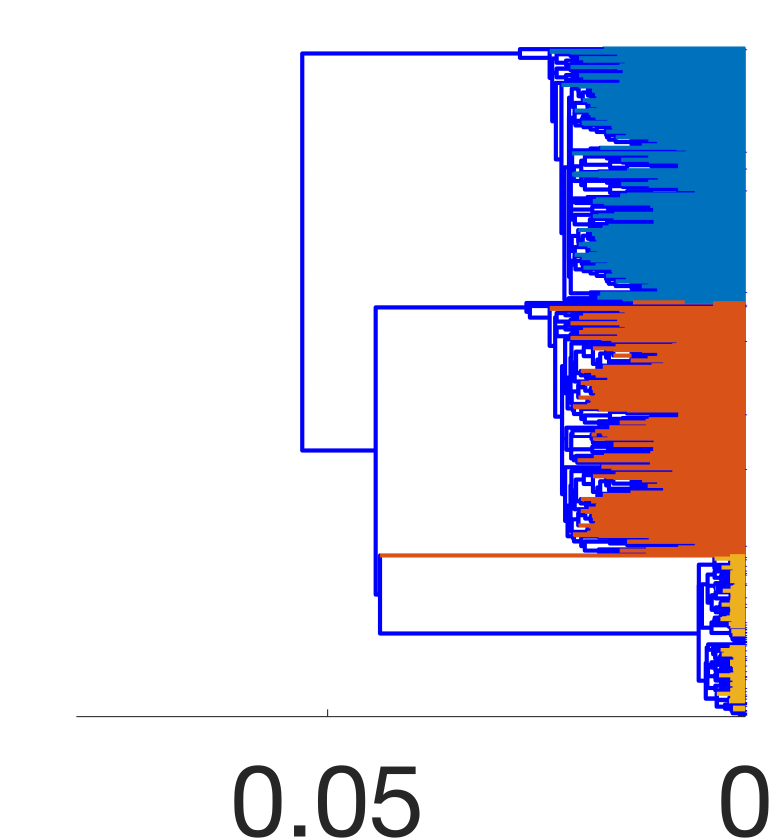}
          \end{minipage}
          \begin{minipage}[t]{0.08\textwidth}
          \centering
             $\tau=4$ 
             \includegraphics[width=1\textwidth]{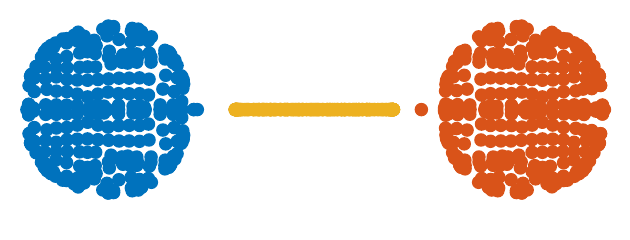}
             
             \includegraphics[width=1\textwidth]{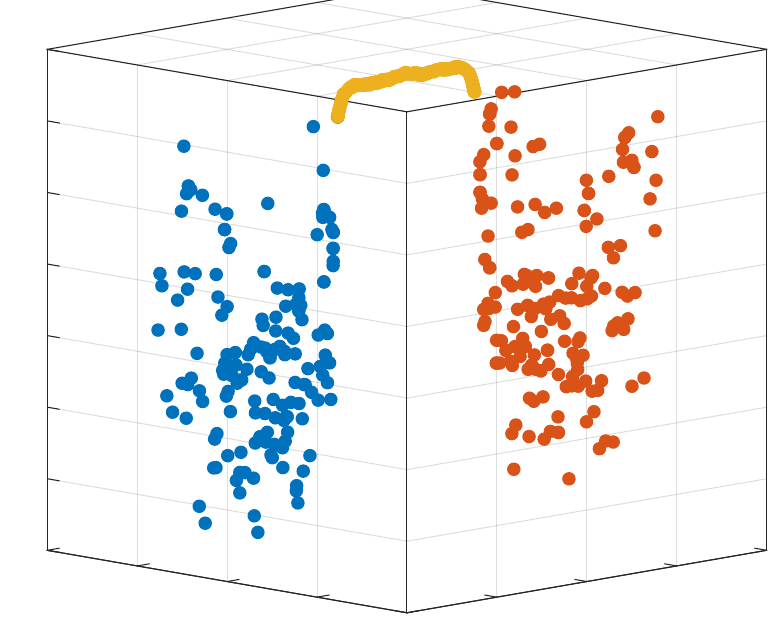}
             
             \includegraphics[width=1\textwidth]{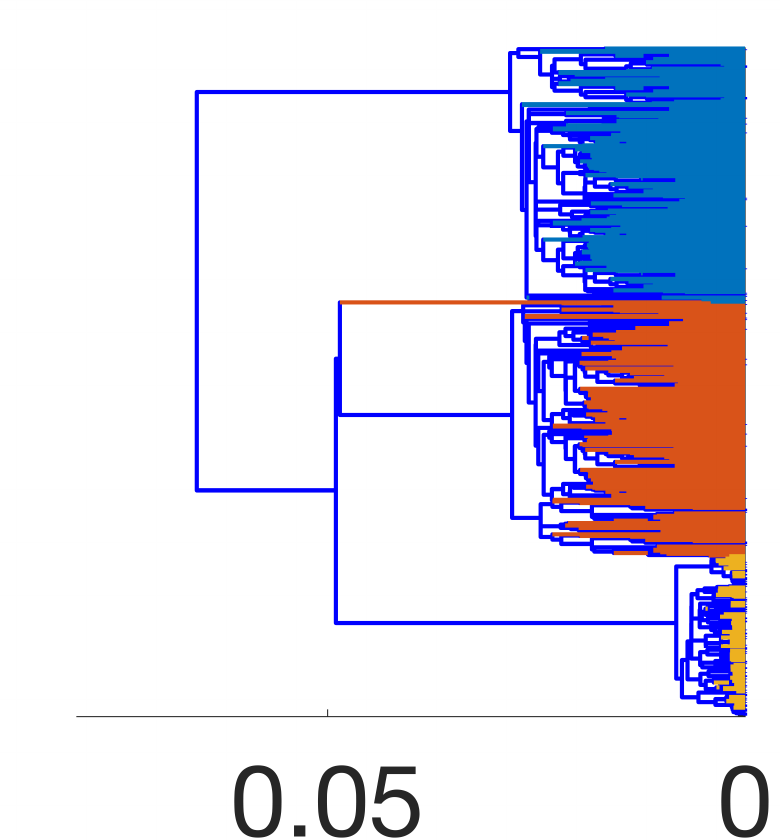}
          \end{minipage}  
    \caption{\textbf{GT and Chaining effect.} The top and middle rows respectively show 2D and 3D MDS plots after after successive GT iterations. The bottom row shows the corresponding single-linkage dendrograms. We set $\eps=0.2$ in this experiment.} 
    \label{fig:ellip-iters} 
\end{figure}

We then examine the performance of MS, LT-WT and GT in the task of ameliorating chaining effect from another perspective.
We apply linear transformations  $T=\begin{psmallmatrix}
a_1 & 0\\0 & a_2
\end{psmallmatrix}$ with different \emph{eccentricity} $e\coloneqq a_2/a_1$ to the abovementioned dumbbell shape data set to generate several new data sets. Then, we apply MS, LT-WT1, LT-WT2 and GT for 1 iteration to these newly obtained data sets and study how their geometry (reflected by the eccentricity in this case) influences the single-linkage clustering structure induced by these methods. 
Results are shown in Figure~(\ref{fig:supp-ballchains}). We notice that whereas MS, LT-WT1, LT-WT2 and GT-1 all induce similar single-linkage dendrograms, in the extreme condition when $e\coloneqq 1/0.2$, GT-5 outperforms all other methods in separating two blobs from the chain connecting them. 
\begin{figure}[htb]
    \centering
    \subfloat[MS]{
    \fbox{
        \begin{minipage}[t]{0.09\textwidth}
            \centering
            \begin{minipage}[t]{1\textwidth}
                \centering
                $e$:1/1 \\ 
                \includegraphics[width=1\textwidth]{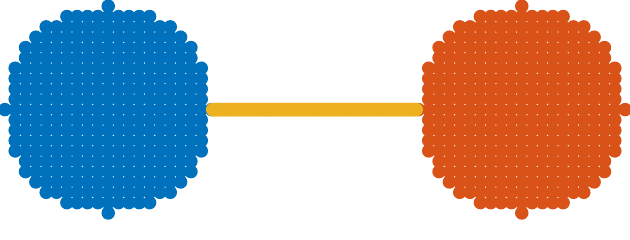} 
            \end{minipage}
            \begin{minipage}[t][0.55cm][t]{1\textwidth}
                \centering
                \includegraphics[width=1\textwidth]{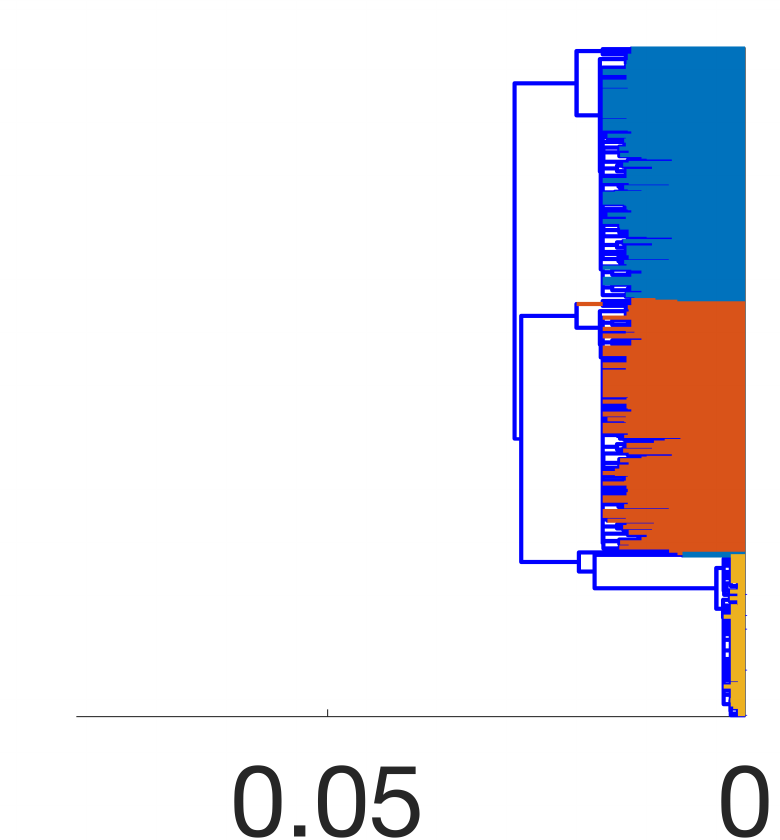}
            \end{minipage}
            \begin{minipage}[t]{1\textwidth}
                \centering
                $e$:1/1 \\ 
                \includegraphics[width=1\textwidth]{figures/ellip/ellip-ori-seq-1-10.pdf} 
            \end{minipage}
            \begin{minipage}[t][0.55cm][t]{1\textwidth}
                \centering
                \includegraphics[width=1\textwidth]{figures/ellip/ms-ellip-seq-dend-1-10.pdf}
            \end{minipage}
        \end{minipage}
        \begin{minipage}[t]{0.09\textwidth}
            \centering
            \begin{minipage}[t]{1\textwidth}
                \centering
                $e$:1/0.8 \\ 
                \includegraphics[width=1\textwidth]{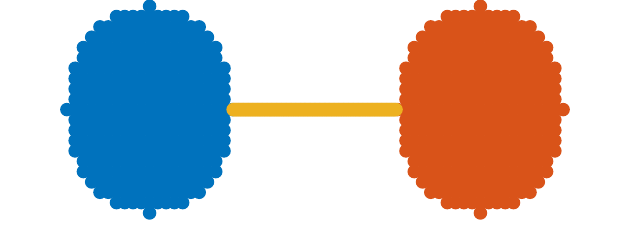} 
            \end{minipage}
            \begin{minipage}[t][0.55cm][t]{1\textwidth}
                \centering
                \includegraphics[width=1\textwidth]{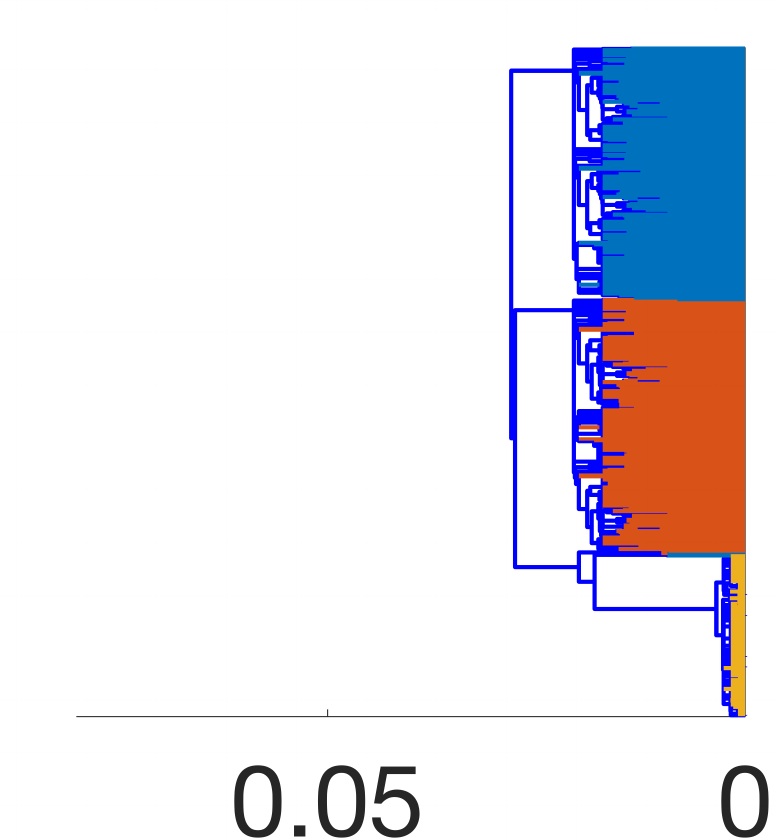}
            \end{minipage}
            \begin{minipage}[t]{1\textwidth}
                \centering
                $e$:0.8/1 \\ 
                \includegraphics[width=1\textwidth]{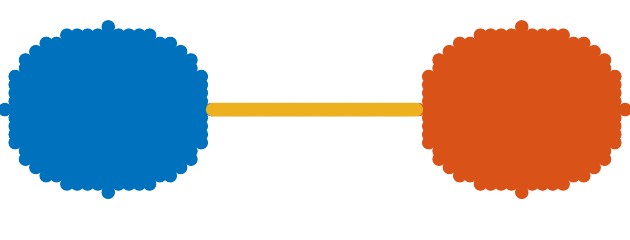} 
            \end{minipage}
            \begin{minipage}[t][0.55cm][t]{1\textwidth}
                \centering
                \includegraphics[width=1\textwidth]{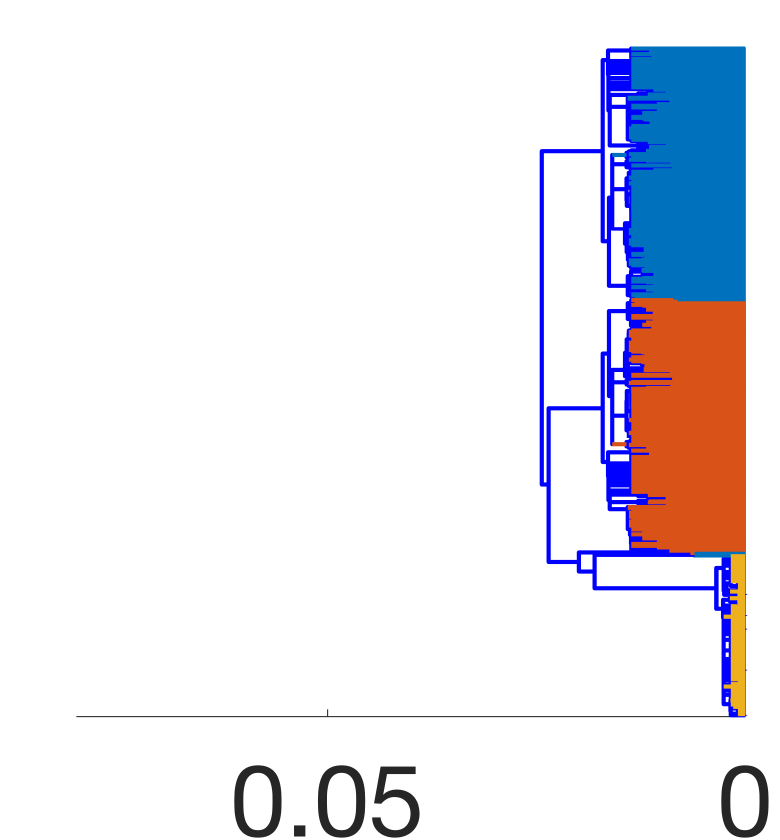}
            \end{minipage}
        \end{minipage}
        \begin{minipage}[t]{0.09\textwidth}
            \centering
            \begin{minipage}[t]{1\textwidth}
                \centering
                $e$:1/0.6 \\ 
                \includegraphics[width=1\textwidth]{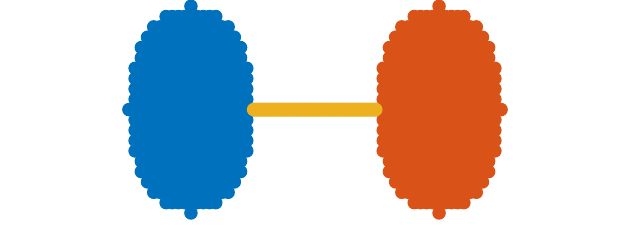} 
            \end{minipage}
            \begin{minipage}[t][0.55cm][t]{1\textwidth}
                \centering
                \includegraphics[width=1\textwidth]{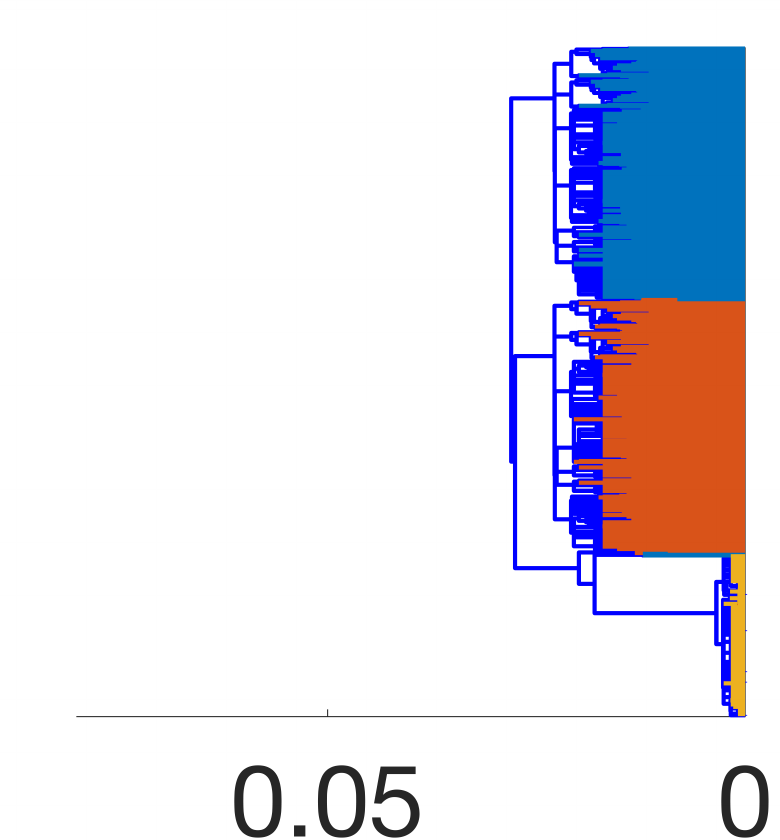}
            \end{minipage}
            \begin{minipage}[t]{1\textwidth}
                \centering
                $e$:0.6/1 \\ 
                \includegraphics[width=1\textwidth]{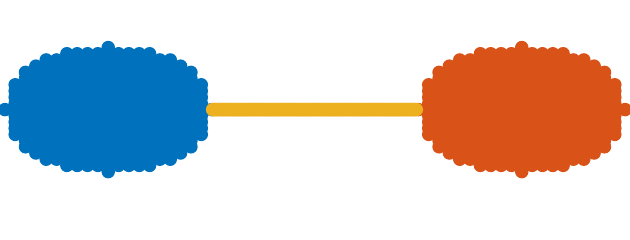} 
            \end{minipage}
            \begin{minipage}[t][0.55cm][t]{1\textwidth}
                \centering
                \includegraphics[width=1\textwidth]{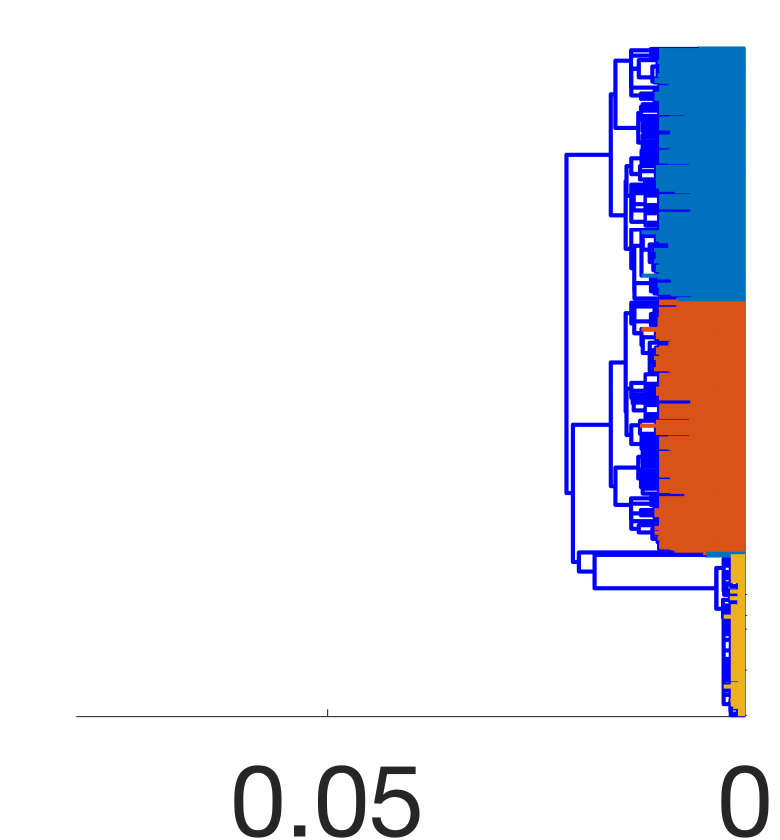}
            \end{minipage}
        \end{minipage}
        \begin{minipage}[t]{0.09\textwidth}
            \centering
            \begin{minipage}[t]{1\textwidth}
                \centering
                $e$:1/0.4 \\ 
                \includegraphics[width=1\textwidth]{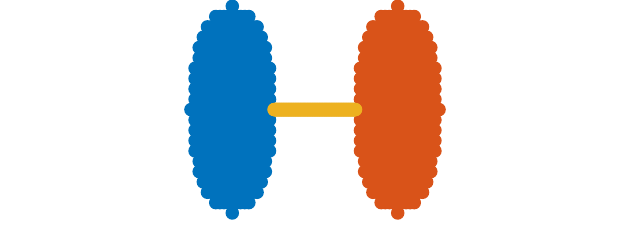} 
            \end{minipage}
            \begin{minipage}[t][0.55cm][t]{1\textwidth}
                \centering
                \includegraphics[width=1\textwidth]{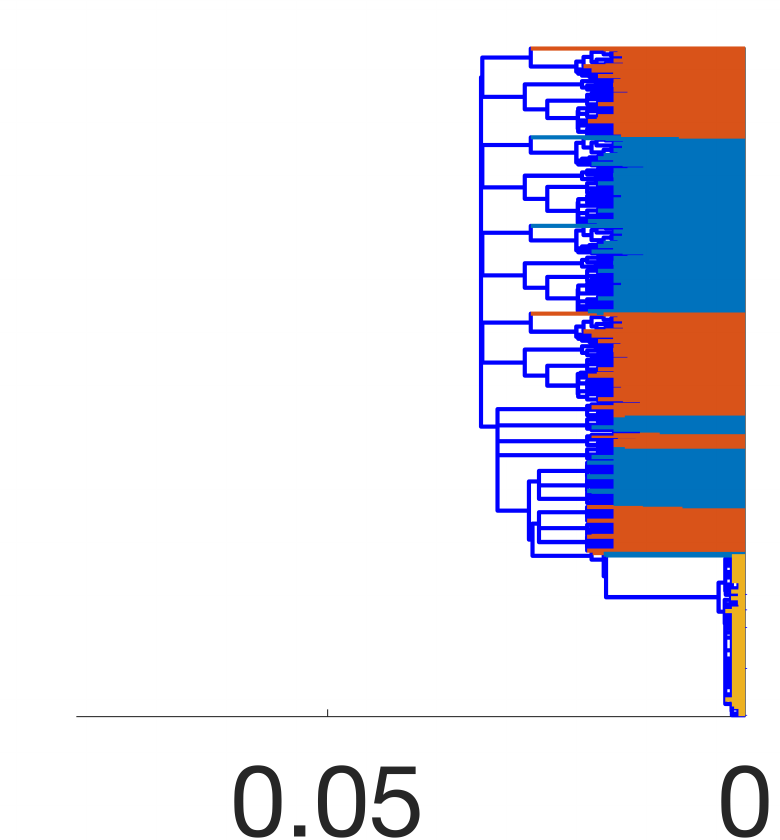}
            \end{minipage}
            \begin{minipage}[t]{1\textwidth}
                \centering
                $e$:0.4/1 \\ 
                \includegraphics[width=1\textwidth]{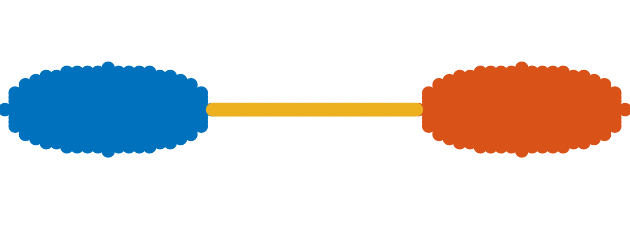} 
            \end{minipage}
            \begin{minipage}[t][0.55cm][t]{1\textwidth}
                \centering
                \includegraphics[width=1\textwidth]{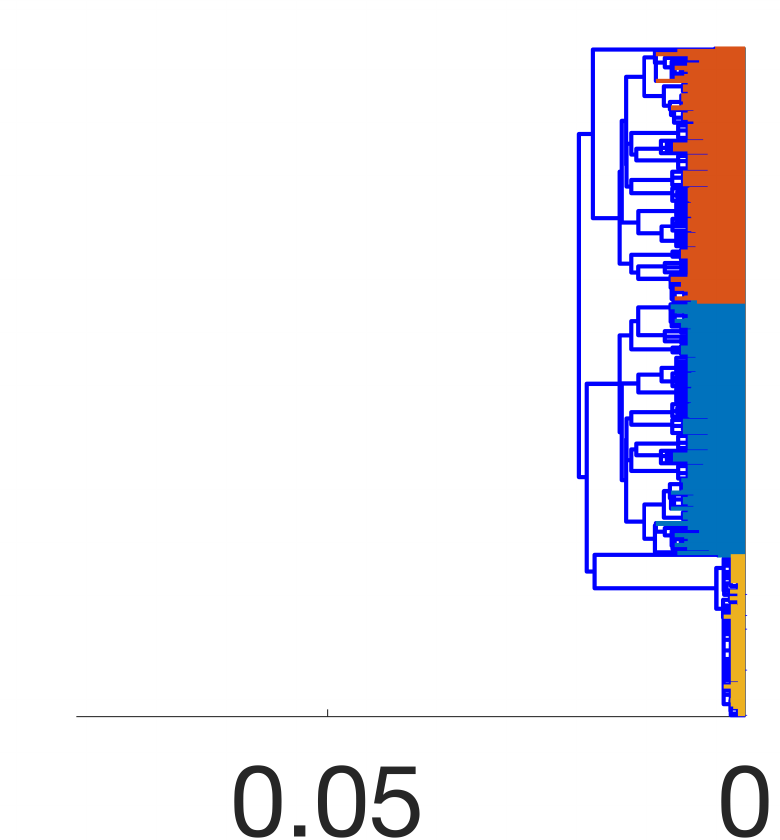}
            \end{minipage}
        \end{minipage}
        \begin{minipage}[t]{0.09\textwidth}
            \centering
            \begin{minipage}[t]{1\textwidth}
                \centering
                $e$:1/0.2 \\ 
                \includegraphics[width=1\textwidth]{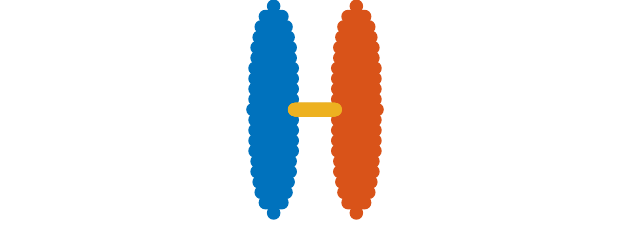} 
            \end{minipage}
            \begin{minipage}[t][0.55cm][t]{1\textwidth}
                \centering
                \includegraphics[width=1\textwidth]{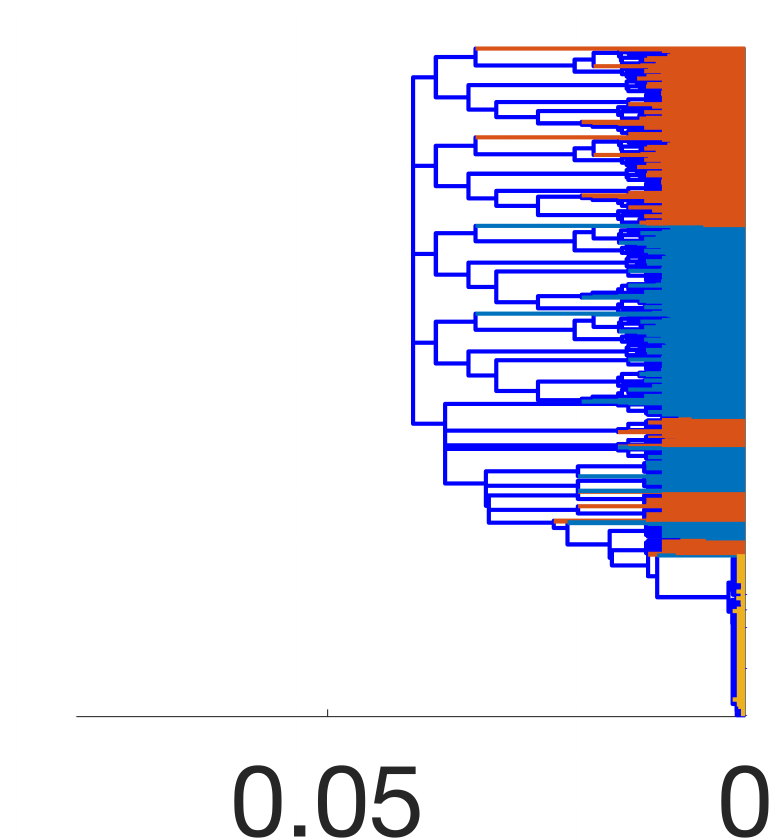}
            \end{minipage}
            \begin{minipage}[t]{1\textwidth}
                \centering
                $e$:0.2/1 \\ 
                \includegraphics[width=1\textwidth]{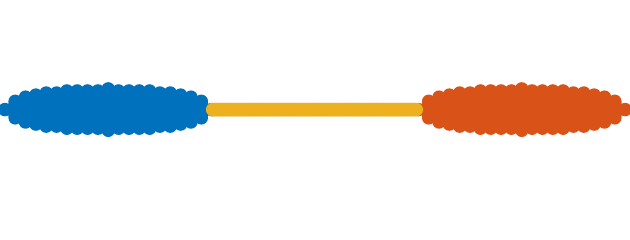} 
            \end{minipage}
            \begin{minipage}[t][0.55cm][t]{1\textwidth}
                \centering
                \includegraphics[width=1\textwidth]{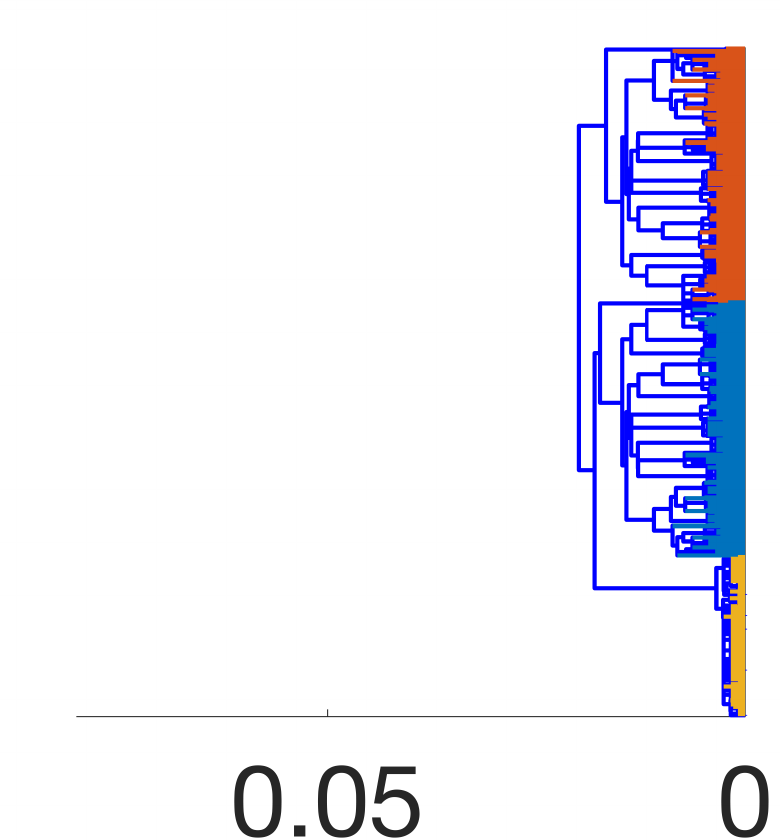}
            \end{minipage}
        \end{minipage}
    } } 
    
    \subfloat[GT-1]{
    \label{fig:ballchains-gt1}
    \fbox{
        \begin{minipage}[t]{0.09\textwidth}
            \centering
            \begin{minipage}[t]{1\textwidth}
                \centering
                $e$:1/1 \\ 
                \includegraphics[width=1\textwidth]{figures/ellip/ellip-ori-seq-1-10.pdf} 
            \end{minipage}
            \begin{minipage}[t][0.55cm][t]{1\textwidth}
                \centering
                \includegraphics[width=1\textwidth]{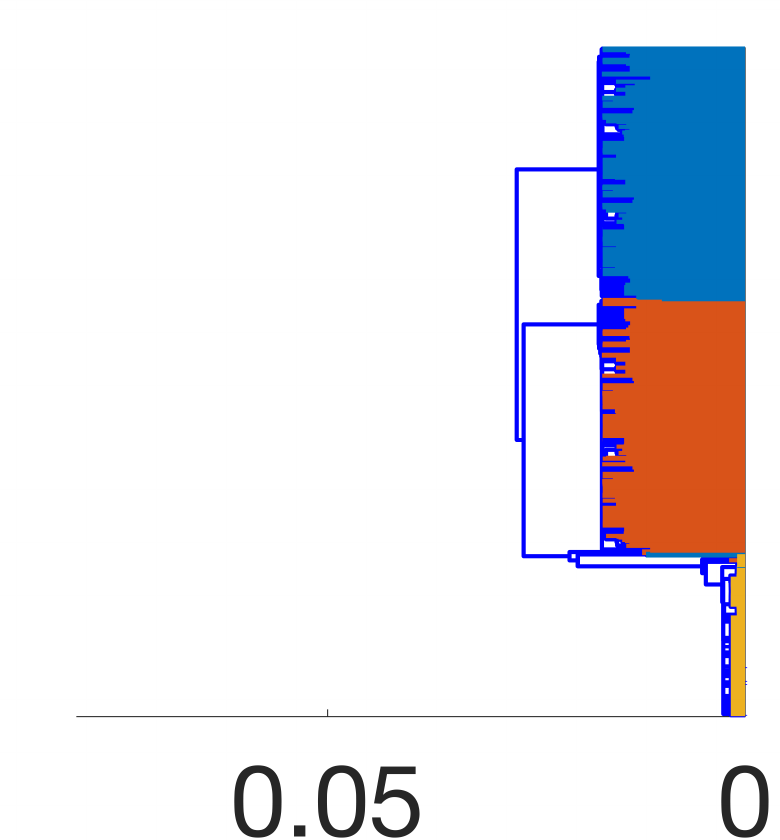}
            \end{minipage}
            \begin{minipage}[t]{1\textwidth}
                \centering
                $e$:1/1 \\ 
                \includegraphics[width=1\textwidth]{figures/ellip/ellip-ori-seq-1-10.pdf} 
            \end{minipage}
            \begin{minipage}[t][0.55cm][t]{1\textwidth}
                \centering
                \includegraphics[width=1\textwidth]{figures/ellip/gtv-ellip-lamb-1-seq-dend-1-10.pdf}
            \end{minipage}
        \end{minipage}
        \begin{minipage}[t]{0.09\textwidth}
            \centering
            \begin{minipage}[t]{1\textwidth}
                \centering
                $e$:1/0.8 \\ 
                \includegraphics[width=1\textwidth]{figures/ellip/ellip-ori-seq-1-8.pdf} 
            \end{minipage}
            \begin{minipage}[t][0.55cm][t]{1\textwidth}
                \centering
                \includegraphics[width=1\textwidth]{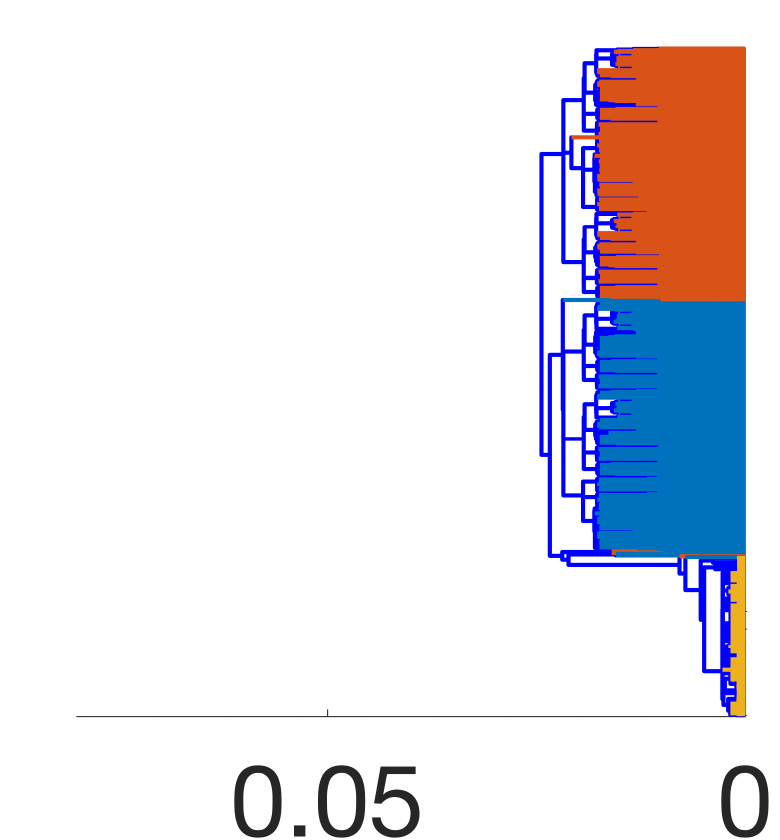}
            \end{minipage}
            \begin{minipage}[t]{1\textwidth}
                \centering
                $e$:0.8/1 \\ 
                \includegraphics[width=1\textwidth]{figures/ellip/ellip-ori-seq-8-1.pdf} 
            \end{minipage}
            \begin{minipage}[t][0.55cm][t]{1\textwidth}
                \centering
                \includegraphics[width=1\textwidth]{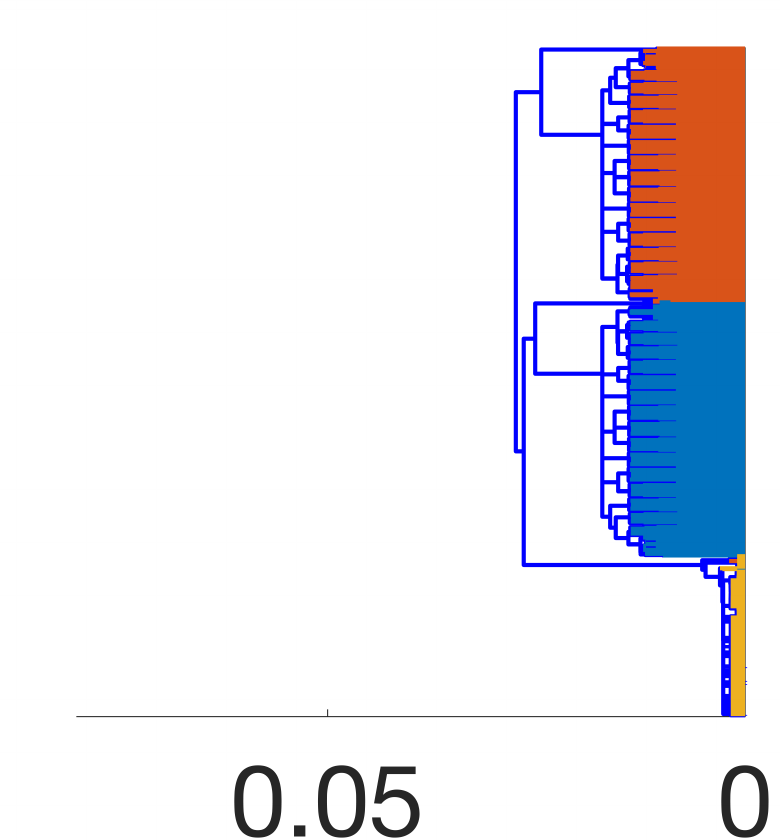}
            \end{minipage}
        \end{minipage}
        \begin{minipage}[t]{0.09\textwidth}
            \centering
            \begin{minipage}[t]{1\textwidth}
                \centering
                $e$:1/0.6 \\ 
                \includegraphics[width=1\textwidth]{figures/ellip/ellip-ori-seq-1-6.pdf} 
            \end{minipage}
            \begin{minipage}[t][0.55cm][t]{1\textwidth}
                \centering
                \includegraphics[width=1\textwidth]{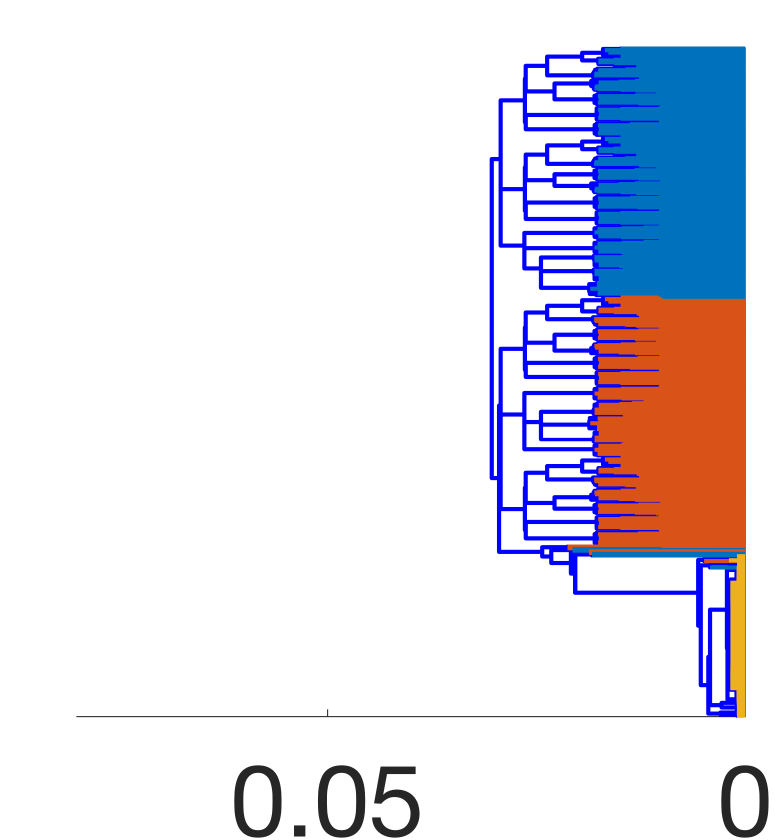}
            \end{minipage}
            \begin{minipage}[t]{1\textwidth}
                \centering
                $e$:0.6/1 \\ 
                \includegraphics[width=1\textwidth]{figures/ellip/ellip-ori-seq-6-1.pdf} 
            \end{minipage}
            \begin{minipage}[t][0.55cm][t]{1\textwidth}
                \centering
                \includegraphics[width=1\textwidth]{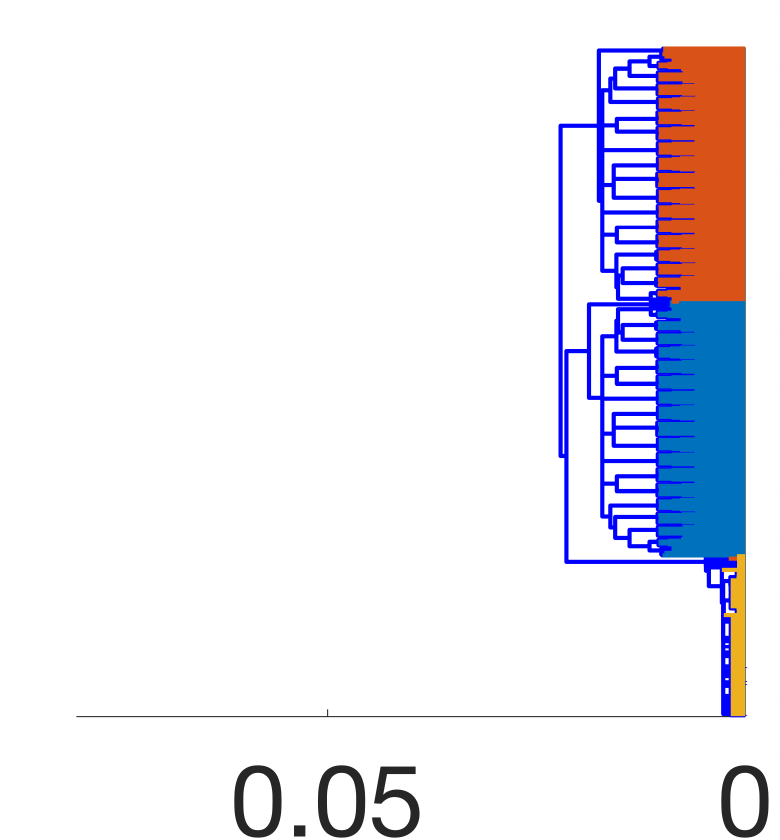}
            \end{minipage}
        \end{minipage}
        \begin{minipage}[t]{0.09\textwidth}
            \centering
            \begin{minipage}[t]{1\textwidth}
                \centering
                $e$:1/0.4 \\ 
                \includegraphics[width=1\textwidth]{figures/ellip/ellip-ori-seq-1-4.pdf} 
            \end{minipage}
            \begin{minipage}[t][0.55cm][t]{1\textwidth}
                \centering
                \includegraphics[width=1\textwidth]{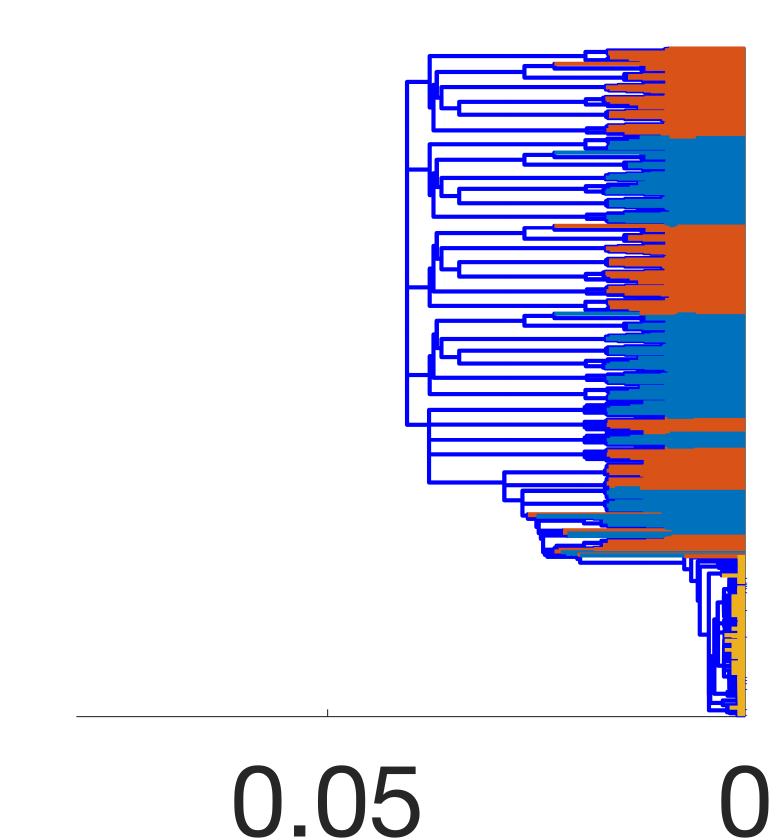}
            \end{minipage}
            \begin{minipage}[t]{1\textwidth}
                \centering
                $e$:0.4/1 \\ 
                \includegraphics[width=1\textwidth]{figures/ellip/ellip-ori-seq-4-1.pdf} 
            \end{minipage}
            \begin{minipage}[t][0.55cm][t]{1\textwidth}
                \centering
                \includegraphics[width=1\textwidth]{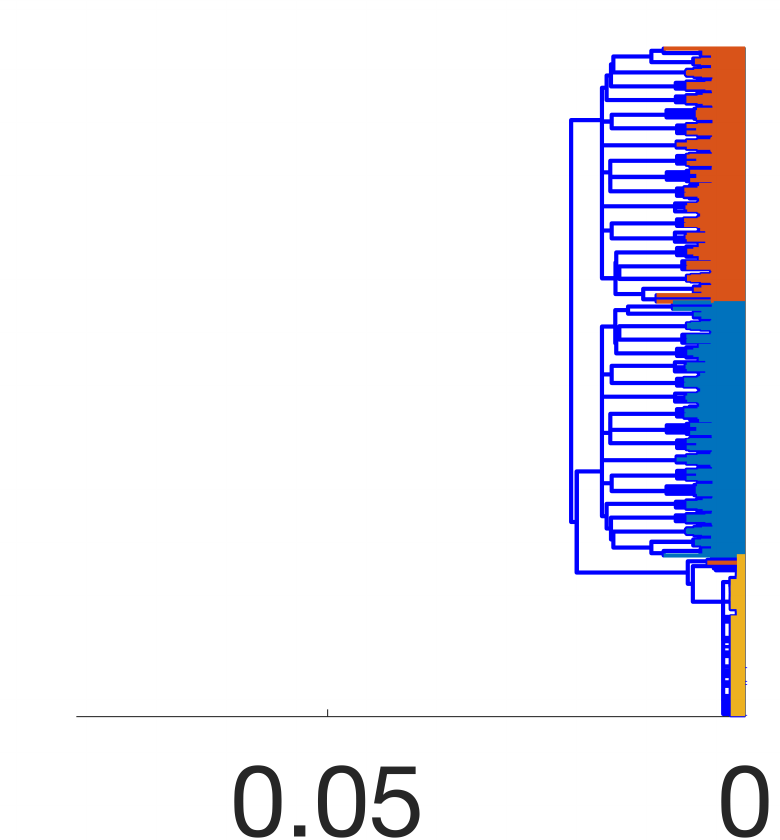}
            \end{minipage}
        \end{minipage}
        \begin{minipage}[t]{0.09\textwidth}
            \centering
            \begin{minipage}[t]{1\textwidth}
                \centering
                $e$:1/0.2 \\ 
                \includegraphics[width=1\textwidth]{figures/ellip/ellip-ori-seq-1-2.pdf} 
            \end{minipage}
            \begin{minipage}[t][0.55cm][t]{1\textwidth}
                \centering
                \includegraphics[width=1\textwidth]{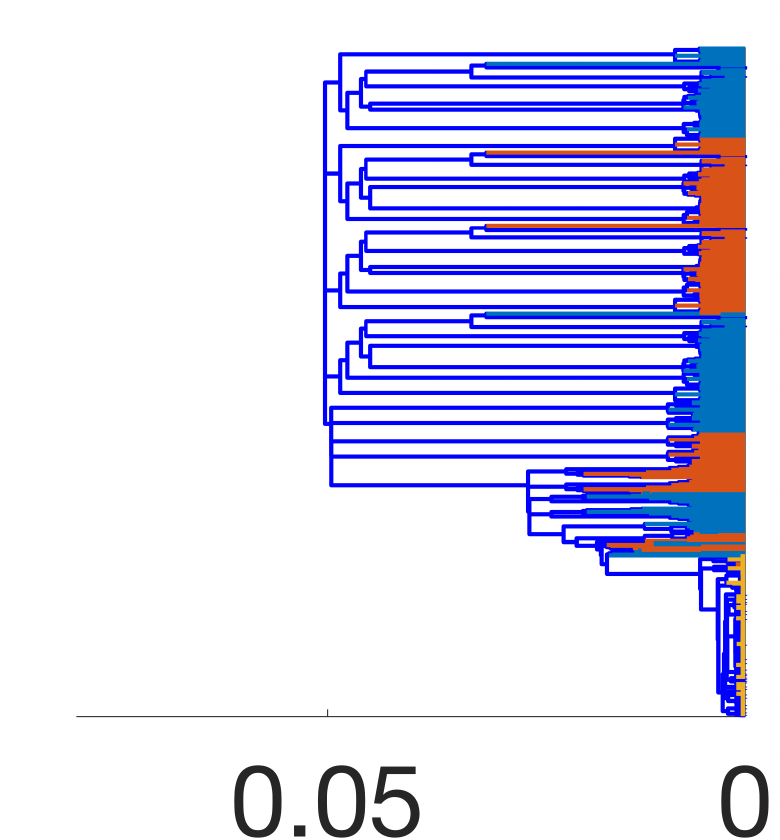}
            \end{minipage}
            \begin{minipage}[t]{1\textwidth}
                \centering
                $e$:0.2/1 \\ 
                \includegraphics[width=1\textwidth]{figures/ellip/ellip-ori-seq-2-1.pdf} 
            \end{minipage}
            \begin{minipage}[t][0.55cm][t]{1\textwidth}
                \centering
                \includegraphics[width=1\textwidth]{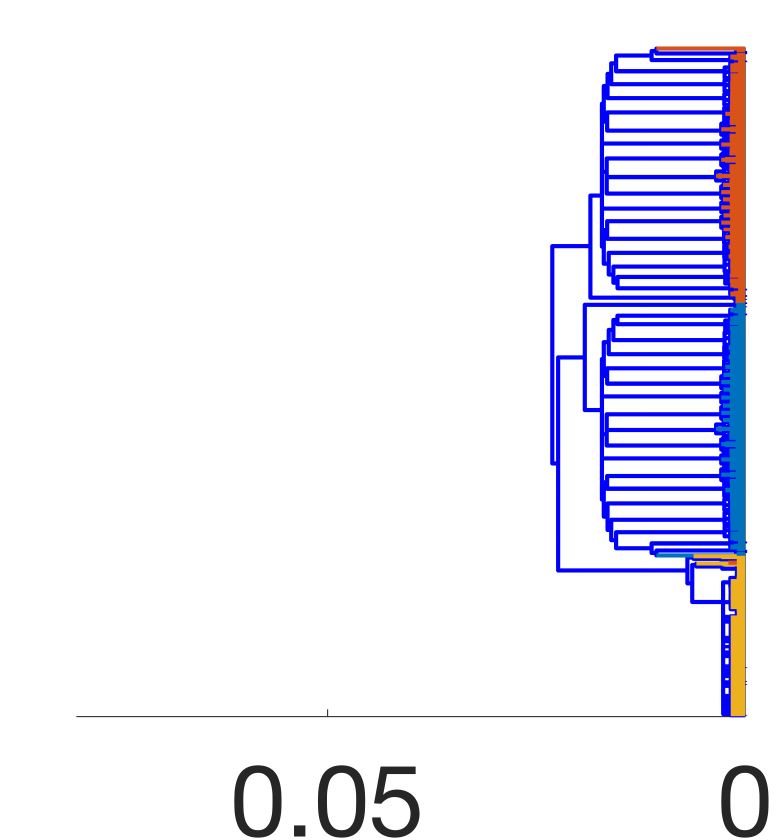}
            \end{minipage}
        \end{minipage}
    } }
    \subfloat[GT-5]{
    \label{fig:ballchains-gt5}
    \fbox{
        \begin{minipage}[t]{0.09\textwidth}
            \centering
            \begin{minipage}[t]{1\textwidth}
                \centering
                $e$:1/1 \\ 
                \includegraphics[width=1\textwidth]{figures/ellip/ellip-ori-seq-1-10.pdf} 
            \end{minipage}
            \begin{minipage}[t][0.55cm][t]{1\textwidth}
                \centering
                \includegraphics[width=1\textwidth]{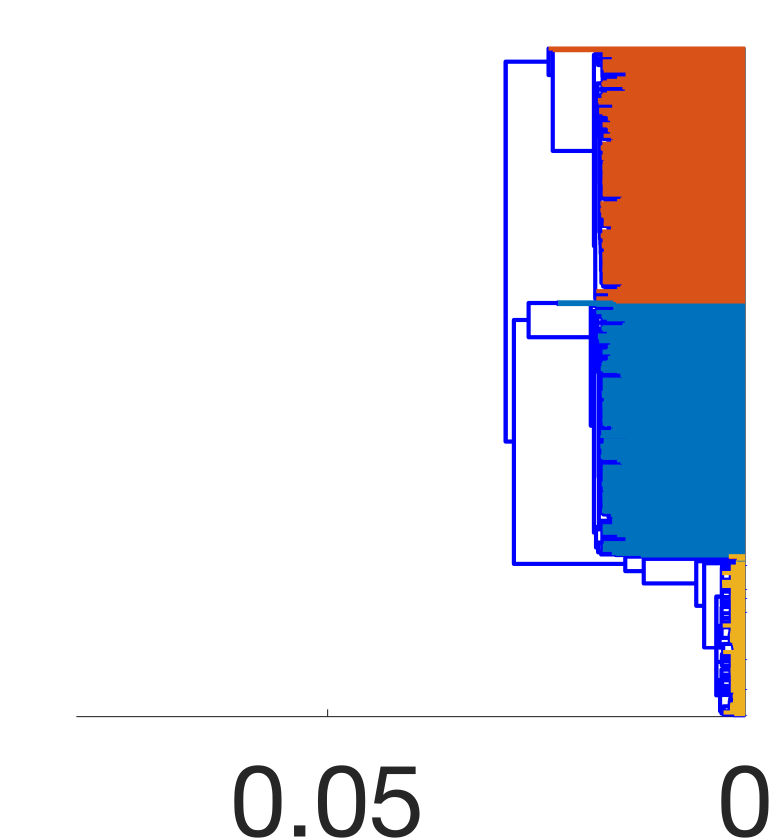}
            \end{minipage}
            \begin{minipage}[t]{1\textwidth}
                \centering
                $e$:1/1 \\ 
                \includegraphics[width=1\textwidth]{figures/ellip/ellip-ori-seq-1-10.pdf} 
            \end{minipage}
            \begin{minipage}[t][0.55cm][t]{1\textwidth}
                \centering
                \includegraphics[width=1\textwidth]{figures/ellip/gtv-ellip-lamb-5-seq-dend-1-10.pdf}
            \end{minipage}
        \end{minipage}
        \begin{minipage}[t]{0.09\textwidth}
            \centering
            \begin{minipage}[t]{1\textwidth}
                \centering
                $e$:1/0.8 \\ 
                \includegraphics[width=1\textwidth]{figures/ellip/ellip-ori-seq-1-8.pdf} 
            \end{minipage}
            \begin{minipage}[t][0.55cm][t]{1\textwidth}
                \centering
                \includegraphics[width=1\textwidth]{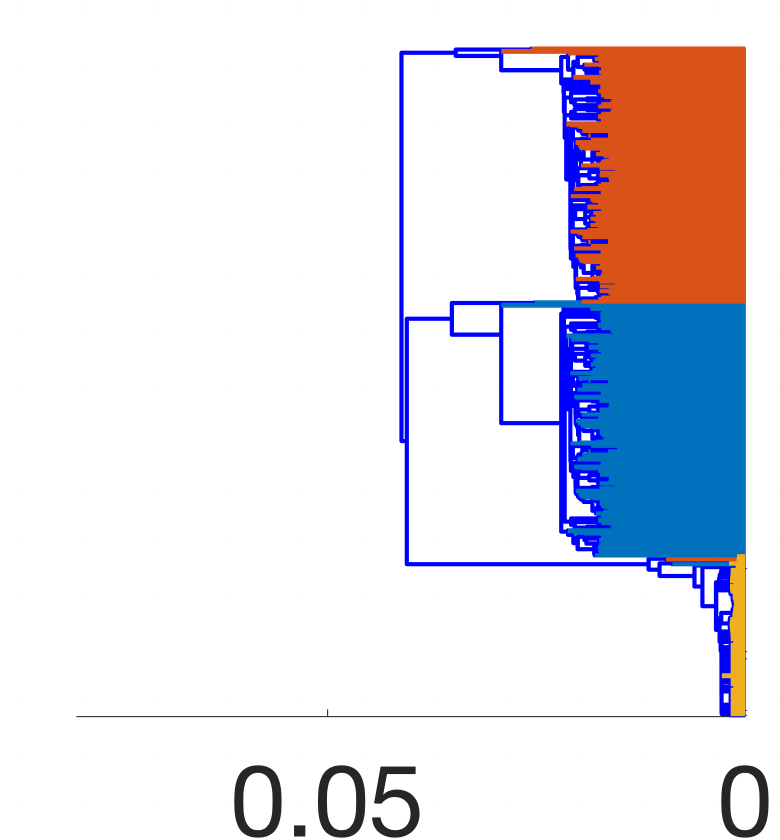}
            \end{minipage}
            \begin{minipage}[t]{1\textwidth}
                \centering
                $e$:0.8/1 \\ 
                \includegraphics[width=1\textwidth]{figures/ellip/ellip-ori-seq-8-1.pdf} 
            \end{minipage}
            \begin{minipage}[t][0.55cm][t]{1\textwidth}
                \centering
                \includegraphics[width=1\textwidth]{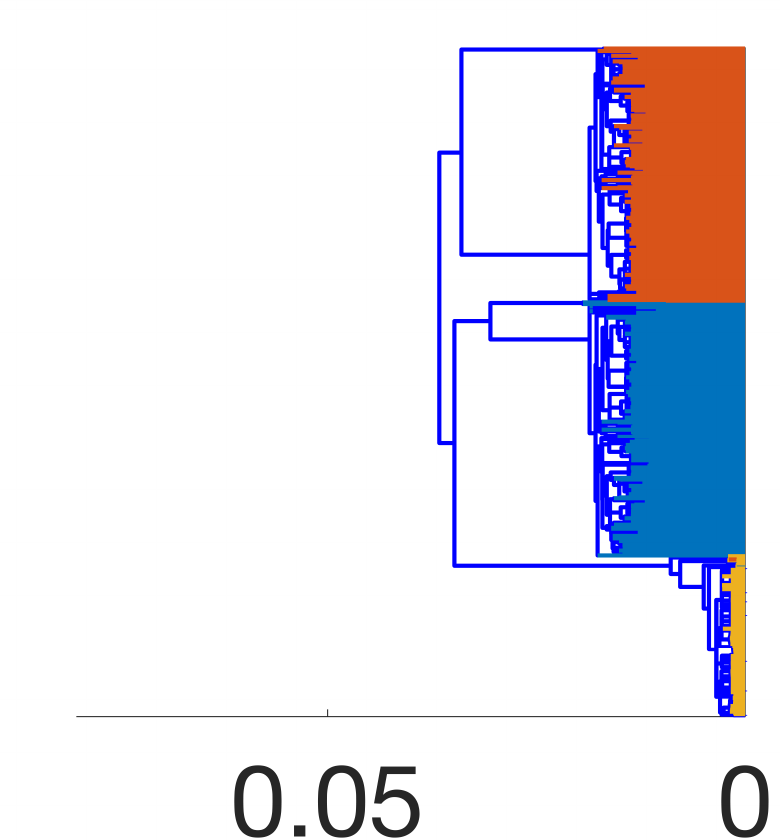}
            \end{minipage}
        \end{minipage}
        \begin{minipage}[t]{0.09\textwidth}
            \centering
            \begin{minipage}[t]{1\textwidth}
                \centering
                $e$:1/0.6 \\ 
                \includegraphics[width=1\textwidth]{figures/ellip/ellip-ori-seq-1-6.pdf} 
            \end{minipage}
            \begin{minipage}[t][0.55cm][t]{1\textwidth}
                \centering
                \includegraphics[width=1\textwidth]{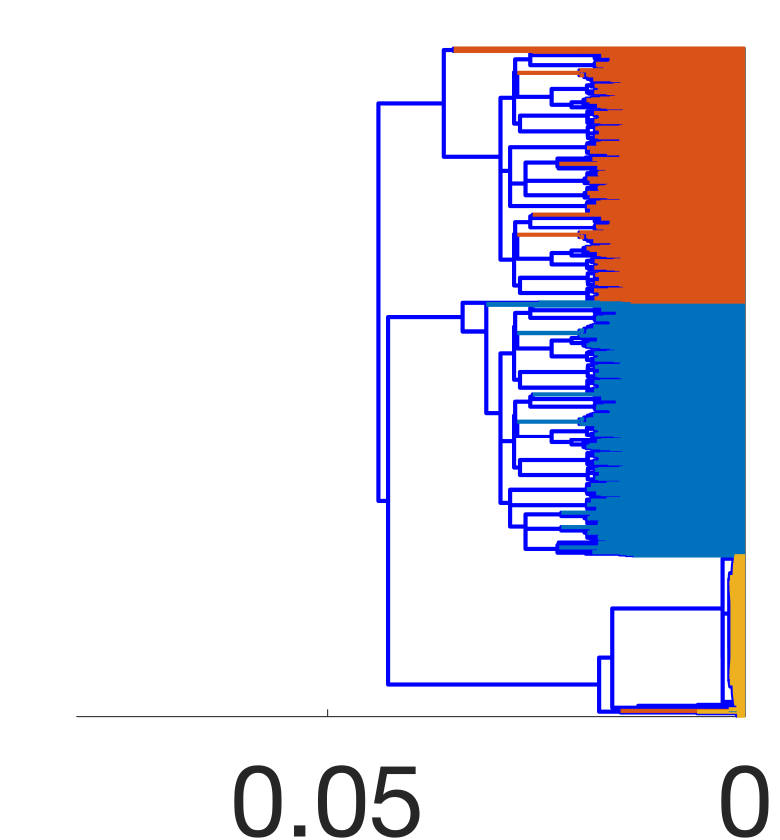}
            \end{minipage}
            \begin{minipage}[t]{1\textwidth}
                \centering
                $e$:0.6/1 \\ 
                \includegraphics[width=1\textwidth]{figures/ellip/ellip-ori-seq-6-1.pdf} 
            \end{minipage}
            \begin{minipage}[t][0.55cm][t]{1\textwidth}
                \centering
                \includegraphics[width=1\textwidth]{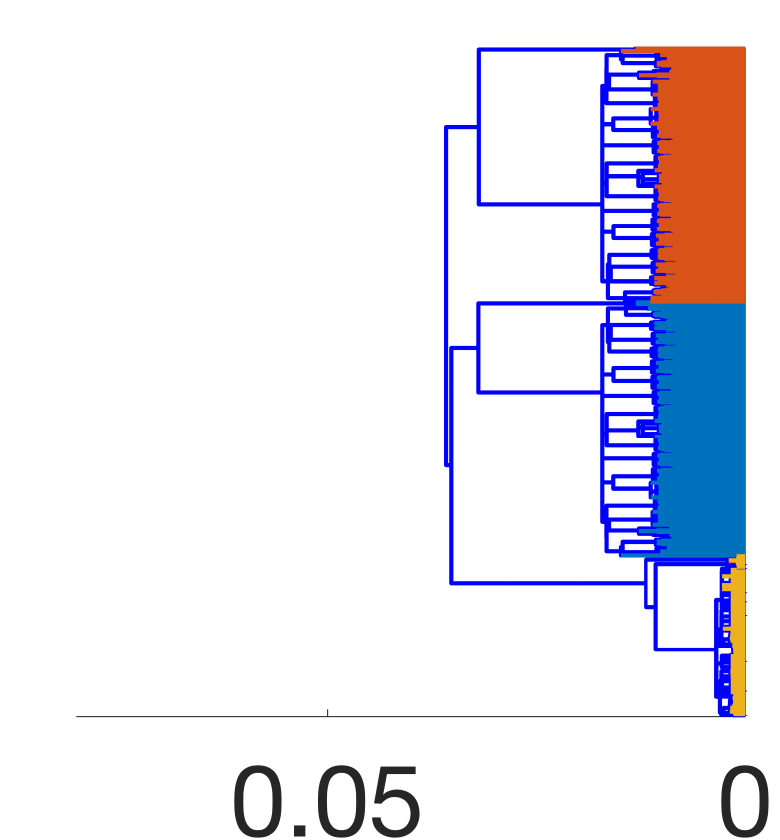}
            \end{minipage}
        \end{minipage}
        \begin{minipage}[t]{0.09\textwidth}
            \centering
            \begin{minipage}[t]{1\textwidth}
                \centering
                $e$:1/0.4 \\ 
                \includegraphics[width=1\textwidth]{figures/ellip/ellip-ori-seq-1-4.pdf} 
            \end{minipage}
            \begin{minipage}[t][0.55cm][t]{1\textwidth}
                \centering
                \includegraphics[width=1\textwidth]{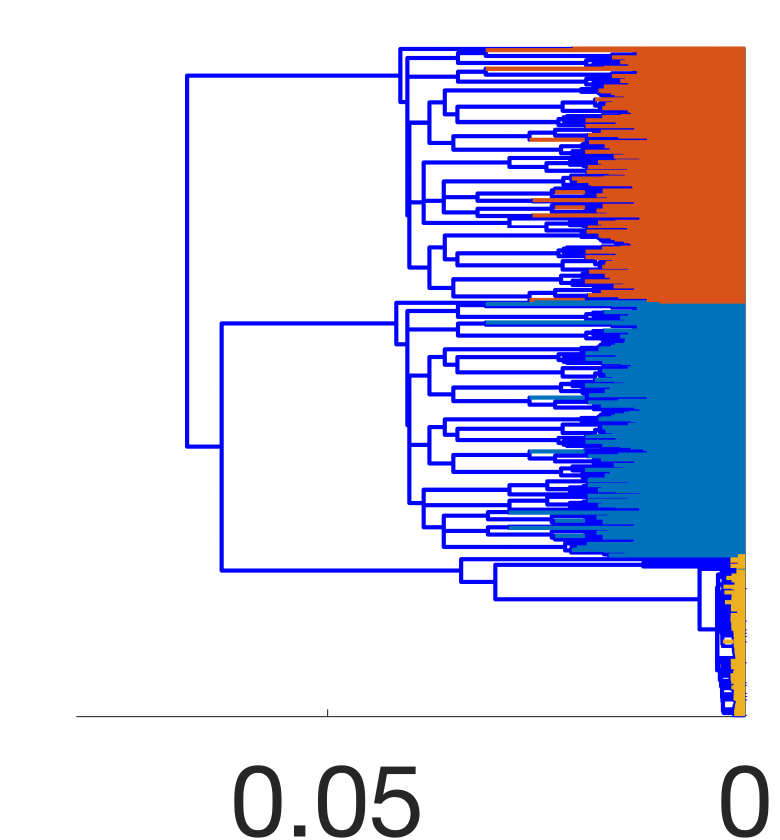}
            \end{minipage}
            \begin{minipage}[t]{1\textwidth}
                \centering
                $e$:0.4/1 \\ 
                \includegraphics[width=1\textwidth]{figures/ellip/ellip-ori-seq-4-1.pdf} 
            \end{minipage}
            \begin{minipage}[t][0.55cm][t]{1\textwidth}
                \centering
                \includegraphics[width=1\textwidth]{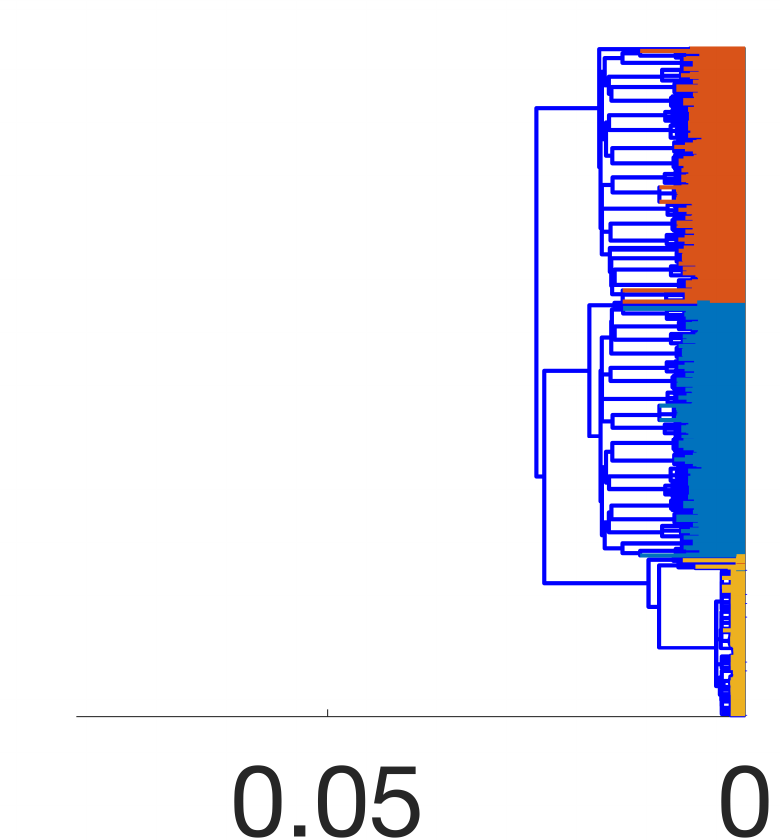}
            \end{minipage}
        \end{minipage}
        \begin{minipage}[t]{0.09\textwidth}
            \centering
            \begin{minipage}[t]{1\textwidth}
                \centering
                $e$:1/0.2 \\ 
                \includegraphics[width=1\textwidth]{figures/ellip/ellip-ori-seq-1-2.pdf} 
            \end{minipage}
            \begin{minipage}[t][0.55cm][t]{1\textwidth}
                \centering
                \includegraphics[width=1\textwidth]{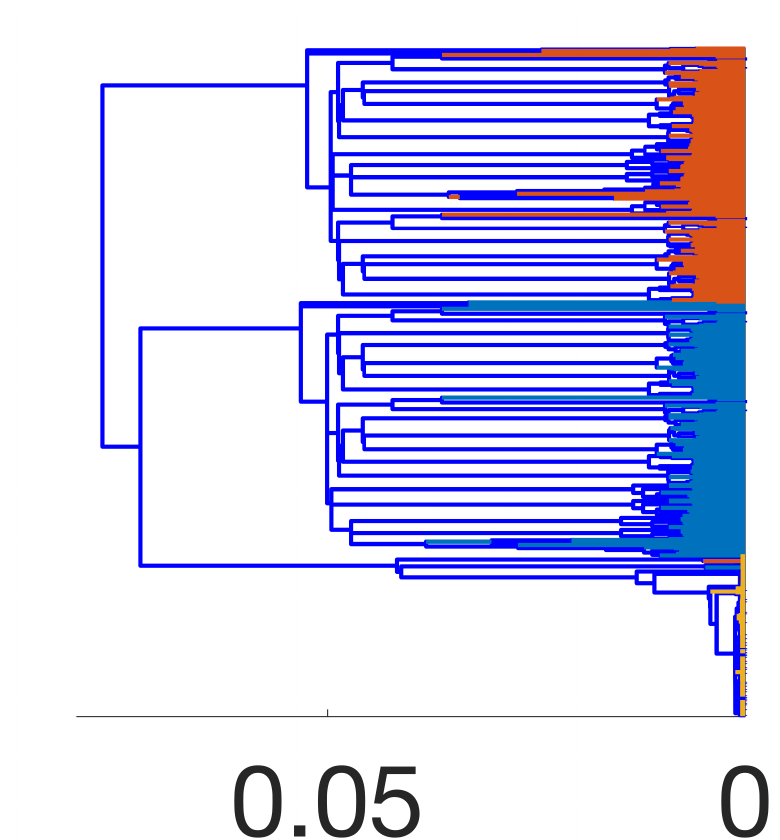}
            \end{minipage}
            \begin{minipage}[t]{1\textwidth}
                \centering
                $e$:0.2/1 \\ 
                \includegraphics[width=1\textwidth]{figures/ellip/ellip-ori-seq-2-1.pdf} 
            \end{minipage}
            \begin{minipage}[t][0.55cm][t]{1\textwidth}
                \centering
                \includegraphics[width=1\textwidth]{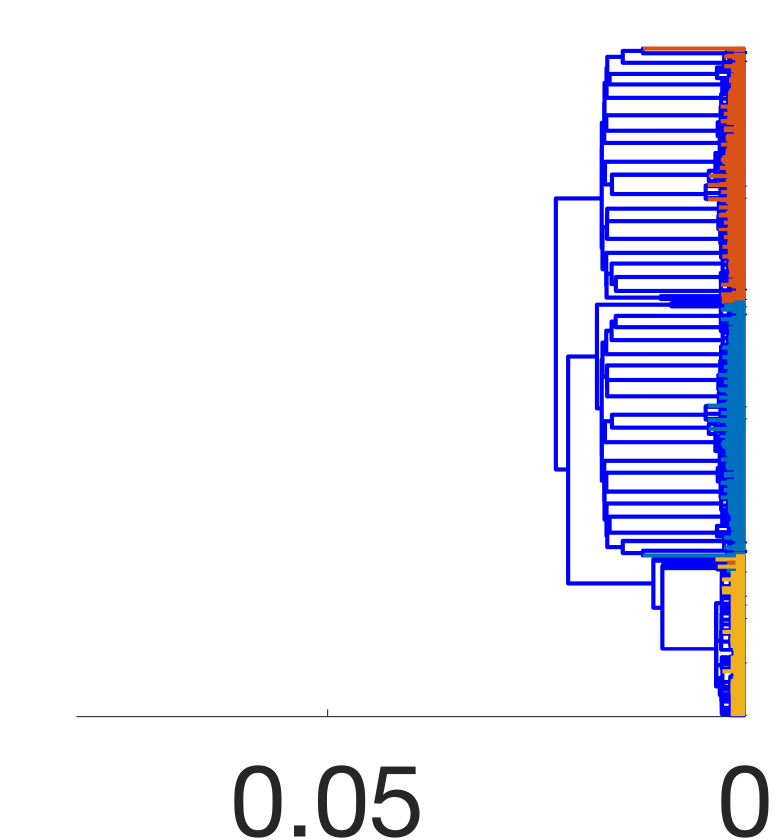}
            \end{minipage}
        \end{minipage}
    } }
    
    \subfloat[LT-WT2]{
    \fbox{
        \begin{minipage}[t]{0.09\textwidth}
            \centering
            \begin{minipage}[t]{1\textwidth}
                \centering
                $e$:1/1 \\ 
                \includegraphics[width=1\textwidth]{figures/ellip/ellip-ori-seq-1-10.pdf} 
            \end{minipage}
            \begin{minipage}[t][0.55cm][t]{1\textwidth}
                \centering
                \includegraphics[width=1\textwidth]{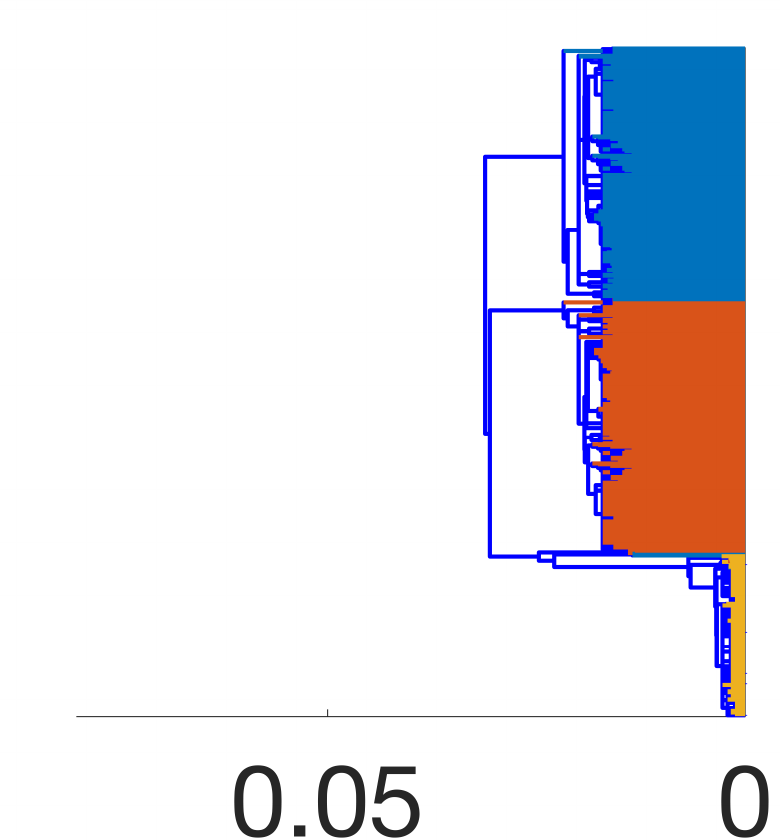}
            \end{minipage}
            \begin{minipage}[t]{1\textwidth}
                \centering
                $e$:1/1 \\ 
                \includegraphics[width=1\textwidth]{figures/ellip/ellip-ori-seq-1-10.pdf} 
            \end{minipage}
            \begin{minipage}[t][0.55cm][t]{1\textwidth}
                \centering
                \includegraphics[width=1\textwidth]{figures/ellip/wt2-fix-emd-ellip-seq-dend-1-10.pdf}
            \end{minipage}
        \end{minipage}
        \begin{minipage}[t]{0.09\textwidth}
            \centering
            \begin{minipage}[t]{1\textwidth}
                \centering
                $e$:1/0.8 \\ 
                \includegraphics[width=1\textwidth]{figures/ellip/ellip-ori-seq-1-8.pdf} 
            \end{minipage}
            \begin{minipage}[t][0.55cm][t]{1\textwidth}
                \centering
                \includegraphics[width=1\textwidth]{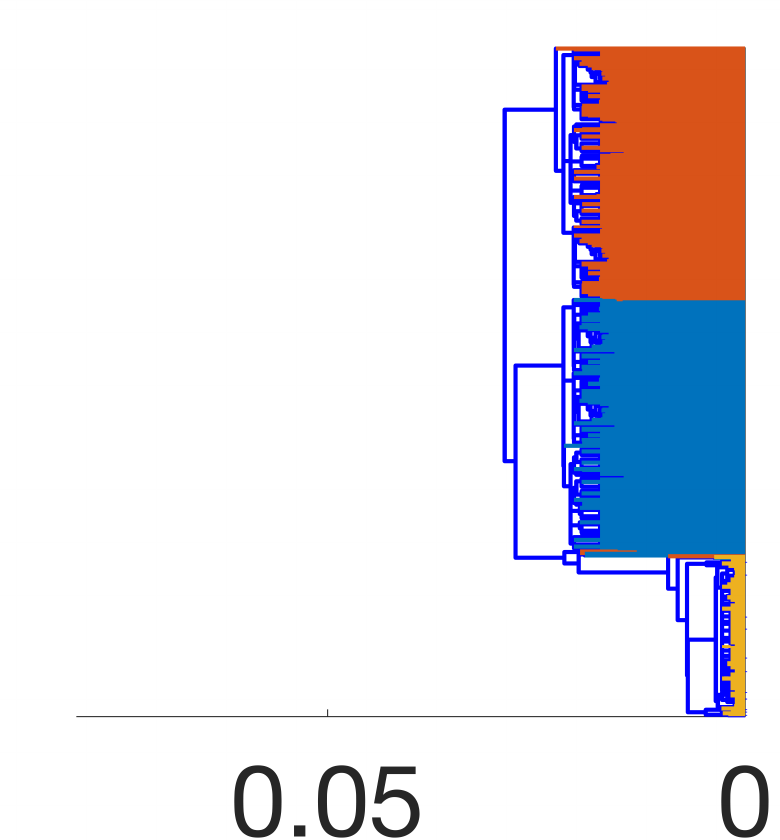}
            \end{minipage}
            \begin{minipage}[t]{1\textwidth}
                \centering
                $e$:0.8/1 \\ 
                \includegraphics[width=1\textwidth]{figures/ellip/ellip-ori-seq-8-1.pdf} 
            \end{minipage}
            \begin{minipage}[t][0.55cm][t]{1\textwidth}
                \centering
                \includegraphics[width=1\textwidth]{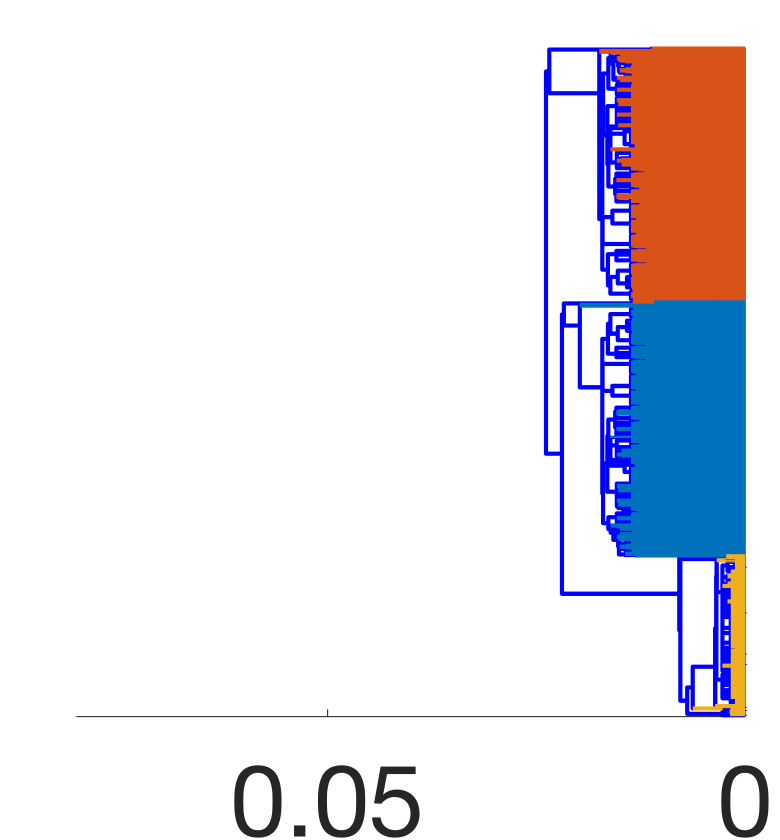}
            \end{minipage}
        \end{minipage}
        \begin{minipage}[t]{0.09\textwidth}
            \centering
            \begin{minipage}[t]{1\textwidth}
                \centering
                $e$:1/0.6 \\ 
                \includegraphics[width=1\textwidth]{figures/ellip/ellip-ori-seq-1-6.pdf} 
            \end{minipage}
            \begin{minipage}[t][0.55cm][t]{1\textwidth}
                \centering
                \includegraphics[width=1\textwidth]{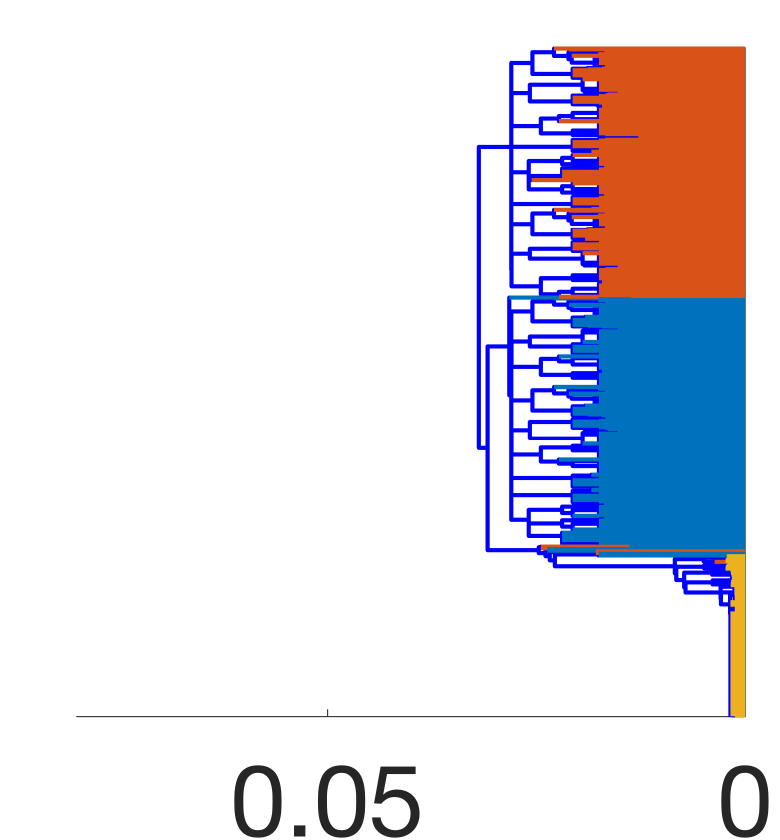}
            \end{minipage}
            \begin{minipage}[t]{1\textwidth}
                \centering
                $e$:0.6/1 \\ 
                \includegraphics[width=1\textwidth]{figures/ellip/ellip-ori-seq-6-1.pdf} 
            \end{minipage}
            \begin{minipage}[t][0.55cm][t]{1\textwidth}
                \centering
                \includegraphics[width=1\textwidth]{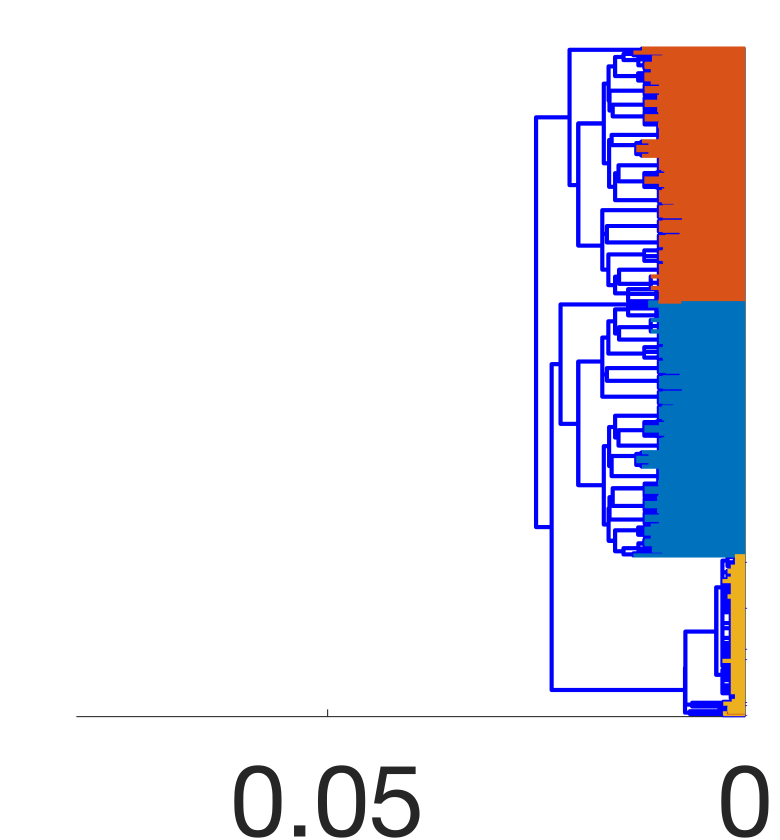}
            \end{minipage}
        \end{minipage}
        \begin{minipage}[t]{0.09\textwidth}
            \centering
            \begin{minipage}[t]{1\textwidth}
                \centering
                $e$:1/0.4 \\ 
                \includegraphics[width=1\textwidth]{figures/ellip/ellip-ori-seq-1-4.pdf} 
            \end{minipage}
            \begin{minipage}[t][0.55cm][t]{1\textwidth}
                \centering
                \includegraphics[width=1\textwidth]{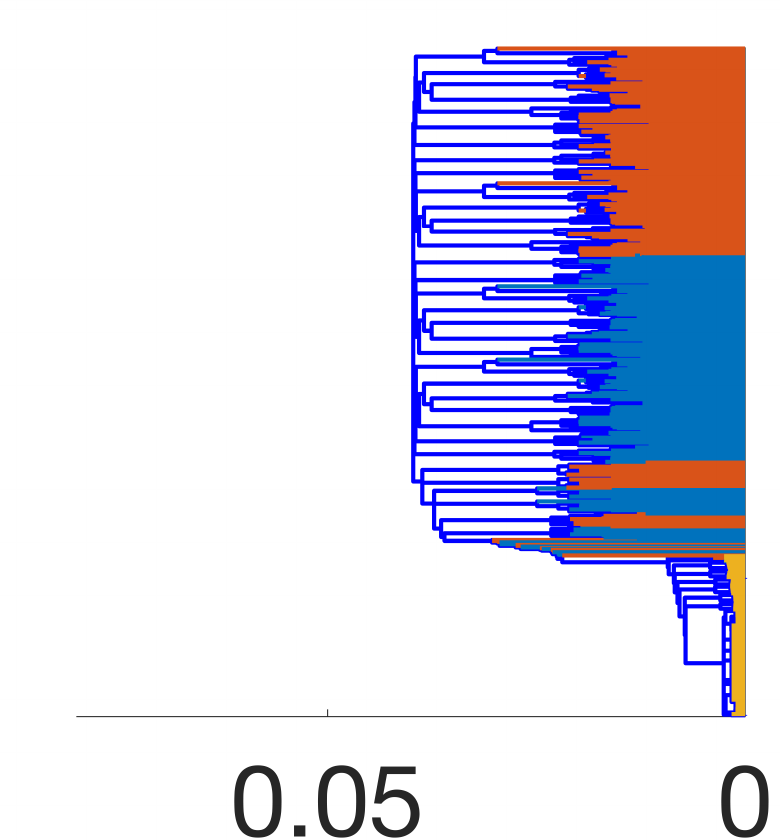}
            \end{minipage}
            \begin{minipage}[t]{1\textwidth}
                \centering
                $e$:0.4/1 \\ 
                \includegraphics[width=1\textwidth]{figures/ellip/ellip-ori-seq-4-1.pdf} 
            \end{minipage}
            \begin{minipage}[t][0.55cm][t]{1\textwidth}
                \centering
                \includegraphics[width=1\textwidth]{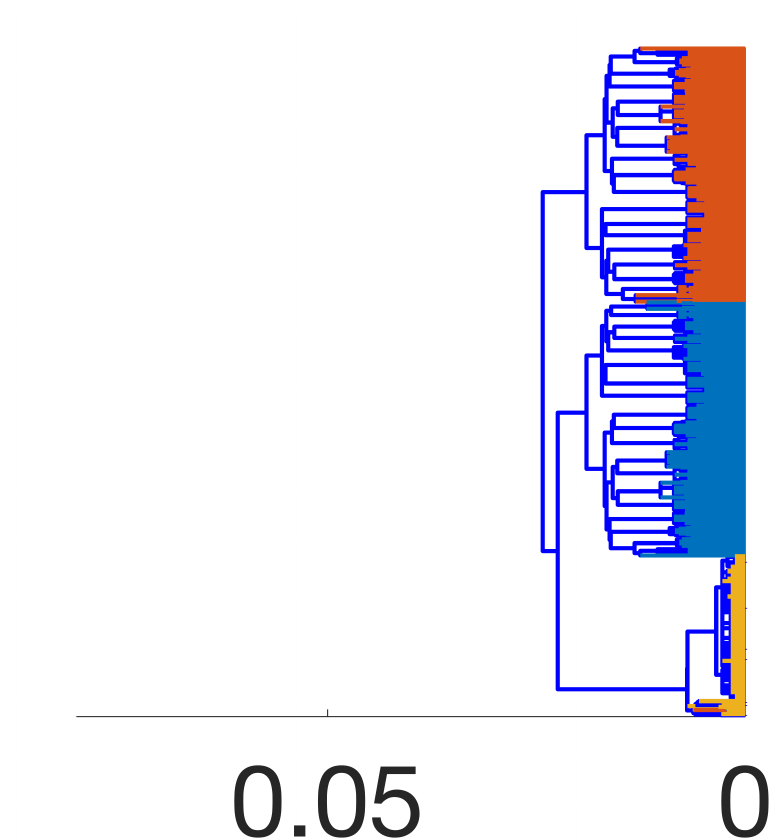}
            \end{minipage}
        \end{minipage}
        \begin{minipage}[t]{0.09\textwidth}
            \centering
            \begin{minipage}[t]{1\textwidth}
                \centering
                $e$:1/0.2 \\ 
                \includegraphics[width=1\textwidth]{figures/ellip/ellip-ori-seq-1-2.pdf} 
            \end{minipage}
            \begin{minipage}[t][0.55cm][t]{1\textwidth}
                \centering
                \includegraphics[width=1\textwidth]{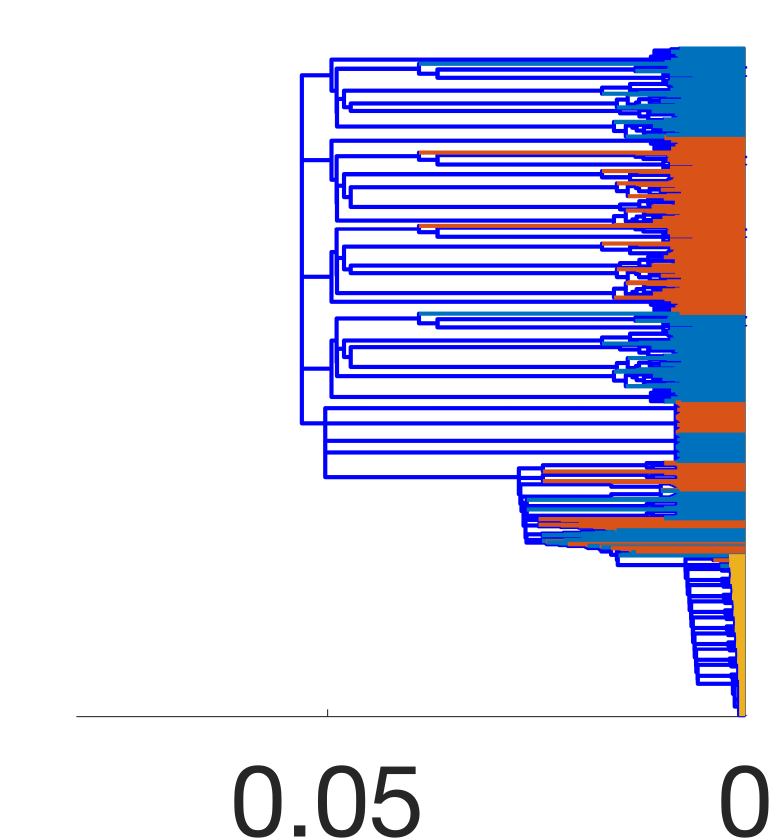}
            \end{minipage}
            \begin{minipage}[t]{1\textwidth}
                \centering
                $e$:0.2/1 \\ 
                \includegraphics[width=1\textwidth]{figures/ellip/ellip-ori-seq-2-1.pdf} 
            \end{minipage}
            \begin{minipage}[t][0.55cm][t]{1\textwidth}
                \centering
                \includegraphics[width=1\textwidth]{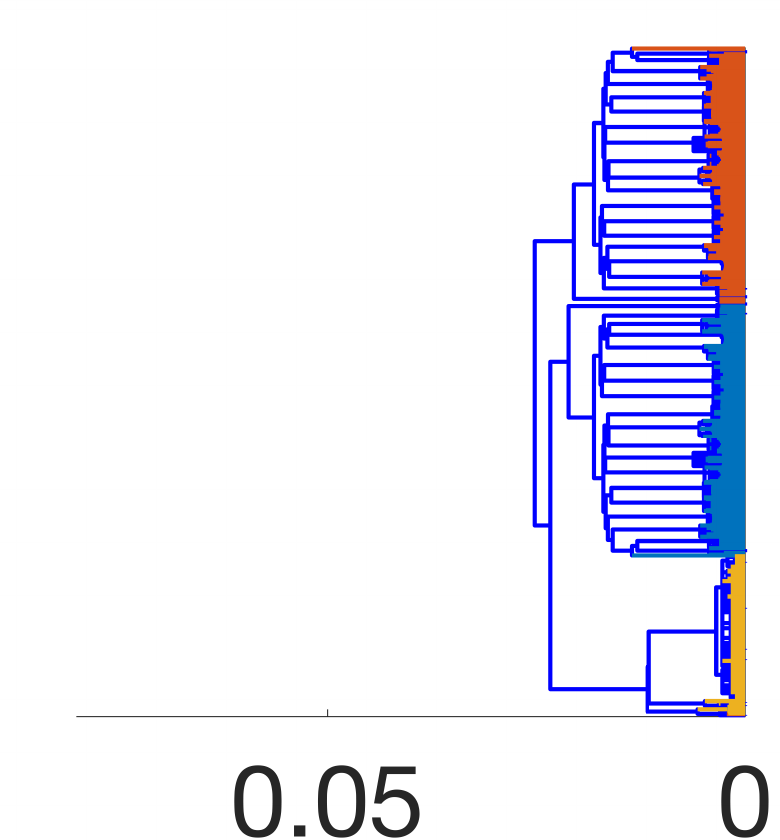}
            \end{minipage}
        \end{minipage}
    } }
    \subfloat[LT-WT1]{
    \fbox{
        \begin{minipage}[t]{0.09\textwidth}
            \centering
            \begin{minipage}[t]{1\textwidth}
                \centering
                $e$:1/1 \\ 
                \includegraphics[width=1\textwidth]{figures/ellip/ellip-ori-seq-1-10.pdf} 
            \end{minipage}
            \begin{minipage}[t][0.55cm][t]{1\textwidth}
                \centering
                \includegraphics[width=1\textwidth]{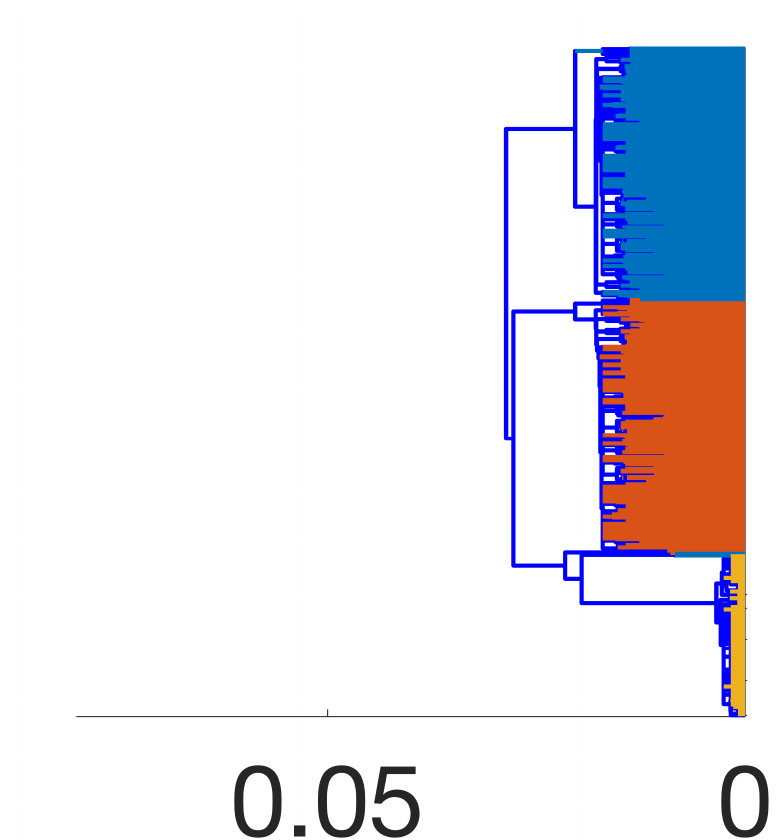}
            \end{minipage}
            \begin{minipage}[t]{1\textwidth}
                \centering
                $e$:1/1 \\ 
                \includegraphics[width=1\textwidth]{figures/ellip/ellip-ori-seq-1-10.pdf} 
            \end{minipage}
            \begin{minipage}[t][0.55cm][t]{1\textwidth}
                \centering
                \includegraphics[width=1\textwidth]{figures/ellip/wt1-fix-emd-ellip-seq-dend-1-10.pdf}
            \end{minipage}
        \end{minipage}
        \begin{minipage}[t]{0.09\textwidth}
            \centering
            \begin{minipage}[t]{1\textwidth}
                \centering
                $e$:1/0.8 \\ 
                \includegraphics[width=1\textwidth]{figures/ellip/ellip-ori-seq-1-8.pdf} 
            \end{minipage}
            \begin{minipage}[t][0.55cm][t]{1\textwidth}
                \centering
                \includegraphics[width=1\textwidth]{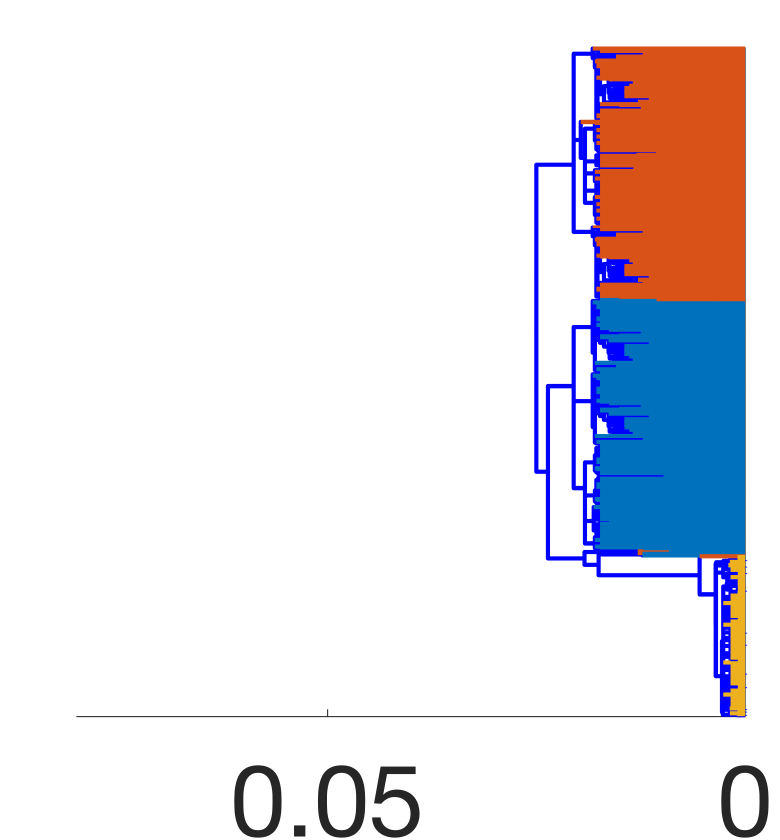}
            \end{minipage}
            \begin{minipage}[t]{1\textwidth}
                \centering
                $e$:0.8/1 \\ 
                \includegraphics[width=1\textwidth]{figures/ellip/ellip-ori-seq-8-1.pdf} 
            \end{minipage}
            \begin{minipage}[t][0.55cm][t]{1\textwidth}
                \centering
                \includegraphics[width=1\textwidth]{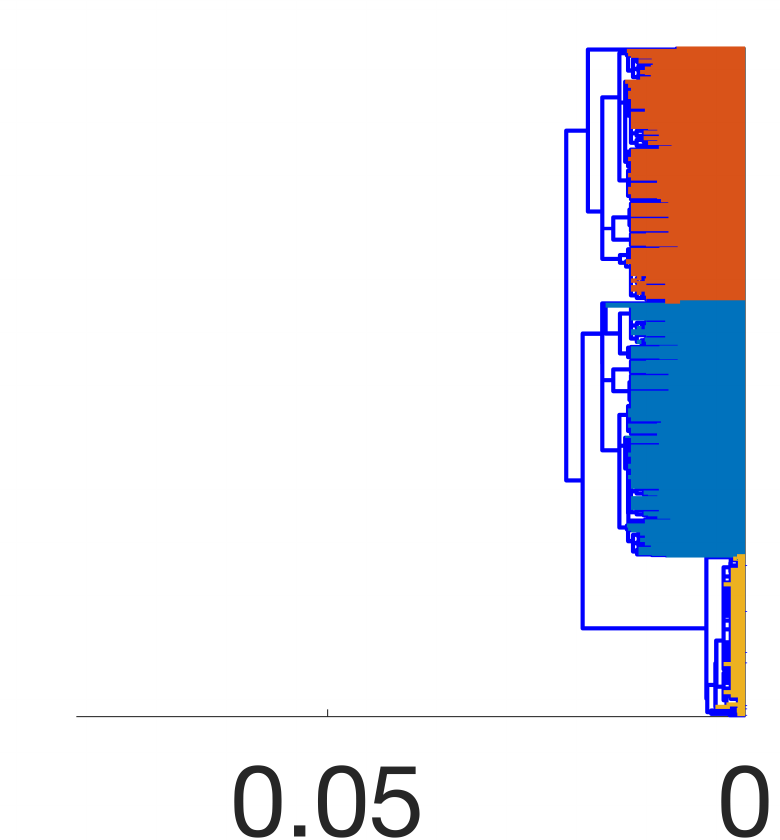}
            \end{minipage}
        \end{minipage}
        \begin{minipage}[t]{0.09\textwidth}
            \centering
            \begin{minipage}[t]{1\textwidth}
                \centering
                $e$:1/0.6 \\ 
                \includegraphics[width=1\textwidth]{figures/ellip/ellip-ori-seq-1-6.pdf} 
            \end{minipage}
            \begin{minipage}[t][0.55cm][t]{1\textwidth}
                \centering
                \includegraphics[width=1\textwidth]{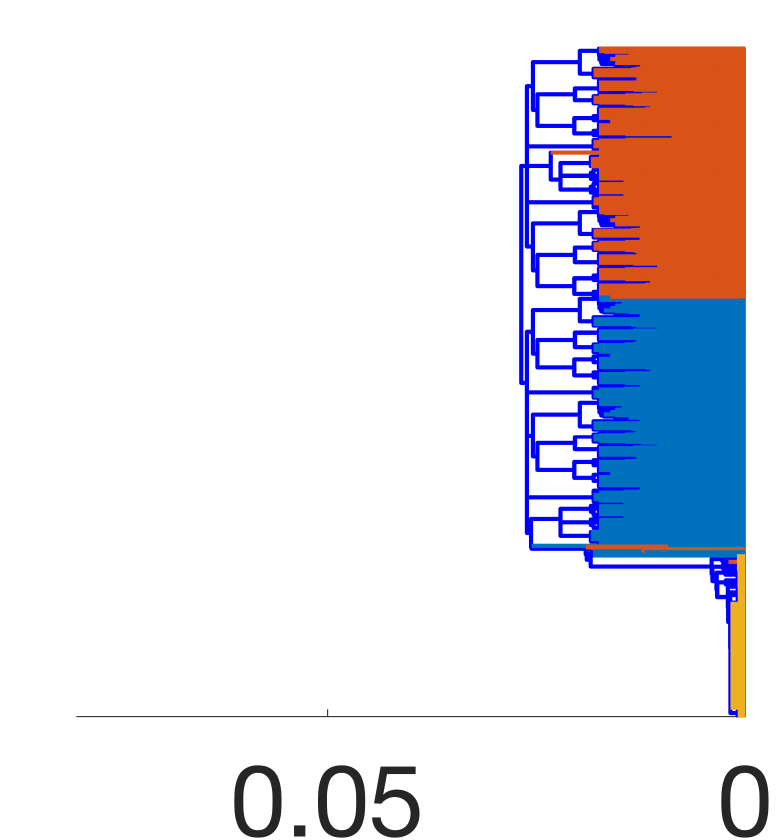}
            \end{minipage}
            \begin{minipage}[t]{1\textwidth}
                \centering
                $e$:0.6/1 \\ 
                \includegraphics[width=1\textwidth]{figures/ellip/ellip-ori-seq-6-1.pdf} 
            \end{minipage}
            \begin{minipage}[t][0.55cm][t]{1\textwidth}
                \centering
                \includegraphics[width=1\textwidth]{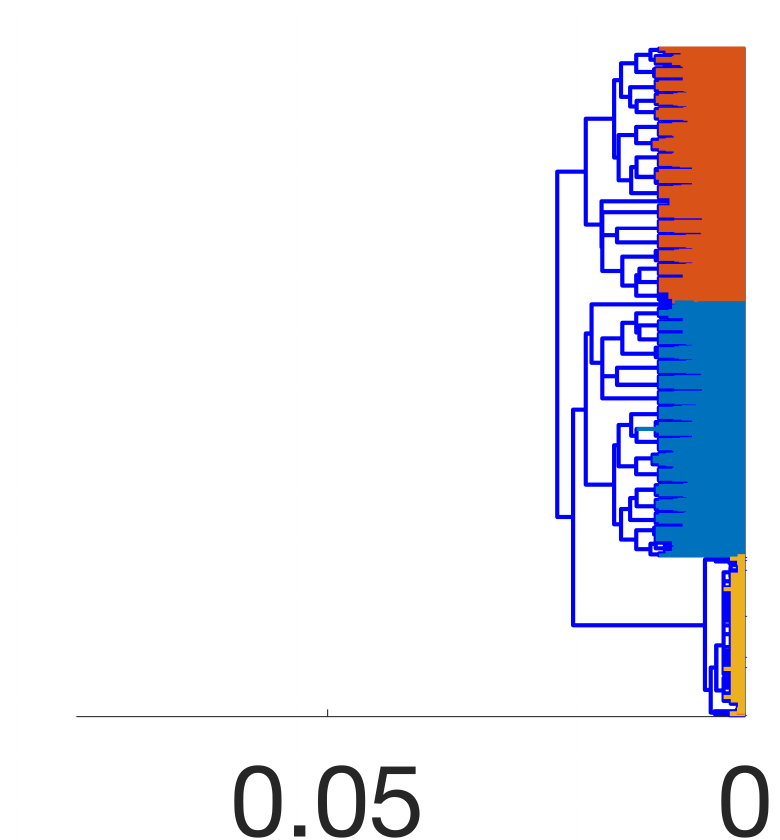}
            \end{minipage}
        \end{minipage}
        \begin{minipage}[t]{0.09\textwidth}
            \centering
            \begin{minipage}[t]{1\textwidth}
                \centering
                $e$:1/0.4 \\ 
                \includegraphics[width=1\textwidth]{figures/ellip/ellip-ori-seq-1-4.pdf} 
            \end{minipage}
            \begin{minipage}[t][0.55cm][t]{1\textwidth}
                \centering
                \includegraphics[width=1\textwidth]{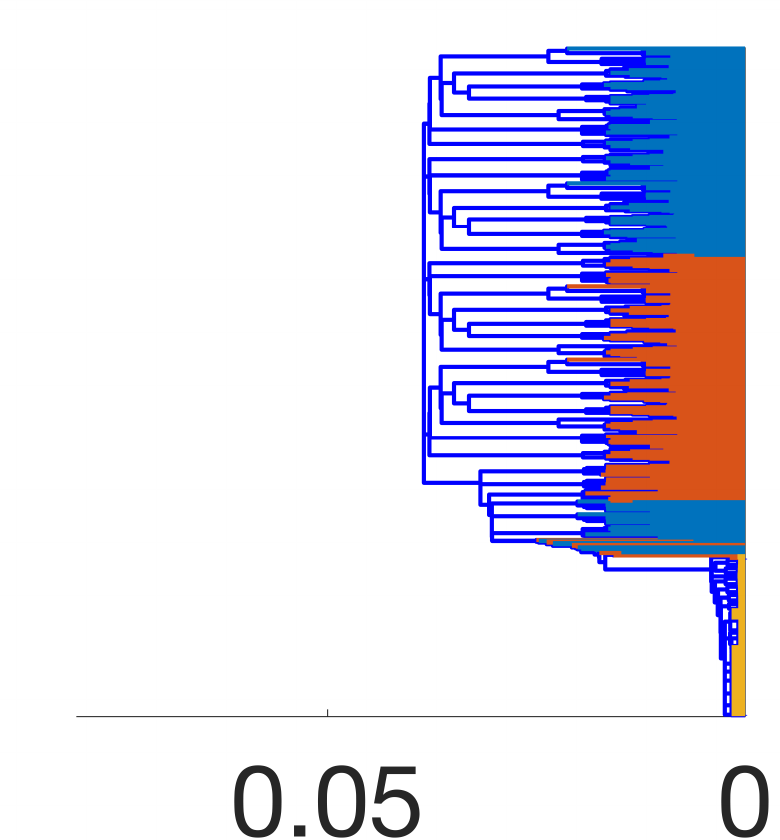}
            \end{minipage}
            \begin{minipage}[t]{1\textwidth}
                \centering
                $e$:0.4/1 \\ 
                \includegraphics[width=1\textwidth]{figures/ellip/ellip-ori-seq-4-1.pdf} 
            \end{minipage}
            \begin{minipage}[t][0.55cm][t]{1\textwidth}
                \centering
                \includegraphics[width=1\textwidth]{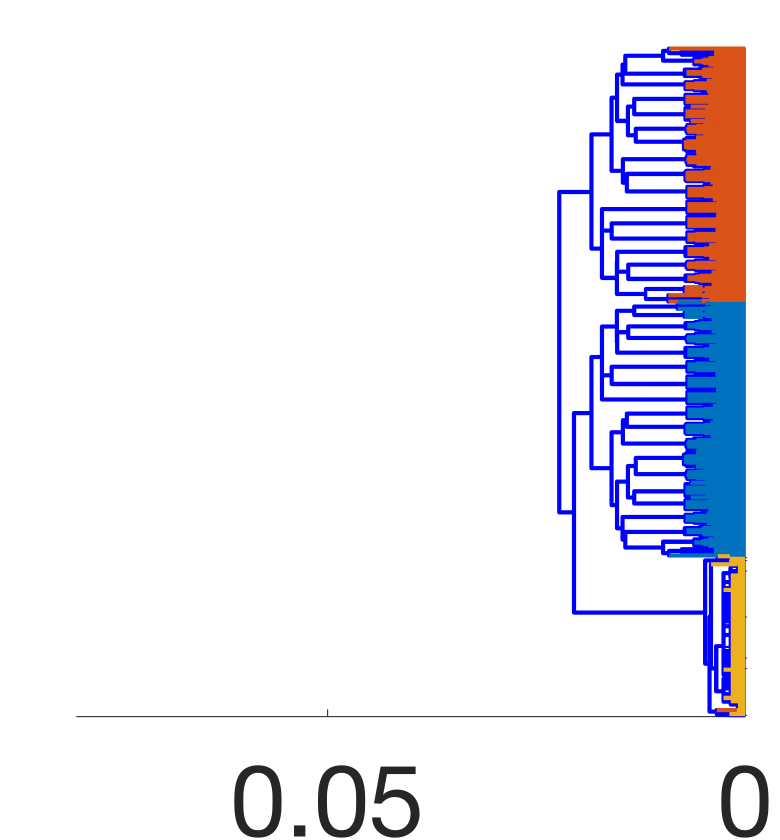}
            \end{minipage}
        \end{minipage}
        \begin{minipage}[t]{0.09\textwidth}
            \centering
            \begin{minipage}[t]{1\textwidth}
                \centering
                $e$:1/0.2 \\ 
                \includegraphics[width=1\textwidth]{figures/ellip/ellip-ori-seq-1-2.pdf} 
            \end{minipage}
            \begin{minipage}[t][0.55cm][t]{1\textwidth}
                \centering
                \includegraphics[width=1\textwidth]{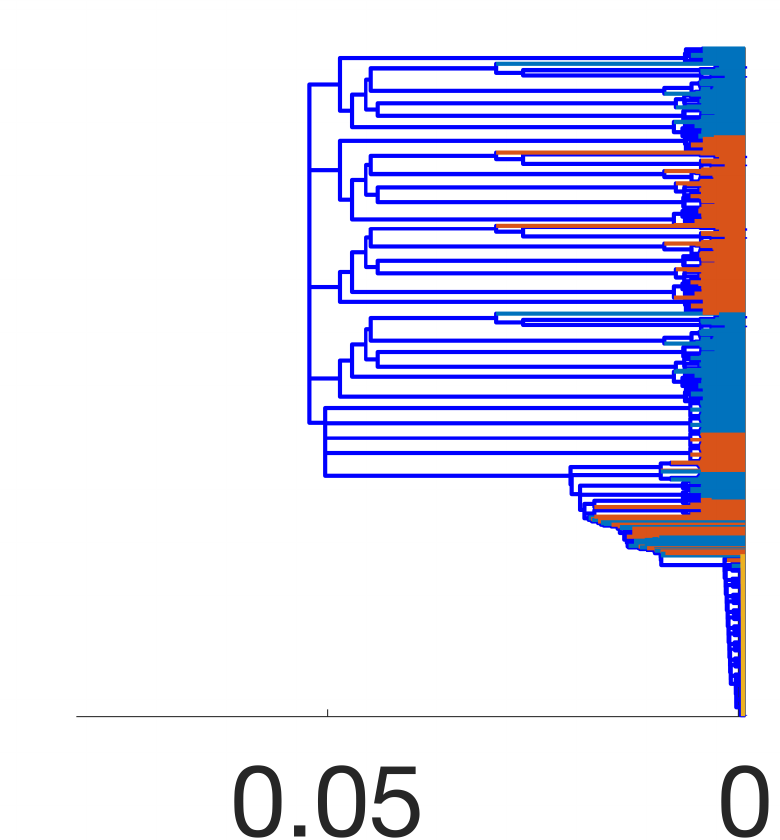}
            \end{minipage}
            \begin{minipage}[t]{1\textwidth}
                \centering
                $e$:0.2/1 \\ 
                \includegraphics[width=1\textwidth]{figures/ellip/ellip-ori-seq-2-1.pdf} 
            \end{minipage}
            \begin{minipage}[t][0.55cm][t]{1\textwidth}
                \centering
                \includegraphics[width=1\textwidth]{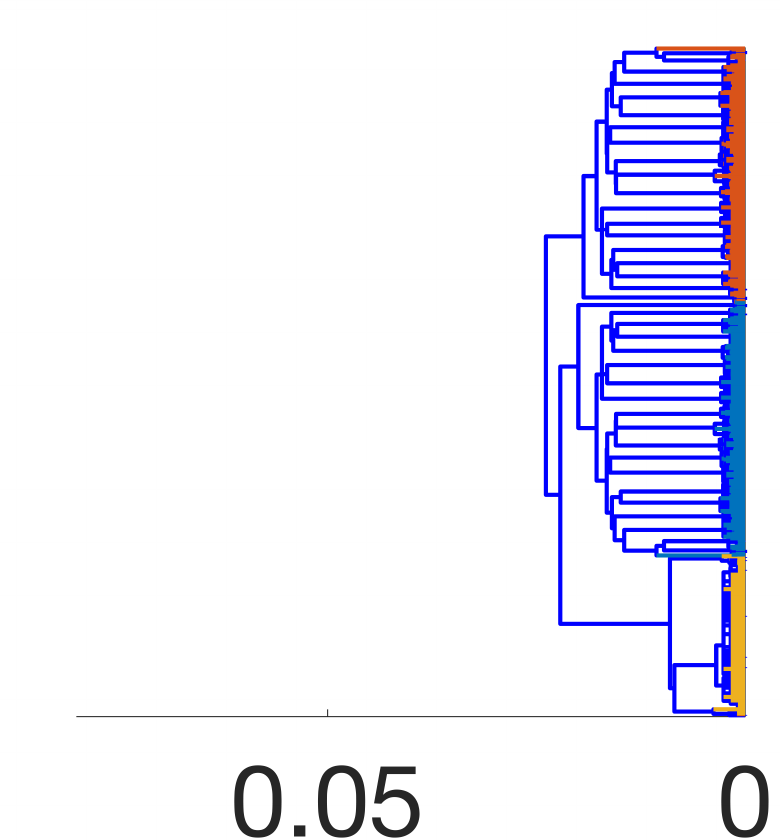}
            \end{minipage}
        \end{minipage}
    } }
    \caption{\textbf{Chaining effect: influence of geometry.} For each subgraph, each (dumbbell-shape) point cloud represents the result after applying the linear transformation $T$ with eccentricity $e$ to the original data set ($e=1$). All the corresponding dendrograms have the same x-axis limit.}
    \label{fig:supp-ballchains}
\end{figure}

\subsection{Noise removal} 
In this section, we consider two data sets corrupted by two different kinds of noise, respectively. We apply MS, GT, LT-WT1 and LT-WT2 to these data sets to test the denoising ability of these methods.

\paragraph{Noisy spiral.}
\label{app:denoising-spiral}
In this example, we analyze a data set consisting of 600 points sampled from a spiral lying in the square $[-30, 30]\times [-30,30]$. This data set is corrupted by 150 extra outliers uniformly sampled in the square $[-30, 30]\times [-30,30]$. We apply MS, GT, LT-WT1 and LT-WT2 to this data set in the course of 4 iterations. The parameter $\eps$ is set to be 4. Results are shown in Figure (\ref{fig:supp-spiral}). 

\begin{figure}[htb]
    \centering
    \subfloat[MS]{
		\begin{minipage}[t]{0.08\textwidth}
		    \centering
		    $\tau=0$ \\ 
            \includegraphics[width=1\textwidth]{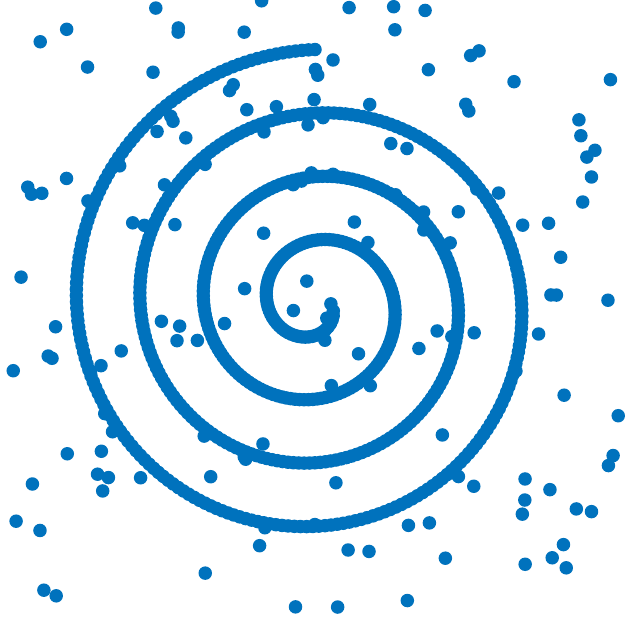}
        \end{minipage}
        \begin{minipage}[t]{0.08\textwidth}
		    \centering
		    $\tau=1$ \\ 
            \includegraphics[width=1\textwidth]{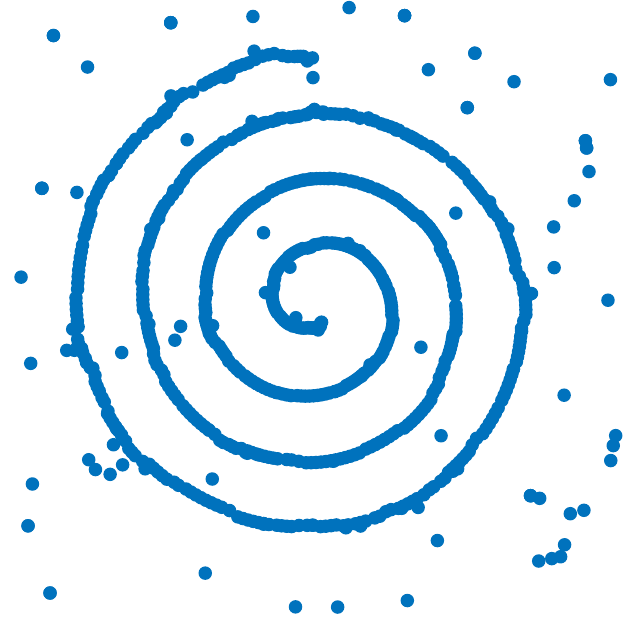}
        \end{minipage}
        \begin{minipage}[t]{0.08\textwidth}
		    \centering
		    $\tau=2$ \\ 
            \includegraphics[width=1\textwidth]{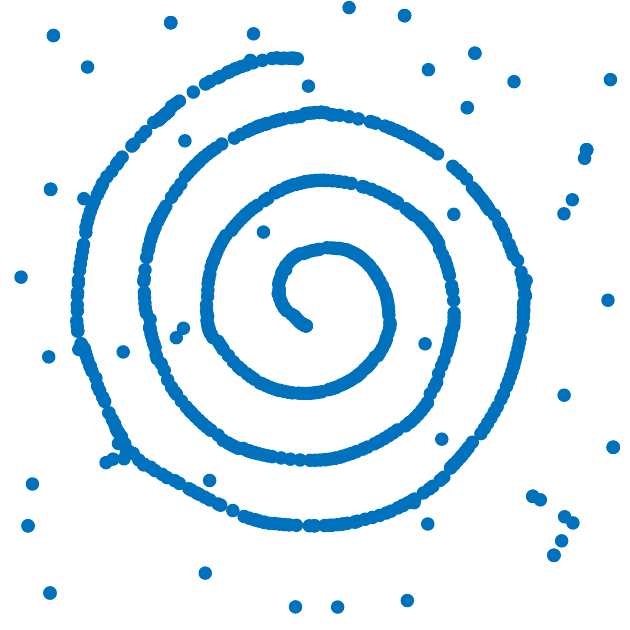}
        \end{minipage}
        \begin{minipage}[t]{0.08\textwidth}
		    \centering
		    $\tau=3$ \\ 
            \includegraphics[width=1\textwidth]{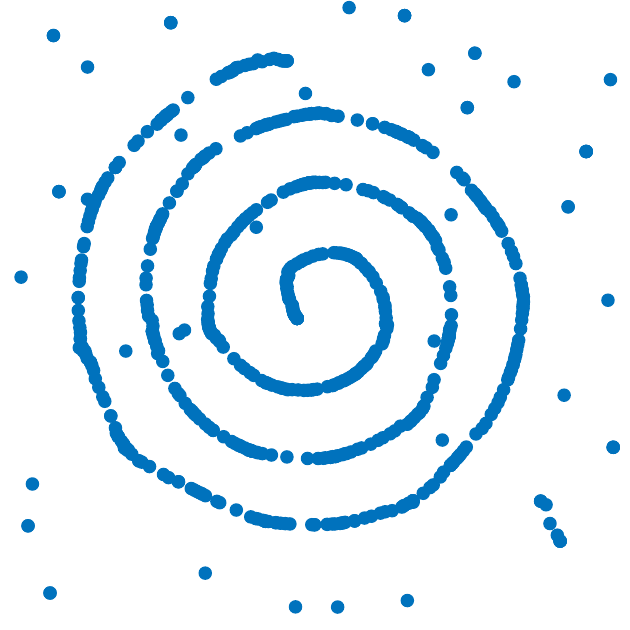}
        \end{minipage}
        \begin{minipage}[t]{0.08\textwidth}
		    \centering
		    $\tau=4$ \\ 
            \includegraphics[width=1\textwidth]{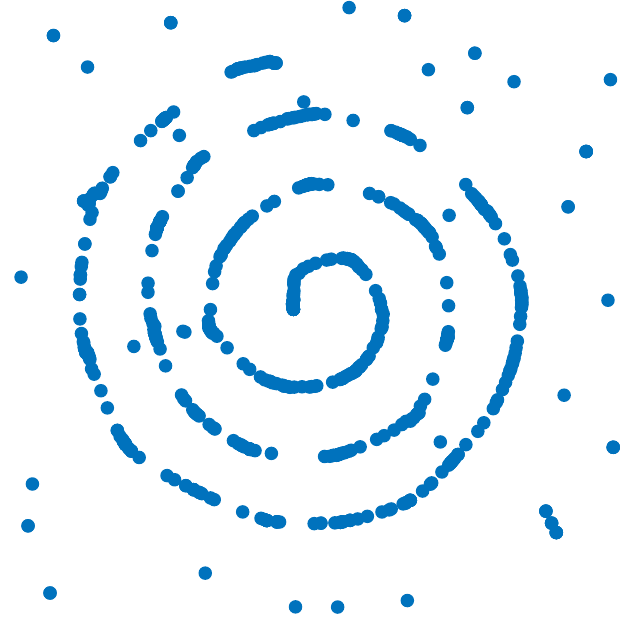}
        \end{minipage}
	}

	\subfloat[GT-1]{
		\begin{minipage}[t]{0.08\textwidth}
		    \centering
            \includegraphics[width=1\textwidth]{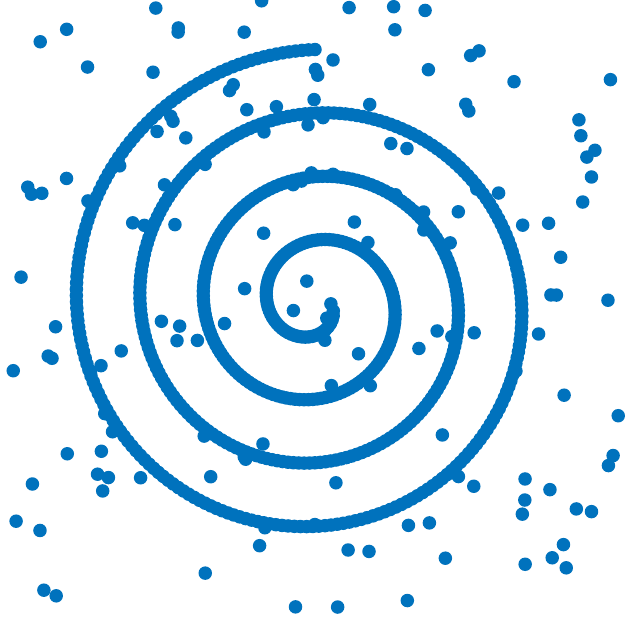}
        \end{minipage}
        \begin{minipage}[t]{0.08\textwidth}
		    \centering
            \includegraphics[width=1\textwidth]{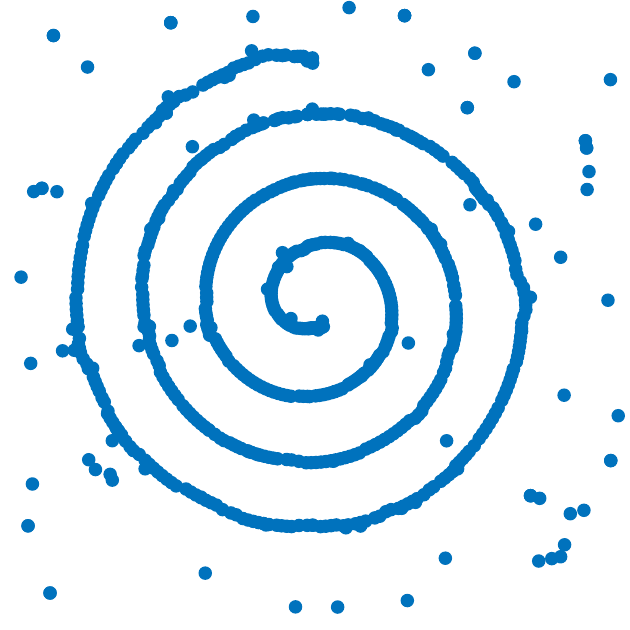}
        \end{minipage}
        \begin{minipage}[t]{0.08\textwidth}
		    \centering
            \includegraphics[width=1\textwidth]{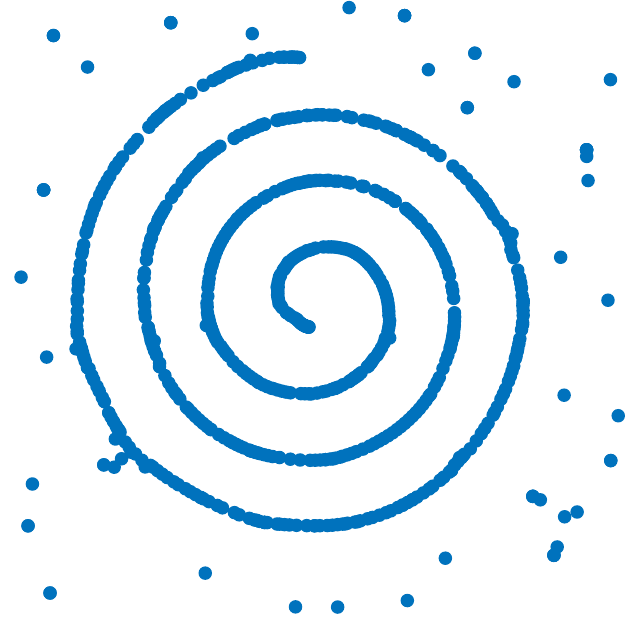}
        \end{minipage}
        \begin{minipage}[t]{0.08\textwidth}
		    \centering
            \includegraphics[width=1\textwidth]{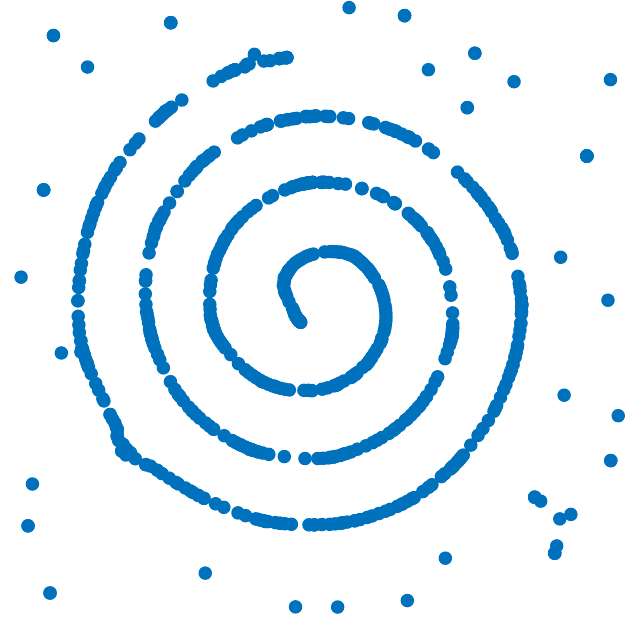}
        \end{minipage}
        \begin{minipage}[t]{0.08\textwidth}
		    \centering
            \includegraphics[width=1\textwidth]{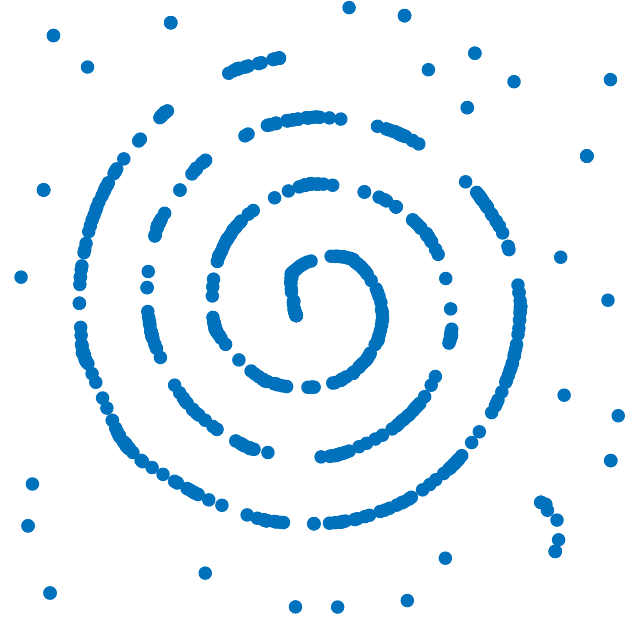}
        \end{minipage}
	}
	
	\subfloat[LT-WT2]{
        \begin{minipage}[t]{0.08\textwidth}
		    \centering
            \includegraphics[width=1\textwidth]{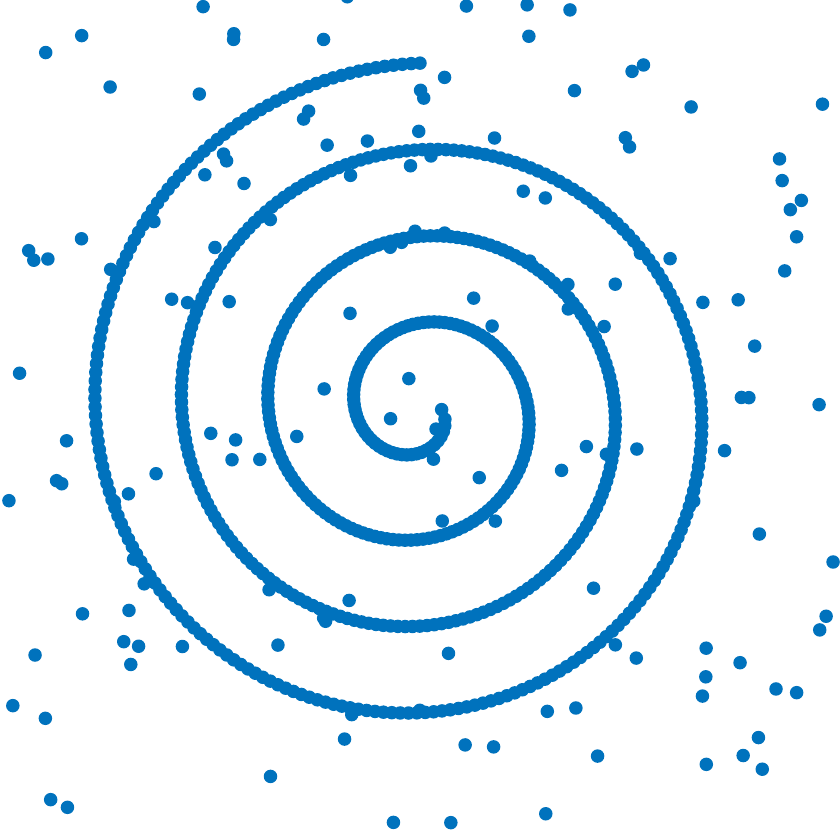}
        \end{minipage}
        \begin{minipage}[t]{0.08\textwidth}
		    \centering
            \includegraphics[width=1\textwidth]{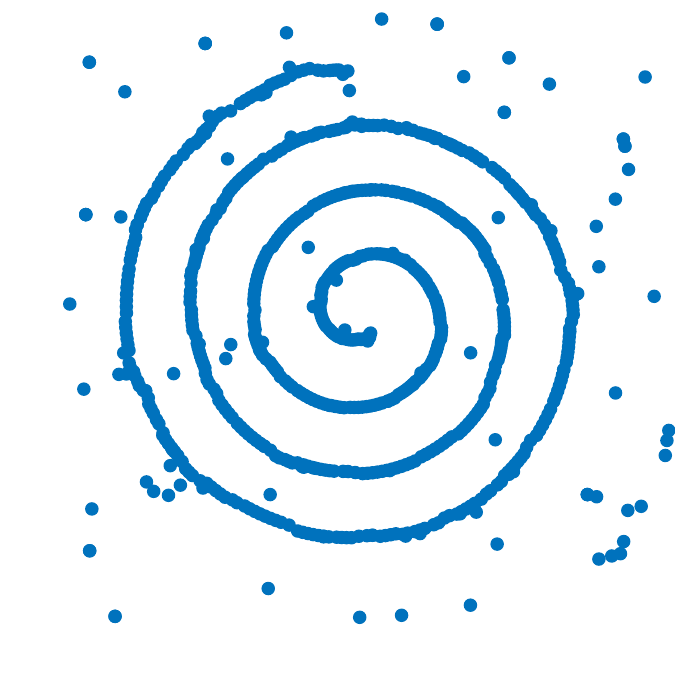}
        \end{minipage}
        \begin{minipage}[t]{0.08\textwidth}
		    \centering
            \includegraphics[width=1\textwidth]{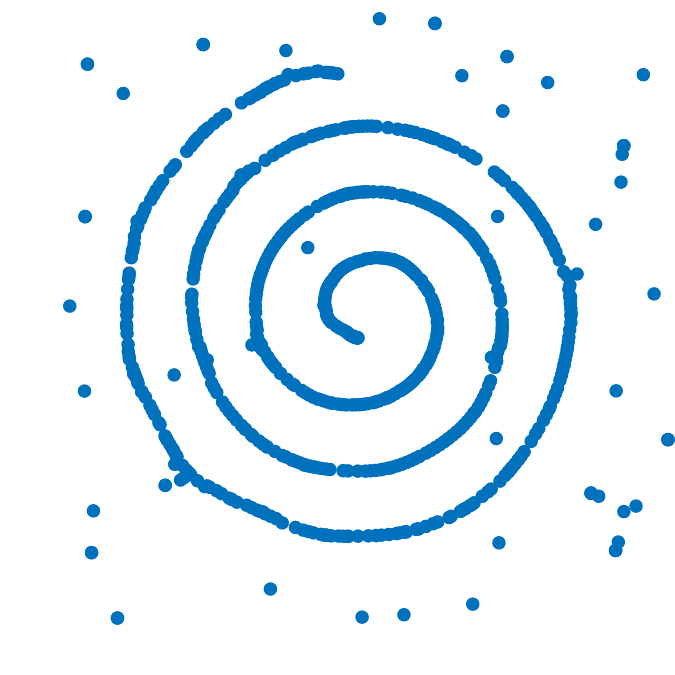}
        \end{minipage}
        \begin{minipage}[t]{0.08\textwidth}
		    \centering
            \includegraphics[width=1\textwidth]{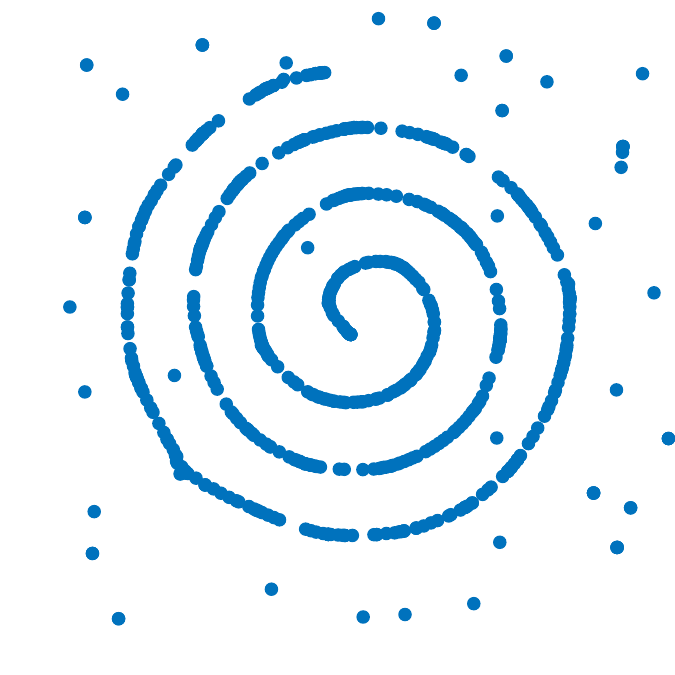}
        \end{minipage}
        \begin{minipage}[t]{0.08\textwidth}
		    \centering
            \includegraphics[width=1\textwidth]{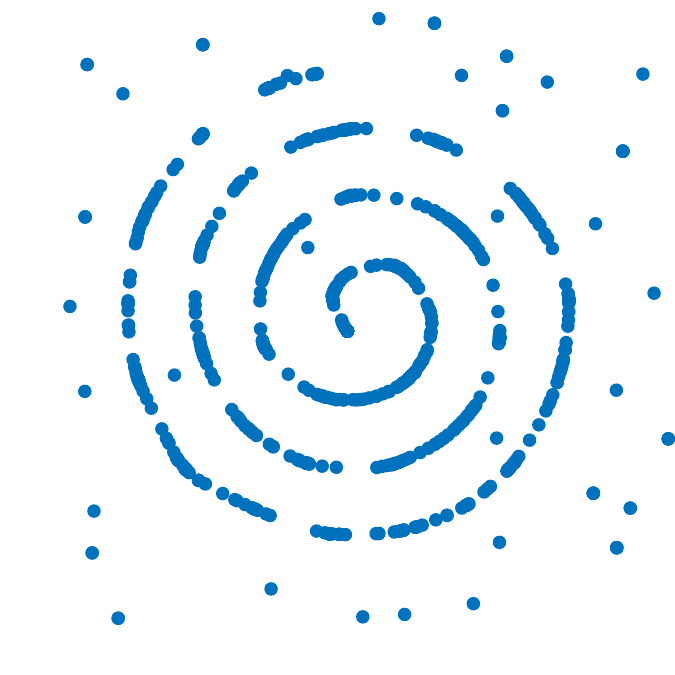}
        \end{minipage}
	}

    \subfloat[LT-WT1]{
        \begin{minipage}[t]{0.08\textwidth}
            \centering
            \includegraphics[width=1\textwidth]{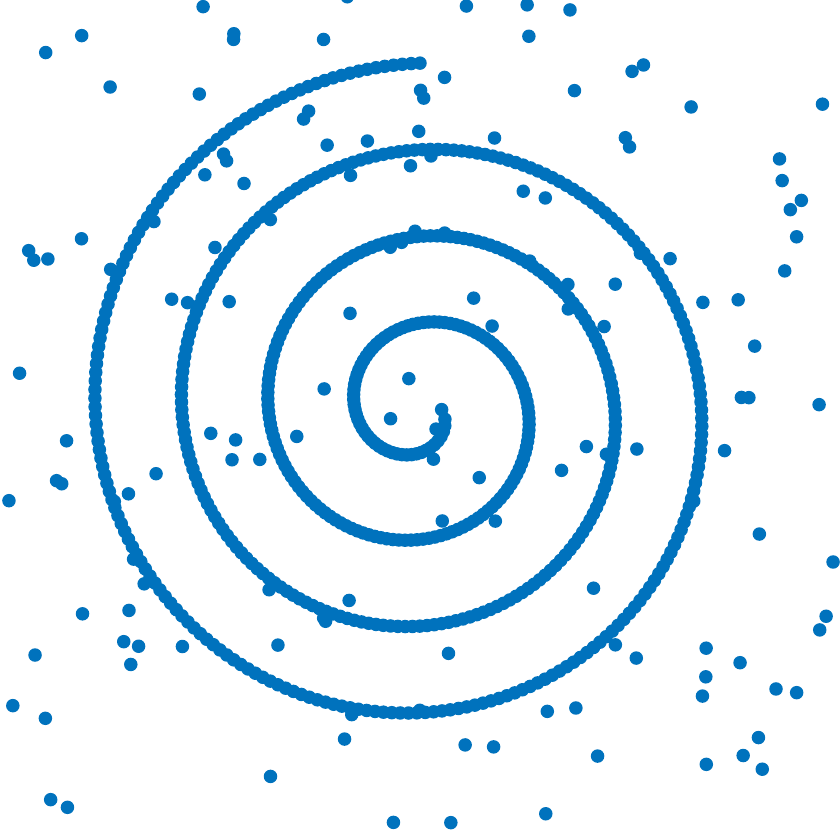}
        \end{minipage}
        \begin{minipage}[t]{0.08\textwidth}
            \centering
            \includegraphics[width=1\textwidth]{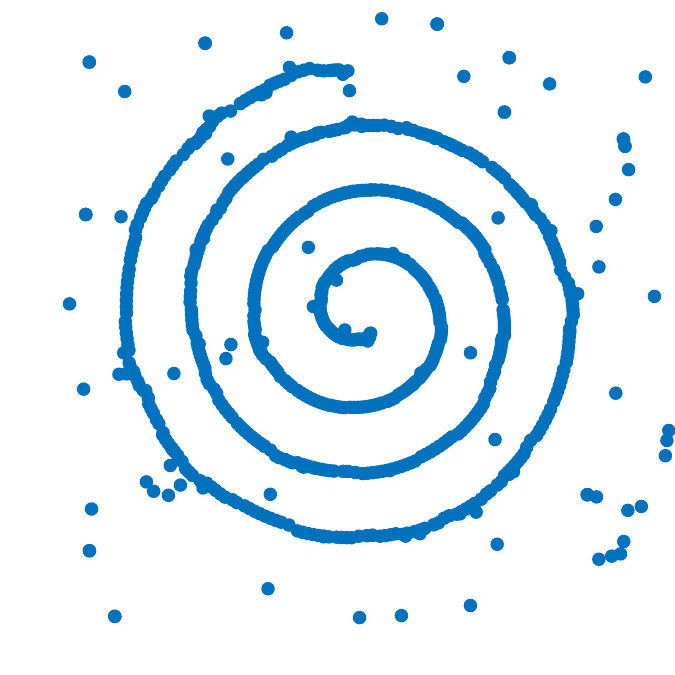}
        \end{minipage}
        \begin{minipage}[t]{0.08\textwidth}
            \centering
            \includegraphics[width=1\textwidth]{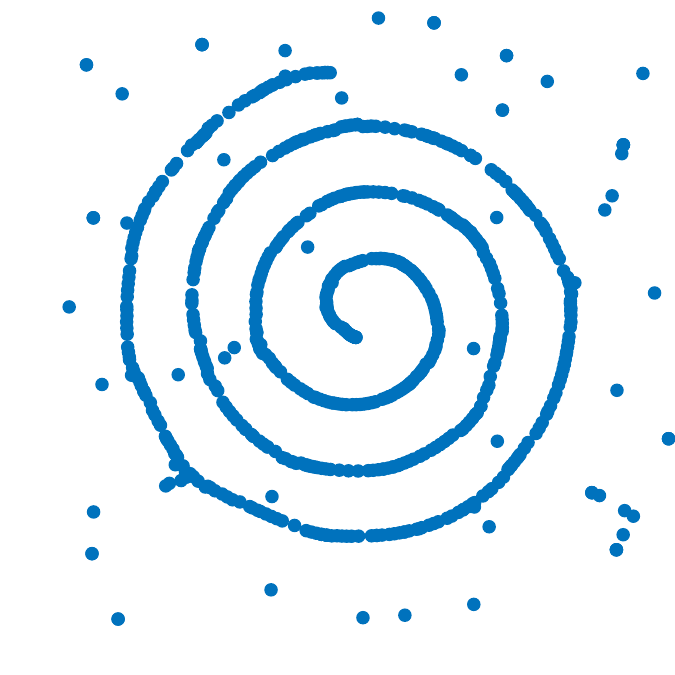}
        \end{minipage}
        \begin{minipage}[t]{0.08\textwidth}
            \centering
            \includegraphics[width=1\textwidth]{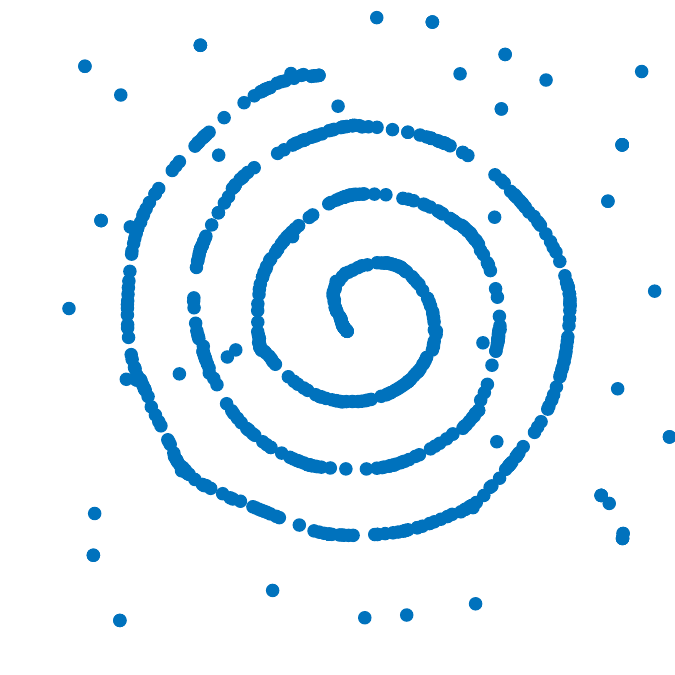}
        \end{minipage}
        \begin{minipage}[t]{0.08\textwidth}
            \centering
            \includegraphics[width=1\textwidth]{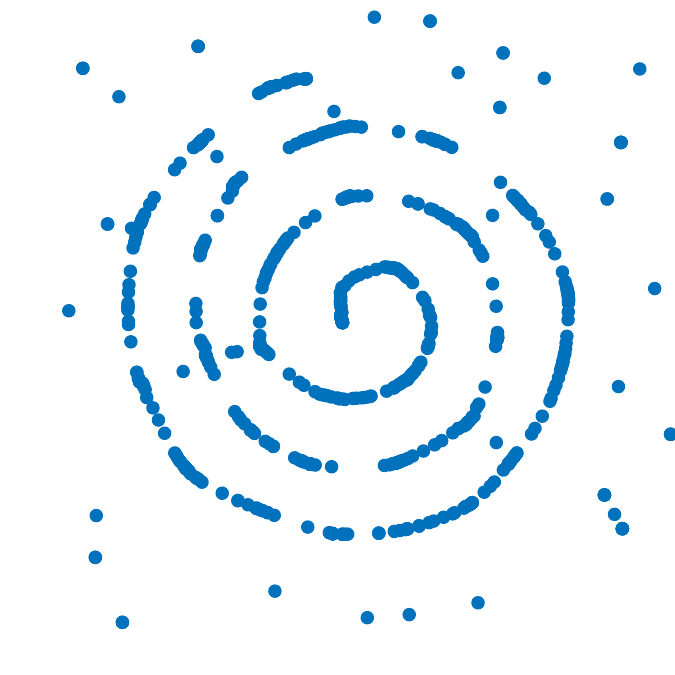}
        \end{minipage}
    }
    \caption{\textbf{Denoising of a spiral with outliers.} }
    \label{fig:supp-spiral}
\end{figure}

\paragraph{Noisy concentric circles.}
In this example, we consider the data set consisting of 500 points sampled from two concentric circles in the square $[-2, 2]\times [-2,2]$ where we sample 250 points from each circle. We corrupt the data set by introducing small additive noise to these 500 points. We apply MS, GT, LT-WT1 and LT-WT2 to this data set in the course of 4 iterations. The parameter $\eps$ is set to be 0.7. Results are shown in Figure (\ref{fig:supp-concen}). 

\begin{figure}[htb]
    \centering
    \subfloat[MS]{
        \begin{minipage}[t]{0.08\textwidth}
            \centering
            $\tau=0$ \\ 
            \includegraphics[width=1\textwidth]{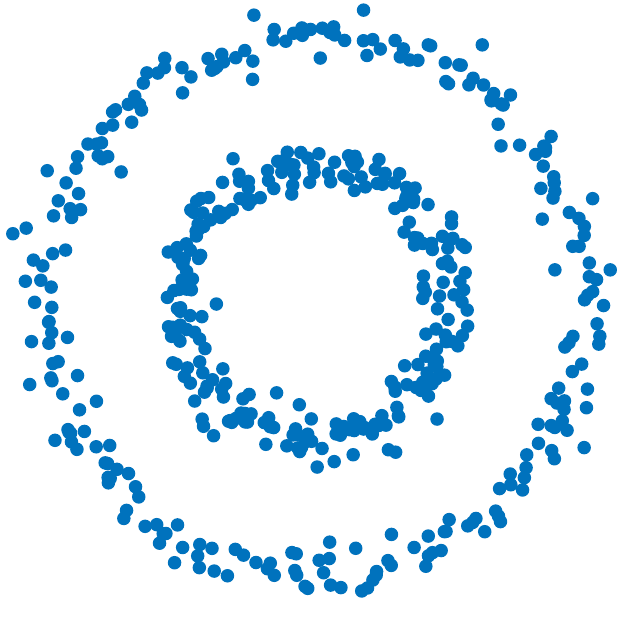}
        \end{minipage}
        \begin{minipage}[t]{0.08\textwidth}
            \centering
            $\tau=1$ \\ 
            \includegraphics[width=1\textwidth]{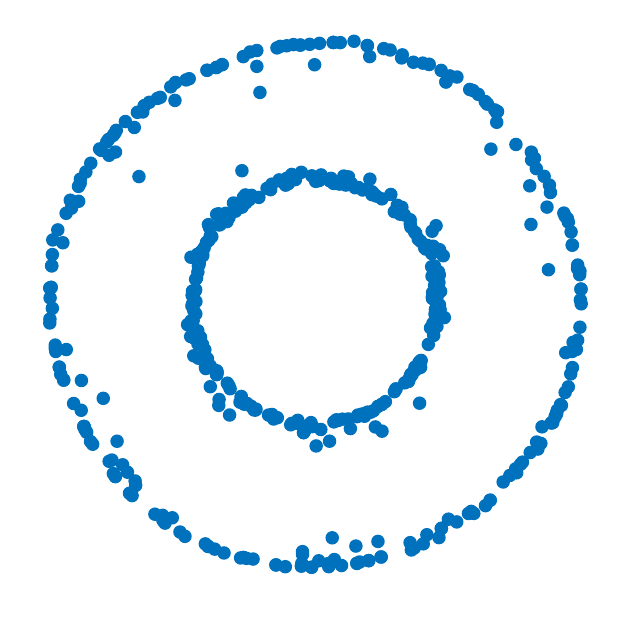}
        \end{minipage}
        \begin{minipage}[t]{0.08\textwidth}
            \centering
            $\tau=2$ \\ 
            \includegraphics[width=1\textwidth]{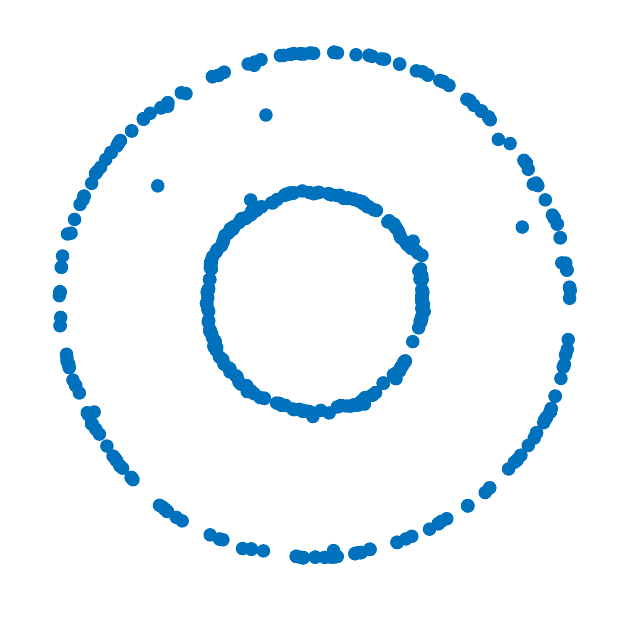}
        \end{minipage}
        \begin{minipage}[t]{0.08\textwidth}
            \centering
            $\tau=3$ \\ 
            \includegraphics[width=1\textwidth]{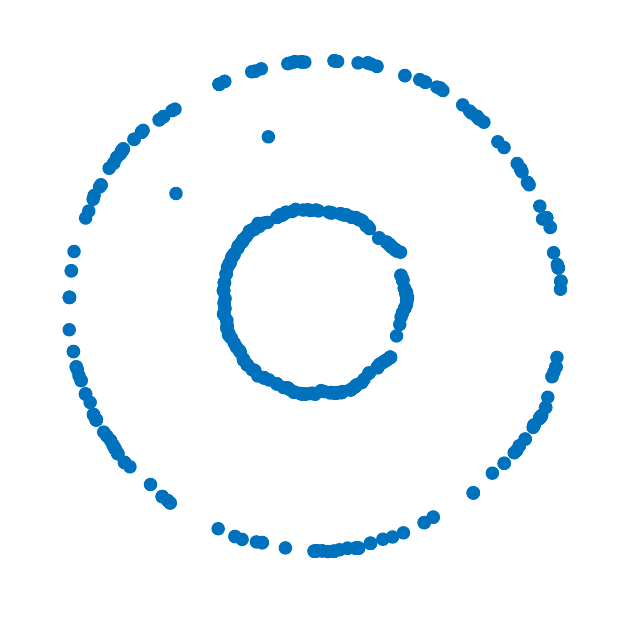}
        \end{minipage}
        \begin{minipage}[t]{0.08\textwidth}
            \centering
            $\tau=4$ \\ 
            \includegraphics[width=1\textwidth]{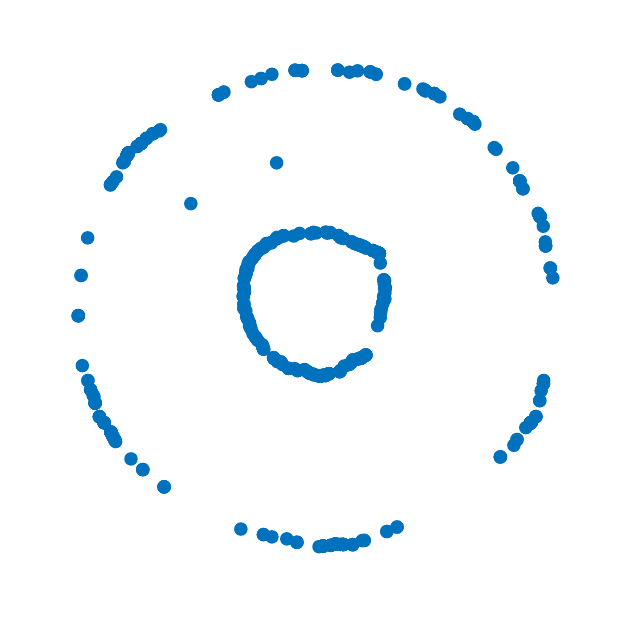}
        \end{minipage}
    }

    \subfloat[GT-2.5]{
        \begin{minipage}[t]{0.08\textwidth}
            \centering
            \includegraphics[width=1\textwidth]{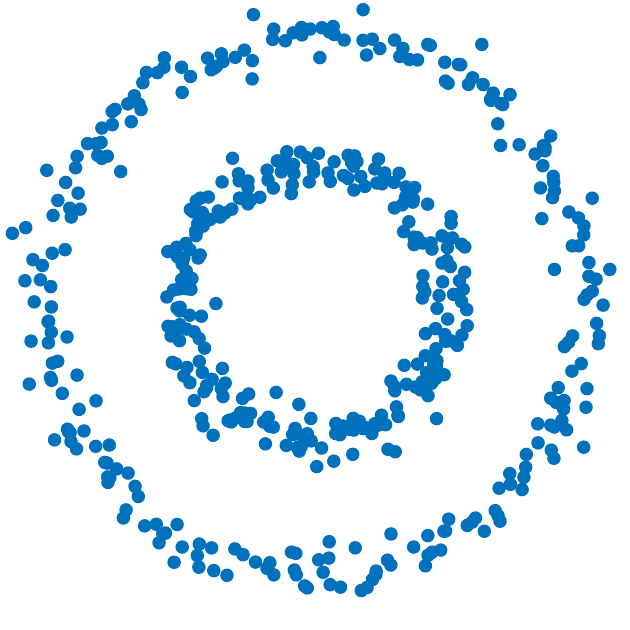}
        \end{minipage}
        \begin{minipage}[t]{0.08\textwidth}
            \centering
            \includegraphics[width=1\textwidth]{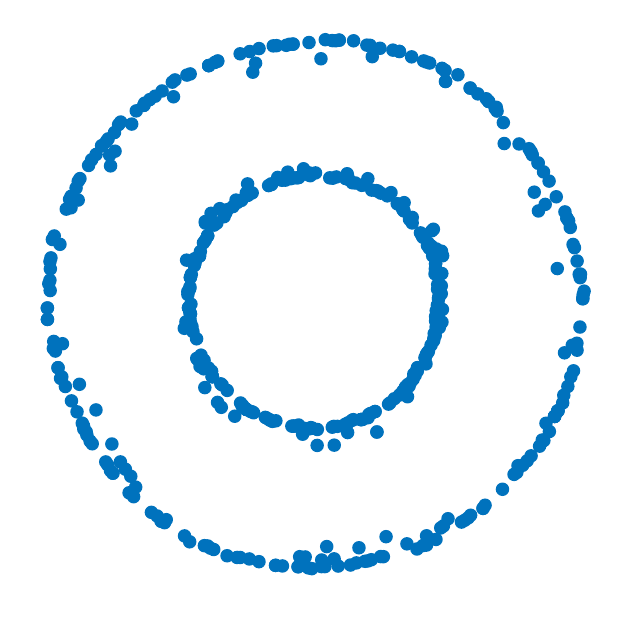}
        \end{minipage}
        \begin{minipage}[t]{0.08\textwidth}
            \centering
            \includegraphics[width=1\textwidth]{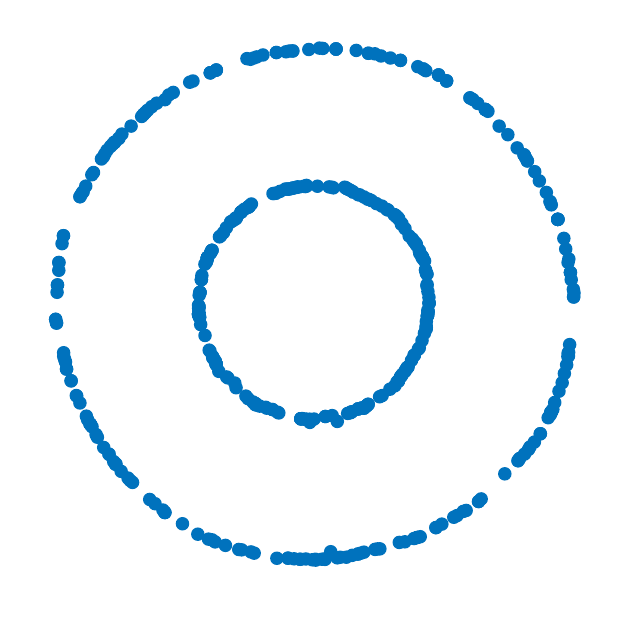}
        \end{minipage}
        \begin{minipage}[t]{0.08\textwidth}
            \centering
            \includegraphics[width=1\textwidth]{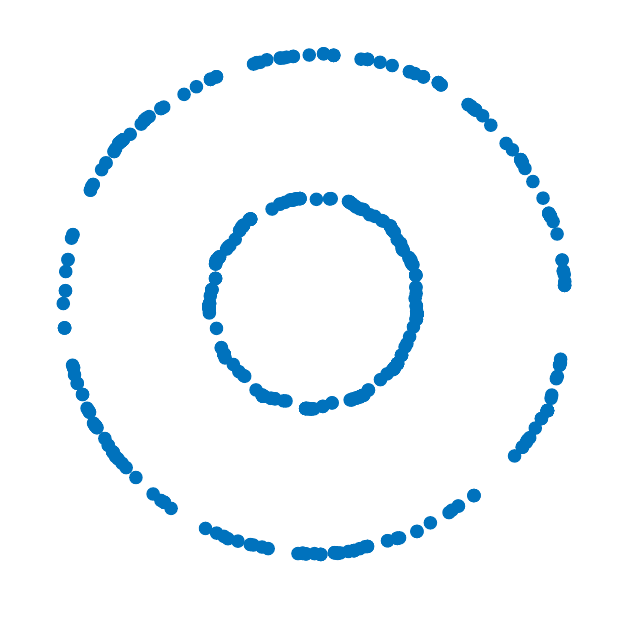}
        \end{minipage}
        \begin{minipage}[t]{0.08\textwidth}
            \centering
            \includegraphics[width=1\textwidth]{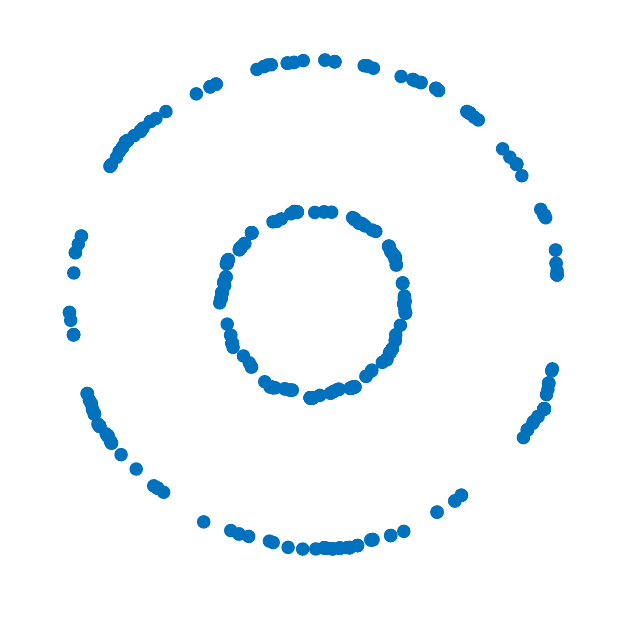}
        \end{minipage}
    }

    \subfloat[LT-WT2]{
        \begin{minipage}[t]{0.08\textwidth}
            \centering
            \includegraphics[width=1\textwidth]{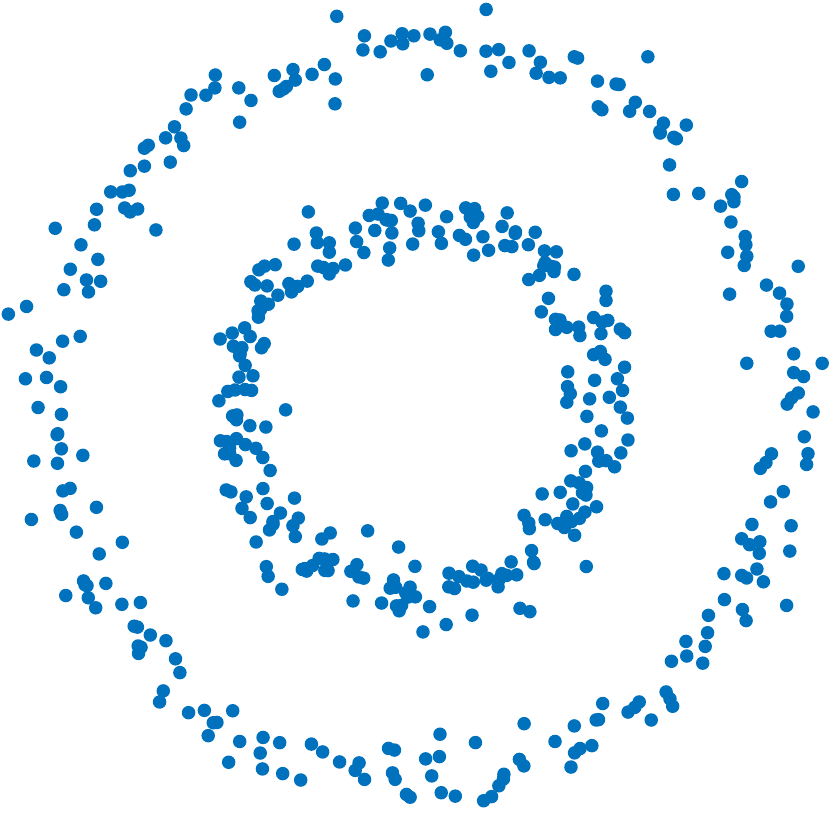}
        \end{minipage}
        \begin{minipage}[t]{0.08\textwidth}
            \centering
            \includegraphics[width=1\textwidth]{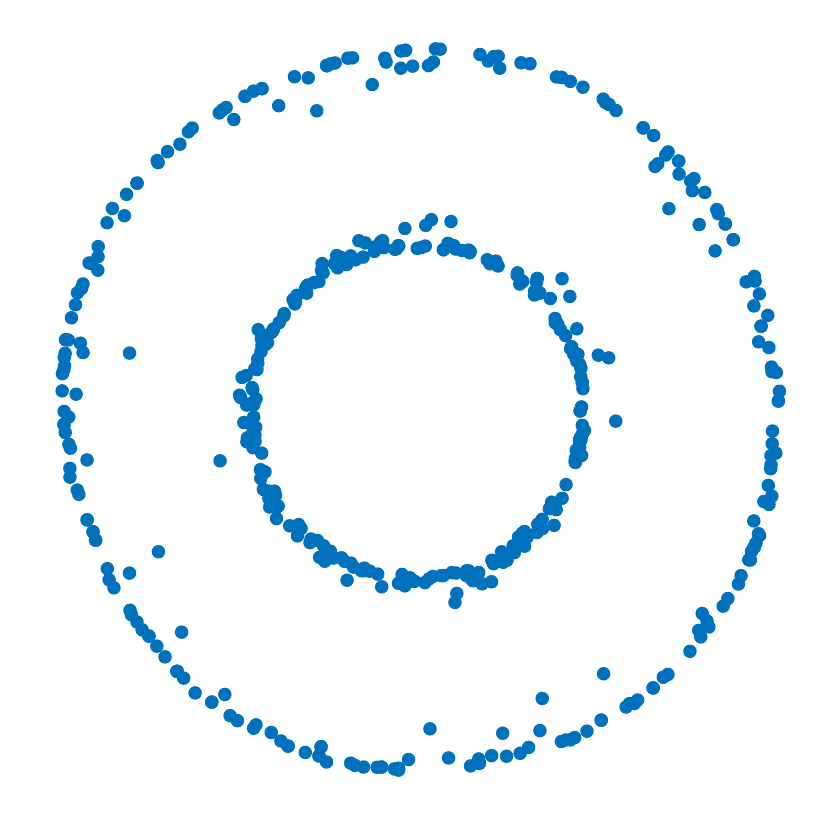}
        \end{minipage}
        \begin{minipage}[t]{0.08\textwidth}
            \centering
            \includegraphics[width=1\textwidth]{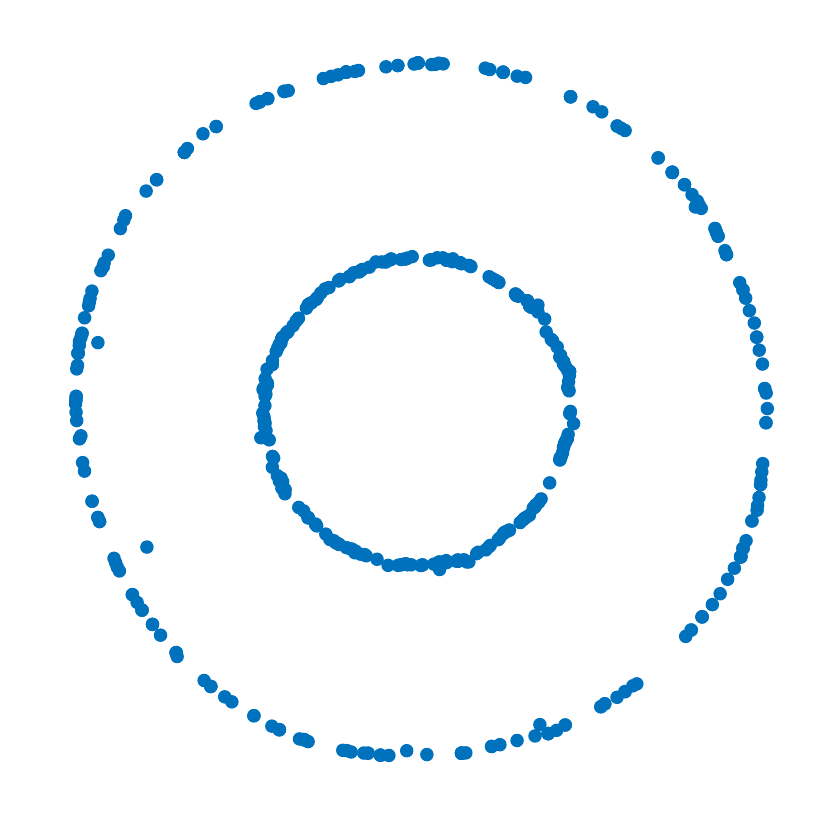}
        \end{minipage}
        \begin{minipage}[t]{0.08\textwidth}
            \centering
            \includegraphics[width=1\textwidth]{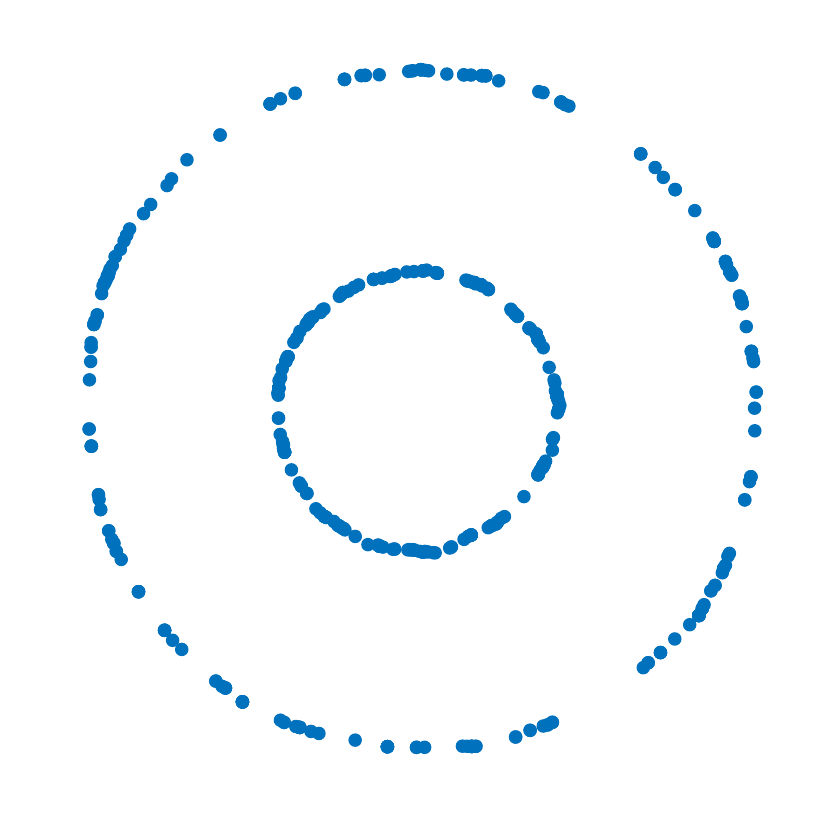}
        \end{minipage}
        \begin{minipage}[t]{0.08\textwidth}
            \centering
            \includegraphics[width=1\textwidth]{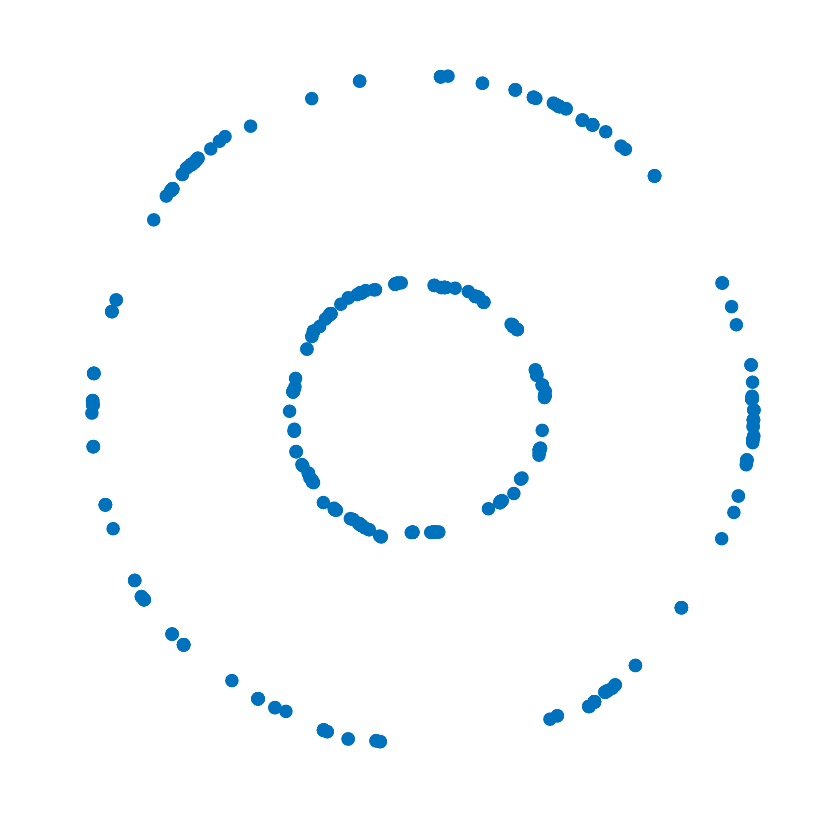}
        \end{minipage}
    }

    \subfloat[LT-WT1]{
        \begin{minipage}[t]{0.08\textwidth}
            \centering
            \includegraphics[width=1\textwidth]{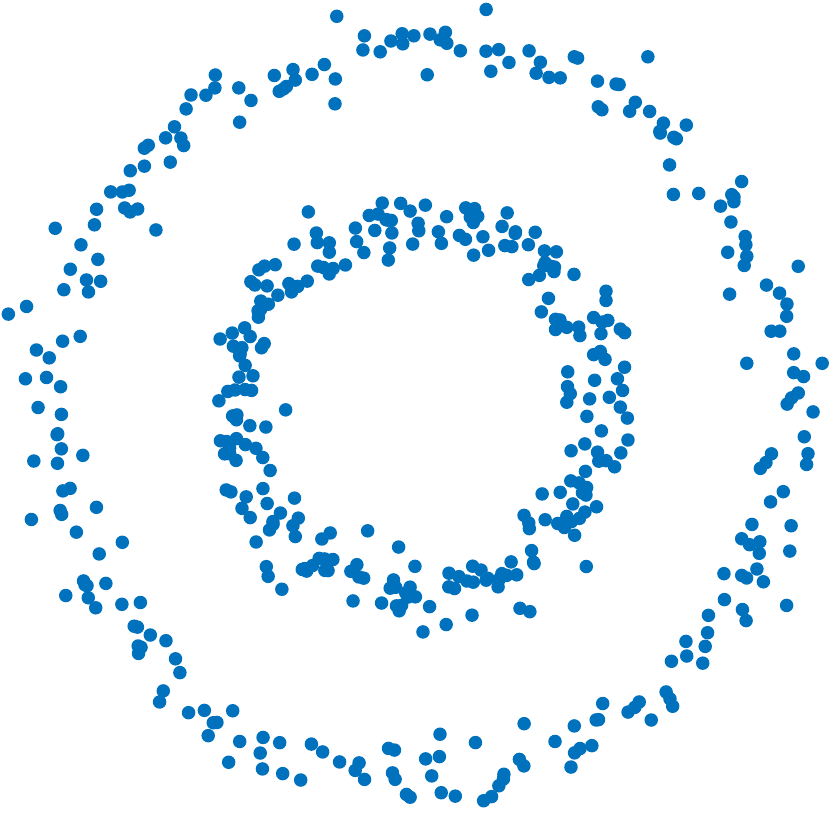}
        \end{minipage}
        \begin{minipage}[t]{0.08\textwidth}
            \centering
            \includegraphics[width=1\textwidth]{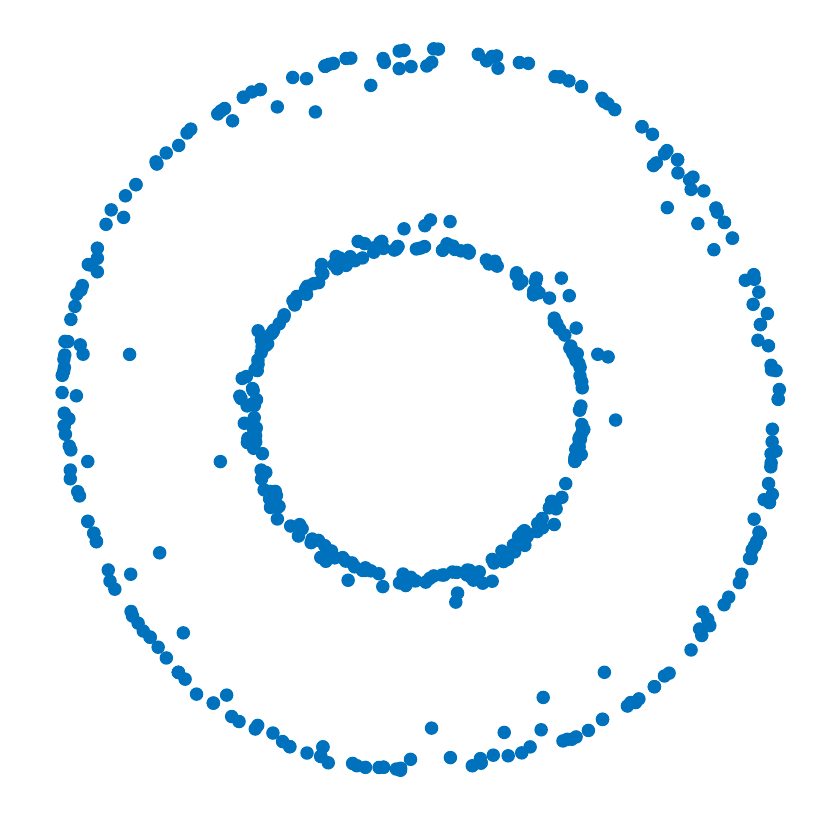}
        \end{minipage}
        \begin{minipage}[t]{0.08\textwidth}
            \centering
            \includegraphics[width=1\textwidth]{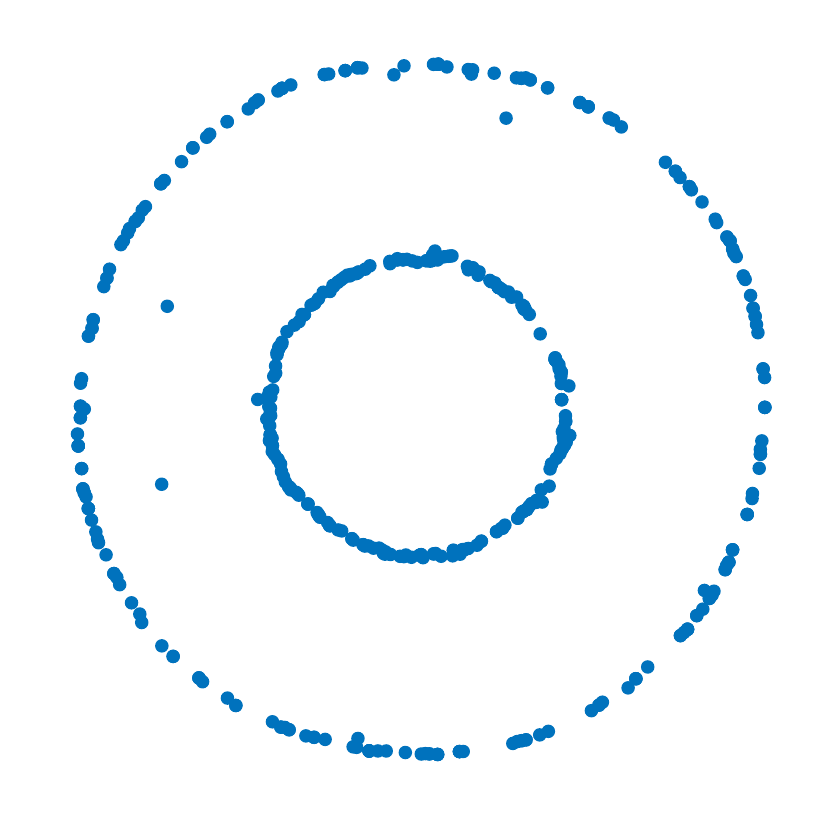}
        \end{minipage}
        \begin{minipage}[t]{0.08\textwidth}
            \centering
            \includegraphics[width=1\textwidth]{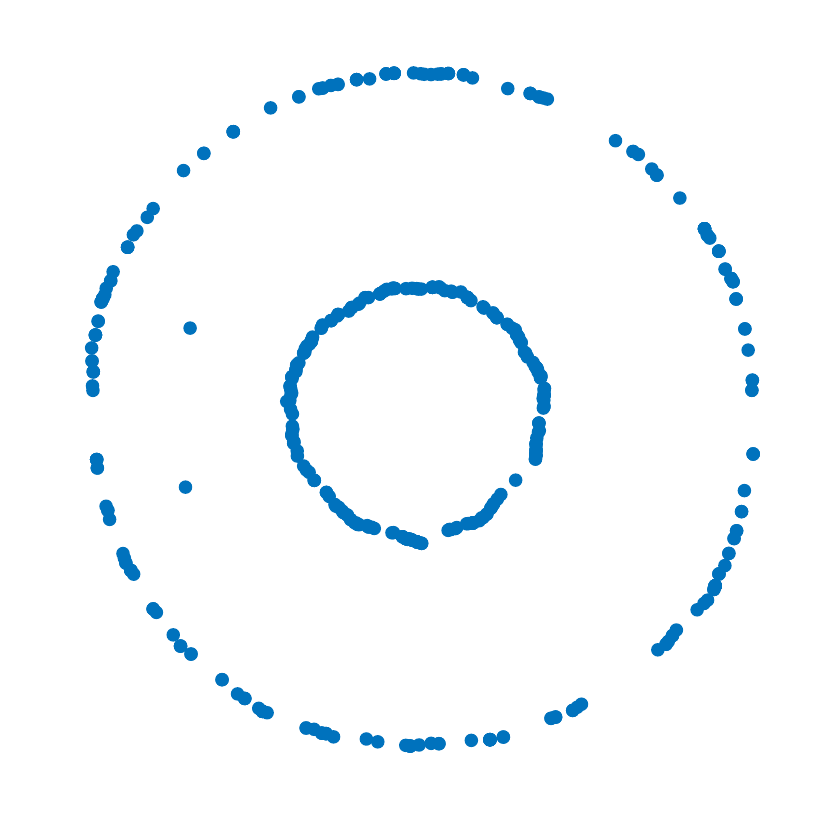}
        \end{minipage}
        \begin{minipage}[t]{0.08\textwidth}
            \centering
            \includegraphics[width=1\textwidth]{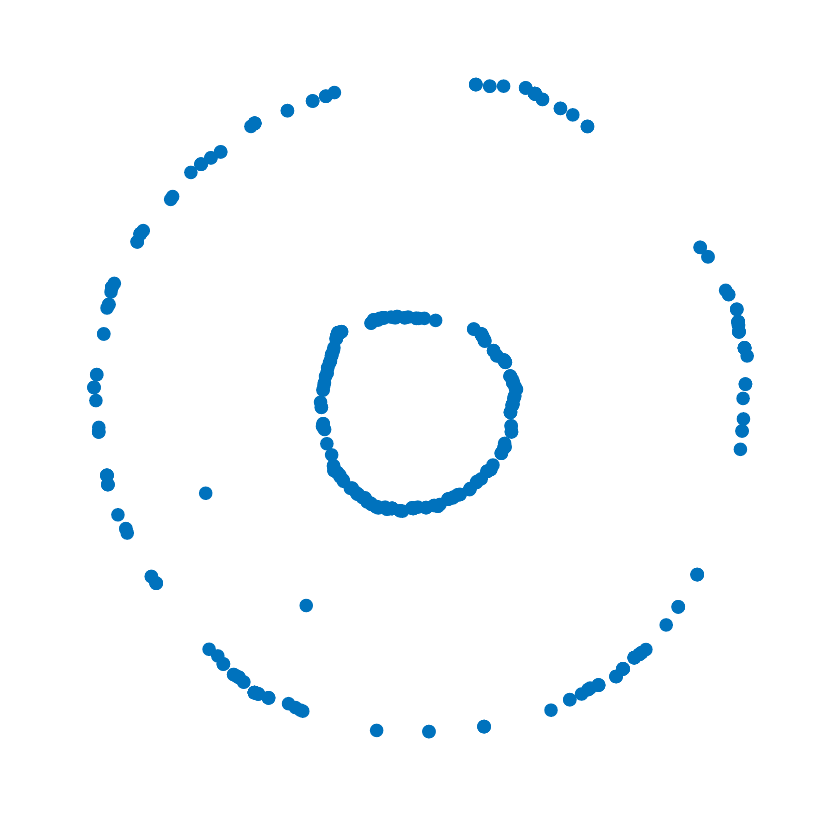}
        \end{minipage}
    }
    \caption{\textbf{Denoising of concentric circles.}}
    \label{fig:supp-concen}
\end{figure}

\paragraph{Observations.} 
Under these two types of noise, we see from Figure~(\ref{fig:supp-spiral}) and Figure~(\ref{fig:supp-concen}) that all methods resolve the shapes of the spiral as well as that of the concentric circles, to some extent. Notice how GT is able to better resolve the shapes of the two data sets compared to the other methods. In particular, it is better at displacing points towards the high density area around the spiral and the concentric circles. Besides its superior performance over LT-WT in these denoising tasks, we also emphasize that compared with LT-WT, GT is more computationally efficient thanks to the closed form formula for the GT distance (cf. Equation (\ref{eq:gtd2})).


\subsection{Image segmentation}\label{sec:img-seg}
Image segmentation is an important application domain in computer 
vision~\citep{szeliski2010computer} with
mean shift~\citep{comaniciu2002mean,demirovic2019implementation} being an effective method in this area. We first review MS as applied to image segmentation and then show how to comparatively apply GT to this task. Note that LT-WT is not applicable in this experiment since the process of image segmentation involves feature updates whereas LT-WT only updates/retains distance matrices. 

\paragraph{Mean shift for image segmentation.}
There are multiple variants of MS for image segmentation \citep{tao2007color,demirovic2019implementation}. We follow the MS implementation in \citep{demirovic2019implementation} which will be explained as follows.

Given an image $X$, each pixel $x\in X$ consists of two types of features: the \emph{spatial feature} and the \emph{range feature}, denoted by $x^s$ and $x^r$, respectively. $x^s$ is represented by a point in $\R^2$, whereas $x^r$ is represented by a point in the three dimensional L*u*v* color space \citep{comaniciu2002mean}, in which Euclidean distances approximate perceptual differences in color better than the Euclidean distance in the RGB space. 

The MS from \citep{demirovic2019implementation} has two bandwidth parameters: one spatial bandwidth parameter $\eps^s$ and one range bandwidth parameter $\eps^r$. 
For any pixel $x=(x^s,x^r)$, we define its \emph{$(\eps^s,\eps^r)$-neighborhood} to be the set of all pixels $y=(y^s,y^r)$ such that 
$$\| y^s - x^s \| \leq \eps^s\text{ and }\| y^r - x^r \| \leq \eps^r.$$
Then, the MS algorithm from \citep{demirovic2019implementation} is described below:

\begin{enumerate}
\item Data: pixels are given in the form of 5-dimensional feature points $\{x_i = (x_i^s, x_i^r)\}_{i=1}^M$;
    \item For each $x_i = (x_i^s, x_i^r)$, let $T^s(x_i)\coloneqq x_i^s$ and $T^r(x_i)\coloneqq x_i^r$. Further define $T(x_i)\coloneqq (T^s(x_i),T^r(x_i))$. Repeat the following steps to update $T(x_i)$ for each $i=1,\ldots,M$ until $T(x_i)$ converges:
    \begin{enumerate}

        \item determine the \emph{$(\eps^s,\eps^r)$-range-neighborhood} of $T(x_i)=(T^s(x_i),T^r(x_i))$, which consists of all pixels $x_j=\lc x_j^s,x_j^r\rc$ such that $\| x_j^s - x_i^s \| \leq \eps^s$ and $\| x_j^r - T^r(x_i) \| \leq \eps^r$;
        \item update $T(x_i)$ as the mean of its $(\eps^s,\eps^r)$-range-neighborhood.
    \end{enumerate}
    \item Compute a clustering of points $\{x_i\}_{i=1}^M$: we construct a graph $G$ such that its vertex set is $\{T(x_i)\}_{i=1}^M$ and its edge set consists of all pairs $\{T(x_i),T(x_j)\}$ such that $\norm{T^s(x_i)-T^s(x_j)}\leq\eps^s$ and $\norm{T^r(x_i)-T^r(x_j)}\leq\eps^r$. Then, the set of connected components of $G$ forms a clustering of $\{T(x_i)\}_{i=1}^M$. This in turn induces a clustering/segmentation of the pixels $x_i$s according to the following rule: $x_i$ and $x_j$ belong to the same cluster iff $T(x_i)$ and $T(x_j)$ belong to the same cluster.
\end{enumerate}

\paragraph{GT for image segmentation.}
We adapt GT to the image segmentation task following a strategy similar to the one used in the MS algorithm described above as follows:
\begin{enumerate}
\item Data: pixels are given in the form of 5-dimensional feature points $\{x_i = (x_i^s, x_i^r)\}_{i=1}^M$;
    \item Initialization:  compute the $2\times 2$ covariance matrix $\Sigma(x_i^s)$ of spatial features of points in the $(\eps^s,\eps^r)$-neighborhood of $x_i$.
    \item For each $x_i = (x_i^s, x_i^r)$, let $T^s(x_i)\coloneqq x_i^s$ and $T^r(x_i)\coloneqq x_i^r$. Further define $T(x_i)\coloneqq (T^s(x_i),T^r(x_i))$. Repeat the following steps to update $T(x_i)$ for each $i=1,\ldots,M$ until $T(x_i)$ converges:
    \begin{enumerate}
        \item for each $x_j$ within the $(\eps^s,\eps^r)$-range-neighborhood of $T(x_i)$ (i.e., $\| x_j^s - x_i^s \| \leq \eps^s$ and $\| x_j^r - T^r(x_i) \| \leq \eps^r$), compute via the following variant of the GT distance (cf. Equation~(\ref{eq:gtd})) between the spatial features of $x_j$ and $T(x_i)$:
        \begin{equation}\label{eq:image-gt}
         d^{\mathsmaller{(\eps,\lambda)}}_{\alpha,d_s} \lc x_j^s, T^s(x_i)\rc\coloneqq\lc\norm{x_i^s-x_j^s}^2+\lambda\cdot\lc\dcov\left(\Sigma\left(x_j^s\right),\Sigma(T^s(x_i))\right)\rc^2\rc^\frac{1}{2}, 
        \end{equation}
        where $d_s$ refers to the Euclidean distance on the spatial features.

        \item determine the \emph{$(\eps^s,\eps^r)$-GT-neighborhood} of $T(x_i)$, which consists of all pixels $x_j=(x_j^s,x_j^r)$ satisfying both $d^{\mathsmaller{(\eps,\lambda)}}_{\alpha,d_s} \left(x_j^s, T^s(x_i)\right) \leq \eps^s$ and $\left\| T^r(x_i) - x_j^r \right\| \leq \eps^r$; 
        \item update $T(x_i)$ as the mean of the $(\eps^s,\eps^r)$-GT-neighborhood and compute the 2-dimensional covariance matrix $\Sigma(T^s(x_i))$ of spatial features of points in the $(\eps^s,\eps^r)$-GT-neighborhood of $T(x_i)$.
    \end{enumerate}
    \item Compute a clustering of points $\{x_i\}_{i=1}^M$: this step is the same as the corresponding one in the MS case.
\end{enumerate}

We remark that when $\lambda=0$, the above GT based segmentation algorithm reduces to the MS based segmentation algorithm given above.

\paragraph{Experiments.} We apply both GT and MS to the image segmentation task on the standard cameraman images with different resolutions. The results are shown in Figure~(\ref{fig:supp-cameraman}). Notice that when the image is of high resolution (Figure (\ref{fig:supp-image1})), GT performs as well as MS. When the image is of low resolution (Figure (\ref{fig:supp-image2})), we see that GT induces a reasonably better segmentation than MS does. The labels marked on the test image 
correspond to the major different segments that MS and GT recognize.

\begin{figure}[htb]
    \centering
    \subfloat[Test image1]{
        \label{fig:supp-image1}
        \includegraphics[width=0.15\textwidth]{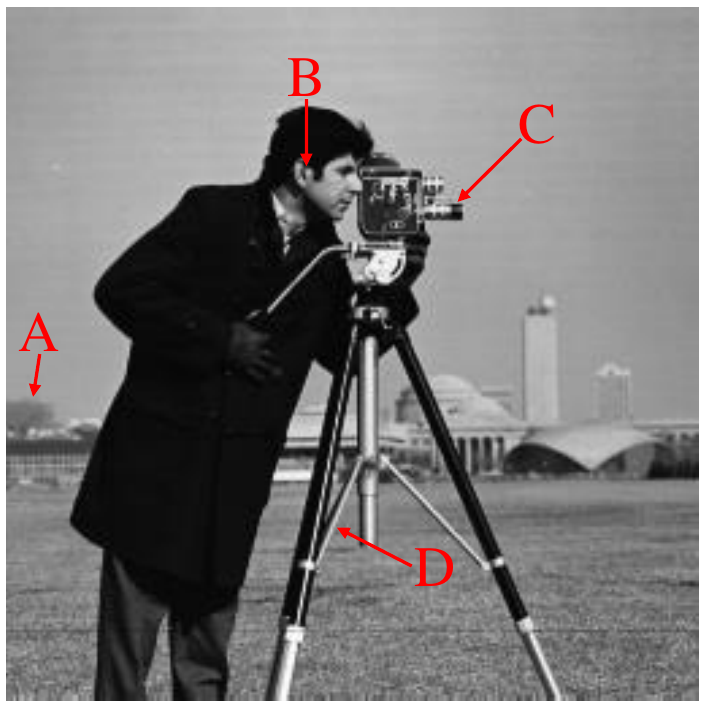}
	}
	\subfloat[MS]{
        \includegraphics[width=0.15\textwidth]{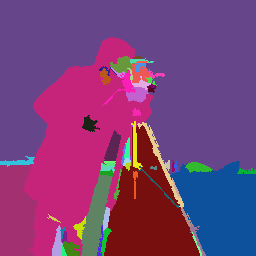}
	}
	\subfloat[GT]{
        \includegraphics[width=0.15\textwidth]{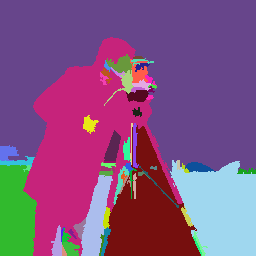}
	}
	\subfloat[Test image2]{
        \label{fig:supp-image2}
        \includegraphics[width=0.15\textwidth]{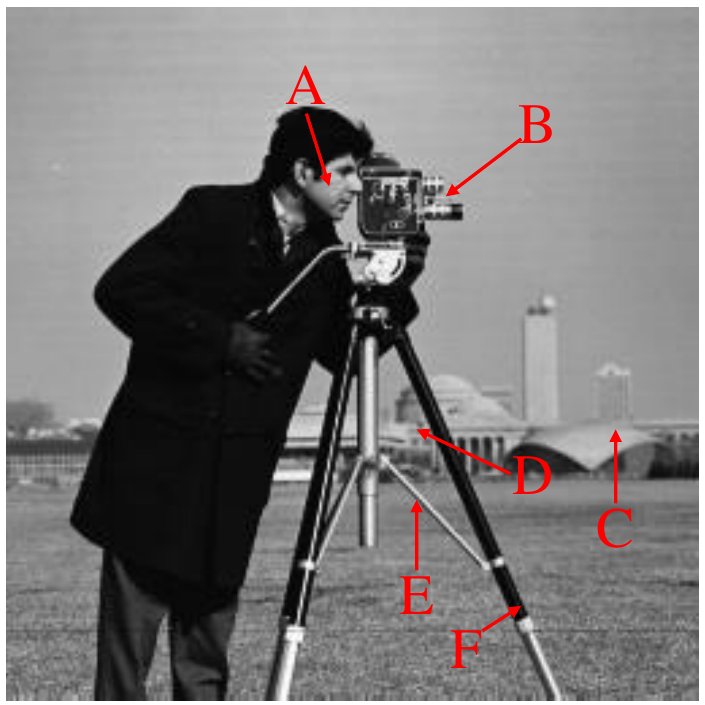}
	}
	\subfloat[MS]{
        \includegraphics[width=0.15\textwidth]{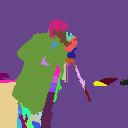}
	}
	\subfloat[GT]{
        \includegraphics[width=0.15\textwidth]{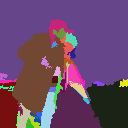}
	}
    \caption{\textbf{Image segmentation. }
    (a) Test image1 (cameraman $256 \times 256$ grayscale).  (b) MS segmentation with $\eps^s=8$ and $\eps^r=8$. 
    (c) GT segmentation with $\eps^s=8, \eps^r = 8, \lambda=4.8$. (d) Test image2 (cameraman $128\times 128$ grayscale). (e) MS segmentation with $\eps^s=6$ and $\eps^r=6$. (f) GT segmentation with $\eps^s=6, \eps^r = 6, \lambda=5$.  Check differences between GT and MS results at the labeled areas in test images. } 
    \label{fig:supp-cameraman}
\end{figure}

\subsection{Boosting  word embeddings in NLP}
\label{sec:word-emb}
Word embedding methods are important techniques in natural language processing~\citep{mikolov2013distributed,Vilnis2014WordRV,muzellec2018generalizing,pennington2014glove,Devlin2019BERTPO}. A basic instantiation of this idea is that one vectorizes each word in a given corpus by mapping it to a feature vector in a context sensitive way. 
Such ideas are applied widely in many NLP tasks, such as machine translation~\citep{zou2013bilingual}, word analogy~\citep{pmlr-v97-allen19a}, and name entity recognition~\citep{das2017named}. 

There are many open-source word
embeddings \footnote{\url{https://nlp.stanford.edu/projects/glove/}} \footnote{\url{https://gluon-nlp.mxnet.io/model\_zoo/bert}}  which have been pre-trained on very large and rich corpora (such as wikipedia). Given any new corpus $\mathcal{C}$, instead of training an embedding from scratch (which can be computationally intensive~\citep{anand2019asynchronous}), perhaps naively, one would hope to directly adapt $\mathcal{C}$ to a pre-trained embedding in order to obtain a new word embedding which may perform better than the original embedding on standard tasks or specific tasks related to the theme of $\mathcal{C}$. As a proof of concept, in this experiment, we explore the possibility of applying GT to boosting a pre-trained word embedding with a relatively small corpus $\mathcal{C}$ in order to improve the performance of the pre-trained embedding on standard word similarity tasks.

\subsubsection{GT for word embeddings}
We consider a given pre-trained embedding (in our case, we are using the GloVe embedding \citep{pennington2014glove} pre-trained on Wikipedia2014 and Gigaword 5.) as a map $\Omega:\mathrm{Dict}\rightarrow \R^m$ where $\mathrm{Dict}$ is the universe of all words under consideration. We normalize $\Omega$ and still denote the new map by $\Omega$ so that for each $w\in \mathrm{Dict}$, $\norm{\Omega{(w)}}=1$. For each word $w\in \mathrm{Dict}$, we abuse notation and also use $w$ to represent the embedding $\Omega(w)\in\R^m$. Given a certain corpus $\mathcal{C}$, and a word $w$ in $\mathcal{C}$, let $c_\mathcal{C}(w)$ denote the multiset of all words in the corpus $\mathcal{C}$ which are found in the context of $w$ with a given fixed window size $W$. We regard $c_\mathcal{C}(w)$ as the neighborhood of the word $w$ (note that this context neighborhood is a conceptual neighborhood which is different from our previous notion of neighborhood defined using distances). We compute the covariance matrix $\Sigma(w)$ for each $c_\mathcal{C}(w)$\footnote{Strictly speaking, $\Sigma(w)$ is not the covariance matrix of $c_\mathcal{C}(w)$ but instead the covariation of points in $c_\mathcal{C}(w)$ around $w$.} described in~\cite{Vilnis2014WordRV} as follows:
\begin{equation}\label{eq:cov-nlp}
    \Sigma(w) \coloneqq \frac{1}{|c_\mathcal{C}(w)|} \sum_{v\in c_\mathcal{C}(w)} (v - w)(v - w)^\mathrm{T}.
\end{equation}
If there exists no context word for $w$, i.e., if $c_\mathcal{C}(w)=\emptyset$, we set $\Sigma(w) = {0}$, the $m\times m$ dimensional zero matrix. 
Then, the GT distance between any pair of words $w_1,w_2\in\mathcal{C}$ is defined as follows: 
\begin{equation}\label{eq:gt word}
    d^{\mathsmaller{(W,\lambda)}}(w_1,w_2) \coloneqq \lc\|w_1 - w_2\|^2 + \lambda \,  \lc\dcov(\Sigma(w_1), \Sigma(w_2))\rc^2\rc^\frac{1}{2}.
\end{equation}
We use $-d^{\mathsmaller{(W,\lambda)}}(w_1,w_2)$ as the \emph{similarity score} between two words $w_1$ and $w_2$ (note the minus sign).

\paragraph{Preprocessing.}
We let $\mathcal{C}$ be the corpus text8~\footnote{http://mattmahoney.net/dc/text8.zip}. We preprocess the corpus text8 in two steps: (1) we discard rare words which occur less than 5 times in $\mathcal{C}$; (2) we discard frequent words following the strategy proposed in~\citep{mikolov2013distributed}: each word $w$ in the corpus $\mathcal{C}$ will be discarded with probability depending negatively on the frequency of $w$.

\paragraph{Evaluation on benchmarks.}
We evaluate the quality of any given word embedding on 13 different standard word similarity benchmarks: 
MC-30~\citep{miller1991contextual}, MEN-TR-3k~\citep{bruni2014multimodal}, MTurk-287~\citep{radinsky2011word}, 
MTurk-771~\citep{halawi2012large}, RG-65~\citep{rubenstein1965contextual}, RW-STANFORD~\citep{luong2013better}, SIMPLEX-999~\citep{hill-etal-2015-simlex}, SimVerb-3500~\citep{gerz2016simverb}, 
VERB-143~\citep{baker2014unsupervised}, WS-353~\citep{finkelstein2001placing}, WS-YP-130~\citep{yang2006verb}. In these benchmarks, similarity scores between certain pairs of words are provided. We refer to them as \emph{human similarity scores} (`ground truth'). Then, we calculate the Spearman rank correlation coefficient~\citep{spearman1961proof} between the human similarity scores and the similarity scores on the word pairs induced from the given word embedding. 
If one word embedding has high Spearmen rank correlation coefficient, then we regard this word embedding as of high quality.

\paragraph{Experiment and results.} We apply GT described above (GloVe+GT) to obtain similarity scores and evaluate the results on the benchmarks mentioned above. In Table \ref{tab:word-compare}, for each evaluation word similarity benchmark, we compare the Spearman rank correlation coefficients corresponding to GloVe+GT with those corresponding to the pre-trained GloVe Embeddings (GloVe), GloVe Embeddings (GloVe*text8) \citep{pennington2014glove} and Word2Vec (W2V*text8) \citep{mikolov2013distributed} trained on the preprocessed corpus text8. We have the following observations: 
{GloVe+GT} outperforms GloVe in most of the evaluation data sets, and has comparable performance on the remaining data sets. Moreover, GloVe+GT outperforms models GloVe*text8 and W2V*text8 trained specifically on text8 in most evaluation data sets.

We also compare the Spearman rank correlation coefficient of GloVe+GT with the ones given by elliptical embeddings (Ell) \citep{muzellec2018generalizing} and diagonal Gaussian embeddings (W2G) \citep{Vilnis2014WordRV} trained on the larger corpora ukWaC and WaCkypedia. These methods are in a similar spirit to our approach in that they train word embeddings into the space of probability measures of some ambient space. We point out, again, that the small corpus text8 has improved the performance of GloVe so that the performance of GloVe+GT is comparable with the performance of Ell and W2G which were trained on much larger corpora (cf. Table \ref{tab:word-compare}).

\begin{remark}
\citet{singh2020context} have carried out an experiment whose setting is similar to ours. They considered an embedding trained on a given corpus $\mathcal{C}$ and used the context information of $\mathcal{C}$ itself (instead of another small corpus as in our experiment) to further generate similarity scores via the Wasserstein distance in a way similar to our application of GT above. We remark that our approach only requires a small corpus to improve the word embedding performance on word similarity tasks, rather than reusing the large training corpus in both stages. As for computation, their methods have to rely on Sinkhorn algorithm to approximate the Wasserstein distance \citep{cuturi2013sinkhorn}, whereas we can carry out exact calculation of Wasserstein distance efficiently due to the closed form formula for the GT distance (\Cref{eq:gt word}).
\end{remark}

\begin{table*}[htb]
\caption{\textbf{Spearman rank correlation coefficients for word similarity data sets.} The results for  ``{Ell}" and ``{W2G}" are directly from ~\cite{muzellec2018generalizing}.}
    \centering
    \begin{tabular}{| c| c| c| c| c|| c| c|}
    \hline
    \textbf{Data set} & \textbf{GloVe} & \textbf{GloVe*text8} & \textbf{W2V*text8} & \textbf{GloVe+GT}  & \textbf{Ell} & \textbf{W2G}\\
    \hline
        MC-30 & 0.56  & 0.34 & 0.57 &  \textbf{0.67} & 0.65 & 0.59  \\ \hline 
        MEN-TR-3k & \textbf{0.65}  & 0.37 & 0.59 & \textbf{0.65}  & 0.65 & 0.65 \\ \hline 
        MTurk-287 & 0.61 & 0.49 & 0.61 &  \textbf{0.62} & 0.59 & 0.61  \\ \hline 
        MTurk-771 & 0.55 & 0.36 & 0.50 &  \textbf{0.56}& 0.56 & 0.57  \\ \hline 
         RG-65 & 0.60 & 0.33 &  0.56 &  \textbf{0.62}& 0.65 & 0.69  \\ \hline 
         RW-STANFORD & 0.34  & 0.20 & 0.25 &  \textbf{0.38} & 0.29 & 0.40 \\ \hline 
         SIMLEX-999 & 0.26 &  0.13 & 0.22 &  \textbf{0.27} & 0.24 & 0.25 \\ \hline 
        SimVerb-3500 & \textbf{0.15} & 0.07 & 0.08  & 0.14  & - & - \\ \hline
        VERB-143 & 0.25  & 0.28 & \textbf{0.32}  & 0.24  & - & - \\ \hline 
         WS-353-ALL & 0.49  & 0.43 & \textbf{0.62} & 0.51  & 0.66 & 0.53 \\ \hline 
        WS-353-REL & 0.46  & 0.41  & \textbf{0.59} & 0.47  & 0.71 & 0.61\\ \hline 
        WS-353-SIM & 0.57  & 0.51 & \textbf{0.66}  & 0.60  & 0.60 & 0.48 \\ \hline 
        WS-YP-130 & \textbf{0.37} & 0.19 &  0.23  & \textbf{0.37}  & 0.25 & 0.37 \\ \hline 
    \end{tabular}
        \label{tab:word-compare}
\end{table*}
\section{Relegated proofs}\label{sec:wt-proof}

\begin{proof}[Proof of Remark \ref{rem:taylor-1d}]
We let $F_x$ and $F_{x'}$ represent the cumulative distribution functions of $\me_{\alpha,d_X}(x)$ and $\me_{\alpha,d_X}(x')$, respectively. By \Cref{ex:dw on R},
\begin{equation*}
   \dW{1}\lc\me_{\alpha,d_X}(x),\me_{\alpha,d_X}(x')\rc=\int_{-\infty}^\infty|F_x(t)-F_{x'}(t)|dt.
\end{equation*}

We have explicitly
$$ F_x(t)=
\begin{cases}
0,       &      {t\leq x-\eps}\\
\frac{\alpha\lc[x-\eps,t]\rc}{\alpha\lc[x-\eps,x+\eps]\rc},     &       {x-\eps \leq t < x+\eps}\\
1,    &      {x+\eps \leq t }
\end{cases}  $$
and 
$$ F_{x'}(t)=
\begin{cases}
0,       &      {t\leq x'-\eps}\\
\frac{\alpha\lc[x'-\eps,t]\rc}{\alpha\lc[x'-\eps,x'+\eps]\rc},     &       {x'-\eps \leq t < x'+\eps}\\
1,    &      {x'+\eps \leq t }
\end{cases}  .$$

It is easy to see that $F_x(t)=F_{x'}(t)$ for  $t\in(-\infty,x-\eps]\cup[x'+\eps,\infty)$. 
If we assume that $\eps$ is small enough such that $x+\eps\leq x'-\eps$, then
\begin{align*}
\int_{-\infty}^\infty|F_x(t)-F_{x'}(t)|dt 
=&\lc\int_{-\infty}^{x-\eps}+\int_{x-\eps}^{x'+\eps}+\int_{x'+\eps}^\infty\rc|F_x(t)-F_{x'}(t)|dt\\
 = &\int_{x-\eps}^{x'+\eps}|F_x(t)-F_{x'}(t)|dt\\
 =&\int_{x-\eps}^{x'-\eps}F_x(t)dt+\int_{x'-\eps}^{x'+\eps}|1-F_{x'}(t)|dt\\
 = & x'-x+\underbrace{\int_{x-\eps}^{x+\eps}\frac{\alpha\lc[x-\eps,t]\rc}{\alpha\lc[x-\eps,x+\eps]\rc}dt}_{H(x)}
 -\underbrace{\int_{x'-\eps}^{x'+\eps}\frac{\alpha\lc[x'-\eps,t]\rc}{\alpha\lc[x'-\eps,x'+\eps]\rc}dt}_{H(x')}
\end{align*}

Recall that $\alpha$ has density function $f$. Consider the Taylor expansion
\[f(s)=f(x)+f'(x)s+\frac{f''(x)}{2}s^2+O(s^3),\,\,\forall s\in[x-\eps,x+\eps],\]
then we have that

\begin{align*}
H(x) & =\frac{\int_{x-\eps}^{x+\eps}\int_{x-\eps}^{t} f(s)\,ds\,dt}{\int_{x-\eps}^{x+\eps} f(s)ds}=\frac{\int_{-\eps}^{\eps}\int_{-\eps}^{t} f(x+s)\,ds\,dt}{\int_{-\eps}^{\eps} f(x+s)ds}\\
& =\frac{\int_{-\eps}^{\eps}\int_{-\eps}^{t} \lc f(x)+f'(x)s+\frac{f''(x)}{2}s^2+O(s^3)\rc ds\,dt}{\int_{-\eps}^{\eps} \lc f(x)+f'(x)s+\frac{f''(x)}{2}s^2+O(s^3)\rc ds}\\
& = \frac{2\eps^2 f(x)+\frac{2\eps^3}{3}f'(x)+\frac{\eps^4}{3}f''(x)+O(\eps^5)}{2\eps f(x)+\frac{\eps^3}{3}f''(x)+O(\eps^4)}\\
& =\eps-\frac{f'(x)}{3f(x)}\eps^2+O(\eps^3).
\end{align*}
Similarly, 
$$H(x')=\eps-\frac{f'(x')}{3f(x')}\eps^2+O(\eps^3).$$
Therefore
\begin{align*}
\deps(x,x') & =\dW{1}\lc\me_{\alpha,d_X}(x),\me_{\alpha,d_X}(x')\rc\\
& =x'-x+H(x)-H(x')\\
& =x'-x+\frac{1}{3}\left(\frac{f'(x')}{f(x')}-\frac{f'(x)}{f(x)}\right)\eps^2+O(\eps^3).
\end{align*}
\end{proof}
\begin{proof}[Proof of Proposition \ref{prop:WT as quotient}]
We say two pseudometric spaces are isometric if their canonically induced metri spaces are isometric. In this proposition, it suffices to prove that for any $x,x'\in X$ we have
$$\deps(x,x')=u_{X_\eps}([x]_{\eps},[x']_{\eps}).$$

If $u_X(x,x')\leq \eps$, then we have that
\[B_\eps(x')=[x']_{\eps}=[x]_{\eps}=B_\eps(x).\]
Therefore, $\me_{\alpha,d_X}(x)=\me_{\alpha,d_X}(x')$ and thus $\deps(x,x')=0=u_{X_\eps}([x]_{\eps},[x']_{\eps})$.

If otherwise $u_X(x,x')>\eps$, then $B_\eps(x)\cap B_\eps(x')=\emptyset$. Moreover, for any $y\in B_\eps(x)$ and $y'\in B_\eps(x')$, 
\[u_X(y,y')=u_X(x,x').\]
Then, for any coupling $\pi$ between $\me_{\alpha,d_X}(x)$ and $\me_{\alpha,d_X}(x')$, we have that 
\begin{align*}
   \int_{X\times X}u_X(y,y')\pi(dy\times dy')&=\int_{B_\eps(x)\times B_\eps(x')} u_X(y,y')\pi(dy\times dy')\\
   &=\int_{B_\eps(x)\times B_\eps(x')} u_X(x,x')\pi(dy\times dy')\\
   &=u_X(x,x').
\end{align*}
Therefore, $\deps(x,x')=\dW{1}\lc \me_{\alpha,d_X}(x),\me_{\alpha,d_X}(x')\rc=u_X(x,x')=u_{X_\eps}([x]_{\eps},[x']_{\eps})$.
\end{proof}
\begin{proof}[Proof of Lemma \ref{prop:line}]
Since $\mu_{\alpha_i}$ is the mean of $\alpha_i$ for $i=1,2$, the leftmost inequality in the statement of the proposition follows directly from Lemma \ref{lm:bound}.

We now compute $\dW{2}(\gamma_{\alpha_1},\gamma_{\alpha_2})$ explicitly. Without loss of generality, we assume $l_1$ is parametrized by $l_1(t)=(t\cdot s_1,0)$ and $l_2$ is parametrized by $l_2(t)=(a_0+t\cdot s_2\cos{\theta},b_0+t\cdot s_2\sin{\theta})$ for $t\in[0,1]$. Then, $\mu_{\alpha_1}=\left(\frac{s_1}{2},0\right)$, $\mu_{\alpha_2}=\left(a_0+\frac{s_2\cos{\theta}}{2},b_0+\frac{s_2\sin{\theta}}{2}\right)$, $\Sigma_{\alpha_1}=\begin{pmatrix}\frac{s_1^2}{12}&0\\0&0\end{pmatrix}$ and $\Sigma_{\alpha_2}=\begin{pmatrix}\frac{s_2^2}{12}(\cos{\theta})^2&\frac{s_2^2}{12}\sin{\theta}\cos{\theta}\\\frac{s_2^2}{12}\sin{\theta}\cos{\theta}&\frac{s_2^2}{12}(\sin{\theta})^2\end{pmatrix}$.
Then, by definition of $\dcov$ in \Cref{eq:dcov}, it is easy to check that 
\[\dcov(\Sigma_{\alpha_1},\Sigma_{\alpha_2})=\lc\frac{s_1^2}{12}+\frac{s_2^2}{12}-\frac{s_1s_2}{6}\cos{\theta}\rc^{\frac{1}{2}}. \]
Therefore, 
\begin{align*}
    \dW{2}^2(\gamma_{\alpha_1},\gamma_{\alpha_2})=&\norm{\mu_{\alpha_1}-\mu_{\alpha_2}}^2+\lc\dcov(\Sigma_{\alpha_1},\Sigma_{\alpha_2})\rc^2\\
    =&\frac{s_1^2}{3}+\frac{s_2^2}{3}-\frac{2s_1s_2\cos{\theta}}{3}\\
    +&a_0^2+b_0^2+a_0s_2\cos{\theta}+b_0s_2\sin{\theta}-a_0s_1.
\end{align*}

Next we compute $\dW{2}(\alpha_1,\alpha_2)$. Consider the linear map $T:l_1\rightarrow l_2$ defined by sending $ l_1(t)$ to $ l_2(t)$ for all $t\in[0,1]$. See Figure \ref{fig:GT} for an illustration. Then, it is easy to check that $T_\#\alpha_1=\alpha_2$. This gives rise to a transport plan $\pi=(\mathrm{Id}\times T)_\#\alpha_1\in\mathcal{C}(\alpha_1,\alpha_2)$, where $\mathrm{Id}: l_1\rightarrow l_1$ is the identity map on $ l_1$. Then, 
\begin{align*}
   &\dW{2}^2(\alpha_1,\alpha_2)\leq\int_{(x,x')\in l_1\times l_2}\norm{x-x'}^2\pi(dx\times dx')\\
   &=\int_{x\in l_1}\norm{x-T(x)}^2\alpha_1(dx)\\
   &=\int_0^1 \left|a_0+ts_2\cos{\theta}-ts_1\right|^2+\left|b_0+ts_2\sin{\theta}\right|^2dt\\
   &=\frac{s_1^2}{3}+\frac{s_2^2}{3}-\frac{2s_1s_2\cos{\theta}}{3}\\
    &+a_0^2+b_0^2+a_0s_2\cos{\theta}+b_0s_2\sin{\theta}-a_0s_1\\
    &=\dW{2}^2(\gamma_{\alpha_1},\gamma_{\alpha_2}).
\end{align*}
By Lemma \ref{lm:bound}, we know $\dW{2}(\alpha_1,\alpha_2)\geq\dW{2}(\gamma_{\alpha_1},\gamma_{\alpha_2})$ and thus $\dW{2}(\alpha_1,\alpha_2)=\dW{2}(\gamma_{\alpha_1},\gamma_{\alpha_2})$
\end{proof}

\begin{figure}[h]
    \centering
    \includegraphics[width=0.35\linewidth]{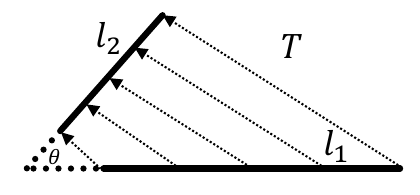} 
    \caption{\textbf{Illustration of the linear map $T$ from $l_1$ to $l_2$.} 
    }
    \label{fig:GT}
\end{figure}
\begin{proof}[Proof of Theorem \ref{thm:ellipsoid}]
We only consider the case when the dimension $m=1$. Then, for an arbitrary $x_0\in \R$, we have
\[\mu^{\mathsmaller{(\eps)}}_{{\alpha}}(x_0)=\frac{\int_{-\eps}^\eps(x_0+t)f(x_0+t)dt}{\int_{-\eps}^\eps f(x_0+t)dt}, \]
and 
\[\Sigma_\alpha^{\mathsmaller{(\eps)}}(x_0)=\frac{\int_{-\eps}^{\eps}\left(t+x_0-\mu^{\mathsmaller{(\eps)}}_{{\alpha}}(x_0)\right)^2f(t+x_0)dt}{\int_{-\eps}^\eps f(x_0+t)dt}. \]
By replacing $f(x_0+t)$ with the following Taylor expansion around $x_0$
\[f(x_0+t)=f(x_0)+f'(x_0)t+\frac{f''(x_0)}{2}t^2+O(t^3),\,\,\forall t\in[-\eps,\eps],\]
we obtain
\begin{align*}
  \Sigma_\alpha^{\mathsmaller{(\eps)}}(x_0)&=\frac{1}{3}\eps^2+\frac{-5(f'(x_0))^2+2f(x_0)f''(x_0)}{45f(x_0)^2}\eps^4+O(\eps^6)\\
  &=\frac{\eps^2}{3}(1+h(x_0)\eps^2+O(\eps^4)),
\end{align*}
where $h(x_0)\coloneqq\frac{-5(f'(x_0))^2+2f(x_0)f''(x_0)}{9f(x_0)^2}$. Since $m=1$, for any $x_1\in B_\eps(x_0)$, we have
\begin{align}
    &\lc\dcov\lc\Sigma_\alpha^{\mathsmaller{(\eps)}}(x_0),\Sigma_\alpha^{\mathsmaller{(\eps)}}(x_1)\rc\rc^2=\left|\sqrt{\Sigma_\alpha^{\mathsmaller{(\eps)}}(x_0)}-\sqrt{\Sigma_\alpha^{\mathsmaller{(\eps)}}(x_1)}\right|^2\label{eq:1d_trick}\\
    &=\lc\frac{h(x_0)-h(x_1)}{2\sqrt{3}}\eps^3+O(\eps^5)\rc^2\\
    &=\frac{(h'(x_0))^2}{12}\eps^6|x_0-x_1|^2+O(\eps^9),
\end{align}
where the first equality holds because $m=1$ and the last equality follows from the assumption that $|x_0-x_1|\leq\eps$ and a Taylor expansion of $h$ at $x_0$
\[h(x_1)=h(x_0)+h'(x_1)(x_1-x_0)+\frac{h''(x_0)}{2}(x_1-x_0)^2+O(|x_1-x_0|^3),\,\,\forall x_1\in[x_0-\eps,x_0+\eps].\]

So, if $x_1\in B_\eps^{\lambda,\alpha}(x_0)$, we have
\begin{align}
    \eps^2&\geq |x_0-x_1|^2+\eps^{-6}\lc\dcov(\Sigma_\alpha^{\mathsmaller{(\eps)}}(x_0),\Sigma_\alpha^{\mathsmaller{(\eps)}}(x_1))\rc^2\\
    &=\lc1+\frac{(h'(x_0))^2}{12}\rc|x_0-x_1|^2+O(\eps^3),\label{eq:1d-ell}
\end{align}
Let $x_1=x_0+a\eps$, then we have $|a|\leq\sqrt{\frac{12}{12+(h'(x_0))^2}}=:a_0$ by discarding the higher order terms. This implies that $B_\eps^{\lambda,\alpha}(x_0)$ is approximately an Euclidean ball $B_{a_0}(x_0)=[x_0-a_0,x_0+a_0]$. More rigorously, we prove the following statement.
\begin{claim}\label{claim:ball = close}
For any decreasing sequence $\{\eps_n\}_{n=1}^\infty$ approaching $0$, we have that
\[B_{a_0}(x_0)=\overline{\liminf_{n\rightarrow\infty}\mathfrak{S}_{\eps_n}\lc B_{\eps_n}^{\lambda,\alpha}(x_0)\rc}.\]
\end{claim}

For any $a\in\R$ such that $|a|< a_0$, we let $x_n\coloneqq x_0+a\eps_n$. Then,
\[d^{\mathsmaller{(\eps,\eps^{-6})}}_{{\alpha,d_X}}(x_0,x_n)=\lc1+\frac{(h'(x_0))^2}{12}\rc|x_0-x_1|^2+O(\eps_n^3)=\lc\frac{a}{a_0}\rc^2\eps_n^2+O(\eps_n^3).\]
When $n$ is large enough, we have that $d^{\mathsmaller{(\eps,\eps^{-6})}}_{{\alpha,d_X}}(x_0,x_n)\leq \eps_n^2$ and thus $x_n\in B_{\eps_n}^{\lambda,\alpha}(x_0)$. This implies that $x_0+a\in\mathfrak{S}_{\eps_n}\lc B_{\eps_n}^{\lambda,\alpha}(x_0)\rc$. Hence, the open ball $(x_0-a_0,x_0+a_0)\subseteq\liminf_{n\rightarrow\infty}\mathfrak{S}_{\eps_n}\lc B_{\eps_n}^{\lambda,\alpha}(x_0)\rc$. Thus, $B_{a_0}(x_0)\subseteq\overline{\liminf_{n\rightarrow\infty}\mathfrak{S}_{\eps_n}\lc B_{\eps_n}^{\lambda,\alpha}(x_0)\rc}$. 

Conversely, let $x\in \liminf_{n\rightarrow\infty}\mathfrak{S}_{\eps_n}\lc B_{\eps_n}^{\lambda,\alpha}(x_0)\rc$. Then, for any $n$ large enough, there exists $x_n\in B_{\eps_n}^{\lambda,\alpha}(x_0) $ such that $x = x_0 +\frac{1}{\eps_n}(x_n-x_0)$. By \Cref{eq:1d-ell}, we have that $\left|\frac{1}{\eps_n}(x_n-x_0)\right|^2+O(\eps_n)\leq a_0^2$. Therefore, \[|x-x_0|=\left|\frac{1}{\eps_n}(x_n-x_0)\right|=\lim_{n\rightarrow \infty}\left|\frac{1}{\eps_n}(x_n-x_0)\right|\leq a_0.\]
This means that $x\in B_{a_0}(x_0)$ and thus $\liminf_{n\rightarrow\infty}\mathfrak{S}_{\eps_n}\lc B_{\eps_n}^{\lambda,\alpha}(x_0)\rc\subseteq B_{a_0}(x_0)$. Therefore, we conclude the proof of Claim \ref{claim:ball = close}. Since the claim holds for arbitrary sequence $\{\eps_n\}_{n=1}^\infty$, we further conclude that $\overline{\liminf_{\eps\rightarrow 0}\mathfrak{S}_{\eps}\lc B_{\eps}^{\lambda,\alpha}(x_0)\rc}= B_{a_0}(x_0)$.

When $m>1$, there is no formula analogous to \Cref{eq:1d_trick} that helps simplify the computation of the Taylor expansion of $\dcov$, yet through a direct and tedious calculation, one can still compute the Taylor expansion of $\dcov$ around ${x_0}\in\R^m$ and show that there exists an $m$-dimensional PSD matrix $H({x_0})$ (which coincides with $\frac{(h'(x_0))^2}{12}$ when $m=1$) depending on $f$ such that for ${x_1}\in B_\eps^{\lambda,\alpha}({x_0})$ we have the following analogy to \Cref{eq:1d-ell}:
\[({x_1}-{x_0})^\mathrm{T}(I_m+H({x_0}))({x_1}-{x_0})+O(\eps^3)\leq\eps^2, \]
where $I_m$ is the $m$-dimensional identity matrix. 
Then, an argument similar to the one for the 1-dimensional case indicates that 
$$\overline{\liminf_{\eps\rightarrow 0}\mathfrak{S}_{\eps}\lc B_{\eps}^{\lambda,\alpha}(x_0)\rc}= \left\{{x}_0+{a_0}:\,{a_0}^\mathrm{T}(I_m+H({x_0}))\,{a_0}\leq 1,\,\forall a_0\in\R^m\right\},$$ 
which is an ellipsoid centered at ${x}_0$.
\end{proof}

\begin{proof}[Proof of \Cref{thm:lip kernel stab}]
We first prove the following lemma.

\begin{lemma}\label{thm:stab-lip}
Suppose $\mathcal{K}$ is $L$-Lipschitz on $\R$, then there exists $M>0$ depending on $\mathcal{K}$ and $X$ such that for $\alpha,\beta\in \mathcal{P}(X)$, we have for any $x\in X$ that
$$\dW{1}\lc m^K_{\alpha,d_X} (x), m^K_{\beta,d_X} (x)\rc\leq\frac{2L\,\diam(X)+M}{\max(M_\alpha(x),M_\beta(x))}\dW{1}(\alpha,\beta),$$
where $M_\alpha(x)\coloneqq\int_XK(x,y)\alpha(dy)$ and similarly for $M_\beta(x)$.
\end{lemma}
\begin{proof}The distance function will reach its maximum and minimum on $X\times X$ since $X$ is compact. Therefore, $K=\mathcal{K}\circ d$ is bounded and hence there exists $M>0$ such that $K\leq M$. Now, let $\pi\in\mathcal{C}^\mathrm{opt}_1(\alpha,\beta)$ and let $f:X\rightarrow\R$ be a 1-Lipschitz function such that $|f|\leq\diam(X)$. Then, we have that 
\begin{align*}
&\left|\int_Xf(y) m^K_{\alpha,d_X} (x)(dy)-\int_Xf(y) m^K_{\beta,d_X} (x)(dy)\right| \\
=& \left|\frac{\int_XK(x,y)f(y)\alpha(dy)}{\int_XK(x,y)\alpha(dy)}-\frac{\int_XK(x,y)f(y)\beta(dy)}{\int_XK(x,y)\beta(dy)}\right|\\
=& \frac{1}{M_\alpha(x) M_\beta(x)}\left|\int_XK(x,y)f(y)\alpha(dy)M_\beta(x)-\int_XK(x,y)f(y)\beta(dy)M_\alpha(x)\right|\\
\leq& \frac{1}{M_\alpha(x) M_\beta(x)}\int_X\big|K(x,y_1)f(y_1)M_\beta(x)- K(x,y_2)f(y_2)M_\alpha(x)\big|\pi(dy_1\times dy_2)\\
\leq & \frac{1}{M_\alpha(x) M_\beta(x)}\int_X\big|K(x,y_1)f(y_1)M_\beta(x)- K(x,y_2)f(y_1)M_\beta(x)\big|\pi(dy_1\times dy_2)\\
+&\frac{1}{M_\alpha(x) M_\beta(x)}\int_X\big|K(x,y_2)f(y_1)M_\beta(x)- K(x,y_2)f(y_2)M_\beta(x)\big|\pi(dy_1\times dy_2)\\
+&\frac{1}{M_\alpha(x) M_\beta(x)}\int_X\big|K(x,y_2)f(y_2)M_\beta(x)- K(x,y_2)f(y_2)M_\alpha(x)\big|\pi(dy_1\times dy_2)\\
\leq & \frac{1}{M_\alpha(x) M_\beta(x)}\int_X\diam(X)M_\beta(x) L\, d_X(y_1,y_2)\pi(dy_1\times dy_2)\\
+&\frac{1}{M_\alpha(x) M_\beta(x)}\int_XMM_\beta(x) d_X(y_1,y_2)\pi(dy_1\times dy_2)\\
+&\frac{1}{M_\alpha(x) M_\beta(x)}\int_X\diam(X)|M_\beta(x)- M_\alpha(x)|K(x,y_2)\pi(dy_1\times dy_2)\\
=& \frac{L\diam(X)+M}{M_\alpha(x)}\dW{1}(\alpha,\beta)+\frac{\diam(X)|M_\beta(x)-M_\alpha(x)|}{M_\alpha(x)}\\
\leq & \frac{2L\diam(X)+M}{M_\alpha(x)}\dW{1}(\alpha,\beta),
\end{align*}

The last step follows from Kantorovich duality for $\alpha$ and $\beta$ (cf. \Cref{eq:kant duality}) and the fact that $K(x,\cdot)$ is $L$-Lipschitz. Similarly,
$$\left|\int_Xf(y) m^K_{\alpha,d_X} (x)(dy)-\int_Xf(y) m^K_{\beta,d_X} (x)(dy)\right| \leq \frac{2L\diam(X)+M}{M_\beta(x)}\dW{1}(\alpha,\beta). $$

Hence by Kantorovich duality (cf. \Cref{eq:kant duality}) again, we have that

$$\dW{1}\lc  m^K_{\alpha,d_X} (x), m^K_{\beta,d_X} (x)\rc \leq\frac{2L\diam(X)+M}{\max(M_\alpha(x),M_\beta(x))}\dW{1}(\alpha,\beta).$$
\end{proof}

Then, we turn to prove Theorem \ref{thm:lip kernel stab}.
Since $\mathcal{K}$ is continuous, there exists $N>0$ such that $\mathcal{K}([0,\diam(X)])\geq N,$ which implies that $K(x,x')=\mathcal{K}(d_X(x,x'))\geq N,\forall x,x'\in X$. Thus we have for any $\alpha\in \mathcal{P}(X)$, 
$$M_\alpha(x)=\int_XK(x,y)\,\alpha(dy)\geq\int_XN\,\alpha(dy)=N,$$
which proves the statement (by applying Lemma \ref{thm:stab-lip}).

\end{proof}

\begin{proof}[Proof of Corollary \ref{coro:stb-metric-lipschitz}]
For any $x,x'\in X$, we have that
\begin{align*}
 \left|d_{\alpha,d_X}^K(x,x')-d_{\beta,d_X}^K(x,x')\right|  &=|\dW{1}\lc m^K_{\alpha,d_X} (x), m^K_{\alpha,d_X} (x')\rc -\dW{1}\lc m^K_{\beta,d_X} (x), m^K_{\beta,d_X} (x')\rc| \\
&\leq \left|\dW{1}\lc m^K_{\alpha,d_X} (x), m^K_{\alpha,d_X} (x')\rc -\dW{1}\lc m^K_{\alpha,d_X} (x), m^K_{\beta,d_X} (x')\rc\right|\\
&+\left|\dW{1}\lc m^K_{\alpha,d_X} (x), m^K_{\beta,d_X} (x')\rc-\dW{1}\lc m^K_{\beta,d_X} (x), m^K_{\beta,d_X} (x')\rc\right|\\
& \leq \dW{1}\lc m^K_{\alpha,d_X} (x'), m^K_{\beta,d_X} (x')\rc+\dW{1}\lc m^K_{\alpha,d_X} (x), m^K_{\beta,d_X} (x)\rc\\
&\leq \frac{4L\,\diam(X)+2M}{N}\dW{1}(\alpha,\beta),
\end{align*}
where the last inequality follows from \Cref{thm:lip kernel stab}.
\end{proof}

\begin{proof}[Proof of Theorem \ref{thm:lip kernel is lip}]
The proof is in the same spirit as the proof of Theorem \ref{thm:lip kernel stab}. Note that
\begin{align*}
& \dW{1}\lc m^K_{\alpha,d_X} (x), m^K_{\alpha,d_X} (x')\rc\\
=&\sup \left\{\left|\int_Xf(y)  m^K_{\alpha,d_X} (x)(dy)-\int_Xf(y) m^K_{\alpha,d_X} (x')(dy)\right|:f\text{ is 1-Lipschitz and }|f|\leq\diam(X).\right\} 
\end{align*}
Since $d_X$ reaches its maximum and minimum on $X\times X$, $K=\mathcal{K}\circ d$ is bounded and hence there exists $M>0$ such that $K\leq M$.

Since $\mathcal{K}$ is positive and continuous, there exists $N>0$ such that $\mathcal{K}([0,\diam(X)])\geq N,$ which gives us $K(x,y)\geq N,\forall x,y\in X$. 

Then, $N\leq M_\alpha(x)\leq M$ and
\[|M_\alpha(x)-M_\alpha(x')|\leq\int_X|K(x,y)-K(x',y)|\alpha(dy)\leq Ld_X(x,x')\]

Then, we have that
\begin{align*}
&\left|\int_Xf(y) m^K_{\alpha,d_X} (x)(dy)-\int_Xf(y) m^K_{\alpha,d_X}(x')(dy)\right| \\
=& \left|\frac{\int_XK(x,y)f(y)\alpha(dy)}{\int_XK(x,y)\alpha(dy)}-\frac{\int_XK(x',y)f(y)\alpha(dy)}{\int_XK(x',y)\alpha(dy)}\right|\\
\leq& \frac{1}{M_\alpha(x) M_\alpha(x')}\int\left| K(x,y)f(y)M_\alpha(x')- K(x',y)f(y)M_\alpha(x)\right|\alpha(dy)\\
\leq & \frac{1}{M_\alpha(x) M_\alpha(x')}\int_X\big|K(x,y)f(y)M_\alpha(x')- K(x,y)f(y)M_\alpha(x)\big|\alpha(dy)\\
+&\frac{1}{M_\alpha(x) M_\alpha(x')}\int_X\big|K(x,y)f(y)M_\alpha(x)- K(x',y)f(y)M_\alpha(x)\big|\alpha(dy)\\
\leq & \frac{1}{M_\alpha(x) M_\alpha(x')}\int_X\diam(X)M|M_\alpha(x)-M_\alpha(x')|\alpha(dy)\\
+&\frac{1}{M_\alpha(x) M_\alpha(x')}\int_X\diam(X)M_\alpha(x)L\, d_X(x,x')\alpha(dy)\\
=&\frac{\diam(X)M|M_\alpha(x)-M_\alpha(x')|}{M_\alpha(x)M_\alpha(x')}+\frac{\diam(X)L \,d_X(x,x')}{M_\alpha(x')}\\
\leq & \frac{L(M+N)\diam(X)}{N^2}d_X(x,x').
\end{align*}
\end{proof}

\begin{proof}[Proof of Theorem \ref{thm:sampling convergence}]
First by the triangle inequality for the Gromov-Wasserstein distance,
\begin{align*}
    &\dgw{1}\lc (X,d_{\alpha_n,d_X}^K,\alpha_n),\lc X,d_{\alpha,d_X}^K,\alpha\rc\rc \\
    \leq &\dgw{1}\lc (X,d_{\alpha_n,d_X}^K,\alpha_n),\lc X,d_{\alpha,d_X}^K,\alpha_n\rc\rc +\dgw{1}\lc \lc X,d_{\alpha,d_X}^K,\alpha_n\rc,\lc X,d_{\alpha,d_X}^K,\alpha\rc\rc.
\end{align*}
For the second term, by \cite[Theorem 5.1 (c)]{memoli2011gromov} and Theorem \ref{thm:lip kernel is lip} we have that
$$\dgw{1}\lc \lc X,d_{\alpha,d_X}^K,\alpha_n\rc,\lc X,d_{\alpha,d_X}^K,\alpha\rc\rc \leq \dW{1}^{\lc X,d_{\alpha,d_X}^K\rc}\lc \alpha_n,\alpha\rc \leq C_1\,\dW{1}^{(X,d_X)}\lc \alpha_n,\alpha\rc, $$
where $C_1$ denotes the positive multiplicative coefficient in Theorem \ref{thm:lip kernel is lip} depending on $K$ and $(X,d_X)$.

For the first term, by \cite[Theorem 5.1 (d)]{memoli2011gromov} we have that

$$ \dgw{1}\lc (X,d_{\alpha_n,d_X}^K,\alpha_n),\lc X,d_{\alpha,d_X}^K,\alpha_n\rc\rc \leq \frac{1}{2}\norm{d_{\alpha_n,d_X}^K-d_{\alpha,d_X}^K}_{L^1(\alpha_n\otimes\alpha_n)}\leq\frac{1}{2}\norm{d_{\alpha_n,d_X}^K-d_{\alpha,d_X}^K}_{L^\infty(\alpha_n\otimes\alpha_n)}.$$

Given any $x,x'\in\supp(\alpha_n)$, we have by definition that
\[d_{\alpha_n,d_X}^K(x,x')=\dW{1}^{(X,d_X)}\lc m^K_{\alpha_n,d_X} (x), m^K_{\alpha_n,d_X} (x')\rc\text{ and }d_{\alpha,d_X}^K(x,x')=\dW{1}^{(X,d_X)}\lc m^K_{\alpha,d_X} (x), m^K_{\alpha,d_X} (x')\rc.\] 
Hence by triangle inequality, we have that
$$ \left|d_{\alpha_n,d_X}^K(x,x')-d_{\alpha,d_X}^K(x,x')\right|\leq \dW{1}^{(X,d_X)}\lc m^K_{\alpha_n,d_X} (x), m^K_{\alpha,d_X} (x)\rc+\dW{1}^{(X,d_X)}\lc m^K_{\alpha_n,d_X} (y), m^K_{\alpha,d_X} (y)\rc.$$
By Theorem \ref{thm:lip kernel stab}, we have that 
\[\dW{1}^{(X,d_X)}\lc m^K_{\alpha_n,d_X}(x), m^K_{\alpha,d_X}(x)\rc\leq C_2\,\dW{1}^{(X,d_X)}(\alpha_n,\alpha)\]
for some constant $C_2$ depending on $K$ and $(X,d_X)$. This implies that

$$ \norm{d_{\alpha_n,d_X}^K-d_{\alpha,d_X}^K}_{L^\infty(\alpha_n\otimes\alpha_n)}\leq 2C_2\,\dW{1}^{(X,d_X)}(\alpha_n,\alpha).$$
Therefore, 
\begin{equation}\label{eq:GW stb}
    \dgw{1}\lc \lc X,d_{\alpha_n,d_X}^K,\alpha_n\rc,\lc X,d_{\alpha,d_X}^K,\alpha\rc\rc \leq (C_1+ 2C_2)\,\dW{1}^{(X,d_X)}(\alpha_n,\alpha).
\end{equation}
The right hand side (and thus the left hand side) approaches $0$ a.s. due to \cite{varadarajan1958convergence}, which concludes the proof.
\end{proof}
\begin{proof}[Proof of Corollary \ref{coro:convergence rate}]
To prove this corollary, we need the notion of the Prokhorov distance defined in \Cref{sec:prokhorov} and the following convergence rate result.

\begin{lemma}[{\cite[Theorem 4.1]{dudley1969speed}}]\label{lm:rate of convergence}
Let $X$ be a separable metric space. Let $\alpha\in\mathcal{P}(X)$ and let $X_1,X_2,\cdots$ be i.i.d. random variables with distribution $\alpha$. Denote by $\alpha_n\coloneqq\frac{1}{n}\sum_{i=1}^n\delta_{X_i}$ the empirical measure. Then, for any $\eps>0$, we have for $n$ large enough that 
\[\mathbb{E}(\dP(\alpha_n,\alpha))\leq n^{-\frac{1}{k(\alpha)+2+\eps}}.\]
\end{lemma}

By Equation \eqref{eq:GW stb}, there exists $C'>0$ depending on $K$ and $(X,d_X)$ such that
$$\dgw{1}\lc (X,d_{\alpha_n,d_X}^K,\alpha_n),\lc X,d_{\alpha,d_X}^K,\alpha\rc\rc \leq C'\,\dW{1}^{(X,d_X)}(\alpha_n,\alpha).$$
Therefore, by Lemma \ref{lemma:dp} and Lemma \ref{lm:rate of convergence}, 
\begin{align*}
    \mathbb{E}\lc \dgw{1}\lc (X,d_{\alpha_n,d_X}^K,\alpha_n),\lc X,d_{\alpha,d_X}^K,\alpha\rc\rc \rc&\leq C'\mathbb{E}\lc \dW{1}^{(X,d_X)}(\alpha_n,\alpha)\rc\\
    &\leq C'(1+\diam(X))\mathbb{E}\lc \dP^{(X,d_X)}(\alpha_n,\alpha)\rc\\
    &\leq C\,n^{-\frac{1}{k(\alpha)+2+\eps}},
\end{align*}
where $C\coloneqq C'(1+\diam(X))$ depends only on $K$ and $(X,d_X)$.
\end{proof}


\begin{proof}[Proof of Theorem \ref{thm:stb}]
We need the following lemma for the proof.

\begin{lemma}\label{rem:lb-meas}
Let $X$ be a compact metric space and let $\alpha\in\pf(X)$. Assume that $\diam(X)<D$ for some $D>0$. For $\Lambda>0$, if $\alpha\in\mathcal{P}^{\mathcal{BG}(\Lambda)}(X)$, then we have that
$\alpha(B_r(x))\geq \psi_{\Lambda,D}(r)$, for all $x\in X$ and $r>0$.
\end{lemma}
\begin{proof}[Proof of Lemma \ref{rem:lb-meas}]
Let $r_1=D,r_2=r$ in Definition \ref{def:bg}. We then have that when $r\leq D$,
\[\frac{\alpha(B_{D}(x))}{\alpha(B_{r}(x))}\leq \left(\frac{D}{r}\right)^\Lambda.\]
Notice that $X=B_D(x)$, hence we have $\alpha(B_D(x))=\alpha(X)=1$. Therefore
\[\alpha(B_r(x))\geq\lc\frac{r}{D}\rc^\Lambda\geq\min\left(1,\left(\frac{r}{D}\right)^\Lambda\right) .\] 
When $r>D$, obviously we have $\alpha(B_r(x))=\alpha(X)=1\geq\min\left(1,\left(\frac{r}{D}\right)^\Lambda\right)$.
\end{proof}

To analyze $\dW{1}\lc \me_{\alpha,d_X}(x),\me_{\beta,d_X}(x)\rc $, we first analyze the Prokhorov distance $\dP$.


\begin{claim}\label{claim6wt}
For any $x\in X$, $\dP\lc \me_{\alpha,d_X}(x),\me_{\beta,d_X}(x)\rc \leq \Phi_{\Lambda,D,\eps}\lc \dP(\alpha,\beta)\rc.$
\end{claim}
\begin{proof}
Suppose $\dP(\alpha,\beta)<\eta$ for some $\eta>0$. Fix $x\in X$ and assume without loss of generality that $\beta(B_\eps(x))\leq\alpha(B_\eps(x))$.
Then,  invoke the expression $$\dP\lc\me_{\alpha,d_X}(x),\me_{\beta,d_X}(x)\rc=\inf\lb\delta>0:\me_{\alpha,d_X}(x)(A)\leq\me_{\beta,d_X}(x)(A^\delta)+\delta,\forall A\subseteq X\rb.$$ 
For any $A\subseteq X$  we have the following inclusions:
\begin{align*}(A\cap B_\eps(x))^\eta &\subseteq A^\eta \cap (B_\eps(x))^\eta \subseteq A^\eta \cap B_{\eps+\eta}(x) =A^\eta \cap \lc B_\eps(x) \cup \lc B_{\eps+\eta}(x)\backslash B_\eps(x)\rc  \rc  \\
&\subseteq A^\eta \cap B_\eps(x) \bigcup A^\eta \cap B_{\eps+\eta}(x)\backslash
B_\eps(x) \subseteq A^\eta \cap B_\eps(x) \bigcup B_{\eps+\eta}(x)\backslash B_\eps(x).
\end{align*}
Then, by monotonicity of measure and the fact that $\dP(\alpha,\beta)<\eta$, we have
\begin{align*}
\me_{\alpha,d_X}(x)(A) &= \frac{\alpha(A\cap B_\eps(x))}{\alpha(B_\eps(x))}\leq \frac{\beta((A\cap B_\eps(x))^\eta)+\eta}{\alpha(B_\eps(x))}\leq \frac{\beta((A\cap B_\eps(x))^\eta)+\eta}{\beta(B_\eps(x))}\\
& \leq \frac{\beta(A^\eta\cap B_\eps(x))+\beta(B_\eps^\eta(x)\backslash B_\eps(x))}{\beta(B_\eps(x))}+\frac{\eta}{\beta(B_\eps(x))}\\
& \leq \frac{\beta(A^\eta\cap B_\eps(x))}{\beta(B_\eps(x))}+\frac{\beta(B_{\eps+\eta}(x))+\eta}{\beta(B_\eps(x))}-1\\
& \leq \me_{\beta,d_X}(x)(A^\eta)+\lc 1+\frac{\eta}{\eps}\rc ^\Lambda-1+\frac{\eta}{\beta(B_\eps(x))}\leq\me_{\beta,d_X}(x)(A^\eta)+\Phi_{\Lambda,D}(\eta),
\end{align*}
where the last inequality follows from Lemma \ref{rem:lb-meas}. Note that since $\lc 1+\frac{\eta}{\eps}\rc ^\Lambda-1\geq 0$, and $\psi_{\Lambda, D}(\eps)\leq 1$, then $\Phi_{\Lambda,D}(\eta)\geq \eta$. Let $\xi\coloneqq\Phi_{\Lambda,D}(\eta)$. Then, from the inequalities above, we have that 
$$\me_{\alpha,d_X}(x)(A)\leq\me_{\beta,d_X}(x)(A^\eta)+\xi\leq \me_{\beta,d_X}(x)(A^\xi)+\xi.$$
Therefore, $\dP\lc\me_{\alpha,d_X}(x),\me_{\beta,d_X}(x)\rc\leq\xi=\Phi_{\Lambda,D}(\eta)$. Then, by letting $\eta\rightarrow \dP(\alpha,\beta)$ we have $\dP\lc\me_{\alpha,d_X}(x),\me_{\beta,d_X}(x)\rc\leq \Phi_{\Lambda,D,\eps}\lc \dP(\alpha,\beta)\rc $, which concludes the proof.
\end{proof}

We now finish the proof of  Theorem \ref{thm:stb}. Since  $\supp\lc \me_{\alpha,d_X}(x)\rc $ and $\supp\lc \me_{\beta,d_X}(x)\rc $ are both contained in $B_\eps(x)$ and $\mathrm{diam}(B_\eps(x))\leq 2\eps$, we have from Remark \ref{rmk:pr} that
\[\dW{1}\lc\me_{\alpha,d_X}(x),\me_{\beta,d_X}(x)\rc \leq \lc 1+2\eps\rc \dP\lc\me_{\alpha,d_X}(x),\me_{\beta,d_X}(x)\rc.\]
Now, from this inequality, by  Claim \ref{claim6wt} above we in turn obtain 
$$\dW{1}\lc \me_{\alpha,d_X}(x),\me_{\beta,d_X}(x)\rc  \leq \lc 1+2\eps\rc\Phi_{\Lambda,D,\eps}\lc \dP(\alpha,\beta)\rc.$$ 
Finally,  since $\Phi_{\Lambda,D,\eps}(\eta)$ is an increasing function of $\eta$, by Lemma \ref{lemma:dp} we obtain the statement of the theorem.
\end{proof}

\begin{proof}[Proof of Corollary \ref{coro:stb-metric}]
For any $x,x'\in X$, we have that
\begin{align*}
 \left|\deps(x,x')-\depsbeta(x,x')\right|  &=|\dW{1}\lc \me_{\alpha,d_X}(x),\me_{\alpha,d_X}(x')\rc -\dW{1}\lc\me_{\beta,d_X}(x),\me_{\beta,d_X}(x')\rc| \\
&\leq \left|\dW{1}\lc \me_{\alpha,d_X}(x),\me_{\alpha,d_X}(x')\rc -\dW{1}\lc\me_{\alpha,d_X}(x),\me_{\beta,d_X}(x')\rc\right|\\
&+\left|\dW{1}\lc\me_{\alpha,d_X}(x),\me_{\beta,d_X}(x')\rc-\dW{1}\lc\me_{\beta,d_X}(x),\me_{\beta,d_X}(x')\rc\right|\\
& \leq \dW{1}\lc\me_{\alpha,d_X}(x'),\me_{\beta,d_X}(x')\rc+\dW{1}\lc\me_{\alpha,d_X}(x),\me_{\beta,d_X}(x)\rc\\
&\leq 2(1+2\eps)\Phi_{\Lambda,D,\eps}\lc \sqrt{\dW{1}(\alpha,\beta)}\rc,
\end{align*}
where the last inequality follows from Theorem \ref{thm:stb}.
\end{proof}

\begin{proof}[Proof of \Cref{thm:ms-smooth}]
Let $\mathcal{K}_1(t)\coloneqq \mathcal{K}\lc\frac{t^2}{\eps^2}\rc$. Then, for $t\in[0,D]$, it is easy to see that $\mathcal{K}_1$ is $\frac{2\,L\,D}{\eps^2}$-Lipschitz. Since $K(x,y)=\mathcal{K}_1(d(x,y))$, we have by Theorem \ref{thm:lip kernel stab} that
\[\dW{1}\left(m_{\alpha,d_X}^{K_\eps}(x), m_{\beta,d_X}^{K_\eps}(x)\right)\leq\frac{2\,CD+M}{N}\dW{1}(\alpha,\beta), \]
where $M,N$ are positive constants depending on $\mathcal{K}_1$ (thus on $\mathcal{K}$ and $\eps$) and $X$.
Then, by Lemma \ref{lm:bound}, we have that
\begin{align*}
    \left\|\mean\lc m_{\alpha,d_X}^{K_\eps}(x)\rc-\mean\lc m_{\beta,d_X}^{K_\eps}(x)\rc\right\|\leq \dW{1}\left(m_{\alpha,d_X}^{K_\eps}(x), m_{\beta,d_X}^{K_\eps}(x)\right)\leq\frac{2\,CD+M}{N}\dW{1}(\alpha,\beta).
\end{align*}
\end{proof}
\begin{proof}[Proof of \Cref{thm:stb-ms}]
By Lemma \ref{lm:bound} and Theorem \ref{thm:stb}, we have  $\forall x\in X$ that
\begin{align*}
 &\norm{\mean\lc\me_{\alpha,d_X}(x)\rc-\mean\lc\me_{\beta,d_X}(x)\rc}\\
 \leq &\dW{1}\lc\me_{\alpha,d_X}(x),\me_{\beta,d_X}(x)\rc\\
\leq&(1+2\eps)\Phi_{\Lambda,D,\eps}\lc \sqrt{\dW{1}(\alpha,\beta)}\rc.   
\end{align*}
\end{proof}

\begin{proof}[Proof of Theorem \ref{thm:GT-stable}]\label{proof of gt stable}
We first provide the definition of the function $\Psi_{\Lambda,D,\eps}^{c,A,m}$ in the statement of the theorem.
For any $\Lambda,\eps>0$, we define for any $t\geq 0$
$$\Upsilon_{\Lambda,\eps}(t)\coloneqq\lc\lc1+\frac{t}{\eps}\rc^\Lambda-1\rc+t.$$
Moreover, given any constants $A,c>0$ and $m\in\mathbb{N}$, let 
$$\Upsilon_{\Lambda,D,\eps}^{c,A,m}(t)\coloneqq \frac{\eps^m\,\nu_m\,c\,A}{\psi_{\Lambda,D}(\eps)}\,t+\frac{\eps^2}{\psi_{\Lambda,D}^2(\eps)}\Upsilon_{\Lambda,\eps}\left(\sqrt{t}\right),$$
where $\nu_m$ denotes the volume of the unit ball in $\R^m$ and $\psi_{\Lambda,D}$ is the function defined in \Cref{eq:psi l d}.
Note that both $\Upsilon_{\Lambda,\eps}$ and $\Upsilon_{\Lambda,D,\eps}^{c,A,m}$ are increasing functions which attain value 0 when $t=0$.

For $t\geq 0$, let 
$$\Psi_{\Lambda,D,\eps}^{c,A,m}(t)\coloneqq \Upsilon_{\Lambda,D,\eps}^{c,A,m}(t)+2\eps(1+2\eps)\Phi_{\Lambda,D,\eps}\left(\sqrt{t}\right),$$
where $\Phi_{\Lambda,D,\eps}$ is defined in \Cref{eq:phi l d e}. 

The proof of the theorem is based on the following series of lemmas. 
\begin{lemma}\label{lm:tr}
For symmetric positive semi-definite matrices $A,B$, we have
\[\mathrm{tr}\lc A+B-2\lc A^{ \frac{1}{2} }BA^{ \frac{1}{2} }\rc^{ \frac{1}{2} }\rc\leq\norm{A^{ \frac{1}{2} }-B^{ \frac{1}{2} }}_F^2,\]
where $\norm{\cdot}_F$ denotes the Frobenius norm of matrices.
\end{lemma}

\begin{proof}
Expand the right hand side of the inequality we obtain
\begin{align*}
\norm{A^{ \frac{1}{2} }-B^{ \frac{1}{2} }}_F^2 = & \mathrm{tr}\lc A+B-A^{ \frac{1}{2} }B^{ \frac{1}{2} }-B^{ \frac{1}{2} }A^{ \frac{1}{2} }\rc\\
=&\mathrm{tr}\lc A+B-2B^{ \frac{1}{2} }A^{ \frac{1}{2} }\rc
\end{align*}
Hence it suffices to prove 
\[\mathrm{tr}\lc \lc A^{ \frac{1}{2} }BA^{ \frac{1}{2} }\rc^{ \frac{1}{2} }\rc\geq\tr\lc B^{ \frac{1}{2} }A^{ \frac{1}{2} }\rc.\]
Let $M=B^{ \frac{1}{2} }A^{ \frac{1}{2} }$, then $\lc A^{ \frac{1}{2} }BA^{ \frac{1}{2} }\rc^{ \frac{1}{2} }=\lc M^\mathrm{T}M\rc^{ \frac{1}{2} }$. Assume that $M$ has size $m\times m$. Then, denote by $\{\sigma_i\}_{i=1,\cdots,m}$ the multiset of singular values of $M$, and denote by $\{\lambda_i\}_{i=1,\cdots,m}$ the multiset of eigenvalues of $M$, then 
\begin{align*}
\mathrm{tr}\lc \lc A^{ \frac{1}{2} }BA^{ \frac{1}{2} }\rc^{ \frac{1}{2} }\rc & =\tr\lc\lc M^\mathrm{T}M\rc^{ \frac{1}{2} }\rc\\
& =\sum_{i=1}^m\sigma_i\geq\sum_{i=1}^m|\lambda_i|\\
& \geq\tr\lc M\rc=\tr\lc B^{ \frac{1}{2} }A^{ \frac{1}{2} }\rc.
\end{align*}
The first inequality follows directly from \cite[Theorem 2.3.6]{bhatia2013matrix}.
\end{proof}

\begin{lemma}\label{lm:sqrn}
For symmetric positive semi-definite $m\times m$ matrices $A,B$ with dimension $m$, we have
\[\norm{A^{ \frac{1}{2} }-B^{ \frac{1}{2} }}_F^2\leq m\norm{A-B}_F.\]
\end{lemma}

\begin{proof}
It is shown in \cite[page 290]{bhatia2013matrix} that by using the operator norm $\norm{\cdot}$ of matrices one has
\[\norm{A^{ \frac{1}{2} }-B^{ \frac{1}{2} }}^2\leq \norm{A-B}.\]
By using the following relation for any $m\times m$ matrix $M$ (see \cite[page 7]{bhatia2013matrix})
\[\norm{M}\leq \norm{M}_F\leq\sqrt{m}\norm{M},\]
we obtain
\[\norm{A^{ \frac{1}{2} }-B^{ \frac{1}{2} }}_F^2\leq m\norm{A-B}_F.\]
\end{proof}
For any $x\in X$, we define a \emph{covariation matrix} w.r.t. $x$ as follows:
\begin{equation}\label{eq:til-cov}
    \Tilde{\Sigma}_{\alpha,d_X}^{\mathsmaller{(\eps)}}(x)\coloneqq\frac{1}{\eps^m\,\nu_m}\int_{B_\eps^{d_X}(x)}\lc{y}-x\rc\otimes \lc {y}- x\rc\,\alpha(d {y}).
\end{equation}
See Definition \ref{def:mean and covariance} for the definition of the symbol $\otimes$.
Note that $\Tilde{\Sigma}_{\alpha,d_X}^{\mathsmaller{(\eps)}}(x)$ is different from the local covariance matrix ${\Sigma}_{\alpha,d_X}^{\mathsmaller{(\eps)}}(x)$ defined in Section \ref{sec:gt}. 

The following result establishes the stability of the matrix $\Tilde{\Sigma}_{\alpha,d_X}^{\mathsmaller{(\eps)}}(x)$ w.r.t. the measure.

\begin{lemma}[Theorem 3 in \cite{martinez2020shape}]\label{lm:cmbound}
Assume $\alpha,\beta\in\pf^c(X)$. Then, there is a constant $A=A(\eps,m,D)$ such that 
\[\sup_{x\in\R^m}\norm{\tilde{\Sigma}_{\alpha,d_X}^{\mathsmaller{(\eps)}}(x)-\tilde{\Sigma}_{\beta,d_X}^{\mathsmaller{(\eps)}}(x)}_F\leq c\, A\cdot \dW{\infty}(\alpha,\beta).\]
\end{lemma}

To utilize Lemma \ref{lm:cmbound}, we define another matrix as follows which mediates between $\Tilde{\Sigma}_{\alpha,d_X}^{\mathsmaller{(\eps)}}(x)$ and ${\Sigma}_{\alpha,d_X}^{\mathsmaller{(\eps)}}(x)$:
\[\hat{\Sigma}_{\alpha,d_X}^{\mathsmaller{(\eps)}}(x)\coloneqq\int_{\mathbb{R}^d}\lc{y}-x\rc\otimes \lc {y}- x\rc\,\me_{{\alpha,d_X}}(x)(d {y}). \]
Note that $\hat{\Sigma}_{\alpha,d_X}^{\mathsmaller{(\eps)}}(x)=\frac{\eps^m\,\nu_m}{\alpha(B_\eps^{d_X}(x))}\tilde{\Sigma}_{\alpha,d_X}^{\mathsmaller{(\eps)}}(x)$ and 
$$\hat{\Sigma}_{\alpha,d_X}^{\mathsmaller{(\eps)}}(x)={\Sigma}_{\alpha,d_X}^{\mathsmaller{(\eps)}}(x)+\lc \mu_{\alpha,d_X}^{\mathsmaller{(\eps)}}(x)-x\rc\otimes\lc\mu_{\alpha,d_X}^{\mathsmaller{(\eps)}}(x)-x\rc.$$

\begin{lemma}\label{lm:stb-hat-sigma}
Let $\alpha,\beta\in\pf^c(X)\cap\mathcal{P}^{\mathcal{BG}(\Lambda)}(X)$. Then, we have that
\[\sup_{x\in\R^m}\norm{\hat{\Sigma}_{\alpha,d_X}^{\mathsmaller{(\eps)}}(x)-\hat{\Sigma}_{\beta,d_X}^{\mathsmaller{(\eps)}}(x)}_F\leq \Upsilon_{\Lambda,D,\eps}^{c,A,m}(\dW{\infty}(\alpha,\beta)),\]
where $A$ is the same $A$ appearing in Lemma \ref{lm:cmbound}.
\end{lemma}

\begin{proof}[Proof of Lemma \ref{lm:stb-hat-sigma}]
For notational simplicity, we let $\alpha_x\coloneqq\alpha(\bxe)$ and $\beta_x\coloneqq\beta(\bxe)$.
\begin{align*}
    \norm{\hat{\Sigma}_{\alpha,d_X}^{\mathsmaller{(\eps)}}(x)-\hat{\Sigma}_{\beta,d_X}^{\mathsmaller{(\eps)}}(x)}_F=&{\eps^m\,\nu_m}\norm{\frac{\tilde{\Sigma}_{\alpha,d_X}^{\mathsmaller{(\eps)}}(x)}{\alpha_x}-\frac{\tilde{\Sigma}_{\beta,d_X}^{\mathsmaller{(\eps)}}(x)}{\beta_x}}_F\\
    =&\frac{\eps^m\,\nu_m}{\alpha_x\,\beta_x}\norm{{\tilde{\Sigma}_{\alpha,d_X}^{\mathsmaller{(\eps)}}(x)}{\beta_x}-{\tilde{\Sigma}_{\beta,d_X}^{\mathsmaller{(\eps)}}(x)}{\alpha_x}}_F\\
    \leq&\underbrace{\frac{\eps^m\,\nu_m}{\alpha_x}\norm{{\tilde{\Sigma}_{\alpha,d_X}^{\mathsmaller{(\eps)}}(x)}-{\tilde{\Sigma}_{\beta,d_X}^{\mathsmaller{(\eps)}}(x)}}_F}_{T_1}+\underbrace{\frac{\eps^m\,\nu_m}{\alpha_x\,\beta_x}|\alpha_x-\beta_x|\norm{{\tilde{\Sigma}_{\beta,d_X}^{\mathsmaller{(\eps)}}(x)}}_F}_{T_2}
\end{align*}
By Lemma \ref{rem:lb-meas}, we have that $\alpha_x,\beta_x\geq\psi_{\Lambda,D}(\eps)$. Hence, together with Lemma \ref{lm:cmbound}, we have
\[T_1\leq \frac{\eps^m\,\nu_m\,c\,A}{\psi_{\Lambda,D}(\eps)}\dW{\infty}(\alpha,\beta). \]

To estimate $|\alpha_x-\beta_x|$, we utilize the {Prokhorov distance} $\dP$. Without loss of generality, we assume that $\beta_x\geq \alpha_x$. Let $\xi\coloneqq \dP(\alpha,\beta)$. Then,
\begin{align*}
    &\beta(\bxe)-\alpha(\bxe)\\
    \leq&\alpha\left((\bxe)^\xi\right)+\xi-\alpha(\bxe)\\
    \leq &\alpha(\bxe)\lc\frac{\alpha\left(B_{\eps+\xi}^{d_X}(x)\right)}{\alpha(\bxe)}-1\rc+\xi\\
    \leq& \lc\lc1+\frac{\xi}{\eps}\rc^\Lambda-1\rc+\xi=\Upsilon_{\Lambda,\eps}(\xi)\\
    \leq &\Upsilon_{\Lambda,\eps}\lc\sqrt{\dW{1}(\alpha,\beta)}\rc
    \leq\Upsilon_{\Lambda,\eps}\lc\sqrt{\dW{\infty}(\alpha,\beta)}\rc.
\end{align*}
Since $\Upsilon_{\Lambda,\eps}$ is increasing, the second to last inequality follows from the fact that $(\dP)^2\leq \dW{1}$ (cf. Lemma \ref{lemma:dp}) and the last inequality follows from the fact that $\dW{p}\leq \dW{q}$ whenever $1\leq p\leq q\leq \infty$ \citep{givens1984class}.

Since $y$ is constructed in $\bxe$ in Equation (\ref{eq:til-cov}), we have $\norm{y-x}\leq \eps$ and thus $\norm{\tilde{\Sigma}_{\beta,d_X}^{\mathsmaller{(\eps)}}(x)}_F\leq \frac{\eps^2}{{\eps^{m}\nu_m}}$. Therefore,
\[T_2\leq  \frac{\eps^2}{\psi_{\Lambda,D}^2(\eps)}\Upsilon_{\Lambda,\eps}\left(\sqrt{\dW{\infty}(\alpha,\beta)}\right).\]
Hence, $T_1+T_2\leq \Upsilon_{\Lambda,D,\eps}^{c,A,m}(\dW{\infty}(\alpha,\beta))$
\end{proof}

Now, we are ready to establish a key lemma for proving Theorem \ref{thm:GT-stable}.

\begin{lemma}\label{lm:key-cov-est}
Let $\alpha,\beta\in\pf^{c}(X)\cap\mathcal{P}^{\mathcal{BG}(\Lambda)}(X)$. Then,
\[\sup_{x\in\R^m}\norm{{\Sigma}_{\alpha,d_X}^{\mathsmaller{(\eps)}}(x)-{\Sigma}_{\beta,d_X}^{\mathsmaller{(\eps)}}(x)}_F\leq \Psi_{\Lambda,D,\eps}^{c,A,m}(\dW{\infty}(\alpha,\beta)).\]
\end{lemma}

\begin{proof}
For simplicity of notation, we let $\mu_\alpha\coloneqq\mu^{\mathsmaller{(\eps)}}_{\alpha,d_X}(x)$ and $\mu_\beta\coloneqq\mu^{\mathsmaller{(\eps)}}_{\beta,d_X}(x)$.
\begin{align*}
    &\norm{{\Sigma}_{\alpha,d_X}^{\mathsmaller{(\eps)}}(x)-{\Sigma}_{\beta,d_X}^{\mathsmaller{(\eps)}}(x)}_F\leq\norm{\hat{\Sigma}_{\alpha,d_X}^{\mathsmaller{(\eps)}}(x)-\hat{\Sigma}_{\beta,d_X}^{\mathsmaller{(\eps)}}(x)}_F+\norm{(\mu_\alpha-x)^{\otimes^2}-(\mu_\beta-x)^{\otimes^2}}_F\\
    \leq &\Upsilon_{\Lambda,D,\eps}^{c,A,m}(\dW{\infty}(\alpha,\beta))+\norm{(\mu_\alpha-x)\otimes(\mu_\alpha-\mu_\beta)}_F+\norm{(\mu_\alpha-\mu_\beta)\otimes(\mu_\beta-x)}_F\\
    \leq &\Upsilon_{\Lambda,D,\eps}^{c,A,m}(\dW{\infty}(\alpha,\beta))+2\eps\norm{\mu_\alpha-\mu_\beta}.
\end{align*}
By Theorem \ref{thm:stb-ms}, we obtain
\begin{align*}
    \norm{{\Sigma}_{\alpha,d_X}^{\mathsmaller{(\eps)}}(x)-{\Sigma}_{\beta,d_X}^{\mathsmaller{(\eps)}}(x)}_F\leq& \Upsilon_{\Lambda,D,\eps}^{c,A,m}(\dW{\infty}(\alpha,\beta))+2\eps(1+2\eps)\Phi_{\Lambda,D,\eps}\left(\sqrt{\dW{1}(\alpha,\beta)}\right)\\
    \leq &\Psi_{\Lambda,D,\eps}^{c,A,m}(\dW{\infty}(\alpha,\beta)).
\end{align*}
We use the fact that $\dW{1}\leq \dW{\infty}$ in the last inequality.
\end{proof}

Finally, we finish the proof of Theorem \ref{thm:GT-stable}.
For any $x\in X$, one has
\begin{align*}
    \dW{2}\left(\gamma_{\alpha,d_X}^{\mathsmaller{(\eps,\lambda)}}(x),\gamma_{\beta,d_X}^{\mathsmaller{(\eps,\lambda)}}(x)\right)=&\sqrt{\lambda\,\lc\dcov\left(\Sigma_{\alpha,d_X}^{\mathsmaller{(\eps)}}(x),\Sigma_{\beta,d_X}^{\mathsmaller{(\eps)}}(x)\right)\rc^2}\\
    \leq&\sqrt{\lambda} \norm{\left(\Sigma_{\alpha,d_X}^{\mathsmaller{(\eps)}}(x)\right)^\frac{1}{2}-\left(\Sigma_{\beta,d_X}^{\mathsmaller{(\eps)}}(x)\right)^\frac{1}{2}}_F\\
    \leq& \sqrt{m\,\lambda} \norm{\Sigma_{\alpha,d_X}^{\mathsmaller{(\eps)}}(x)-\Sigma_{\beta,d_X}^{\mathsmaller{(\eps)}}(x)}_F^\frac{1}{2}\\
    \leq& \sqrt{m\,\lambda\,\Psi_{\Lambda,D,\eps}^{c,A,m}(\dW{\infty}(\alpha,\beta))}.
\end{align*}
The first inequality follows from Lemma \ref{lm:tr}. The second inequality follows from Lemma \ref{lm:sqrn}. The last inequality follows from Lemma \ref{lm:key-cov-est}.

Now, for any $x,x'\in X$, one has
\begin{align*}
    \left|\dgt(x,x')-d^{\mathsmaller{(\eps,\lambda)}}_{{\beta,d_X}}(x,x')\right|= & \big|\dW{2}\lc\gamma_{\alpha,d_X}^{\mathsmaller{(\eps,\lambda)}}(x),\gamma_{\alpha,d_X}^{\mathsmaller{(\eps,\lambda)}}(x')\rc-\dW{2}\lc\gamma_{\alpha,d_X}^{\mathsmaller{(\eps,\lambda)}}(x),\gamma_{\beta,d_X}^{\mathsmaller{(\eps,\lambda)}}(x')\rc\\
    +&\dW{2}\lc\gamma_{\alpha,d_X}^{\mathsmaller{(\eps,\lambda)}}(x),\gamma_{\beta,d_X}^{\mathsmaller{(\eps,\lambda)}}(x')\rc-\dW{2}\lc\gamma_{\beta,d_X}^{\mathsmaller{(\eps,\lambda)}}(x),\gamma_{\beta,d_X}^{\mathsmaller{(\eps,\lambda)}}(x')\rc\big|\\
    \leq &\dW{2}\lc\gamma_{\alpha,d_X}^{\mathsmaller{(\eps,\lambda)}}(x'),\gamma_{\beta,d_X}^{\mathsmaller{(\eps,\lambda)}}(x)\rc+\dW{2}\lc\gamma_{\alpha,d_X}^{\mathsmaller{(\eps,\lambda)}}(x),\gamma_{\beta,d_X}^{\mathsmaller{(\eps,\lambda)}}(x)\rc\\
    \leq & 2\sqrt{m\,\lambda\,\Psi_{\Lambda,D,\eps}^{c,A,m}(\dW{\infty}(\alpha,\beta))}.
\end{align*}
\end{proof}

\begin{proof}[Proof of Theorem \ref{thm:gt-stable-smooth}]
For any $x\in X$, let
\begin{equation}\label{eq:til-cov-smooth}
    \Tilde{\Sigma}_{\alpha,d_X}^{K_\eps}(x)\coloneqq\int_{\R^m}\lc{y}-x\rc\otimes \lc {y}- x\rc K_\eps(x,y)\,\alpha(d {y}). 
\end{equation}

\begin{lemma}[{\cite[Theorem 1]{martinez2020shape}}]\label{thm:tilde-sigma}
There exists a constant $A_\mathcal{K}>0$ only depending on $\mathcal{K}$ such that for any $\alpha,\beta\in\mathcal{P}_f(X)$, we have
\[\norm{\Tilde{\Sigma}_{\alpha,d_X}^{K_\eps}(x)-\Tilde{\Sigma}_{\beta,d_X}^{K_\eps}(x)}_F\leq A_\mathcal{K}\,\dW{1}(\alpha,\beta).\]
\end{lemma}

\begin{lemma}\label{lm:key-smooth-stb}
There exists a positive constant $C>0$ depending on $\mathcal{K},\eps,D$ and $X$ such that for any $\alpha,\beta\in\pf(X)$
\[\norm{{\Sigma}_{\alpha,d_X}^{K_\eps}(x)-{\Sigma}_{\beta,d_X}^{K_\eps}(x)}_F\leq C\,\dW{1}(\alpha,\beta).\]
\end{lemma}

\begin{proof}
For notational simplicity, let $M_\alpha(x)\coloneqq\int_XK_\eps(x,y)\alpha(dy)$, $\mu_\alpha\coloneqq\mu_{\alpha,d_X}^{K_\eps}(x)$, $\Sigma_\alpha\coloneqq{\Sigma}_{\alpha,d_X}^{K_\eps}(x)$ and $\tilde{\Sigma}_\alpha\coloneqq\tilde{\Sigma}_{\alpha,d_X}^{K_\eps}(x)$.
Note that $\Sigma_\alpha=\frac{\tilde{\Sigma}_\alpha}{M_\alpha(x)}-(\mu_\alpha-x)\otimes(\mu_\alpha-x)$. Then,
\begin{align*}
    \norm{\Sigma_\alpha-\Sigma_\beta}_F&\leq\norm{\frac{1}{M_\alpha(x)}\tilde{\Sigma}_\alpha-\frac{1}{M_\beta(x)}\tilde{\Sigma}_\beta}_F\\
    &+\norm{(\mu_\alpha-x)^{\otimes^2}-(\mu_\beta-x)^{\otimes^2}}\\
    &\leq \underbrace{\frac{1}{M_\alpha(x)}\norm{\tilde{\Sigma}_\alpha-\tilde{\Sigma}_\beta}_F}_{T_1}+\underbrace{\frac{|M_\alpha(x)-M_\beta(x)|}{M_\alpha(x)\,M_\beta(x)}\norm{\tilde{\Sigma}_\beta}_F}_{T_2}\\
    &+\underbrace{\norm{\mu_\alpha-\mu_\beta}\lc\norm{\mu_\alpha-x}+\norm{\mu_\beta-x}\rc}_{T_3}
\end{align*}

Since $\mathcal{K}$ is continuous and positive by assumption, there exists $W=W(\eps,D)>0$ such that for any $t\in\left[0,\frac{D^2}{\eps^2}\right]$, $\mathcal{K}(t)\geq W$. Hence, $M_\alpha(x)\geq W$. Then, by Lemma \ref{thm:tilde-sigma}, we have that $T_1\leq \frac{A_\mathcal{K}}{W}\dW{1}(\alpha,\beta).$

Since $\mathcal{K}$ is $L$-Lipschitz, we have that for any $y_1,y_2\in X$,
\begin{align*}
    &|K_\eps(x,y_1)-K_\eps(x,y_2)|\\
    =&\frac{1}{C_m(\eps)}\left|\mathcal{K}\lc\frac{\norm{y_1-x}^2}{\eps^2}\rc-\mathcal{K}\lc\frac{\norm{y_2-x}^2}{\eps^2}\rc\right|\\
    \leq &\frac{L}{\eps^2}\left|\norm{y_1-x}^2-\norm{y_2-x}^2\right|\\
    =&\frac{L}{\eps^2}\norm{y_1-y_2}\norm{2x-y_1-y_2}\\
    \leq &\frac{2\,L\,D}{\eps^2}\norm{y_1-y_2}.
\end{align*}
So $K_\eps(x,\cdot)$ is a $\frac{2\,LD}{\eps^2}$-Lipschitz function on $X$. Then, by Kantorovich duality (cf. \Cref{eq:kant duality}), we have
\begin{align*}
    |M_\alpha(x)-M_\beta(x)|&=\left|\int_XK_\eps(x,y)\alpha(dy)-\int_XK_\eps(x,y)\beta(dy)\right|\\
    &\leq \frac{2\,LD}{\eps^2}\,\dW{1}(\alpha,\beta).
\end{align*}
As for $\frac{1}{M_\beta(x)}\norm{\tilde{\Sigma}_\beta}_F$, we know from Equation (\ref{eq:til-cov-smooth}) that
\begin{align*}
    \frac{\norm{\tilde{\Sigma}_\beta}_F}{M_\beta(x)}&\leq \frac{\int_{X}\norm{\lc{y}-x\rc^{\otimes^2}}_F K_\eps(x,y)\,\beta(d {y})}{\int_XK_\eps(x,y)\,\beta(dy)}\leq D^2.
\end{align*}
Thus, $T_2\leq \frac{2\,L\,D^3}{\eps^2\,W}\,\dW{1}(\alpha,\beta).$

As for $T_3$, we first have by Theorem \ref{thm:ms-smooth} that
\[\norm{\mu_\alpha-\mu_\beta}\leq\frac{2\,CD+M}{N}\dW{1}(\alpha,\beta). \]
Then, we have that
\begin{align*}
    \norm{\mu_\alpha-x}&=\norm{\frac{\int_{X}(y-x)\,K_\eps(x,y)\,\alpha(dy)}{\int_{X}K_\eps(x,y)\,\alpha(dy)}}\\
    &\leq \frac{\int_{X}\norm{y-x}\,K_\eps(x,y)\,\alpha(dy)}{\int_{X}K_\eps(x,y)\,\alpha(dy)}\leq D.
\end{align*}
Hence, $T_3\leq \frac{2D(2CD+M)}{N}\,\dW{1}(\alpha,\beta).$

By adding up the three inequalities regarding upper bounds of $T_1,T_2$ and $T_3$, we conclude the proof.
\end{proof}

Then, based on Lemma \ref{lm:key-smooth-stb}, with a similar proof as the one for Theorem \ref{thm:GT-stable}, we obtain the stability theorem (Theorem \ref{thm:gt-stable-smooth}).
\end{proof}

\bibliographystyle{plainnat}
\bibliography{biblio-wt}

\end{document}